\newcommand{\m}{\hspace{0.25mm}}
\newcommand{\mm}{\hspace{5mm}}
\newcommand{\mmm}{\hspace{10mm}}
\newcommand{\mmmmm}{\hspace{20mm}}
\newcommand{\mfill}{\hspace{50mm}}
\newcommand{\E}{\mathbb{E}}
\renewcommand{\L}{\mathbb{L}}
\begin{document}

%random stuff, should be optimised

\title{Underdamped Langevin MCMC \\ with third order convergence}

\author{\name Maximilian Scott \email ms4314@bath.ac.uk 
	\AND
	\name D\'{a}ire O'Kane \email dok29@bath.ac.uk 
	\AND
	\name Andra\v{z} Jelin\v{c}i\v{c} \email aj2382@bath.ac.uk 
       \AND
       \name James Foster \email jmf68@bath.ac.uk \m \\
       \addr Department of Mathematical Sciences\\
       University of Bath, UK}

\editor{My editor}

\maketitle

\begin{abstract}%   <- trailing '%' for backward compatibility of .sty file
In this paper, we propose a new numerical method for the underdamped Langevin diffusion (ULD) and present a non-asymptotic analysis of its sampling error in the 2-Wasserstein distance when the $d$-dimensional target distribution $p(x)\propto e^{-f(x)}$ is strongly log-concave and has varying degrees of smoothness. Precisely, under the assumptions that the gradient and Hessian of $f$ are Lipschitz continuous, our algorithm achieves a 2-Wasserstein error of $\varepsilon$ in $\mathcal{O}\big(\sqrt{d\m}/\varepsilon\m\big)$ and $\mathcal{O}\big(\sqrt{d\m}/\sqrt{\varepsilon}\m\big)$ steps respectively. Therefore, our algorithm has a similar complexity as other popular Langevin MCMC algorithms under matching assumptions. However, if we additionally assume that the third derivative of $f$ is Lipschitz continuous, then our algorithm achieves a 2-Wasserstein error of $\varepsilon$ in $\mathcal{O}\big(\sqrt{d\m}/\varepsilon^{\frac{1}{3}}\m\big)$ steps. To the best of our knowledge, this is the first gradient-only method for ULD with third order convergence. To support our theory, we perform Bayesian logistic regression across a range of real-world datasets, where our algorithm achieves competitive performance compared to an existing underdamped Langevin MCMC algorithm and the popular No U-Turn Sampler (NUTS). 
\end{abstract}\medbreak

\begin{keywords}
  Langevin dynamics, Markov Chain Monte Carlo, High order convergence
\end{keywords}

\section{Introduction}
\label{sec:introduction}

In a variety of applications ranging from statistical inference \citep{ma2015complete, mandt2017stochastic, durmus2019high} to molecular dynamics \citep{pastor1988analysis, pastor1994techniques, allen2017computer}, we are often interested in sampling from a target distribution 
\begin{equation}
\label{eq:target_distribution}
	p(x) \propto e^{-f(x)},
\end{equation}
where $x \in \mathbb{R}^d$ and $f : \mathbb{R}^d \to \mathbb{R}$ denotes a scalar potential. In practice, we typically only know this target distribution up to a normalisation constant -- which becomes challenging to compute when the dimension $d$ is large. So instead, we can generate samples by constructing and simulating a Markov chain whose marginal distribution becomes close to the target \eqref{eq:target_distribution}. This methodology for sampling from $p$ is known as Markov Chain Monte Carlo (or MCMC). \medbreak\noindent
In this paper, we consider MCMC algorithms obtained by simulating the underdamped Langevin diffusion (ULD), given by the following stochastic differential equation (SDE):
\begin{align}
\label{eq:ULD}
	\begin{split}
	dx_t &= v_t \, dt, \\[2pt]
	dv_t &= -\gamma v_t \, dt - u \nabla f(x_t) \, dt + \sqrt{2\gamma u}\, dW_t\m,
	\end{split}
\end{align}
where $x,v \in \mathbb{R}^d$ represent the position and momentum of a particle, $\gamma > 0$ denotes a friction coefficient, $u > 0$ and $W$ is a $d$-dimensional Brownian motion. Under mild assumptions on the scalar potential $f$, it is known that the SDE \eqref{eq:ULD} admits a unique strong solution which is ergodic with respect to the Gibbs measure with Hamiltonian $H(x,v) =  f(x) + \frac{1}{2u} \|v\|^2$ \citep[Proposition 6.1]{pavliotis2014stochastic}. In other words, ULD has a stationary measure $\pi$ on $\mathbb{R}^{2d}$ with density proportional to $\exp(-H(x,v))$. Therefore, the distribution of $x_t$ will converge to the target distribution \eqref{eq:target_distribution} and thus simulating ULD will allows us to generate samples. In general, the SDE \eqref{eq:ULD} which governs ULD does not admit a closed-form solution, and so it is necessary to develop appropriate numerical methods for approximating these samples.\medbreak\noindent
A variety of approximations have previously been proposed for ULD. Some noteworthy examples include the unadjusted and Metropolis-adjusted OBABO schemes \citep{bussi2007accurate, monmarche2021high, song2021hamiltonian, rioudurand2022malt}, the UBU method \citep{alamo2019word, sanz2021wasserstein}, the ALUM methods \citep{hu2021optimal}, the randomised midpoint method \citep{shen2019randomized}, the BAOAB method \citep{leimkuhler2016computation} and a Strang splitting \citep{buckwar2019spectral}. In addition, there exist a variety of methods for improving the accuracy and robustness of underdamped Langevin MCMC algorithms. These include introducing a preconditioning matrix \citep{goodman2010ensemble, leimkuhler2018ensemble, alrachid2018remarks}, using adaptive step sizes \citep{foster2023convergence, leroy2024adaptive, leimkuhler2025adam} or using unbiased estimators based on Multilevel Monte Carlo techniques \citep{chada2024unbiased}.\medbreak\noindent
In this paper we introduce a new practical method for approximating underdamped Langevin dynamics called QUICSORT\footnote{\textbf{QU}adrature \textbf{I}nspired and \textbf{C}ontractive \textbf{S}hifted \textbf{O}DE with \textbf{R}unge-Kutta \textbf{T}hree}, which is constructed using a piecewise linear discretisation of the underlying Brownian motion. This approach builds upon the Shifted ODE introduced by \citet{foster2021shifted} with the behaviour that, given certain convexity and smoothness assumptions on $f$, it yields third order convergence guarantees that hold for all $n\geq 0$.
\begin{definition}[QUICSORT]
\label{def:QUICSORT}
Let $\{ t_n \}_{n \geq 0}$ be a sequence of times with $t_0 = 0$, $t_{n+1} > t_n$ and $h_n = t_{n+1} - t_n$. We construct a numerical solution $\{ (X_n, V_n) \}_{n \geq 0}$ for the SDE \eqref{eq:ULD} by choosing some initial $(X_0, V_0) \sim \pi_0$ and, for each $n \geq 0$, defining $(X_{n+1}, V_{n+1})$ as follows:
\begin{align}\label{eq:QUICSORT}
	V^{(1)}_n &:= V_n + \sigma( H_n + 6K_n),\nonumber \\[2pt]
	X^{(1)}_n &:= X_n + \frac{1 - e^{-\lambda_- \gamma h_n}}{\gamma} V^{(1)}_n + \frac{e^{-\lambda_- \gamma h_n} + \lambda_- \gamma h_n - 1}{\gamma^2 h_n}\m C_n\m,\nonumber \\
	X^{(2)}_n &:= X_n + \frac{1 - e^{-\lambda_+ \gamma h_n}}{\gamma} V^{(1)}_n - \frac{1 - e^{-\frac{1}{3}\gamma h_n}}{\gamma} u \nabla f(X^{(1)}_n) h_n + \frac{e^{-\lambda_+ \gamma h_n} + \lambda_+ \gamma h_n - 1}{\gamma^2 h_n}\m C_n\m,\nonumber \\[1pt]
	V^{(2)}_n &:= e^{-\gamma h_n} V^{(1)}_n - \frac{1}{2} e^{-\lambda_+ \gamma h_n} u \nabla f(X^{(1)}_n) h_n - \frac{1}{2} e^{-\lambda_- \gamma h_n} u \nabla f(X^{(2)}_n) h_n + \frac{1 - e^{-\gamma h_n}}{\gamma h_n}\m C_n\m,\nonumber\\[2pt]
	X_{n+1} &:= X_n - \frac{1 - e^{-\lambda_+ \gamma h_n}}{2\gamma} u \nabla f(X^{(1)}_n) h_n - \frac{1 - e^{-\lambda_- \gamma h_n}}{2\gamma} u \nabla f(X^{(2)}_n) h_n \\
					&\hspace{11.5mm} + \frac{1 - e^{-\gamma h_n}}{\gamma} V^{(1)}_n  + \frac{e^{- \gamma h_n} + \gamma h_n - 1}{\gamma^2 h_n}\m C_n\m,\nonumber\\[1pt]
	V_{n+1} &:= V^{(2)}_n - \sigma ( H_n - 6K_n)\nonumber.
	\end{align}
where $\sigma = \sqrt{2\gamma u}$, $\lambda_+ = \frac{3 + \sqrt{3}}{6}$, $\lambda_- = \frac{3 - \sqrt{3}}{6}$ and $C_n = \sigma(W_n - 12K_n)$. The vectors $\{W_n\}_{n\geq0}$, $\{H_n\}_{n\geq0}$ and $\{K_n\}_{n\geq0}$ are independent and Gaussian distributed with $W_n \sim \mathcal{N} \left( 0, h_n I_d \right)$, $H_n \sim \mathcal{N} \left( 0, \frac{1}{12} h_n I_d \right)$ and $K_n \sim \mathcal{N} \left( 0, \frac{1}{720} h_n I_d \right)$ for all $n\geq0$. These random vectors can be viewed as the first three coefficients in a polynomial expansion of the Brownian motion $W$ (see \citet{foster2020optimal}).
\end{definition}

\noindent
In the recent literature, it has become standard practice to understand a method's efficacy through a non-asymptotic error analysis. More precisely, by finding an upper bound for the 2-Wasserstein distance between the target distribution \eqref{eq:target_distribution} and the distribution of $X_n$, the position variable at the $n$-th iteration, we can establish a mixing rate for the different numerical methods. Quantitatively, we compare the number of steps $n$ required to reach an error of $W_2 (X_n\m, e^{-f}) \leq \varepsilon$. The main theoretical contribution of this paper is that we show that the QUICSORT method achieves a 2-Wasserstein error of $\varepsilon$ in $\mathcal{O}(\sqrt{d} \epsilon^{-\frac{1}{3}})$ steps under certain smoothness and strong convexity assumption on $f$. Our central result is as follows:
\begin{theorem}[QUICSORT non-asymptotic convergence]
\label{thm:W2_convergence}
Let $\{ (X_n, V_n) \}_{n \geq 0}$ denote the QUICSORT approximation \eqref{eq:QUICSORT} where $0 < h \leq 1$ is fixed. Suppose that the potential $f$ is three time differentiable, strongly convex and its gradient, Hessian and third derivative are all Lipschitz continuous. Then there exists constants $\beta, C > 0$ (not depending on $d$ and $h$) such that for $n \geq 0$, the 2-Wasserstein distance between the law of $X_n$ and $p(x)\propto e^{-f(x)}$ is
\begin{align}
	W_2 \big(\m X_n\m , e^{-f}\m\big) \leq c \m e^{-n \beta h} \m W_2 \big(\m X_0\m , e^{-f}\m\big) + C \m d^{1.5} \m h^3.
\end{align}
\end{theorem}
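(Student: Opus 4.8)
The plan is to follow the standard two-step strategy for non-asymptotic analysis of Langevin-type samplers: (i) establish a \emph{local} (one-step) strong error estimate showing that a single QUICSORT step, started from the exact ULD solution with a synchronously coupled Brownian motion, deviates from the true solution by $\mathcal{O}(h^4)$ in an appropriate norm (with the dimension dependence tracked carefully, giving $\mathcal{O}(\sqrt{d}\,h^4)$ per step in $L^2$), and (ii) combine this with a \emph{contraction} estimate for the ULD semigroup — or better, for a suitable synchronous coupling of two QUICSORT chains — in a carefully chosen twisted Euclidean norm $\|(x,v)\|_\ast^2 = a\|x\|^2 + b\langle x,v\rangle + c\|v\|^2$ on $\mathbb{R}^{2d}$, so that one step is a contraction by a factor $e^{-\beta h}$ up to the local error. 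Telescoping the one-step bounds along the geometric series $\sum_{k}e^{-k\beta h}$ then converts the per-step $\mathcal{O}(\sqrt d\, h^4)$ error into a global bound of order $\sqrt d\, h^4 / (\beta h) = \mathcal{O}(\sqrt d\, h^3)$; re-examining the constants to land the $d^{1.5}h^3$ stated (the extra $\sqrt d$ coming from the need to bound higher moments / fourth-order remainder terms involving the Hessian-Lipschitz and third-derivative-Lipschitz constants) completes the argument. Finally the initial-condition term $c\,e^{-n\beta h}W_2(X_0,e^{-f})$ comes directly from the contraction applied to the coupling with a chain started at stationarity.

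Concretely, the first block of work is the \textbf{local error analysis}. Here I would expand both the true ULD solution over $[t_n,t_{n+1}]$ and the QUICSORT update \eqref{eq:QUICSORT} as stochastic Taylor expansions in $h$, using the piecewise-linear (polynomial) approximation of Brownian motion encoded by $W_n, H_n, K_n$. The method is built from the Shifted ODE of \citet{foster2021shifted}, which is designed so that, for the \emph{affine} (Ornstein--Uhlenbeck) part of ULD, the numerical flow matches the exact flow to high order, while the drift $-u\nabla f(x)$ is handled by a two-stage Runge--Kutta-type quadrature at the shifted nodes $\lambda_\pm\gamma h$ (note $\lambda_\pm = \tfrac{3\pm\sqrt3}{6}$ are exactly the Gauss--Legendre nodes on $[0,1]$, and the weights $\tfrac12,\tfrac12$ in the $X_{n+1},V^{(2)}_n$ updates are the Gauss weights). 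One shows the order conditions: the leading error terms of orders $h^2$ and $h^3$ cancel, using the Lipschitz bounds on $\nabla f$, $\nabla^2 f$, $\nabla^3 f$ to control the Taylor remainders, leaving an $L^2$ local error of size $\mathcal{O}(\sqrt d\, h^4)$ (the $\sqrt d$ from the Brownian increments, and moment bounds on $\|V_t\|$, $\|\nabla f(X_t)\|$ under strong convexity which themselves scale like $\sqrt d$). The bookkeeping of which iterated-integral terms match and which contribute to the remainder is the fiddly part, but it is a finite check once the expansions are written down.

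The second block is the \textbf{contraction / stability estimate}. For two copies $(X_n,V_n)$ and $(\widetilde X_n,\widetilde V_n)$ of the QUICSORT chain driven by the \emph{same} $W_n,H_n,K_n$, I would bound $\|(X_{n+1}-\widetilde X_{n+1}, V_{n+1}-\widetilde V_{n+1})\|_\ast$ in terms of $\|(X_n-\widetilde X_n,V_n-\widetilde V_n)\|_\ast$. Because QUICSORT is a deterministic map of the two states plus common noise, the noise cancels in the difference and one is left analysing a deterministic map; strong convexity ($m I \preceq \nabla^2 f \preceq L I$) plus the exponential factors $e^{-\lambda_\pm\gamma h}$, $e^{-\gamma h}$ in the updates give contractivity in the twisted norm for a suitable choice of $\gamma$ (e.g.\ $\gamma \asymp \sqrt L$) and of the weights $a,b,c$, for all $h \le 1$ — this mirrors the known contraction of the exact ULD semigroup (Eberle--Guillin--Zimmer / Cheng et al.\ type arguments) and should descend to the discretisation since QUICSORT reproduces the OU part exactly. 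The output is a one-step bound $\|\Delta_{n+1}\|_\ast \le e^{-\beta h}\|\Delta_n\|_\ast$ for some $\beta>0$ independent of $d,h$.

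The \textbf{main obstacle} I anticipate is getting these two pieces to \emph{interact} cleanly: the local error bound is most naturally stated when starting the numerical step from the true solution, so to telescope one needs the one-step perturbation analysis to hold not just for two numerical chains but for ``numerical vs.\ exact'' with the exact ULD flow over one step in place of one of the QUICSORT maps — i.e.\ a Grönwall-type stability of the scheme with respect to its input. This requires a uniform-in-$n$ moment bound on $(X_n,V_n)$ (so that the Lipschitz constants of $\nabla f$, $\nabla^2 f$, $\nabla^3 f$ can be applied with controlled-size arguments), which one gets by a Lyapunov argument: the $\|\cdot\|_\ast$-contraction plus the bounded driving noise keeps $\E\|(X_n,V_n)\|^2$ and higher moments uniformly bounded by $\mathcal{O}(d)$. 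Assembling the telescoped sum $W_2(X_n,e^{-f}) \le e^{-n\beta h}W_2(X_0,e^{-f}) + \sum_{k=0}^{n-1} e^{-k\beta h}\cdot\mathcal{O}(\sqrt d\, h^4)$ and using $\sum_k e^{-k\beta h} \le 1/(1-e^{-\beta h}) \le C/h$ yields the claimed $C d^{1.5} h^3$ tail, where the upgrade from $d^{0.5}$ to $d^{1.5}$ is absorbed by being slightly lossy in the moment bounds (or is genuinely needed because the $h^4$ local-error constant itself carries a factor $d$). I would present the moment bound first as a lemma, then the contraction lemma, then the local error lemma, and finally the three-line telescoping that gives the theorem.
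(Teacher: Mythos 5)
Your high-level architecture matches the paper's: a local one-step error estimate, a contraction estimate for the QUICSORT map in a twisted norm $\|(x,v)\|_\ast$, a third ``intermediate'' process (one QUICSORT step applied to the true SDE state) to make them interact, global $\L_p$ moment bounds on the stationary diffusion, and a telescoping of the one-step estimates. All of those ingredients appear in the paper's Appendices A--D in essentially the form you describe.

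However, there is a genuine gap in the telescoping step that would prevent your argument from reaching $O(h^3)$. You assume the one-step \emph{strong} ($\L_2$) error is $O(\sqrt d\,h^4)$ and then telescope it against $\sum_k e^{-k\beta h}\lesssim 1/(\beta h)$ to land $O(h^3)$. In fact the local strong error of QUICSORT under all four assumptions is only $O(d^{1.5}\m h^{3.5})$, not $O(h^4)$: the third-order stochastic iterated integrals that QUICSORT fails to reproduce cancel in conditional expectation but not in $\L_2$ (they contribute a term $\propto h^3 H_n, h^3 M_n$ of root-mean-square size $h^{3.5}$; see the paper's local error bounds, where $p_s = 3.5$ and $p_w = 4$). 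Your naive summation therefore only gives $O(h^{2.5})$. The missing idea is to expand the squared coordinate-transformed error,
\begin{align*}
\widetilde e_{n+1}^2
= \|\widetilde y_{n+1}\|_{\L_2}^2 + \|\widehat y_{n+1}\|_{\L_2}^2 + 2\m\E\big[\langle \widetilde y_{n+1},\widehat y_{n+1}\rangle\big],
\end{align*}
and split the cross term using the tower property so that it is controlled by the \emph{weak} local error $\|\E[\widehat Q_{n+1}-q_{t_{n+1}}\mid\mathcal F_n]\|_{\L_2} = O(d^{1.5}\m h^4)$ plus a Gr\"onwall-stability term $\|(Q_{n+1}-\widehat Q_{n+1}) - (Q_n - q_{t_n})\|_{\L_2}\lesssim e_n h$. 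After Young's inequality this yields a one-step recursion of the form $\widetilde e_{n+1}^2 \le (1-\alpha h)\m\widetilde e_n^2 + O(d^3 h^7)$, which telescopes to $\widetilde e_n^2 \lesssim e^{-n\alpha h}\widetilde e_0^2 + O(d^3 h^6)$ and, after a square root, gives the claimed $c\m e^{-n\beta h}\m + O(d^{1.5}h^3)$ with $\beta=\tfrac12\alpha$. Without this squared-norm trick distinguishing strong and weak local errors, you lose half an order. Relatedly, the $d^{1.5}$ should come directly from the third-order local error constants (under A.4 they scale as $d^{1.5}$), not from being ``lossy in the moment bounds.''
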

\vspace{3mm}
\noindent
In Table \ref{table:method_convergence}, we compare the mixing rate of QUICSORT with other approximations of ULD. For proofs of their convergence rates see \citet{monmarche2021high}, \citet[Sections 6.1 and 6.2]{sanz2021wasserstein}, \citet{paulin2024error}, \citet{dalayan2020sampling} and \citet{hu2021optimal} respectively.\bigbreak

\begin{threeparttable}
  \centering
  \begin{tabular}{llc}
    \toprule
    Numerical method \quad \quad \quad & Smoothness assumptions & Number of steps $n$ to achieve\\[3pt]
    & on the strongly convex $f$ & \hspace*{-2mm}an error of $W_2\big(\m X_n\m , e^{-f}\m\big) \leq \varepsilon$\\
    \midrule
    QUICSORT (ours) & Lipschitz Gradient  & $\mathcal{O}\big(\sqrt{d\m}/\varepsilon\m\big)$ \\
      & \, + Lipschitz Hessian & $\mathcal{O}\big(\sqrt{d\m}/\sqrt{\varepsilon}\m\big)$\\
     & \, + Lipschitz third derivative & $\mathcal{O}\big(\sqrt{d\m}/\varepsilon^{\frac{1}{3}}\m\big)$\\
     \midrule
     OBABO splitting  & Lipschitz Gradient  & $\mathcal{O}\big(\sqrt{d\m}/\varepsilon\m\big)$ \\
           & \, + Lipschitz Hessian & $\mathcal{O}\big(\sqrt{d\m}/\hspace{-0.25mm}\sqrt{\varepsilon}\m\big)$\\
     \midrule
     Exponential Euler & Lipschitz Gradient  & $\mathcal{O}\big(\m\sqrt{d}/\varepsilon \m\big)$ \\
     \midrule
     UBU splitting & Lipschitz Gradient  & $\mathcal{O}\big(\sqrt{d\m}/\varepsilon\m\big)$ \\
   	 & \, + Lipschitz Hessian &  $\mathcal{O}\big(\sqrt{d\m}/\hspace{-0.25mm}\sqrt{\varepsilon}\m\big)$\\
   	 & \, + Strongly Lipschitz Hessian & $\mathcal{O}\big(\sqrt[4]{d\m}/\sqrt{\varepsilon}\m\big)$\\
     \midrule
     ALUM & Lipschitz Gradient  & $\mathcal{O}\big(\sqrt[3]{d\m}/\varepsilon^{\frac{2}{3}}\m\big)$ \\
    \bottomrule
  \end{tabular}
  \caption{Summary of complexities for ULD methods (with respect to $2$\hspace{0.125mm}-Wasserstein error).}\label{table:method_convergence}
  \end{threeparttable}
  
\noindent
We can numerically verify the mixing rates for most of these methods by estimating their strong\footnote{Throughout the paper, the terminology ``strong'' error refers to the $\L_2$ (or root mean square) error.} error at a large time $T$. This involves comparing the numerical solution obtained by the method using both large and small step sizes. To ensure that these two numerical solutions are close, we will apply methods using the same underlying Brownian motion. The estimated strong convergence rates for some of the numerical methods in Table \ref{table:method_convergence} are presented in Figure \ref{fig:intro_strong_convergence_graph}. We give a more detailed description of our experiments in Section \ref{sect:experiments}.

\begin{figure}[ht]
\begin{center}
\includegraphics[width=0.9\textwidth]{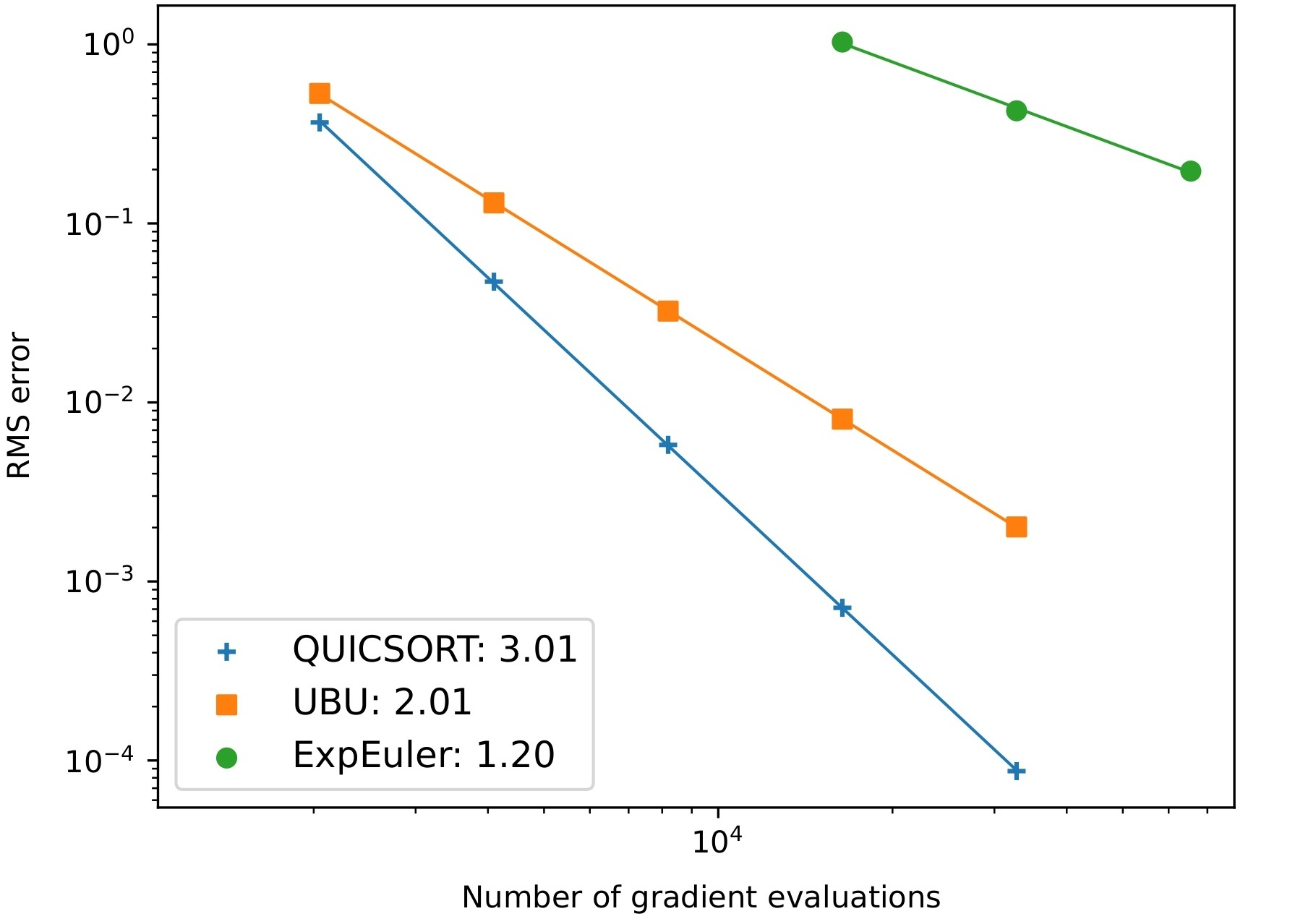}
\caption{Empirical validation of the complexities in Table \ref{table:method_convergence}. Here, the numerical methods are applied to simulate underdamped Langevin dynamics where the potential $f$ is obtained from a Bayesian logistic regression that is applied to the Titanic passenger survival dataset \citep{hendricks2015titanic}. We compute the Monte Carlo estimators for the strong error at $T = 100$.}
\label{fig:intro_strong_convergence_graph}
\end{center}
\end{figure}\vspace{-5mm}

\noindent
To help facilitate applications, our QUICSORT method has been implemented in Diffrax \citep{kidger2021ndes}, which is a high performance open-source software package for ODE and SDE simulation (see \textcolor{blue}{\href{https://docs.kidger.site/diffrax/examples/underdamped_langevin_example/}{docs.kidger.site/diffrax/examples/underdamped{\_}langevin{\_}example}}).\medbreak

\noindent
The paper is organised as follows. In Section \ref{sec:notation_definitions} we establish our key notation, definitions and assumptions. We also provide an outline for the derivation of the QUICSORT method. In Section \ref{sec:error_analysis}, we present our main results (first, second and third order convergence rates for the QUICSORT method) and describe our strategy of proof. In Section \ref{sect:experiments}, we provide experiments demonstrating the convergence and efficacy of the QUICSORT method as an MCMC algorithm for performing Bayesian logistic regression on real-world data. Finally, the technical details of our non-asymptotic error analysis can be found in the appendix.

\section{Notation, definitions and derivation of the QUICSORT method}
\label{sec:notation_definitions}

In this section, we present some of the notation, definitions and assumptions for the paper. We then provide an informal derivation of the QUICSORT method \eqref{eq:QUICSORT}.\medbreak\noindent
Throughout the paper, we use $\|\cdot\|_2$ to denote the standard Euclidean norm on $\mathbb{R}^d$ and $\mathbb{R}^{2d}$. Along with the Euclidean norm, we will use the usual inner product $\langle \m \cdot\m, \m\cdot\m\rangle$ with $\langle v, v\rangle = \|v\|_2^2\m$.\medbreak\noindent
We shall now briefly recall the notions of tensor and operator norms. For $k \geq 1$, a $k$-tensor on $\mathbb{R}^d$ is an element of the $d^k$-dimensional space $\mathbb{R}^{d \times \cdots \times d}$. We can interpret a $k$-tensor on $\mathbb{R}^d$ as a multilinear map into $\mathbb{R}$ with $k$ arguments from $\mathbb{R}^d$. Hence, for $k$-tensor $T$ on $\mathbb{R}^d$ (with $k \geq 2$) and a vector $v \in \mathbb{R}^d$, we define $Tv := T(v, \cdots)$ as a $(k-1)$-tensor on $\mathbb{R}^d$. Using this, we can iteratively define the operator norm of $T$ as
\begin{align*}
	\| T \|_{\text{op}} :=  \begin{cases}\,\,\,\m\|T\|_2\m, & \text{if}\,\,T\in\mathbb{R}^d,\\[5pt] \,\underset{\substack{v\m\in\m\mathbb{R}^d,\\[2pt] \|v\|_2\m\leq\m 1}}{\sup}\|Tv\|_{\text{op}}\m, & \text{if}\,\,T\,\,\text{is a }k\text{-tensor on }\mathbb{R}^d\text{ with }k\geq 2\m.\end{cases}
\end{align*}
With a slight abuse of notation, we will write $\| \cdot \|_2$ instead of $\| \cdot \|_{\text{op}}$ for all $k$-tensors. 

\subsection{Assumptions on $f$} 

The potential function $f: \mathbb{R}^d \to \mathbb{R}$ is assumed to be differentiable throughout this paper. In addition, we will always assume that $f$ to be $m$-strongly convex
\begin{align}
\label{eq:assumption_a1}
	f(y) \geq f(x) + \langle \nabla f(x), y - x \rangle + \frac{1}{2} m \| x - y \|^2_2\,, \tag{A.1}
\end{align}
and that the gradient $\nabla f : \mathbb{R}^d\rightarrow\mathbb{R}^d$ is $M_1$-Lipschitz continuous
\begin{align}
\label{eq:assumption_a2}
	\| \nabla f(x) - \nabla f(y) \|_2 \leq M_1 \| x - y \|_2\,, \tag{A.2}
\end{align}
for all $x, y \in \mathbb{R}^d$. It is straightforward to show that the two conditions \eqref{eq:assumption_a1} and \eqref{eq:assumption_a2} are equivalent to the Hessian of $f$ being positive definite and satisfying $m I_d \preccurlyeq \nabla^2 f(x) \preccurlyeq M_1 I_d$ (where $I_d$ is the $d \times d$ identity matrix and $A \preccurlyeq B$ means that $x^T A x \leq x^T B x$ for $x \in \mathbb{R}^d$). \medbreak\noindent
To establish a second order rate of convergence for the QUICSORT method, we shall assume further that $f$ is twice differentiable and that its Hessian $\nabla^2 f$ is $M_2$-Lipschitz continuous 
\begin{align}
\label{eq:assumption_a3}
	\| \nabla^2 f(x)v - \nabla^2 f(y)v \|_2 \leq M_2 \| x - y \|_2 \, \| v \|_2\,, \tag{A.3}
\end{align}
and, to establish the third order convergence in Theorem \ref{thm:W2_convergence}, we additional assume that $f$ is three times differentiable and and that its third derivative $\nabla^3 f$ is $M_3$-Lipschitz continuous
\begin{align}
 \label{eq:assumption_a4}
	\| \nabla^3 f(x)\m (v,u) - \nabla^3 f(y) \, (v,u) \|_2 \leq M_3 \| x - y \|_2 \, \| v \|_2  \, \| u \|_2\,, \tag{A.4}
\end{align}
for all $x, y, u, v \in \mathbb{R}^d$. 

\subsection{Probability notation}

Suppose $\left( \Omega, \mathcal{F}, \mathbb{P} ; \{ \mathcal{F}_t \}_{t \geq 0} \right)$ is a filtered probability space carrying a standard $d$-dimensional Brownian motion. The only SDE that we study in this paper is \eqref{eq:ULD} which, under our smoothness and convexity assumptions on $f$, admits a unique strong solution that is ergodic and whose stationary measure $\pi$ has a density $\pi (x, v) \propto e^{-f(x) - \frac{1}{2u} \| v \|^2_2}$. The numerical methods for \eqref{eq:ULD} are computed at times $\{ t_n \}_{n\geq 0}$ with $t_0 = 0$, $t_{n+1} > t_n$ and $h_n = t_{n+1} - t_n\m$. However, for deriving our global error estimates, we assume the step sizes $\{h_n\}$ are constant.
For a random variable $X$, taking values in either $\mathbb{R}^d$ or $\mathbb{R}^{2d}$, we define the $\L_p$ norm of $X$ as
\begin{align*}
\| X \|_{\L_p} := \E\big[\m\|X\|^p_2 \, \big]^\frac{1}{p},
\end{align*}
for all $p \geq 1$. 

\subsection{Coupling and Wasserstein distance}

Let $\mu$, $\nu$ be probability measures on $\mathbb{R}^{d}$.
A coupling between $\mu$ and $\nu$ is a random variable $Z = (X,Y)$ for which $X\sim\mu$ and $Y\sim\nu$.
For $p\geq 1$, we define the $p$\hspace{0.25mm}-Wasserstein distance between $\mu$ and $\nu$ as
\begin{align*}
W_p\big(\mu, \nu\big) :=\inf_{(X, Y)\m\sim\m (\mu\times \nu)}\|X-Y\|_{\L_p}\m,
\end{align*}
where the above infimum is taken over all couplings of the random variables $X$ and $Y$ with distributions $X\sim\mu$ and $Y\sim\nu$.\medbreak

In this paper, we are interested in obtaining non-asymptotic bounds for the $2$-Wasserstein distance between the target distribution $p(x)\propto e^{-f(x)}$ and the QUICSORT method \eqref{eq:QUICSORT}. For this, we will use the standard bound between the distribution of $X_n$ and $p(x)\propto e^{-f(x)}$,
\begin{align}
\label{eq:W2_L2_bound}
	W_2\big(\m X_n\m , e^{-f}\m\big) \leq \| X_n - x_{t_n}\|_{\L_2}\m,
\end{align}
where $(x_t\m, v_t)$ denotes the solution of the SDE \eqref{eq:ULD} started from the stationary distribution $(x_0\m, v_0)\sim \pi$. This then implies that $x_t\sim p$ for all $t\geq 0$, which is required for \eqref{eq:W2_L2_bound} to hold.
Both the SDE solution $(x_t\m, v_t)$ and $(X_n\m, V_n)$ are obtained using the same Brownian motion.

\subsection{Definition of the variables $W_n$, $H_n$ and $K_n$}

When Taylor expanding the SDE \eqref{eq:ULD}, we will obtain iterated integrals of Brownian motion. So, in order to derive a high order scheme, we require a method of simulating these integrals. In \citet{foster2021shifted}, the authors use random variables $(W_n\m, H_n\m, K_n)$, which we now define.
\begin{definition}
\label{def:W_K_K}
	Let $\{ t_n \}_{n \geq 0}$ be a sequence of times with $t_0 = 0$, $t_{n+1} > t_n$ and $h_n = t_{n+1} - t_n$. From the same Brownian motion $W = \{W_t\}$, we define the following three random variables,
	\begin{align}
	\label{eq:W_H_K_definition}
		\begin{split}
			W_n &:= W_{t_n, t_{n+1}} = W_{t_{n+1}} - W_{t_n}\m, \\[5pt]
			H_n  &:= \frac{1}{h_n} \int^{t_{n+1}}_{t_n} \bigg( W_{t_n, t} - \frac{t - t_n}{h_n} W_n \bigg)\, dt, \\[5pt]
			K_n  &:= \frac{1}{h^2_n} \int^{t_{n+1}}_{t_n} \bigg(\frac{1}{2} h_n - (t-t_n) \bigg) \bigg( W_{t_n, t} - \frac{t - t_n}{h_n} W_n \bigg) \, dt.
		\end{split}
	\end{align}
\end{definition}
\begin{theorem}
	Let $I_d$ denote the $d\times d$ identity matrix. Then the three random variables, $W_n \sim \mathcal{N} \left( 0, h_n I_d \right)$, $H_n \sim \mathcal{N} \left( 0, \frac{1}{12} h_n I_d \right)$ and $K_n \sim \mathcal{N} \left( 0, \frac{1}{720} h_n I_d \right)$ are independent and\label{thm:levy_space_time_1_2}
	\begin{align}
	\label{eq:levy_space_time_1_2}
		\begin{split}
			\int^{t_{n+1}}_{t_n} W_{t_n,\m r_1} \, dr_1 &= \frac{1}{2} h_n W_n + h_n H_n\m, \\[3pt]
			\int^{t_{n+1}}_{t_n} \int^{r_1}_{t_n} W_{t_n,\m r_2} \, dr_2 \, dr_1 &= \frac{1}{6} h_n^2 W_n + \frac{1}{2} h_n^2 H_n + h_n^2 K_n\m.
		\end{split}
	\end{align}	
\end{theorem}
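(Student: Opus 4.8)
The result has two essentially independent components: the two pathwise identities in \eqref{eq:levy_space_time_1_2}, and the distributional statement that $W_n, H_n, K_n$ are independent centred Gaussians with the stated covariances. The plan is to prove these in that order. By stationarity of Brownian increments it is enough throughout to take $t_n = 0$ and abbreviate $h = h_n$ and $W_r = W_{0,r}$.

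\textbf{The identities.} These come out of unwinding the definitions \eqref{eq:W_H_K_definition} together with Fubini. The first is immediate: the definition of $H_n$ gives $\int_0^h W_r\,dr = h H_n + \frac{W_n}{h}\int_0^h r\,dr = h H_n + \tfrac12 h W_n$. For the second, Fubini rewrites the iterated integral over $\{0 \le r_2 \le r_1 \le h\}$ as $\int_0^h (h-r) W_r\,dr = h\int_0^h W_r\,dr - \int_0^h r W_r\,dr$; the first term is known from the identity just established, and for $\int_0^h r W_r\,dr$ I would expand the definition of $K_n$ — using the elementary fact $\int_0^h(\tfrac12 h - r)\,r\,dr = -\tfrac1{12}h^3$ — and solve for it, obtaining $\int_0^h r W_r\,dr = \tfrac13 h^2 W_n + \tfrac12 h^2 H_n - h^2 K_n$. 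Substituting back and collecting terms produces the claimed coefficients $\tfrac16 h^2$, $\tfrac12 h^2$, $h^2$. This step is routine but bookkeeping-heavy; the only thing to watch is the normalising powers of $h$.

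\textbf{The distribution.} Each of $W_n, H_n, K_n$ is an integral linear in the path $\{W_r\}_{r\in[0,h]}$, and finite linear combinations of them are again such integrals, so the vector is jointly centred Gaussian; it therefore suffices to show that its $3\times 3$ covariance matrix equals $\mathrm{diag}\big(h,\tfrac1{12}h,\tfrac1{720}h\big)$, after which independence is automatic. I would organise the computation around the Brownian bridge $B_r := W_r - \tfrac rh W_n$ on $[0,h]$: since $\mathrm{Cov}(B_r, W_n) = (r\wedge h) - \tfrac rh h = 0$ and the pair is jointly Gaussian, $B$ is independent of $W_n$, and because $H_n = \tfrac1h\int_0^h B_r\,dr$ and $K_n = \tfrac1{h^2}\int_0^h(\tfrac12 h - r)B_r\,dr$ depend only on $B$, both are independent of $W_n$ and contribute the off-diagonal zeros. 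The remaining entries $\mathrm{Var}(H_n)$, $\mathrm{Var}(K_n)$, $\mathrm{Cov}(H_n,K_n)$ I would compute from $\mathrm{Cov}(B_s,B_t) = (s\wedge t) - \tfrac{st}{h}$ by Fubini — equivalently, by applying the bridge covariance operator (the Green's function of $-\partial_s^2$ on $[0,h]$ with Dirichlet boundary conditions) to the weight functions $1$ and $\tfrac12 h - r$, which turns each entry into an elementary polynomial integral and returns $\tfrac1{12}h$, $\tfrac1{720}h$ and $0$ respectively.

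\textbf{Main obstacle.} Nothing here is deep; the fiddliest points are verifying $\mathrm{Cov}(H_n,K_n)=0$ and pinning down the constant $\tfrac1{720}$, where the specific choice of weight $\tfrac12 h - r$ (a shifted Legendre polynomial) is precisely what makes the cross term vanish. An alternative that makes the independence conceptual, at the cost of importing more machinery, is to invoke the orthogonal-polynomial expansion of Brownian motion from \citet{foster2020optimal}, of which $W_n, H_n, K_n$ are, up to scaling, the first three coefficients.
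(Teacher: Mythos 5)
Your proof is correct and self-contained, which is a genuinely different route from the paper: the paper simply cites \citet[Theorem~2 and Remark~6]{foster2024approximating} without reproducing an argument. Your derivation of the two pathwise identities by unwinding the definitions, using Fubini, and solving for $\int_0^h r\,W_r\,dr$ is exactly right and the arithmetic checks out (in particular $\int_0^h(\tfrac12 h-r)r\,dr = -\tfrac{1}{12}h^3$ is the key input). For the distributional part, your decomposition through the Brownian bridge $B_r = W_r - \tfrac{r}{h}W_n$ is the cleanest way to get $\mathrm{Cov}(W_n,H_n)=\mathrm{Cov}(W_n,K_n)=0$ for free, and the remaining entries do reduce to elementary polynomial integrals against the bridge covariance $(s\wedge t)-\tfrac{st}{h}$: $\mathrm{Var}(H_n)=\tfrac{1}{h^2}\big(\tfrac{h^3}{3}-\tfrac{h^3}{4}\big)=\tfrac{h}{12}$, and $\mathrm{Cov}(H_n,K_n)$ vanishes because after integrating out $s$ the remaining integrand $(\tfrac12 h-t)\cdot\tfrac{t(h-t)}{2}$ is odd about $t=\tfrac{h}{2}$. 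What your argument buys is a reader who can verify the theorem without opening the cited reference; what the paper's approach buys is brevity and a pointer to the general polynomial-expansion framework (your closing remark correctly identifies this as the conceptual home of the result). One small point worth tightening if you write this up in full: $W_n$ is a pointwise evaluation of the path rather than an integral $\int_0^h \phi(r)W_r\,dr$, so when asserting joint Gaussianity you should say that all three live in the Gaussian Hilbert space generated by $W$ (e.g.\ as $\mathbb{L}_2$ limits of linear combinations of $W_{r_i}$), rather than that all three are integrals of the same type.
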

\begin{proof}
The result was shown in \citet[Theorem 2 and Remark 6]{foster2024approximating}.
\end{proof}
For a detailed derivation of these variables, we refer the reader to \citet{foster2020optimal} and \citet[Section 2]{foster2024approximating}. The key idea underlying our QUICSORT method and its third order convergence is the same idea underlying the SORT method in \citet{foster2021shifted}. Namely, we will replace the Brownian motion driving the SDE with a piecewise linear path.
By constructing the piecewise linear path $\mathcal{W}$ on $[t_n\m, t_{n+1}]$ to match the increment $W_n$ and the integrals \eqref{eq:levy_space_time_1_2}, the resulting ODE approximation matches the desired terms in its Taylor expansion and achieves high order convergence. This was shown in \citet{foster2021shifted}.
More generally, this approach can be used to design high order splitting methods for SDEs (see \citet{foster2024splitting}). In our case, we will define the following piecewise linear path $\mathcal{W}$.
\begin{definition}
\label{def:W_path}
	Let $\{ t_n \}_{n \geq 0}$ be a sequence of times with $t_0 = 0$, $t_{n+1} > t_n$ and $h_n = t_{n+1} - t_n$. We define an $\mathbb{R}^d$-valued piecewise linear path $\mathcal{W} = \{\mathcal{W}_t\}_{t\m\geq\m 0}\m,$ whose $n$-th piece is given by
	\begin{align*}
		\mathcal{W}_t := \begin{cases} \big(H_n + 6K_n\big) + \big(W_n - 12K_n\big)\m \frac{t-t_n}{h_n}\m, & \text{for}\,\,\,t\in(t_n,t_{n+1})\\
						\,\,W_t\m, & \text{for}\,\,t\in\{t_n,t_{n+1}\}\end{cases}.
	\end{align*}
\end{definition}\vspace{-4mm}
\begin{figure}[H]
\begin{center}
\includegraphics[width=\textwidth]{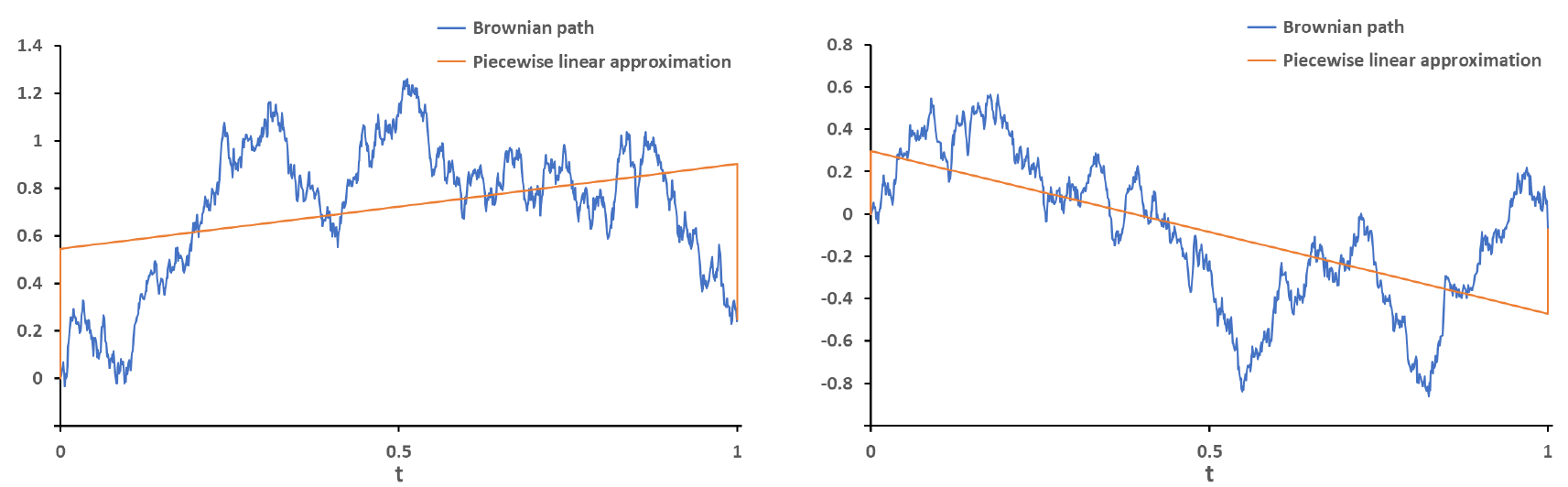}
\caption{Brownian motion alongside the piecewise linear path $\mathcal{W}$ given by Definition \ref{def:W_path} (diagram taken from \cite{foster2021shifted}).\vspace{-2mm}}
\label{fig:piecewise_lin_paths}
\end{center}
\end{figure}

\subsection{Derivation of the QUICSORT method}

In \citet{foster2021shifted}, the authors construct a splitting method for the underdamped Langevin dynamics \eqref{eq:ULD} using the piecewise linear path $\mathcal{W}$ defined in the previous subsection.
We will recall this splitting method and then use it to derive the QUICSORT method \eqref{eq:QUICSORT}.

\begin{definition}[The shifted ODE approximation of ULD]
Let $\{ t_n \}_{n\geq 0}$ be a sequence of times with $t_0 = 0$, $t_{n+1} > t_n$ and $h_n = t_{n+1} - t_n\m$. We construct a numerical solution $\{ (X_n, V_n) \}_{n\geq0}$ by first setting some initial $(X_0, V_0) \sim \pi_0$ and for each $n\geq0$ defining $(X_{n+1}, V_{n+1})$ as
\begin{align*}
	\begin{pmatrix} X_{n+1} \\[1pt] V_{n+1} \end{pmatrix} := \begin{pmatrix} \widetilde{x}^{\m n}_{t} \\[1pt] \widetilde{v}^{\m n}_{t} \end{pmatrix} - (H_n - 6K_n) \begin{pmatrix} 0 \\ \sigma \end{pmatrix},
\end{align*}
where $\sigma := \sqrt{2\gamma u}$ and $\{ (\widetilde{x}^{\m n}_t, \widetilde{v}^{\m n}_t) \}_{t \in [t_n, t_{n+1}]}$ solves the following Langevin-type ODE
\begin{align}\label{eq:langevin_ode}
	\frac{d}{dt}  \begin{pmatrix} \widetilde{x}^{\m n}_t \\[1pt] \widetilde{v}^{\m n}_t \end{pmatrix} = \begin{pmatrix} \widetilde{v}^{\m n}_t \\[1pt] - \gamma \widetilde{v}^{\m n}_t - u \nabla f(\widetilde{x}^{\m n}_t) \end{pmatrix} + (W_n - 12 K_n)\begin{pmatrix} 0 \\ \sigma \end{pmatrix},
\end{align}
with initial condition
\begin{align*}
	\begin{pmatrix} \widetilde{x}^{\m n}_{t_n} \\[1pt] \widetilde{v}^{\m n}_{t_n} \end{pmatrix} := \begin{pmatrix} X_{n} \\[1pt] V_{n} \end{pmatrix} + (H_n + 6K_n) \begin{pmatrix} 0 \\ \sigma \end{pmatrix}.
\end{align*}
\end{definition}

We now wish to find a suitable numerical discretisation of the Langevin-type ODE \eqref{eq:langevin_ode}.
We first let $C_n := \sigma(W_n - 12 K_n)$ and define a change of variables  $\widetilde{w}^{\m n}_t := e^{\gamma t} \widetilde{v}^{\m n}_t\m$. This gives,
\begin{align*}
	\frac{d}{dt}  \begin{pmatrix} \widetilde{x}^{\m n}_t \\ \widetilde{w}^{\m n}_t \end{pmatrix} = \begin{pmatrix} e^{-\gamma t} \widetilde{w}^{\m n}_t \\ - u e^{\gamma t} \nabla f(\widetilde{x}^{\m n}_t) + \sigma e^{\gamma t} C_n \end{pmatrix}.
\end{align*}
Taking the integral of both sides and expressing everything in terms of $\widetilde{x}^{\m n}$ and $\widetilde{v}^{\m n}$ produces
\begin{align*}
		 \widetilde{x}^{\m n}_{t_{n+1}} &=  \widetilde{x}^{\m n}_{t_n} + \frac{1 - e^{-\gamma h_n}}{\gamma}  \widetilde{v}^{\m n}_{t_n} - u \int^{t_{n+1}}_{t_n} \int^{r_1}_{t_n} e^{- \gamma (r_1 - r_2)} \nabla f ( \widetilde{x}^{\m n}_{r_2}) \, dr_2 \, dr_1 + \sigma \frac{e^{-\gamma h_n} + \gamma h_n - 1}{\gamma^2 h_n}  C_n\m, \\[3pt]
		  \widetilde{v}^{\m n}_{t_{n+1}} &= e^{-\gamma h_n}  \widetilde{v}^{\m n}_{t_n} - u \int^{t_{n+1}}_{t_n} e^{-\gamma (t_{n+1} - r_1)} \nabla f ( \widetilde{x}^{\m n}_{r_1}) \, dr_1 + \sigma \frac{1 - e^{-\gamma h_n}}{\gamma h_n} C_n\m.
\end{align*}
We then approximate the integral terms using a two-point Gauss–Legendre quadrature. That is, we use the quadrature points $\lambda_{\pm} = \frac{3 \pm \sqrt{3}}{6}$ and obtain
\begin{align*}
	\int^{t_{n+1}}_{t_n} e^{-\gamma (t_{n+1} - r_1)} \nabla f ( \widetilde{x}^{\m n}_{r_1}) \, dr_1 \approx \frac{1}{2} h_n \bigg(e^{-\lambda_+ \gamma h_n} \nabla f \big( \widetilde{x}^{\m n}_{t_{n} + \lambda_-} \big) + e^{-\lambda_- \gamma h_n} \nabla f \big( \widetilde{x}^{\m n}_{t_{n} + \lambda_+} \big) \bigg),
\end{align*}
and by additionally applying Fubini's theorem
\begin{align*}
	&\int^{t_{n+1}}_{t_n} \int^{r_1}_{t_n} e^{- \gamma (r_1 - r_2)} \nabla f ( \widetilde{x}^{\m n}_{r_2}) \, dr_2 \, dr_1 \\
	&\hspace{10mm} = \int^{t_{n+1}}_{t_n} \int^{t_{n+1}}_{r_2} e^{- \gamma (r_1 - r_2)}\, dr_1 \, \nabla f ( \widetilde{x}^{\m n}_{r_2}) \, dr_2 \\[3pt]
	&\hspace{10mm} \approx \frac{1}{2}h_n\bigg(\frac{1 - e^{-\lambda_+ \gamma h_n}}{\gamma} \nabla f \big( \widetilde{x}^{\m n}_{t_{n} + \lambda_-} \big) + \frac{1 - e^{-\lambda_- \gamma h_n}}{\gamma} \nabla f \big( \widetilde{x}^{\m n}_{t_{n} + \lambda_+} \big)\bigg)\m.	
\end{align*}
The terms $X^{(1)}_n$ and $X^{(2)}_n$ in the QUICSORT method are then designed to approximate $\widetilde{x}^{\m n}_{t_{n} + \lambda_-}$ and $ \widetilde{x}^{\m n}_{t_{n} + \lambda_+}$. To reduce the number of gradient evaluations, the formula for $X^{(1)}$ does not involve $\nabla f$. This is inspired by the UBU scheme \citep{sanz2021wasserstein} which approximates the SDE solution at the midpoint of the interval before evaluating $\nabla f$.
But, as $X^{(1)}$ has no $\nabla f$ term, the resulting scheme does not match the desired expansion. Thus, to correct this, we use a coefficient of $\frac{1}{3}$ instead of $\lambda_+$ for the gradient term in $X^{(2)}$. \medbreak

\noindent
As mentioned above, the QUICSORT method is a natural extension to the UBU scheme, with the difference being that we use a two-point Gaussian quadrature to approximate the gradient terms instead of a midpoint approximation. Similarly, just as the UBU scheme has a contractivity property for strongly convex $f$, we can show QUICSORT is also contractive. This is precisely the reason why we can establish third order convergence guarantees for the QUICSORT method and not the SORT method originally proposed in \citet{foster2021shifted}.

\section{Error analysis of the QUICSORT method}
\label{sec:error_analysis}

In this section, we will present our main results regarding the 2-Wasserstein convergence of our new method \eqref{eq:QUICSORT} to the stationary distribution of \eqref{eq:ULD}. Our main theorem is given below.

\begin{theorem}[Global error bound]
Let $\{(X_n, V_n) \}_{n \geq 0}$ be the QUICSORT method \eqref{eq:QUICSORT} and let $\{(x_t, v_t ) \}_{t \geq 0}$ denote the underdamped Langevin diffusion, given by \eqref{eq:ULD}, with the same underlying Brownian motion. Suppose that the assumptions \eqref{eq:assumption_a1} and \eqref{eq:assumption_a2} hold. Suppose further that $(x_0, v_0) \sim \pi$, the unique stationary measure of the diffusion process and that both processes have the same initial velocity $V_0 = v_0 \sim \mathcal{N} (0, u I_d)$. Let $0 < h \leq 1$ be fixed and $\gamma \geq 2 \sqrt{uM_1}$. Then for all $n \geq 0$, the global error at time $t=t_n$ has the bound\label{thm:global_error_bound}
	\begin{align}
		\| X_n - x_{t_n} \|_{\L_2} \leq  c\m e^{- n \alpha h} \| X_0 - x_0 \|_{\L_2} +  C_1 \m \sqrt{d} \m h,
	\end{align}
	where the contraction rate $\alpha$ is defined as
	\begin{align}
		\alpha = \frac{1}{\gamma} \min \bigg(\gamma^2 - uM_1, \, um, \, \lambda_- \gamma^2 - \frac{1}{2} u( M_1 - m) \bigg) - K h,
	\end{align}
	and the constants $c, C_1, K > 0$ depend only on $\gamma$, $u$, $m$ and $M_1\m$. Therefore, if $h$ is sufficiently small, we have $\alpha > 0$. Under the additional assumption \eqref{eq:assumption_a3}, the error at time $t_n$ satisfies
	\begin{align}
		\| X_n - x_{t_n} \|_{\L_2} \leq  c e^{-n \alpha h} \| X_0 - x_0 \|_{\L_2} +  C_2 \m d \m h^2,
	\end{align}
	where $C_2 > 0$ is a constant depending on $\gamma$, $u$, $m$, $M_1$ and $M_2\m$.\smallbreak\noindent
	Finally, under the additional assumption \eqref{eq:assumption_a4}, the global error at time $t=t_n$ has the bound
	\begin{align}
		\| X_n - x_{t_n} \|_{\L_2} \leq  c e^{-n \beta h} \| X_0 - x_0 \|_{\L_2} +  C_3 \m d^{1.5} \m h^3,
	\end{align}
	where $\beta = \frac{1}{2}\alpha$ and $C_3 > 0$ is a constant depending on $\gamma$, $u$, $m$, $M_1$, $M_2$ and $M_3\m$. 
\end{theorem}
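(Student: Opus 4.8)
The plan is the standard one-step-error plus one-step-contraction template for Langevin integrators, carried out in a twisted Euclidean norm. I would fix constants $a, b, c > 0$ (depending only on $\gamma, u, m, M_1$) and set $\mathcal{E}(x,v)^2 := a\|x\|_2^2 + b\langle x, v\rangle + c\|v\|_2^2$ on $\mathbb{R}^{2d}$, chosen so that $\mathcal{E}$ is equivalent to $\|\cdot\|_2$ and so that the friction together with the convexity of $f$ makes a single QUICSORT step a strict contraction for $\mathcal{E}$. The bound on $\|X_n - x_{t_n}\|_{\L_2}$ then follows from norm equivalence, and the hypothesis $V_0 = v_0$ is what lets the initial error enter only through $\|X_0 - x_0\|_{\L_2}$.

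Since $(x_0, v_0) \sim \pi$, we have $x_t \sim e^{-f}$ (which is $m$-strongly log-concave, hence sub-Gaussian with the correct dimensional scaling) and $v_t \sim \mathcal{N}(0, u I_d)$ for all $t$, so $\E[\|x_t\|_2^p] + \E[\|v_t\|_2^p] \lesssim d^{p/2}$; combined with $\L_p$ bounds on the step increments $\|x_t - x_{t_n}\|_2$, $\|v_t - v_{t_n}\|_2$ (from \eqref{eq:ULD} and Grönwall) these produce the powers of $d$ in the three error bounds. For the local error, fix $n$, let $(\widehat X, \widehat V)$ be one QUICSORT step from the exact value $(x_{t_n}, v_{t_n})$, and compare it to $(x_{t_{n+1}}, v_{t_{n+1}})$ in two stages. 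Stage (i): the true SDE flow versus the shifted-ODE flow \eqref{eq:langevin_ode}. By Theorem \ref{thm:levy_space_time_1_2}, the piecewise-linear path $\mathcal{W}$ reproduces $W_n$ and the first two iterated time integrals of $W$ exactly, so the stochastic Taylor expansions of the two flows agree up to remainders controlled by $M_1$ under \eqref{eq:assumption_a2}, additionally by $M_2$ (via a Taylor expansion of $\nabla f$) under \eqref{eq:assumption_a3}, and additionally by $M_3$ under \eqref{eq:assumption_a4}; with the moment bounds this gives local errors of order $\sqrt d\, h^2$, $d\, h^3$, $d^{1.5} h^4$ respectively. Stage (ii): the shifted-ODE flow versus its QUICSORT discretisation; the two-point Gauss–Legendre rule has quadrature error $O(h^5)$ against a fourth derivative, and one checks that dropping $\nabla f$ from $X^{(1)}$ and replacing $\lambda_+$ by $\tfrac13$ in the gradient term of $X^{(2)}$ are exactly the modifications preserving the matched expansion, so stage (ii) is of strictly higher order.

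For contractivity, with the noise $(W_n, H_n, K_n)$ frozen the QUICSORT step is a deterministic map, and the difference of two trajectories (in particular, of the numerical trajectory and one QUICSORT step applied to the exact solution) obeys a linear recursion in which the gradient terms enter through averaged Hessians $\int_0^1 \nabla^2 f(\cdot)\, d\theta$, all lying between $m I_d$ and $M_1 I_d$. The damping factors $e^{-\gamma h}$, $e^{-\lambda_\pm \gamma h}$ supply the contraction, the assumption $\gamma \geq 2\sqrt{u M_1}$ keeps the gradient-induced rotation dominated by the friction (the over-damped regime, as in the UBU analysis), and $-Kh$ is the $O(h^2)$ remainder from expanding the contraction factor; this yields $\mathcal{E}_{n+1} \leq e^{-\alpha h}\mathcal{E}_n + (\text{local error})$ with $\alpha$ as stated, and summing the geometric series ($1 - e^{-\alpha h} \gtrsim \alpha h$) turns the local orders into the claimed global orders with exponential forgetting of $\|X_0 - x_0\|_{\L_2}$. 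In the third-order case the rate must be degraded to $\beta = \tfrac12\alpha$: the $d^{1.5} h^4$ local estimate only closes after controlling a higher moment (equivalently, a product) of the running error $X_n - x_{t_n}$, and propagating that quantity through the recursion costs the factor two.

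The main obstacle is the stage-(i) local error at third order: carrying the stochastic Taylor expansions of both flows far enough and verifying that every $O(h^3 \cdot \text{moment})$ contribution cancels given the precise coefficients of Definition \ref{def:QUICSORT} — the weights $\tfrac12 e^{-\lambda_\pm \gamma h}$ on $\nabla f(X^{(1)})$ and $\nabla f(X^{(2)})$, the combinations $H_n + 6K_n$, $W_n - 12 K_n$, $H_n - 6K_n$, and the $\tfrac13$ substitution — leaving only the $M_3$-remainder of size $d^{1.5} h^4$. This is lengthy but essentially mechanical; the delicate points are that the residual martingale-type terms of order $h^{3.5}$ vanish precisely because $\mathcal{W}$ matches the iterated integrals of $W$ (otherwise one loses the clean $h^3$), and the careful bookkeeping giving $\beta = \tfrac12\alpha$.
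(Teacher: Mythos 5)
Your overall architecture is the same as the paper's: contraction of a single QUICSORT step in a twisted norm (your $\mathcal{E}$ is the norm induced by the paper's change of coordinates $A = \left(\begin{smallmatrix} wI & I \\ zI & I\end{smallmatrix}\right)$ with $w + z = \gamma$), combined with local error bounds and a geometric sum, with the intermediate process (one QUICSORT step applied to the exact solution) playing the pivot. However, there is a genuine error in your description of the local error at third order, and it undermines your explanation of why $\beta = \tfrac12\alpha$ is needed.

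You claim that ``the residual martingale-type terms of order $h^{3.5}$ vanish precisely because $\mathcal{W}$ matches the iterated integrals of $W$,'' and you list the local errors as $\sqrt{d}\, h^2$, $d\, h^3$, $d^{1.5} h^4$. That is wrong: the piecewise linear path of Definition \ref{def:W_path} matches $W_n$, $\int W_{t_n,\cdot}$ and $\int\int W_{t_n,\cdot}$ (i.e.\ $W_n, H_n, K_n$), but not the triple time integral, which by \eqref{eq:levy_space_time_3} involves the extra Gaussian $M_n$ that the two-parameter linear bridge cannot reproduce; there are also centred stochastic integrals of $\nabla^3 f$ against $W$ of order $h^{3.5}$. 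Consequently the local \emph{strong} error at third order is $O(d^{1.5}\m h^{3.5})$, as in Theorem \ref{thm:local_error_bounds}, and only the local \emph{weak} error (the norm of the conditional mean) is $O(d^{1.5}\m h^{4})$, because those $h^{3.5}$ contributions are centred. If your claim were literally true your local strong error would be $h^4$, the simple geometric-series argument you use for the first two orders would already yield a global $d^{1.5}h^3$ with \emph{no} degradation of the rate, and your own invocation of $\beta = \tfrac12\alpha$ would be superfluous --- so your proposal is internally inconsistent on this point.

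The actual mechanism in the paper is the following, and this is the key ingredient your sketch is missing. The naive recursion $\widetilde{e}_{n+1} \leq e^{-\alpha h}\widetilde{e}_n + \sigma_{\max} C_s d^{1.5}h^{3.5}$ would only give a global $d^{1.5}h^{2.5}$. To recover the full $h^3$, the paper works with the squared error, expands
\begin{align*}
\widetilde{e}_{n+1}^{\,2} = \|\widetilde{y}_{n+1}\|_{\L_2}^2 + \|\widehat{y}_{n+1}\|_{\L_2}^2 + 2\m\E\big[\langle \widetilde{y}_{n+1}, \widehat{y}_{n+1}\rangle\big],
\end{align*}
and splits the cross term as $\E[\langle y_n, \E[\widehat{y}_{n+1}\mid\mathcal{F}_n]\rangle] + \E[\langle \widetilde{y}_{n+1}-y_n, \widehat{y}_{n+1}\rangle]$. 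The first piece is controlled by the \emph{conditional weak} local error of size $h^4$ (Corollary \ref{thm:conditional_local_error_bounds_app}), and the second by the Lipschitz one-step growth bound (Theorem \ref{thm:QUICSORT_linear_growth_bound}) times the strong local error $h^{3.5}$, giving an extra factor $h$. Both contribute at order $h^{p_w} = h^4$ multiplied by $\widetilde{e}_n$, and after Young's inequality the recursion for $\widetilde{e}_n^{\,2}$ closes with contraction factor $1 - \alpha h$; taking the square root at the end is exactly where the decay rate halves, giving $\beta = \tfrac12\alpha$. So the ``price'' you gesture at is real, but the reason is the $h^{3.5}/h^4$ strong/weak mismatch, not that the $h^{3.5}$ terms vanish, nor a vague appeal to ``higher moments.'' Without the weak error bound and the conditional-expectation trick for the cross term, the argument does not close at third order.
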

A proof for this theorem can be found in Appendix \ref{sec:appendix_D}. The various constants are given explicitly throughout the appendices. The general recipe for the error analysis is as follows.
\begin{itemize}
\item At each time step $t_{n+1}$, we wish to bound the $\L_2$ error between the underdamped Langevin diffusion \eqref{eq:ULD} and the QUICSORT method \eqref{eq:QUICSORT} where we assume the two processes are synchronously coupled (i.e.~driven by the same Brownian motion $W$).

\item We introduce a third process, obtained by applying the QUICSORT approximation with step size $h_n$ to the diffusion process at each time $t_n$.

\item By showing that the QUICSORT method is contractive, we obtain an exponentially decaying bound for the $\L_2$ distance between our approximation and this third process.

\item We establish local error bounds between the true diffusion \eqref{eq:ULD} and the third process. This allows us to estimate the $\L_2$ error at time $t_{n+1}$ between the true diffusion \eqref{eq:ULD} and the QUICSORT approximation \eqref{eq:QUICSORT} in terms of the same error at time $t_n$.

\item By iteratively unfolding this local error bound, we obtain a global $\L_2$ error estimate over the entire length of the Markov chain. In particular, due to the contractivity of the QUICSORT method when $f$ is strongly convex, this bound holds for all $n\geq 0$.
\end{itemize}\medbreak\noindent
This approach is illustrated in Figure \ref{fig:error_analysis_diagram}:
\begin{figure}[H]
\begin{center}
\includegraphics[width=0.9\textwidth]{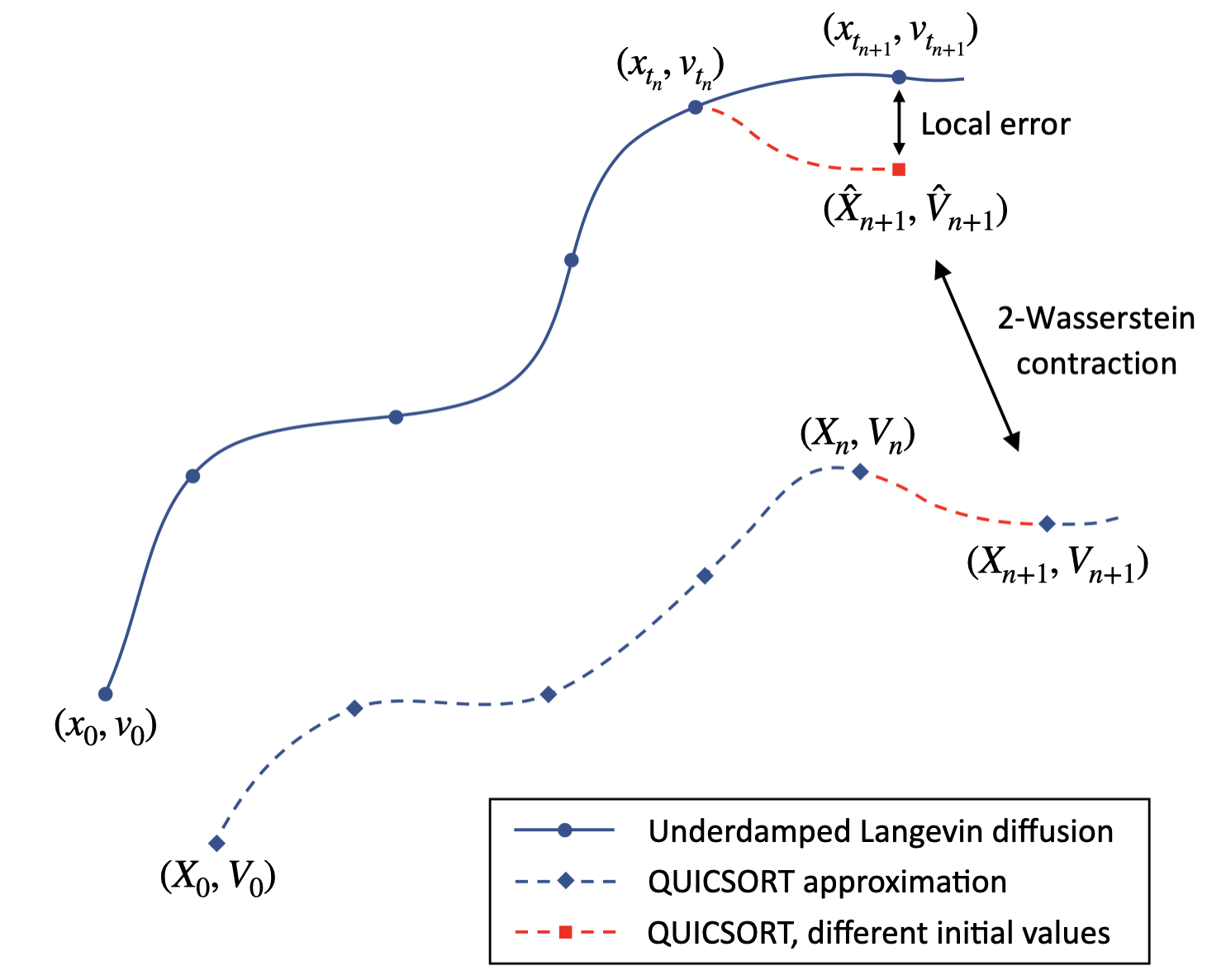}
\caption{Diagram outlining the general strategy for our error analysis.}
\label{fig:error_analysis_diagram}
\end{center}
\end{figure}

\noindent
For the rest of the section, we outline the key results used in each step of the error analysis.

\subsection{Exponential contractivity}

Typically the local errors between the approximation and the diffusion process propagate like a random walk and we would not be able to bound the global $\L_2$ error for all $n \geq 0$. Fortunately, it is possible to remedy this by applying a suitable $\L_2$ contractivity argument.
For ULD, contractivity does not hold directly for the solution process $\{ (x_t, v_t) \}_{t\geq 0}$, but instead holds after introducing a coordinate transformation \citep{eberle2019couplings}. In this paper we use the approach introduced by \citet{dalayan2020sampling} which considers,
\begin{align*}
	\left\{ \begin{pmatrix} w x_t + v_t \\ z x_t + v_t\end{pmatrix} \right\}_{t \geq 0},
\end{align*}
where $w \in [0, \frac{1}{2} \gamma )$ and $z := \gamma - w$ to obtain $\L_2$ contractivity. However, to establish our higher order convergence guarantees, we prove a similar contraction result between two synchronously coupled steps of our QUICSORT method \eqref{eq:QUICSORT}. This allows us to bound the local error at each time step in terms of global $\L_p$ bounds of the underlying diffusion process.\medbreak

\noindent
Our main result for this step is as follows.

\begin{theorem}[Exponential contractivity of the QUICSORT method]
Consider the application of the QUICSORT method from different initial conditions  $\left(X_n, V_n \right)$ and $\left( Y_n, U_n \right)$ at time $t_n$. Let $\left( X_{n+1}, V_{n+1} \right)$ and $\left( Y_{n+1}, U_{n+1} \right)$ be the corresponding approximations obtained at time $t_{n+1}$ using the same Brownian motion. We define the coordinate transformations:\label{thm:main_contractivity_theorem}
	\begin{align}
		\begin{split}
			\begin{pmatrix} W_{n} \\[3pt] Z_{n} \end{pmatrix} &= \begin{pmatrix} \left( w X_{n} + V_{n} \right) - \left( w Y_{n} + U_{n} \right) \\[3pt]  \left( z X_{n} + V_{n} \right) - \left( z Y_{n} + U_{n} \right) \end{pmatrix}, \\[5pt]
			\begin{pmatrix} W_{n+1} \\[3pt] Z_{n+1} \end{pmatrix} &= \begin{pmatrix} \left( w X_{n+1} + V_{n+1} \right) - \left( w Y_{n+1} + U_{n+1} \right) \\[3pt]  \left( z X_{n+1} + V_{n+1} \right) - \left( z Y_{n+1} + U_{n+1} \right) \end{pmatrix},
		 \end{split}
	\end{align}
	where $w \in \left[0, \frac{1}{2} \gamma \right)$ and $z = \gamma - w$. Then under the assumptions \eqref{eq:assumption_a1} and \eqref{eq:assumption_a2}, we have
	\begin{align}
		\left( \| W_{n+1} \|_{\L_2} +  \| Z_{n+1} \|_{\L_2} \right) \leq e^{-\alpha h_n} \big( \| W_n \|_{\L_2} +  \| Z_n \|_{\L_2}\big),
	\end{align}
	for all $n \geq 0$ and $h_n\in (0,1]$, where the contraction rate $\alpha$ is given by
	\begin{align}
		\alpha = \frac{1}{\gamma - 2 w} \min\bigg((\gamma - w)^2 - uM_1, \, um - w^2, \, \big(\lambda_- (\gamma - w)^2 - \lambda_+ w^2 \big) - \frac{1}{2} u( M_1 - m)\bigg) - K h_n\m,
	\end{align}
	for some constant $K > 0$ not depending on $h_n\m$. 
\end{theorem}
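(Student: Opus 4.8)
The plan is to exploit the fact that every stochastic term in the QUICSORT update \eqref{eq:QUICSORT} — namely $\sigma(H_n+6K_n)$, the $C_n$ contributions, and $\sigma(H_n-6K_n)$ — enters \emph{additively} and is shared between the two copies of the scheme. Writing $\Delta X_n := X_n - Y_n$, $\Delta V_n := V_n - U_n$ and likewise for the internal stages, subtracting the two sets of equations yields a \emph{closed, noise-free} recursion for $(\Delta X_n,\Delta V_n)$; in particular $\Delta V^{(1)}_n=\Delta V_n$ and $\Delta V_{n+1}=\Delta V^{(2)}_n$. First I would record this recursion explicitly. Then, for $i\in\{1,2\}$, I would use the fundamental theorem of calculus to write $\nabla f(X^{(i)}_n)-\nabla f(Y^{(i)}_n)=Q^{(i)}_n\,\Delta X^{(i)}_n$, where $Q^{(i)}_n:=\int_0^1\nabla^2 f\big(Y^{(i)}_n+s(X^{(i)}_n-Y^{(i)}_n)\big)\,ds$ is symmetric and satisfies $mI_d\preccurlyeq Q^{(i)}_n\preccurlyeq M_1 I_d$ by \eqref{eq:assumption_a1}--\eqref{eq:assumption_a2}. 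Substituting the gradient-free expression for $\Delta X^{(1)}_n$ into the formula for $\Delta X^{(2)}_n$ turns the step into a linear map $(\Delta X_n,\Delta V_n)\mapsto(\Delta X_{n+1},\Delta V_{n+1})$ whose $2\times 2$ block coefficients are polynomials in $Q^{(1)}_n,Q^{(2)}_n$ (the degree-two part carrying an extra factor $h_n^2$) with scalar coefficients that are explicit functions of $\gamma h_n$ through $e^{-\lambda_\pm\gamma h_n}$, $e^{-\frac13\gamma h_n}$ and $e^{-\gamma h_n}$.

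Next I would pass to the twisted variables via the invertible transform $\big(\begin{smallmatrix}W\\Z\end{smallmatrix}\big)=\big(\begin{smallmatrix}wI_d & I_d\\ zI_d & I_d\end{smallmatrix}\big)\big(\begin{smallmatrix}\Delta X\\\Delta V\end{smallmatrix}\big)$, obtaining $\big(\begin{smallmatrix}W_{n+1}\\Z_{n+1}\end{smallmatrix}\big)=\big(\begin{smallmatrix}A_n & B_n\\ C_n & D_n\end{smallmatrix}\big)\big(\begin{smallmatrix}W_n\\Z_n\end{smallmatrix}\big)$ for matrices $A_n,\dots,D_n$ depending on $Q^{(1)}_n,Q^{(2)}_n,h_n$. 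By the triangle inequality, $\|W_{n+1}\|_{\L_2}+\|Z_{n+1}\|_{\L_2}\leq(\|A_n\|_2+\|C_n\|_2)\|W_n\|_{\L_2}+(\|B_n\|_2+\|D_n\|_2)\|Z_n\|_{\L_2}$, so it suffices to bound the operator norms of these blocks by $e^{-\alpha h_n}$. I would Taylor-expand the scalar functions of $h_n$ to first order, separating the contribution of the exact Langevin ODE flow — which, by the coupling computation of \citet{dalayan2020sampling}, contracts $(W,Z)$ at rate $\frac{1}{\gamma-2w}\min\big((\gamma-w)^2-uM_1,\,um-w^2\big)$ — from an $O(h_n^2)$ remainder, while the mismatch between the two stage gradients used in the quadrature produces the extra $\lambda_\pm$ weights and the $-\tfrac12 u(M_1-m)$ term in the third entry of the min. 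Using $mI_d\preccurlyeq Q^{(i)}_n\preccurlyeq M_1 I_d$, every operator-norm bound reduces to a scalar estimate in a Hessian eigenvalue $\mu\in[m,M_1]$ and in $\gamma h_n\in(0,1]$; absorbing the $O(h_n^2)$ terms into $e^{-\alpha h_n}$ via the $-Kh_n$ correction then gives the claim for each fixed admissible $w$.

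The main obstacle I anticipate is twofold. First, since $Q^{(1)}_n$ and $Q^{(2)}_n$ need not commute, the blocks $A_n,\dots,D_n$ are not simultaneously diagonalizable with the Hessian averages, so the reduction to a scalar problem must be carried out term by term: one has to bound expressions of the form $\|aQ^{(1)}_n+bQ^{(2)}_n\|_2$ and $\|Q^{(1)}_nQ^{(2)}_n\|_2$ using the spectral bounds and keeping careful track of signs of the scalar coefficients. Second, extracting the sharp constant $\lambda_-(\gamma-w)^2-\lambda_+w^2-\tfrac12 u(M_1-m)$ rather than a cruder bound requires isolating precisely which $h_n$-terms are genuinely first order and then verifying the resulting scalar inequality $\|\text{block}\|\leq e^{-\alpha h_n}$ holds uniformly over $\mu\in[m,M_1]$ and $h_n\in(0,1]$. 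The remaining work — the explicit $h_n$-expansions of the exponential coefficients and the scalar optimization — is routine but lengthy, and I would relegate it to the appendix.
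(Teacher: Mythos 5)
Your outline correctly identifies that the noise cancels under synchronous coupling, that the step reduces to a linear map with blocks built from Hessian averages $Q^{(i)}_n\in[mI_d,M_1I_d]$, and that the exact Langevin flow computation of Dalalyan--Riou-Durand is the right comparison. But the estimate you propose to deploy,
$\|W_{n+1}\|_{\L_2}+\|Z_{n+1}\|_{\L_2}\leq(\|A_n\|_2+\|C_n\|_2)\|W_n\|_{\L_2}+(\|B_n\|_2+\|D_n\|_2)\|Z_n\|_{\L_2}$,
is the wrong inequality and will not produce a contraction. The off-diagonal blocks are $O(h_n)$ with opposite signs at leading order (a direct computation in the twisted coordinates gives $B_n\approx\tfrac{h_n}{z-w}(zw I_d-u\mathcal H_n)$ and $C_n\approx -B_n$), and the contraction you need is driven precisely by an antisymmetric cancellation in the \emph{quadratic form}: in $\|W_{n+1}\|^2_{\L_2}+\|Z_{n+1}\|^2_{\L_2}=W_n^\top E W_n+Z_n^\top F Z_n+2W_n^\top G Z_n$ the cross term $2W_n^\top(A_n^\top B_n+C_n^\top D_n)Z_n$ vanishes at $O(h_n)$, i.e.\ $G=O(h_n^2)$ in Loewner order. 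Bounding each block's operator norm separately by the triangle inequality destroys that cancellation: $\|B_n\|_2$ and $\|C_n\|_2$ are both positive $O(h_n)$ quantities that add onto the diagonal blocks, giving a strictly worse rate and, for a range of $(\gamma,u,m,M_1)$ admitted by the hypotheses, no contraction at all. The paper works throughout with $\|W_{n+1}\|^2_{\L_2}+\|Z_{n+1}\|^2_{\L_2}\leq(1-2\alpha h_n)(\|W_n\|^2_{\L_2}+\|Z_n\|^2_{\L_2})$ and only passes back to the stated form at the very end. You would need to make this same move.

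The second gap is the one you flag yourself: with $Q^{(1)}_n$ and $Q^{(2)}_n$ non-commuting, the reduction to a one-parameter scalar estimate in $\mu\in[m,M_1]$ does not go through, and the promise to ``bound $\|aQ^{(1)}_n+bQ^{(2)}_n\|_2$ and $\|Q^{(1)}_nQ^{(2)}_n\|_2$ term by term'' is not a proof but a restatement of the difficulty. The device the paper uses, and which you do not have, is a \emph{splitting of the QUICSORT step into three smaller maps each containing at most one gradient evaluation}: $(X_n,V_n)\to(\widetilde X^{(2)}_n,\widetilde V^{(2)}_n)$, then $(\widetilde X^{(2)}_n,\widetilde V^{(2)}_n)\to(\widetilde X_{n+1},\widetilde V_{n+1})$, plus an $O(h_n^2)$ correction term $(\overline X_n,\overline V_n)$. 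Inside each sub-map all block entries are polynomials in a \emph{single} symmetric matrix, so they commute with one another and the quadratic-form Loewner bounds genuinely reduce to scalar estimates. These sub-maps behave like steps of size $\lambda_+h_n$ and $\lambda_-h_n$, producing sub-rates $\alpha_1,\alpha_2$ carrying the $\lambda_+$ and $\lambda_-$ weights respectively; averaging them and taking a minimum over the four resulting cases is exactly how the term $\lambda_-(\gamma-w)^2-\lambda_+w^2-\tfrac12u(M_1-m)$ emerges. Without the splitting you have no clean route to the scalar estimates, so ``routine but lengthy'' is hiding the central content of the argument.
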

A proof of the above can be found in Appendix \ref{sec:appendix_B}.

\subsection{Global $\L_p$ bounds for underdamped Langevin dynamics}

Over each time interval $[t_n, t_{n+1}]$, we wish to iteratively bound the error at $t_{n+1}$ in terms of the error at time $t_n$. In order to obtain estimates independently of the current position and momentum of the SDE, we use the global $\L_p$ bounds of underdamped Langevin dynamics.
In particular, by using the moments of the momentum's stationary distribution ($\propto e^{-f}$) and extending \citet[Lemma 2]{dalalyan2017further}, we will establish the following global $\L_p$ bounds.

\begin{theorem}[Global $\L_p$ bounds for ULD]
	Let $\{ (x_t, v_t) \}_{t \geq 0}$ be the underdamped Langevin diffusion \eqref{eq:ULD} with initial condition $(x_0, v_0) \sim \pi$, the stationary distribution of the process. Then, for all $p \geq 1$, \label{thm:global_Lp_bounds}
	\begin{align}
		\| v_t \|_{\L_{2p}} \leq C(p) \sqrt{u \m d}\m,
	\end{align}
	where $C(p) = \left[ (2p - 1)!! \right]^{\frac{1}{2p}}$. Here, $!!$ denotes the standard double factorial for odd numbers.\smallbreak
	
	\noindent
	In addition, under the assumption that the gradient of $f$ is $M_1$-Lipschitz continuous, we have
	\begin{align}
		\| \nabla f(x_t) \|_{\L_{2p}} \leq C(p) \sqrt{M_1 \m d}\m,
	\end{align}
	for all $p \geq 1$.
\end{theorem}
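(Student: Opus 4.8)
The plan is to prove the two bounds separately: the velocity bound is essentially an exact Gaussian computation, whereas the gradient bound requires an integration-by-parts induction extending the classical estimate $\E_{X\sim e^{-f}}[\|\nabla f(X)\|_2^2]=\E[\Delta f(X)]\le M_1 d$.

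Since the diffusion is started from stationarity, $(x_t,v_t)\sim\pi$ for every $t\ge 0$, and the velocity marginal of $\pi(x,v)\propto e^{-f(x)-\frac{1}{2u}\|v\|_2^2}$ is exactly $\mathcal{N}(0,uI_d)$; thus $v_t$ has independent $\mathcal{N}(0,u)$ coordinates for all $t$. I would bound $\|v_t\|_{\L_{2p}}^{2p}=\E[\|v_t\|_2^{2p}]$ by applying the triangle inequality in $L_p(\Omega)$ to $\|v_t\|_2^2=\sum_{i=1}^d v_{t,i}^2$: this gives $\E[\|v_t\|_2^{2p}]^{1/p}=\|\sum_{i} v_{t,i}^2\|_{L_p(\Omega)}\le\sum_{i}\|v_{t,i}^2\|_{L_p(\Omega)}=d\,\E[v_{t,1}^{2p}]^{1/p}=d\,((2p-1)!!\,u^p)^{1/p}$, using the Gaussian moment identity $\E[Z^{2p}]=(2p-1)!!\,\sigma^{2p}$ for $Z\sim\mathcal{N}(0,\sigma^2)$. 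Taking $2p$-th roots yields $\|v_t\|_{\L_{2p}}\le[(2p-1)!!]^{1/2p}\sqrt{u\m d}=C(p)\sqrt{u\m d}$. (Equivalently, $\|v_t\|_2^2/u\sim\chi^2_d$, so $\E[\|v_t\|_2^{2p}]=u^p\prod_{j=0}^{p-1}(d+2j)$, and one finishes via the elementary inequality below.)

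For the gradient bound, $\nabla f(x_t)$ is no longer Gaussian, but $x_t\sim e^{-f}$, so it suffices to bound $\E_{X\sim e^{-f}}[\|\nabla f(X)\|_2^{2p}]$, which I would prove by induction on $p$. Writing $g:=\|\nabla f\|_2^2$ and integrating $\langle g^{p-1}\nabla f,\nabla f\rangle\m e^{-f}$ by parts via $\mathrm{div}(g^{p-1}\nabla f)=g^{p-1}\Delta f+2(p-1)g^{p-2}\langle\nabla^2 f\nabla f,\nabla f\rangle$, one obtains for $p\ge 1$ that $\E[g(X)^p]=\E[g(X)^{p-1}\Delta f(X)]+2(p-1)\E[g(X)^{p-2}\langle\nabla^2 f(X)\nabla f(X),\nabla f(X)\rangle]$ (the last term absent when $p=1$). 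Since $m I_d\preccurlyeq\nabla^2 f\preccurlyeq M_1 I_d$ we have $0\le\Delta f\le M_1 d$ and $0\le\langle\nabla^2 f\nabla f,\nabla f\rangle\le M_1 g$, so $\E[g(X)^p]\le M_1(d+2(p-1))\,\E[g(X)^{p-1}]$; iterating down to $p=0$ gives $\E[g(X)^p]\le M_1^p\prod_{j=0}^{p-1}(d+2j)$. The elementary inequality $d+2j\le(2j+1)d$ for $d\ge 1$ and $j\ge 0$ (equivalent to $2j(d-1)\ge 0$) then yields $\prod_{j=0}^{p-1}(d+2j)\le(2p-1)!!\,d^p$, whence $\E[\|\nabla f(x_t)\|_2^{2p}]\le(2p-1)!!\,(M_1 d)^p$, i.e.\ $\|\nabla f(x_t)\|_{\L_{2p}}\le C(p)\sqrt{M_1\m d}$.

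The main obstacle is rigour in the second part: the integration by parts needs the boundary terms to vanish at infinity and all intermediate expectations to be finite. Finiteness follows from the Lipschitz bound $\|\nabla f(x)\|_2\le\|\nabla f(0)\|_2+M_1\|x\|_2$, which has every polynomial moment under $e^{-f}$ because strong convexity forces Gaussian-type tails; and the fact that \eqref{eq:assumption_a2} only guarantees $\nabla^2 f$ almost everywhere can be handled by mollifying $f$ and passing to the limit. The case $p=1$ is precisely the extension of \citet[Lemma 2]{dalalyan2017further} referred to in the text.
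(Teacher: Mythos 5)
Your velocity bound is essentially the paper's argument: you identify the velocity marginal of $\pi$ as $\mathcal N(0,uI_d)$, split $\|v_t\|_2^2$ across coordinates, and invoke the one-dimensional Gaussian moment $\E[Z^{2p}]=(2p-1)!!\,\sigma^{2p}$; you use Minkowski in $L_p(\Omega)$ where the paper uses Jensen, but the resulting bound $\E[\|v_t\|_2^{2p}]\le (2p-1)!!\,(ud)^p$ is the same. For the gradient bound you take a genuinely different route. The paper reduces to dimension one, running the inductive integration by parts on a single coordinate (conditionally on the others, though the write-up calls these ``marginals''), getting $\E[|\partial_i f|^{2p}]\le (2p-1)!!\,M_1^p$ for each $i$ and combining via Jensen's inequality $\E[(\sum_i a_i)^p]\le d^{\,p-1}\sum_i\E[a_i^p]$. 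You instead integrate by parts once in $\mathbb R^d$ against the full density $e^{-f}$, using $\mathrm{div}(g^{p-1}\nabla f)=g^{p-1}\Delta f+2(p-1)g^{p-2}\langle\nabla^2 f\,\nabla f,\nabla f\rangle$ with $g=\|\nabla f\|_2^2$, which yields the clean recursion $\E[g^p]\le M_1\bigl(d+2(p-1)\bigr)\E[g^{p-1}]$ and hence $\E[g^p]\le M_1^p\prod_{j=0}^{p-1}(d+2j)$; the elementary bound $d+2j\le(2j+1)d$ then recovers the stated $(2p-1)!!\,(M_1 d)^p$. This avoids the coordinate/conditional subtlety entirely and produces a slightly sharper intermediate constant before the final relaxation. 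Your divergence identity is correct, the estimates $\Delta f\le M_1 d$ and $\langle\nabla^2 f\,w,w\rangle\le M_1\|w\|_2^2$ follow from $\nabla^2 f\preccurlyeq M_1 I_d$, and the tail and boundary-term issues you flag (finiteness of moments under Gaussian-type tails from strong convexity, a.e.\ second derivatives via Rademacher or mollification) are exactly the ones implicitly used in the paper's own one-dimensional argument. Both proofs are valid; yours is a cleaner multi-dimensional alternative.
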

A proof of the above can be found in Appendix \ref{sec:appendix_A}.

\subsection{Local error estimates}

We wish to estimate the absolute average difference (local weak error) and the mean squared difference (local strong error) between ULD and its QUICSORT approximation over the interval $[t_n, t_{n+1}]$ when the processes are synchronously coupled and coincide at time $t_n$.
By applying a contraction argument to the approximation and not to the SDE itself, we can use the global $\L_p$ bounds for the diffusion process when bounding each local error. Furthermore, under the smoothness assumptions on the scalar potential $f$, we are also able to estimate all of the error terms involving derivatives of $f$. 

\begin{theorem}[Local error bounds]
Let $Q_n =  (X_n, V_n)$ be the QUICSORT scheme \eqref{eq:QUICSORT} and $q_t =  (x_t, v_t )$ denote the underdamped Langevin diffusion \eqref{eq:ULD} with the same underlying Brownian motion.  Suppose the assumptions \eqref{eq:assumption_a1} and \eqref{eq:assumption_a2} hold and assume further that $(x_0, v_0) \sim \pi$, the unique stationary distribution of the diffusion process. Let $0 < h_n \leq 1$ be fixed. Then, if $Q_n = q_{t_n}$ at time $t_n$, the local strong and weak errors at time $t_{n+1}$ are bounded by\label{thm:local_error_bounds}
\begin{equation}
	\big\| Q_{n+1} - q_{t_{n+1}} \big\|_{\L_2} \leq C^{\m (1)}_{s} \m \sqrt{d} \m h_n^2\m, \quad \quad \big\|\m \E \big[ Q_{n+1} - q_{t_{n+1}} \big] \big\|_2 \leq C^{\m (1)}_{w} \m \sqrt{d} \m h_n^2\m,
\end{equation}
where $C^{\m (1)}_s, C^{\m (1)}_w > 0$ are finite constants depending on $\gamma$, $u$, $m$ and $M_1\m$.\smallbreak
\noindent
Under the additional assumption \eqref{eq:assumption_a3}, the local strong and weak errors are bounded by
\begin{equation}
	\big\| Q_{n+1} - q_{t_{n+1}} \big\|_{\L_2} \leq C^{\m (2)}_{s} \m d \m h_n^3\m, \quad \quad \big\|\m \E \big[ Q_{n+1} - q_{t_{n+1}} \big] \big\|_2 \leq C^{\m (2)}_{w} \m d\m h_n^3\m,
\end{equation}
where $C^{\m (2)}_s, C^{\m (2)}_w > 0$ are finite constants depending on $\gamma$, $u$, $m$, $M_1$ and $M_2\m$.\smallbreak
\noindent
Finally, under the additional assumption \eqref{eq:assumption_a4}, the local errors are bounded by
\begin{equation}
	\big\| Q_{n+1} - q_{t_{n+1}} \big\|_{\L_2} \leq C^{\m (3)}_{s} \m d^{1.5} \m h_n^{3.5}, \quad \quad \big\|\m \E \big[ Q_{n+1} - q_{t_{n+1}} \big] \big\|_2 \leq C^{\m (3)}_{w} \m d^{1.5} \m h_n^4\m,
\end{equation}
where $C^{\m (3)}_s, C^{\m (3)}_w > 0$ are finite constants depending on $\gamma$, $u$, $m$, $M_1$, $M_2$ and $M_3\m$. 
\end{theorem}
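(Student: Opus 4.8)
\noindent The plan is to fix a step index $n$, assume that the two processes coincide at $t_n$ (so $Q_n = q_{t_n} =: (x,v)$), write $h := h_n$, and split the one-step error through the \emph{shifted ODE} of \eqref{eq:langevin_ode}. Let $(\widetilde x_t, \widetilde v_t)$ solve that ODE on $[t_n, t_{n+1}]$ from the shifted initial value $\big(x,\, v + \sigma(H_n + 6K_n)\big)$, and set $\widehat q_{t_{n+1}} := \big(\widetilde x_{t_{n+1}},\, \widetilde v_{t_{n+1}} - \sigma(H_n - 6K_n)\big)$. Since the shifts $\pm\sigma(H_n \mp 6K_n)$ enter the QUICSORT map \eqref{eq:QUICSORT} identically, the triangle inequality gives $\|Q_{n+1} - q_{t_{n+1}}\|_{\L_2} \leq \|Q_{n+1} - \widehat q_{t_{n+1}}\|_{\L_2} + \|\widehat q_{t_{n+1}} - q_{t_{n+1}}\|_{\L_2}$, and likewise for the weak error; the first term is the error of discretising a deterministic ODE, the second is the error of replacing the Brownian path by the piecewise linear path $\mathcal{W}$ of Definition~\ref{def:W_path}. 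Each term is handled by Taylor-expanding in $h$ (an It\^o--Taylor expansion for $q$, ordinary Taylor expansions for $\widetilde q$ and for \eqref{eq:QUICSORT}), and the number of orders we may extract is dictated by how many times $\nabla f$ can be differentiated, i.e.\ by which of \eqref{eq:assumption_a2}--\eqref{eq:assumption_a4} is in force.

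For the path-replacement error $\|\widehat q_{t_{n+1}} - q_{t_{n+1}}\|$: because \eqref{eq:ULD} has additive noise, the only iterated integrals in the It\^o--Taylor expansion are iterated time integrals of $W$, and by construction $\mathcal{W}$ reproduces $W_n$, $\int_{t_n}^{t_{n+1}} W_{t_n, r}\, dr$ and $\int_{t_n}^{t_{n+1}}\!\int_{t_n}^{r_1} W_{t_n, r_2}\, dr_2\, dr_1$ exactly (Theorem~\ref{thm:levy_space_time_1_2}). Hence every term built from these cancels, and the residual is driven by (i) the first unmatched iterated integral $\int_{t_n}^{t_{n+1}}\!\int_{t_n}^{r_1}\!\int_{t_n}^{r_2} dW_{r_3}\, dr_2\, dr_1$, which has $\L_2$-norm $\mathcal{O}(h^{7/2})$ and vanishing mean, and (ii) Taylor remainders of $\nabla f$ along $\widetilde x$ and $x$: a zeroth-order expansion (using only \eqref{eq:assumption_a2}) leaves a remainder of order $\mathcal{O}(\sqrt d\, h^2)$; a first-order expansion (using \eqref{eq:assumption_a3}) improves this to $\mathcal{O}(d\, h^3)$; a second-order expansion (using \eqref{eq:assumption_a4}) gives $\mathcal{O}(d^{1.5} h^{3.5})$ in $\L_2$ and, since the $\mathcal{O}(h^{7/2})$ term is mean-zero, $\mathcal{O}(d^{1.5} h^4)$ in mean. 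The coefficients multiplying these powers of $h$ are built from $v$, $\nabla f(x)$ and the operator-norm bounds $\|\nabla^2 f\| \leq M_1$, $\|\nabla^3 f\| \leq M_2$, $\|\nabla^4 f\| \leq M_3$ (the last two being exactly what \eqref{eq:assumption_a3}--\eqref{eq:assumption_a4} encode), and their $\L_p$-norms are controlled by Theorem~\ref{thm:global_Lp_bounds} after a routine one-step moment bound showing $\|\widetilde x_r - x\|_{\L_p},\, \|x_r - x\|_{\L_p} = \mathcal{O}(h)$ and $\|\widetilde v_r\|_{\L_p},\, \|v_r\|_{\L_p} = \mathcal{O}(\sqrt d)$ uniformly on $[t_n, t_{n+1}]$; this is the source of the powers $\sqrt d$, $d$, $d^{1.5}$.

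For the ODE-discretisation error $\|Q_{n+1} - \widehat q_{t_{n+1}}\|$: the shifted ODE is integrated in closed form except that $\int_{t_n}^{t_{n+1}} e^{-\gamma(t_{n+1}-r)}\nabla f(\widetilde x_r)\,dr$ and $\int_{t_n}^{t_{n+1}}\!\int_{t_n}^{r_1} e^{-\gamma(r_1 - r_2)}\nabla f(\widetilde x_{r_2})\,dr_2\,dr_1$ are replaced by two-point Gauss--Legendre quadrature evaluated at the predictors $X^{(1)}_n, X^{(2)}_n$. Two effects must be tracked. First, the quadrature error: if $r \mapsto \nabla f(\widetilde x_r)$ is $C^k$ with Lipschitz $k$-th derivative (which follows, for $k = 0, 1, 2$, from \eqref{eq:assumption_a2}, \eqref{eq:assumption_a3}, \eqref{eq:assumption_a4} together with the ODE), the Gauss--Legendre error over an interval of length $h$ is $\mathcal{O}(h^{k+2})$ times the Lipschitz constant of that derivative, which is $\mathcal{O}(d^{(k+1)/2})$ (with a constant depending on the $M_i$) by Theorem~\ref{thm:global_Lp_bounds}; this yields $\mathcal{O}(\sqrt d\, h^2)$, $\mathcal{O}(d\, h^3)$, $\mathcal{O}(d^{1.5} h^4)$ as the assumptions are strengthened. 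Second, the node error: $X^{(1)}_n$ deliberately omits its $\nabla f$-double-integral, biasing it by $\mathcal{O}(\sqrt d\, h^2)$, so a priori $\nabla f(X^{(1)}_n)$ differs from $\nabla f(\widetilde x_{t_n + \lambda_- h})$ by $\mathcal{O}(\sqrt d\, h^2)$ and feeds an $\mathcal{O}(\sqrt d\, h^3)$ contribution into the update --- harmless for the first two bounds but fatal for the third. The coefficient $\tfrac{1}{3}$ used (in place of $\lambda_+$) for the gradient term in $X^{(2)}_n$ is tuned precisely so that, after expanding the full one-step map \eqref{eq:QUICSORT} in $h$, this $\mathcal{O}(h^3)$ contribution --- together with the remaining sub-$h^{3.5}$ contributions generated by the biased predictor, the quadrature weights and the exponential factors --- cancels against the matching terms of the ULD expansion, leaving only the $\mathcal{O}(d^{1.5}h^{3.5})$ / $\mathcal{O}(d^{1.5}h^4)$ residuals.

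\noindent The main obstacle is this last cancellation: tracking how the two deliberate economies of \eqref{eq:QUICSORT} --- the gradient-free predictor $X^{(1)}_n$ and the $\tfrac{1}{3}$ coefficient --- interact with the Gauss--Legendre weights and the exponential factors, and verifying that every $\mathcal{O}(h^2)$ and $\mathcal{O}(h^3)$ discrepancy cancels exactly so that the third-order local rate survives, all while keeping each constant independent of $d$ and $h$ (which forces systematic use of the operator-norm bounds on $\nabla^2 f, \nabla^3 f, \nabla^4 f$ and of Theorem~\ref{thm:global_Lp_bounds} rather than crude estimates). A secondary, more routine step is establishing the uniform one-step moment bounds on the SDE and shifted-ODE solutions over $[t_n, t_{n+1}]$ needed to invoke Theorem~\ref{thm:global_Lp_bounds} at the random evaluation points $\widetilde x_r$, $X^{(1)}_n$, $X^{(2)}_n$.
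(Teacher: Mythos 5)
Your decomposition through the exact shifted-ODE flow is a genuinely different route from the one the paper takes. The paper's proof (Appendix~\ref{sec:appendix_C}) never introduces $\widehat q_{t_{n+1}}$: it writes down stochastic Taylor expansions of both $q_{t_{n+1}}$ and $Q_{n+1}$ about the common state $(x_{t_n},v_{t_n})$ up to the relevant order (Theorems~\ref{thm:ULD_taylor_expansion} and~\ref{thm:QUICSORT_Taylor_expansion}), shows the leading terms coincide, and then bounds the explicit remainder terms $R^{(j)}_x, R^{(j)}_v$ and $R^{(j)}_X, R^{(j)}_V$ directly using the global $\L_p$ bounds of Appendix~\ref{sec:appendix_A}, Fubini and Hölder. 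Your factorisation into ``path-replacement error'' ($\widehat q - q$) plus ``ODE-discretisation error'' ($Q - \widehat q$) is conceptually cleaner --- it separates the contribution of the piecewise linear path from that of the Gauss--Legendre step, and is in spirit a modular extension of \citet{foster2021shifted} (who proved the first half but assumed exact ODE solving). What the paper's monolithic route buys is that the deliberate economies in the scheme --- the gradient-free predictor $X^{(1)}_n$ and the $\tfrac13$ coefficient in $X^{(2)}_n$ --- are automatically absorbed into the term-by-term matching, whereas in your setup you correctly flag that these must be shown to cancel against one another inside the ODE-discretisation error, and that verification is the bulk of the work but is left as an acknowledged ``obstacle'' rather than carried out. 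You would also need to establish one-step moment bounds for the shifted-ODE trajectory and the predictors $X^{(1)}_n, X^{(2)}_n$ separately, a cost the paper sidesteps by only ever evaluating $\nabla f$ and its derivatives at the SDE state $x_{t_n}$ (which lies in the stationary distribution by assumption).

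One small slip: the integral you name as the first unmatched one, $\int_{t_n}^{t_{n+1}}\!\int_{t_n}^{r_1}\!\int_{t_n}^{r_2} dW_{r_3}\,dr_2\,dr_1$, collapses to $\int_{t_n}^{t_{n+1}}\!\int_{t_n}^{r_1} W_{t_n,r_2}\,dr_2\,dr_1$, which \emph{is} reproduced by $\mathcal W$ (it is the second identity in \eqref{eq:levy_space_time_1_2}) and has $\L_2$-size $\mathcal O(h^{5/2})$, not $\mathcal O(h^{7/2})$. The first genuinely unmatched object is the triple time integral $\int\!\int\!\int W_{t_n,r_3}\,dr_3\,dr_2\,dr_1$, whose $H_n$- and $M_n$-components ($\tfrac{3}{20}h^3 H_n + h^3 M_n$ for the SDE versus $\tfrac16 h^3 H_n$ for the scheme, cf.\ \eqref{eq:levy_space_time_3} and \eqref{eq:QUICSORT_position_expansion_3}) give the mean-zero $\mathcal O(h^{7/2})$ discrepancy you intended. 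The conclusion you drew from it is right; only the formula is off.
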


\noindent 
A proof for the above theorem can be found in Appendix \ref{sec:appendix_C}. 

\section{Numerical Experiments}\label{sect:experiments}

In this section, we will present experiments to demonstrate the convergence properties of the proposed QUICSORT method \eqref{eq:QUICSORT}. Firstly, we shall compare the QUICSORT method to other approximations of ULD and empirically verify its third order convergence rate. Secondly, we will show the efficacy of QUICSORT as an MCMC algorithm for performing Bayesian logistic regression on real-world data. In particular, in our highest dimensional datasets, we observe that QUICSORT (and the UBU splitting) have faster mixing times than the popular ``No U-Turn Sampler'' (NUTS) introduced in \citet{hoffman2014NUTS}.\medbreak

\noindent
The code for our experiments can be found at \textcolor{blue}{\href{https://github.com/andyElking/ThirdOrderLMC}{github.com/andyElking/ThirdOrderLMC}}. The ULD-based MCMC sampling was done by simulating ULD using Diffrax \citep{kidger2021ndes} and we used the package NumPyro \citep{phan2019numpryo} to generate samples using NUTS.\medbreak

\noindent
To begin, we will define the logistic regression model that we aim to generate samples from.
Given a labelled dataset $\mathcal{D} = \{(x_i\m, y_i)\}_{1\m\leq\m i\m\leq\m m}$ with features $x_i\in\mathbb{R}^d$ and labels $y_i\in\{-1, 1\}$, the logistic regression aims to find weights $\theta \in \mathbb{R}^d$ and a bias $b\in\mathbb{R}$ such that the relation
\begin{align*}
\mathbb{P}_{\theta,\m b}\big(y_i = 1\m |\m x_i \big) = \frac{1}{1 + \exp(-(\langle \theta, x_i\rangle + b))},
\end{align*}
is a good model for the data. Specifically, we would like to maximise the likelihood 
\begin{align}\label{eq:log_reg_likelihood}
p(\mathcal{D}\m|\m\theta, b) = \prod_{i=1}^m \frac{1}{1 + \exp(-y_i(\langle \theta, x_i\rangle + b))}\m.
\end{align}
For Bayesian logistic regression, we impose a prior distribution on the parameters $\tilde{\theta} := (\theta, b)$ and aim to generate samples from the posterior distribution $p(\tilde{\theta} | \mathcal{D})$ given by Bayes' rule:
\begin{align}\label{eq:bayes_rule}
p(\tilde{\theta} | \mathcal{D}) = \frac{p(\mathcal{D}|\tilde{\theta}) p(\tilde{\theta})}{p(\mathcal{D})}\m.
\end{align}
In this section, we will use the following Gaussian prior distribution for the parameters $\tilde{\theta}$,
\begin{align}\label{eq:gaussian_prior}
\tilde{\theta} \sim p(\tilde{\theta})\quad \Longleftrightarrow\quad\theta\sim\mathcal{N}\bigg(0, \frac{1}{2\m\mathrm{Var}(\{x_i\})}I_d\bigg),\quad b\sim\mathcal{N}(0,1),
\end{align}
where $\mathrm{Var}(\{x_i\})$ is the empirical variance of the dataset $\{x_1\m,\cdots, x_m\}$. We rescale by this variance simply as a way to normalise the inner products in \eqref{eq:log_reg_likelihood}. Whilst computing $p(\mathcal{D})$ involves an intractable integral, it disappears when we compute $\nabla f(\tilde{\theta}) = \nabla(-\log(p(\tilde{\theta} | \mathcal{D})))$.\medbreak\noindent
Therefore, using Bayes' rule \eqref{eq:bayes_rule}, the scalar potential coming from our logistic regression is
\begin{align}\label{eq:log_potential}
f(\tilde{\theta}\m) := \sum_{i=1}^m \log\hspace{-0.5mm}\big(1+\exp\big(\hspace{-0.5mm}-y_i \big(\langle \theta, x_i\rangle\big) + b\big)\big) + \frac{1}{4\m\mathrm{Var}(\{x_i\})}\|\theta\|_2^2 + \frac{1}{2}b^2\m.
\end{align}
\begin{remark}
The logistic regression loss function is typically convex but not strongly convex. However, due to the quadratic regularisation terms appearing due to the prior distribution, the potential (\ref{eq:log_potential}) is strongly convex. In addition, just as for the standard logistic regression loss, the above potential is smooth and its derivatives are all globally Lipschitz continuous. Thus, it satisfies the assumptions needed in our error analysis of QUICSORT (Theorem \ref{thm:global_error_bound}).
\end{remark}

\subsection{Verifying the third order convergence of QUICSORT}

In this subsection, we will present our experiment for empirically verifying the convergence rates for some of the Langevin methods given in Table \ref{table:method_convergence} (including the QUICSORT method).\medbreak

\noindent
We will be simulating the following underdamped Langevin dynamics with $\gamma = 1$ and $u = 1$,
\begin{align}
\label{eq:ULD2}
\begin{split}
	d\tilde{\theta}_t &= v_t \, dt, \\[2pt]
	dv_t &= - v_t \, dt - \nabla f(\tilde{\theta}_t) \, dt + \sqrt{2}\, dW_t\m,
\end{split}
\end{align}
where the initial condition $\tilde{\theta}_0$ is sampled from the prior distribution \eqref{eq:gaussian_prior} and $v_0\sim\mathcal{N}(0, I_d)$.
The scalar potential $f$ comes from the previously described Bayesian logistic regression applied to the Titanic passenger survival dataset \citep{hendricks2015titanic} and is of the form \eqref{eq:log_potential}. The dimensionality of the dataset is 4, so the corresponding Langevin dynamics has $\tilde{\theta}_t\in\mathbb{R}^5$.\medbreak

\noindent
To estimate the strong (or $\L_2$) convergence rate of each numerical method, we first compute reference solutions by simulating the SDE \eqref{eq:ULD2} using a very fine step size (in our experiment, we use $h_{\text{fine}} = 2^{-16}\m T$). More precisely, since we are estimating the $\L_2$ error by Monte Carlo simulation, we will compute a reference solution for each sample path of Brownian motion.
We used QUICSORT to compute these reference solutions due to its fast convergence rate.\medbreak

\noindent
For each Brownian sample path, we then simulate \eqref{eq:ULD2} using each numerical method with a larger step size and compute the mean squared error against the corresponding reference solution at time $T$. By averaging over sample paths, we obtain the following estimator for
$\E\big[\|X_N - \tilde{\theta}_T\|_2^2\big]$ where $X_N$ is the numerical approximation for the SDE solution at time $T$.
\begin{definition}[Strong error estimator]
Let $T > 0$ be fixed. For $N\geq 1$, we compute a numerical solution $\{(X_n\m, V_n)\}_{0\m\leq\m n\m\leq\m N}$ of \eqref{eq:ULD2} at times $t_n := nh$ using a step size of $h = \frac{T}{N}$. Similarly, let  $\{(X_n^{\mathrm{fine}}\m, V_n^{\mathrm{fine}})\}_{n\m\geq\m 0}$ be an approximation obtained with a smaller step size $h_{\text{fine}}\m$.
We generate $J$ samples of $\{(X\m, V\m, X^{\mathrm{fine}}\m, V^{\mathrm{fine}})\}$ and compute the Monte Carlo estimator:
\begin{align}\label{eq:strong_error_estimator}
S_{N,\m J} := \sqrt{\frac{1}{J}\sum_{j=1}^{J} \big\|X_{N,j} - X_{N, j}^{\mathrm{fine}}\big\|_2^2}\,\,,
\end{align}
where each pair $(X_{N,j}, X_{N, j}^{\mathrm{fine}})$ is computed using the same sample path of Brownian motion and the initial condition $X_{0,j} =  X_{0, j}^{\mathrm{fine}}$ is sampled from the Gaussian prior distribution \eqref{eq:gaussian_prior}.
\end{definition}

\begin{remark}
To simulate the SDE (\ref{eq:ULD2}), we use the Diffrax software package \citep{kidger2021ndes}. Given a single PRNG seed, the ``Virtual Brownian Tree'' in Diffrax can determine an entire sample path of Brownian motion (including the integrals (\ref{def:W_K_K})) up to a user-specified tolerance.
This then makes it straightforward to generate multiple different numerical solutions using the same Brownian sample path. The various algorithmic details and Brownian bridge constructions used by Diffrax's Virtual Brownian Tree can be found in \cite{jelincic2024vbt}.
\end{remark}

\begin{figure}[H]
\begin{center}
\includegraphics[width=0.9\textwidth]{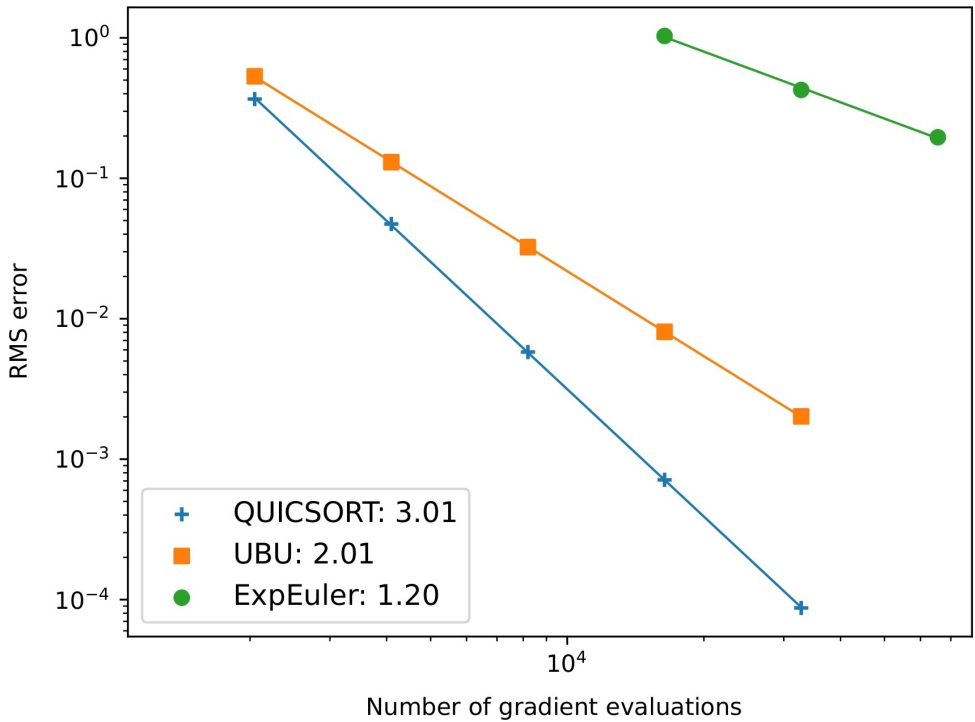}
\caption{Empirical validation of the complexities in Table \ref{table:method_convergence}. Here, each numerical method is used to simulate the underdamped Langevin dynamics \eqref{eq:ULD2} where the potential $f$ is obtained from a Bayesian logistic regression applied to the Titanic passenger survival dataset \citep{hendricks2015titanic}. The Monte Carlo estimator \eqref{eq:strong_error_estimator} for the root mean square (RMS) error is estimated at time $T = 100$ using $J = 1000$ samples. Instead of the number of steps, we report the number of evaluations of $\nabla f$ since that is often the most computationally expensive part of gradient-based MCMC.
}
\label{fig:strong_convergence_graph}
\end{center}
\end{figure}\vspace*{-5mm}

\noindent
From Figure \ref{fig:strong_convergence_graph}, we can see that the numerical methods for ULD  \eqref{eq:ULD2} achieve their theoretical rates of convergence with the QUICSORT method exhibiting a third order convergence rate.\medbreak

\noindent
When the step size is large, the QUICSORT method performs similarly to the UBU splitting method \citep{sanz2021wasserstein} which has only one gradient evaluation per step.
However, as the step size becomes smaller, the third order convergence of QUICSORT makes it significantly more accurate than the UBU scheme -- which is second order convergent.
As one would expect, the exponential Euler scheme proposed in \citep{cheng2018ee} is the least accurate among the numerical methods and only shows a first order convergence rate.\medbreak

\noindent
However, estimating the strong convergence only gives an small indication of each methods' efficacy as an MCMC algorithm. In the next subsection, we shall investigate this further.

\subsection{Mixing time experiments against the No U-Turn Sampler (NUTS)}

In this subsection, we compare the QUICSORT and UBU methods against the No U-Turn Sampler (NUTS) for performing Bayesian logistic regression with target distribution \eqref{eq:bayes_rule} and Gaussian prior \eqref{eq:gaussian_prior}. We test using 14 datasets with dimension $d$ ranging from $3$ to $617$.
The first 13 datasets are those used in the Bayesian logistic experiments in \citet{li2023sampling}.
For the final dataset, we use the ISOLET dataset \citep{cole1991isolet} with $d=617$ and 26 classes (letters of the alphabet), but we only use ``A'' and ``B'' for our logistic regression.\medbreak

\noindent
We would like to observe how quickly the MCMC chains approach the target distribution.
Inspired by \citet{li2023sampling}, we will compute the following metrics to assess this convergence.
\begin{itemize}
\item Energy distance
\begin{align*}
d^{\m 2}\big(\mu, \nu\big) = 2\m\E_{X\sim\mu,\m Y\sim\m\nu}\big[\|X - Y\|_2\big] - \E_{X\sim\mu,\m X^\prime\sim\mu}\big[\|X - X^\prime\|_2\big] - \E_{Y\sim\m\nu,\m Y^\prime\sim\m\nu}\big[\|Y - Y^\prime\|_2\big],
\end{align*}
where the random variables $(X, Y, X^\prime, Y^\prime)$ within the expectations are independent.
This is the Maximum Mean Discrepancy (MMD) obtained from the negative distance kernel $k(x,y) := -\|x-y\|_2$ and is explicitly computable when  $\mu$ and $\nu$ are discrete.

\item 2-Wasserstein distance
\begin{align*}
W_2\big(\mu, \nu\big) := \inf_{(X, Y)\m\sim\m (\mu\times \nu)} \big\| X - Y \big\|_{\L_2} \m,
\end{align*}
where the above infimum is taken over all couplings of the random variables $X$ and $Y$ with distributions $X\sim\mu$ and $Y\sim\nu$. If $\mu$ and $\nu$ are empirical distributions with $J$ points each, it becomes very computationally expensive to find an optimal coupling between $\mu$ and $\nu$ (as $J$ increases). For estimating the 2-Wasserstein distance between distributions, we use the Python Optimal Transport package \citep{flamary2021pot}.
\end{itemize}\bigbreak

\noindent
We present the results for 6 out of the 14 datasets in Figure \ref{fig:mcmc_convergence}. Here, we generate a collection of `ground truth'' samples using NUTS with a sufficiently large number of iterations and, at regular time intervals, compare against the samples from different MCMC algorithms (QUICSORT and UBU applied to ULD \eqref{eq:ULD2} as well as NUTS applied to the target \eqref{eq:bayes_rule}). Results for the other 8 datasets can be found at \textcolor{blue}{\href{https://github.com/andyElking/ThirdOrderLMC}{github.com/andyElking/ThirdOrderLMC}}.\medbreak

\noindent
In half of the datasets, we observed that NUTS converged significantly faster to the target distribution \eqref{eq:bayes_rule} than the numerical methods for ULD. This is perhaps unsuprising since NUTS is widely considered the ``go-to'' algorithm for performing gradient-based MCMC.
However, in the two highest dimensional datasets, splice ($d = 61$) and isolet ($d = 617$), the Langevin numerical methods converge faster than NUTS (after an initial burn-in period). Hence, exploring these algorithms in higher dimensional settings is a topic of future work.\medbreak

\noindent
In these experiments, we were able to achieve good performance from the two Langevin numerical methods with a relatively large step size. However, as we observed in Figure \ref{fig:strong_convergence_graph}, the strong accuracy of QUICSORT and UBU become very similar as the step size increases. This appears to carry over when we investigate the weak convergence of the algorithms as the QUICSORT and UBU methods give near identical performances across all 14 datasets.
In these experiments, we set $\m h_{\mathrm{QUICSORT}} = 2\m h_{\mathrm{UBU}}\m$ so that the QUICSORT and UBU chains require the same number of gradient evaluations (i.e.~to have essentially the same cost). Since both the energy and 2-Wasserstein distance become computationally expensive for large collections of particles, it is difficult to accurately compare the stationary distributions of the QUICSORT and UBU methods. Given the smoothness of the potential, we expect QUICSORT would have a better stationary distribution due to its faster convergence rate.

\begin{figure}
    \centering
    \begin{subfigure}{.33\textwidth}
      \centering
      \caption{banana ($d=3$)}
      \includegraphics[width=\linewidth]{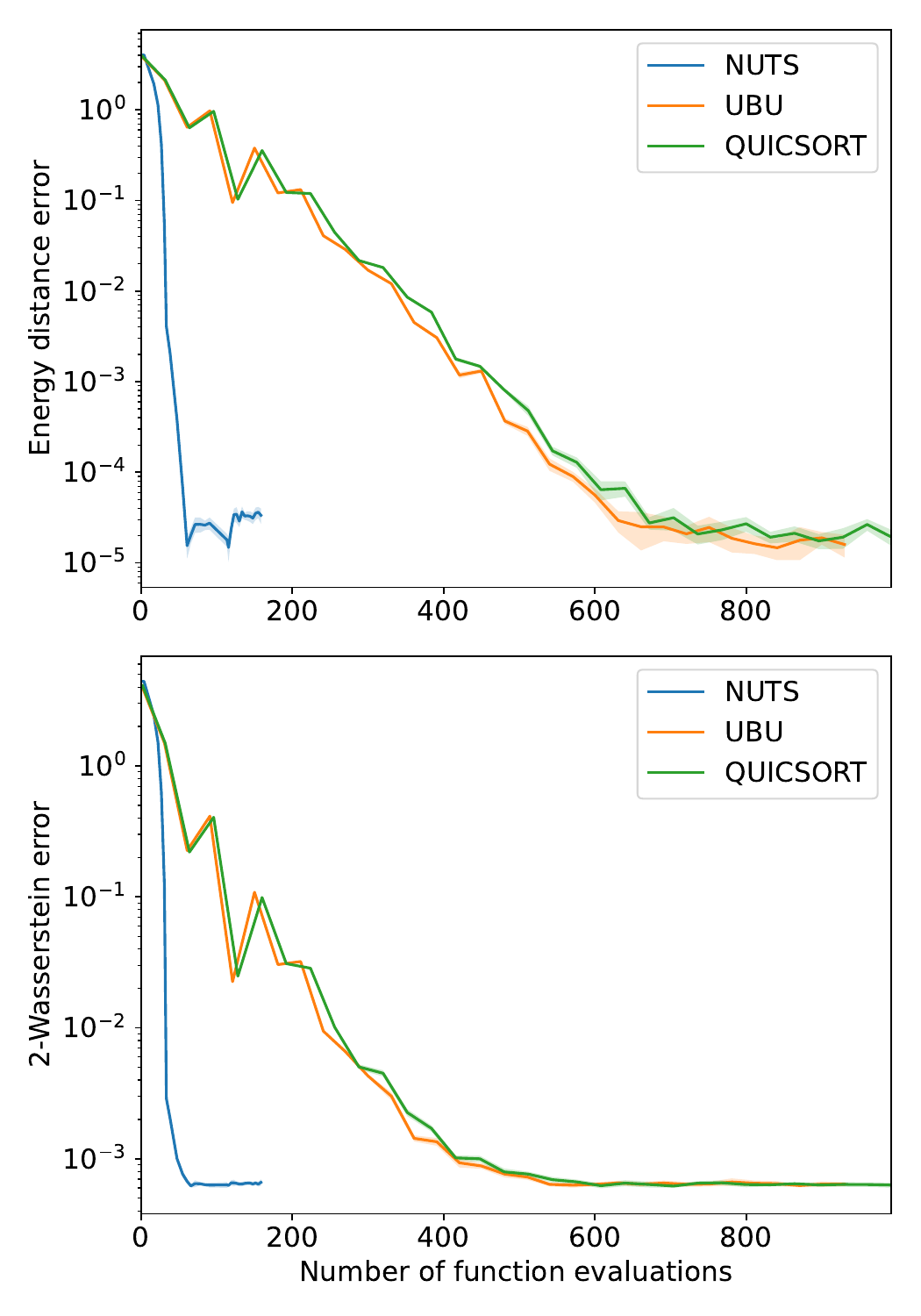}
    \end{subfigure}%
    \begin{subfigure}{.33\textwidth}
      \centering
      \caption{titanic ($d=4$)}
      \includegraphics[width=\linewidth]{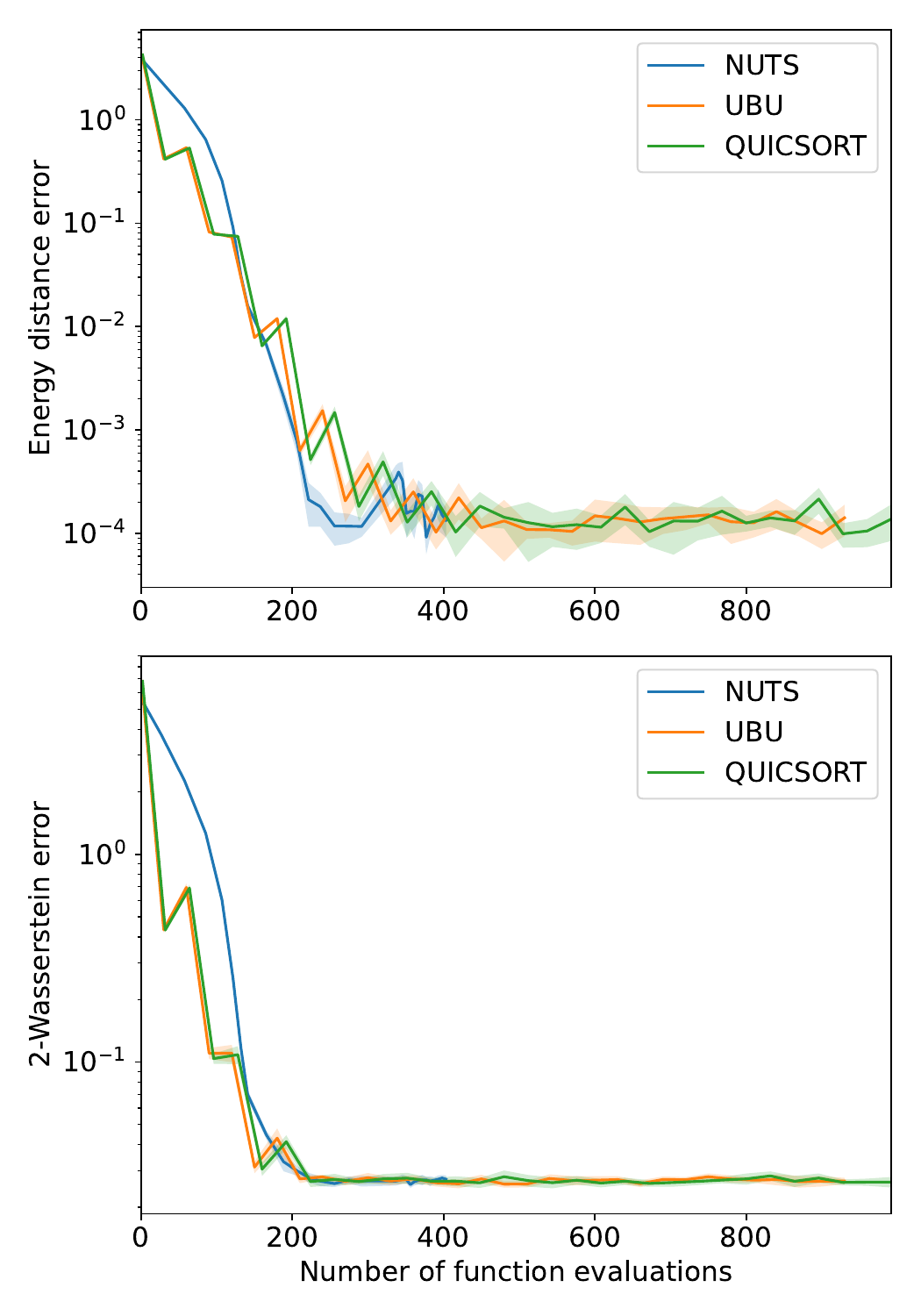}
    \end{subfigure}%
    \begin{subfigure}{.33\textwidth}
      \centering
      \caption{flare solar ($d=10$)}
      \includegraphics[width=\linewidth]{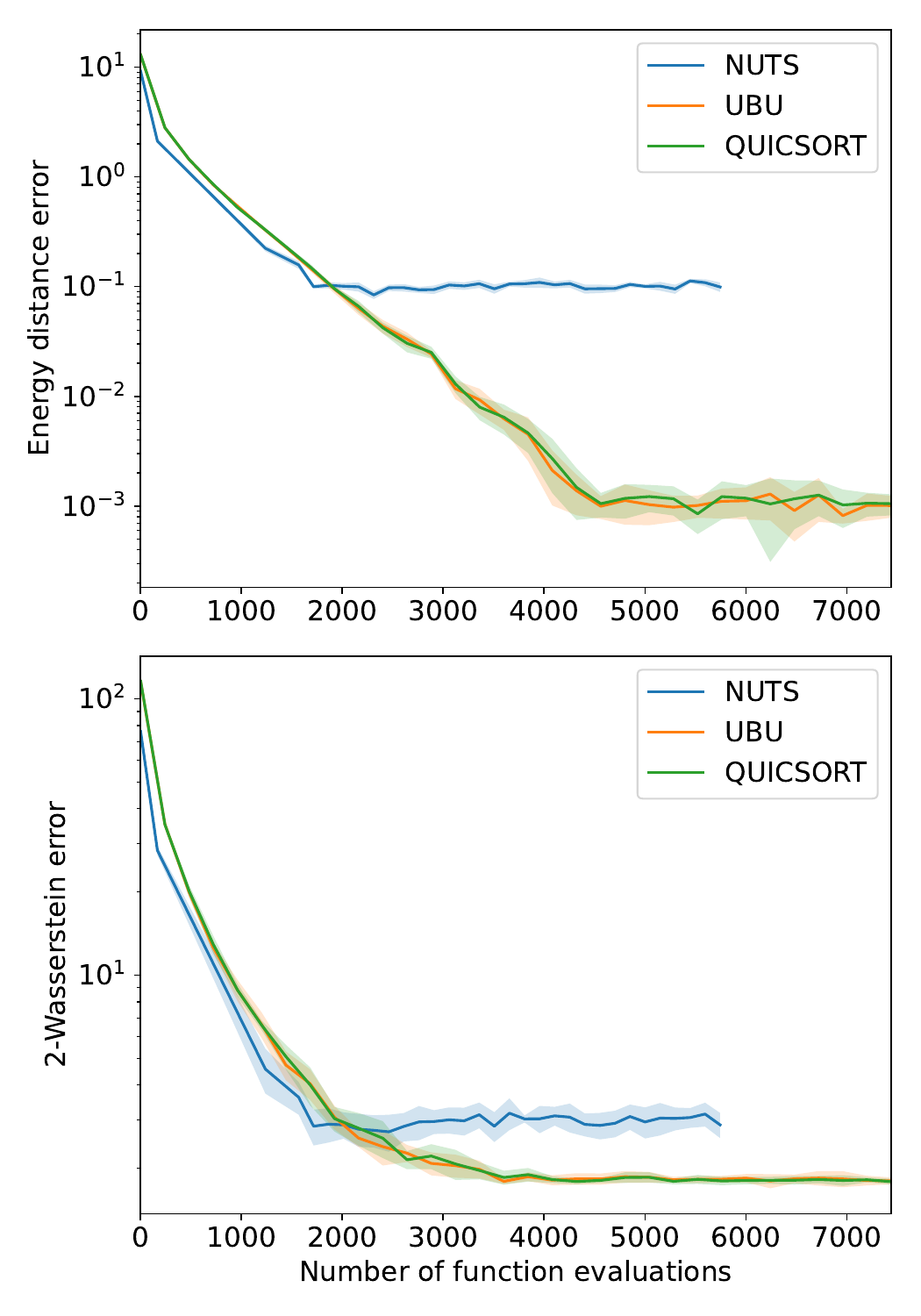}
    \end{subfigure}
    
    \begin{subfigure}{.33\textwidth}
      \centering
      \caption{image ($d=19$)}
      \includegraphics[width=\linewidth]{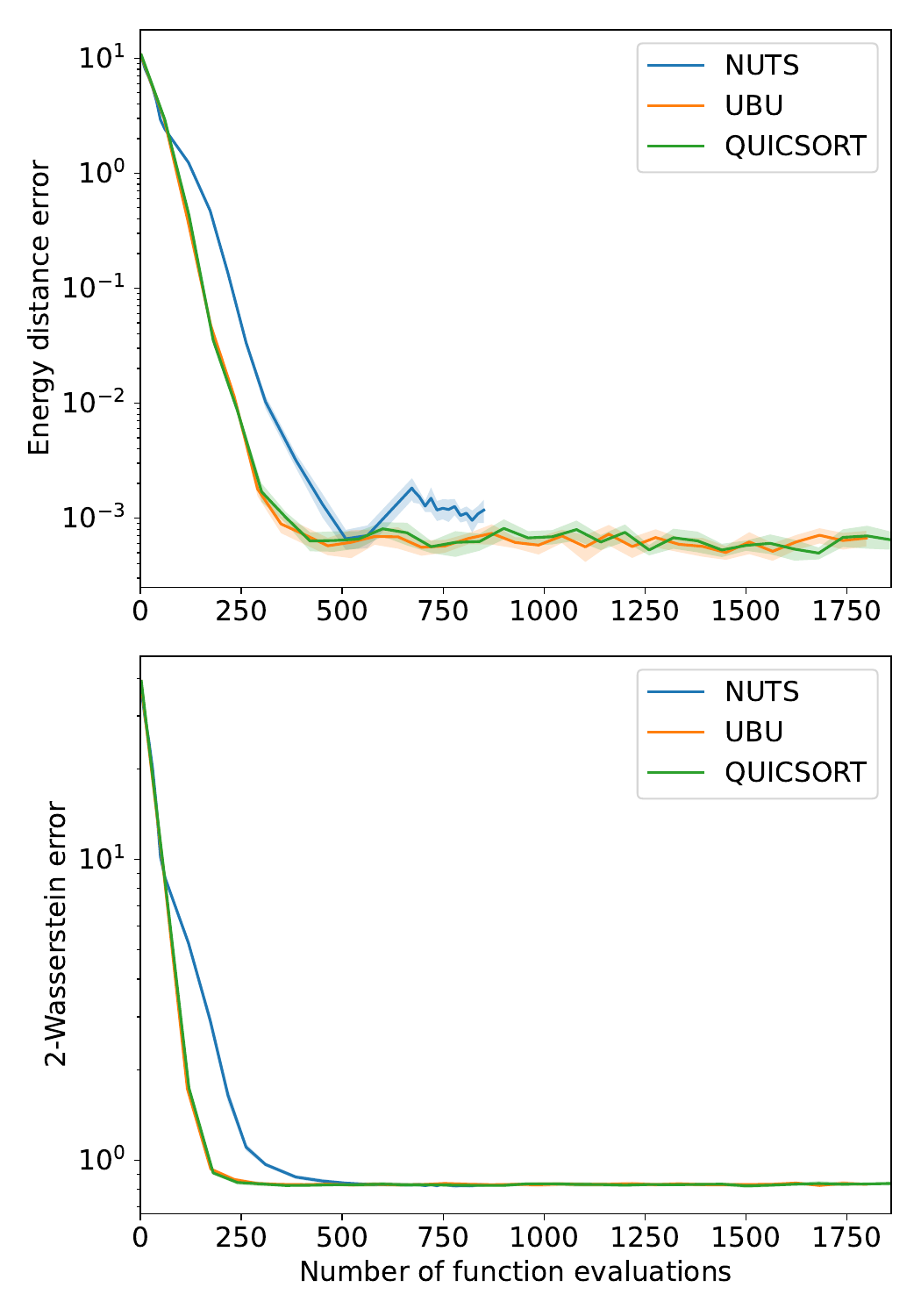}
    \end{subfigure}%
    \begin{subfigure}{.33\textwidth}
      \centering
      \caption{splice ($d=61$)}
      \includegraphics[width=\linewidth]{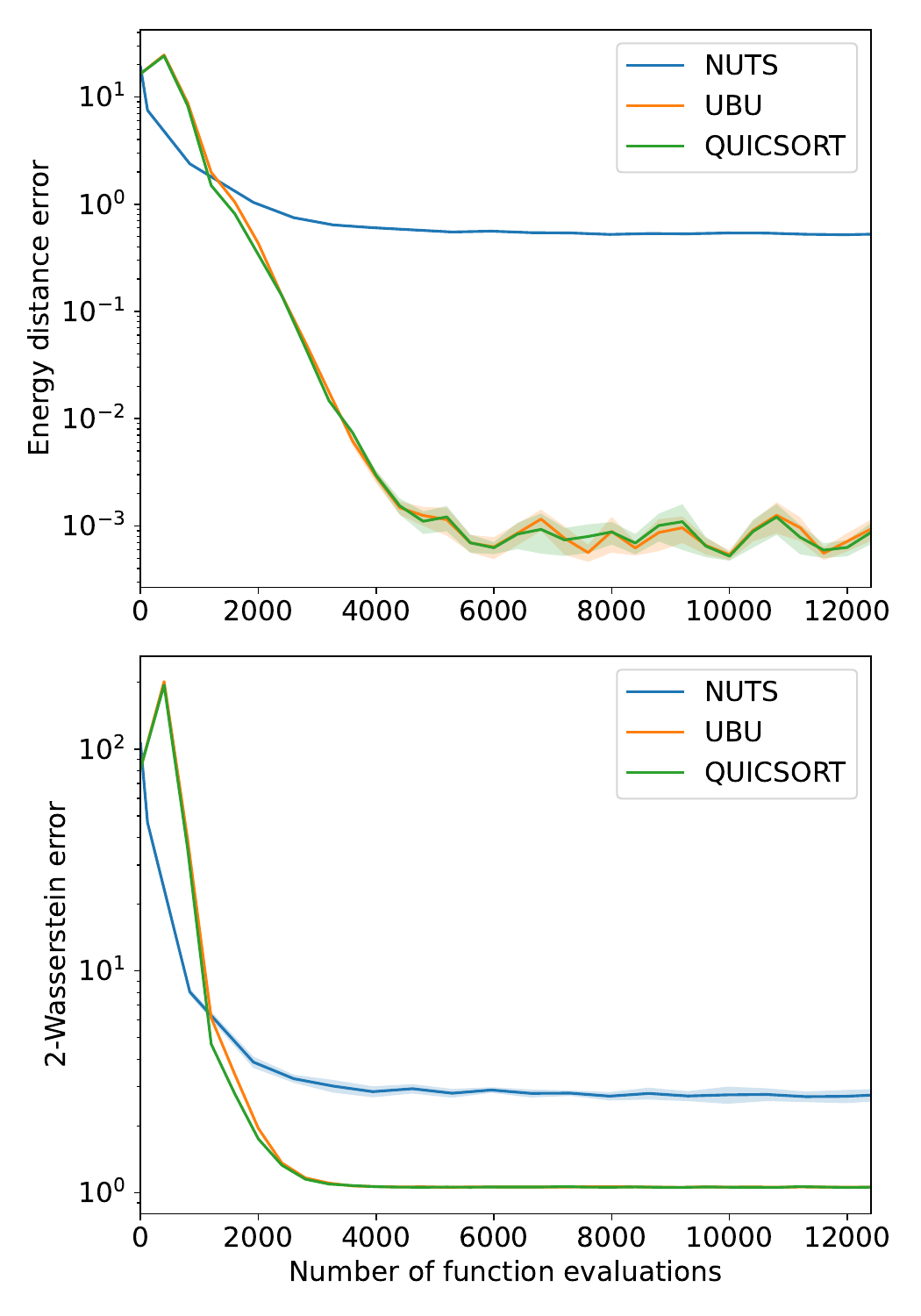}
    \end{subfigure}%
    \begin{subfigure}{.33\textwidth}
      \centering
      \caption{isolet ($d=617$)}
      \includegraphics[width=\linewidth]{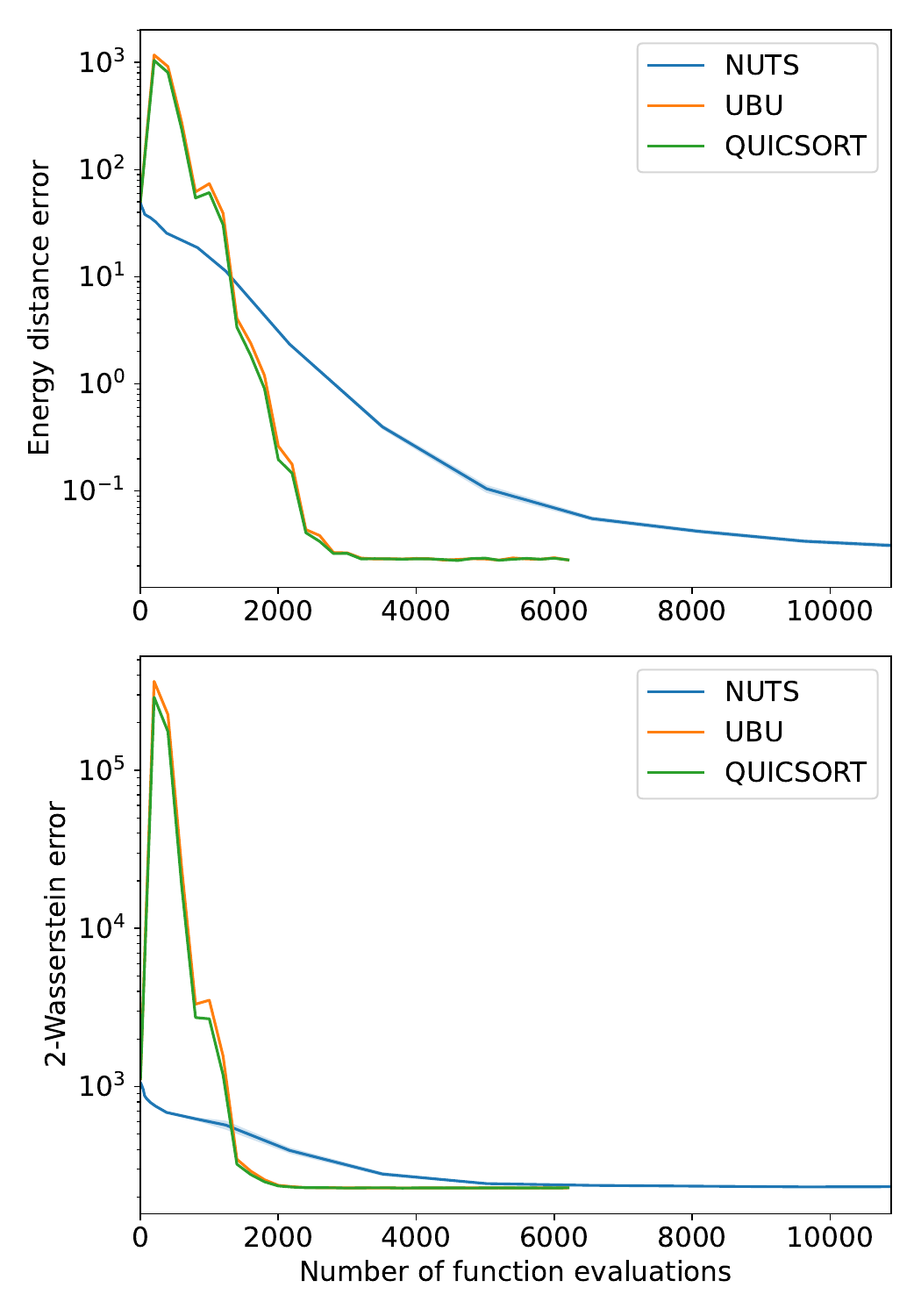}
    \end{subfigure}
    \caption{For each of the 14 datasets we ran $2^{15}$ independent chains of NUTS and the same number of Langevin trajectories simulated using the UBU and QUICSORT methods. At regular intervals along the run we compared these $2^{15}$ particles against the ``ground truth'' samples, which were obtained using NUTS with a sufficiently long burn-in period. We computed both the energy distance and the 2-Wasserstein distance (the latter on a subsample of size $2^{11}$). These distances are plotted against the number of times the gradient $\nabla f$ was evaluated up to that point in the run. We present 6 of the 14 plots here and the rest can be found at \textcolor{blue}{\href{https://github.com/andyElking/ThirdOrderLMC}{github.com/andyElking/ThirdOrderLMC}}. We ran the experiment five times (with different PRNG seeds) and present the standard deviation as the shaded regions. Although different step sizes are used for different datasets, we choose the step size to be as large as possible and set $\m h_{\mathrm{QUICSORT}} = 2\m h_{\mathrm{UBU}}\m$ so that the ULD methods have similar cost.}
    \label{fig:mcmc_convergence}
\end{figure}

\section{Conclusion and future work}

In this paper, we presented a numerical method for underdamped Langevin dynamcis (ULD) called QUICSORT\footnote{\textbf{QU}adrature \textbf{I}nspired and \textbf{C}ontractive \textbf{S}hifted \textbf{O}DE with \textbf{R}unge-Kutta \textbf{T}hree}, which can be seen as a particular discretisation of the ``Shifted ODE'' introduced in \citet{foster2021shifted}. Under strong convexity and smoothness assumptions on the potential $f$, we obtain first, second and third order $\L_2$ error  bounds for QUICSORT. In particular, by establishing a contraction property for QUICSORT, these non-asymptotic bounds hold over the entire trajectory of the numerical solutions. Therefore, when used as an MCMC algorithm to sample from a log-concave target distribution $p(x)\propto e^{-f(x)}$, we can bound the 2-Wasserstein error between $p$ and the distribution of the QUICSORT samples.  \medbreak

\noindent
Whilst similar bounds have been established for second order methods (such as for the UBU scheme by \citet{sanz2021wasserstein}), we believe that QUICSORT is the only ULD numerical method with third order convergence guarantees. \citet{foster2021shifted} obtained similar third order non-asymptotic bounds for the shifted ODE, but these bounds require the ODE to be solved exactly. Although other third order numerical methods for ULD have been proposed, they involve computing the Hessian $\nabla^2 f$ which is often challenging for high-dimensional applications. We refer the reader to \citet{milstein2003third, shkerin12025third} for examples of such derivative-based third order methods.
\medbreak

\noindent
Finally, to support our theory, we presented numerical experiments where we demonstrated the third order convergence of QUICSORT and showed that it can outperform the widely-used No U-Turn Sampler (NUTS) as an MCMC algorithm for Bayesian logistic regression. However, in our MCMC experiments, we found that QUICSORT and UBU performed near identically across all 14 datasets. We expect this is simply due to the large step sizes used.\medbreak

\noindent
As well as considering more sophisticated Bayesian models, we would also like to explore the applications of Langevin-based MCMC algorithms with adaptive step sizes. In particular, \cite{jelincic2024vbt} conduct additional Bayesian logistic regression experiments using the QUICSORT method with an adaptive step size controller. This was possible thanks to the ``Virtual Brownian Tree'' proposed by the authors and available in Diffrax \citep{kidger2021ndes}. \medbreak

\noindent
These adaptive step sizes are determined using Proportional Integral (PI) controllers, which are commonly used for numerical ODE simulation but have seen relatively little use for SDEs \citep{burrage2004variable, ilie2015adaptive}. Whilst QUICSORT with adaptive step sizes usually performed comparably to QUICSORT with constant step sizes, there were several datasets (including the two with the highest dimensions) where there was a significant improvement.
These results are shown in Figure \ref{fig:adaptive_mcmc_convergence} and clearly demonstrates that PI step size controllers (which are very well supported by Diffrax) show great promise for Langevin-based MCMC.\medbreak

\noindent
In \citet{chada2024unbiased}, it is shown that the UBU method can be applied to perform unbiased MCMC estimation (even with inexact gradients) using Multilevel Monte Carlo techniques. Thus, a natural topic of future work would be to apply their methodology with QUICSORT.
As a further topic, it may also be possible to apply their approach with adaptive step sizes.
\medbreak

\noindent
Another direction of future research would be to apply the QUICSORT method to the Kinetic Interacting Particle Langevin Diffusion proposed by \citet{valsecchi2024kiplmc} for the joint estimation of latent variables and parameters in latent variable models.

\begin{figure}[H]
    \centering
    \begin{subfigure}{.33\textwidth}
      \centering
      \caption{flare solar ($d=10$)}
      \includegraphics[width=\linewidth]{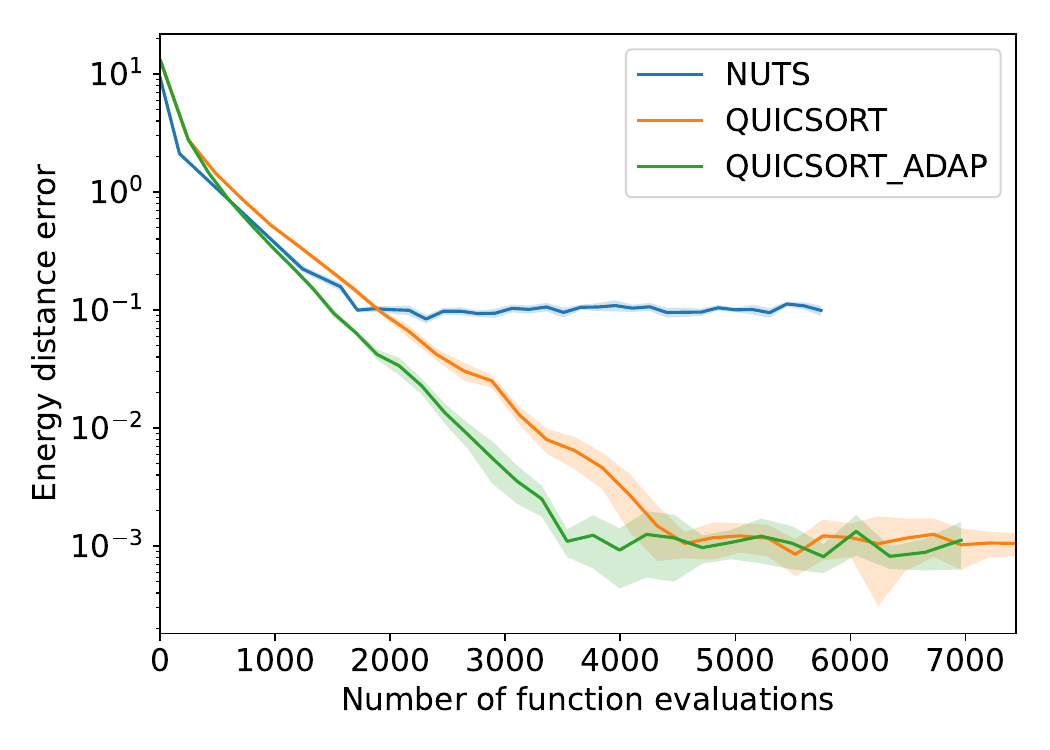}
    \end{subfigure}%
    \begin{subfigure}{.33\textwidth}
      \centering
      \caption{splice ($d=61$)}
      \includegraphics[width=\linewidth]{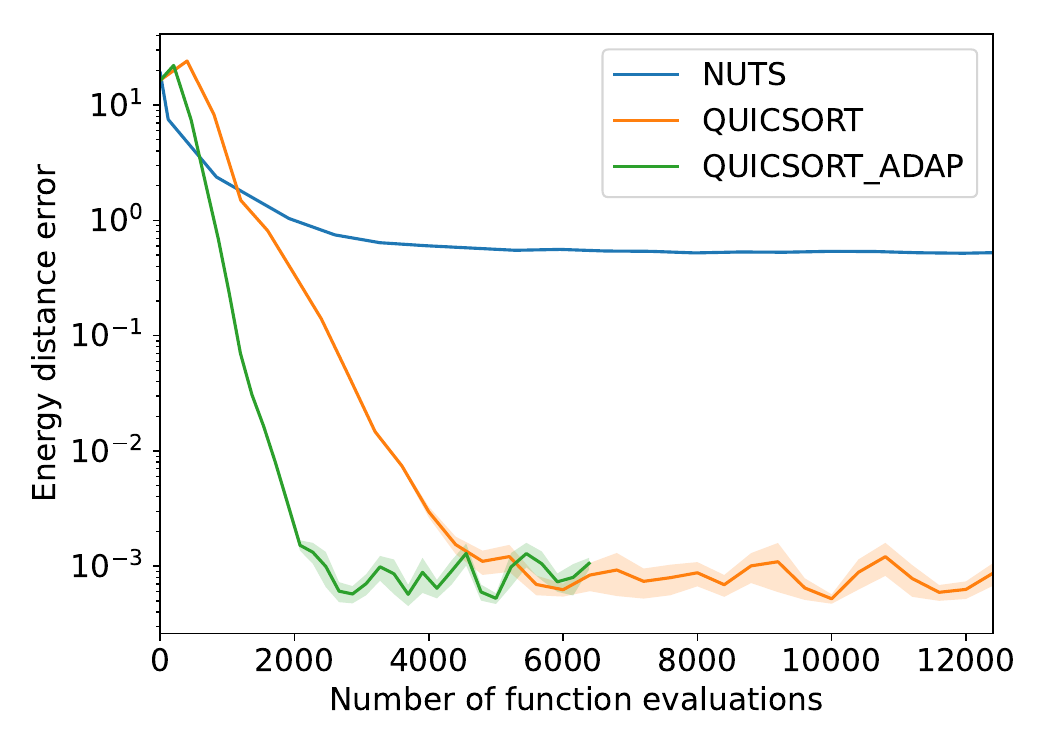}
    \end{subfigure}%
    \begin{subfigure}{.33\textwidth}
      \centering
      \caption{isolet ($d=617$)}
      \includegraphics[width=\linewidth]{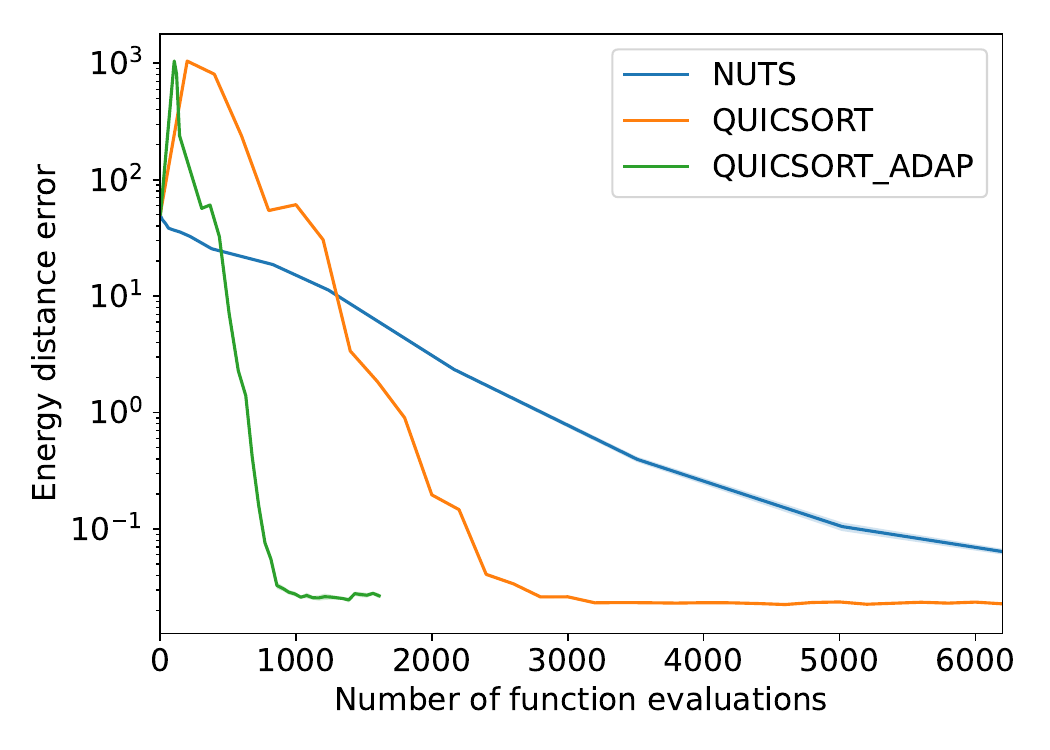}
    \end{subfigure}
    \caption{Using the same the target distribution and experimental setup as before, the QUICSORT method with constant and adaptive step sizes is now compared against NUTS. 
These adaptive step sizes are determined using a PI controller with parameters $K_P = 0.1$ and $K_I = 0.4$ (along with a user-specified tolerance and initial step size). We note that these are close to the parameters $(K_P\m, K_I) = (0.1, 0.3)$ recommended by \citet{ilie2015adaptive}. Similar to \citet{jelincic2024vbt}, we report the convergence results in the energy distance for three of the datasets (including the two highest dimensional ones), however the code and results for all 14 datasets can be found at \textcolor{blue}{\href{https://github.com/andyElking/Single-seed_BrownianMotion}{github.com/andyElking/Single-seed{\_}BrownianMotion}}.}
    \label{fig:adaptive_mcmc_convergence}
\end{figure}

Finally, another avenue of future research would be to improve the QUICSORT method by using ``exponential'' iterated integrals of the Brownian motion instead of $(W_n\m, H_n\m, K_n)$. This could then allow us to better match the following stochastic Taylor expansion of ULD:
\begin{align*}
x_{t_{n+1}}  & = x_{t_n} + \bigg(\frac{1 - e^{-\gamma (t_{n+1}-t_n)}}{\gamma}\bigg)v_{t_n} + \int_{t_n}^{t_{n+1}}\int_s^{r_1} e^{-\gamma(r_1-r_2)}\nabla f(x_{r_2})\, dr_2\,dr_1 \\
&\hspace{10.3mm}  + \sqrt{2\gamma u}\int_{t_n}^{t_{n+1}}\int_s^{r_1} e^{-\gamma (r_1-r_2)}\,dW_{r_2}\,dr_1\m,\\
v_{t_{n+1}}  & = e^{-\gamma (t_{n+1}-t_n)}v_{t_n} + \int_{t_n}^{t_{n+1}} e^{-\gamma(t_{n+1}-r_1)}\nabla f(x_{r_1})\, dr_1 + \sqrt{2\gamma u}\int_{t_n}^{t_{n+1}} e^{-\gamma (t_{n+1}-r_1)}\,dW_{r_1}\m,
\end{align*}
In particular, we could construct a piecewise linear path $\widetilde{W}$ on the interval $[s,t]$ such that
\begin{align*}
	\int_s^t e^{-\gamma(t - r_1)} \, d\widetilde{W}_{r_1} & =\int_s^t e^{-\gamma(t - r_1)} \, dW_{r_1}\m, \\[3pt]
	\int_s^t  \int_s^{r_1} e^{-\gamma(r_1 - r_2)} \, d\widetilde{W}_{r_2} \,  dr_1 & = \int_s^t  \int_s^{r_1} e^{-\gamma(r_1 - r_2)} \, dW_{r_2} \, dr_1\m, \\[3pt]
    \int_s^t e^{-\gamma(t-r_1)} \hspace{-0.5mm}\int_s^{r_1}\hspace{-0.5mm} \int_s^{r_2} e^{-\gamma(r_2 - r_3)} \, d\widetilde{W}_{r_3}\,  dr_2\,  dr_1 & = \int_s^t e^{-\gamma(t-r_1)}\hspace{-0.5mm}\int_s^{r_1}\hspace{-0.5mm} \int_s^{r_2} e^{-\gamma(r_2 - r_3)} \, dW_{r_3} \, dr_2 \, dr_1\m.
\end{align*}
If the path $\{(r, \widetilde{W}_r)\}$ had increments $(0,A),(t-s,B),(0,C)$, then the first integral becomes
\begin{align*}
\int_s^t e^{-\gamma(t - r)} \, d\widetilde{W}_r = \underbrace{e^{-\gamma(t - s)} A}_{\text{vertical jump at } r\m =\m s} + \underbrace{ \frac{1- e^{-\gamma (t-s)}}{\gamma (t-s)} B \,}_{\text{linear piece}} + \underbrace{C}_{\text{jump at } r\m =\m t}.
\end{align*}

\begin{figure}[h]
    \centering
    \includegraphics[width=0.7\linewidth]{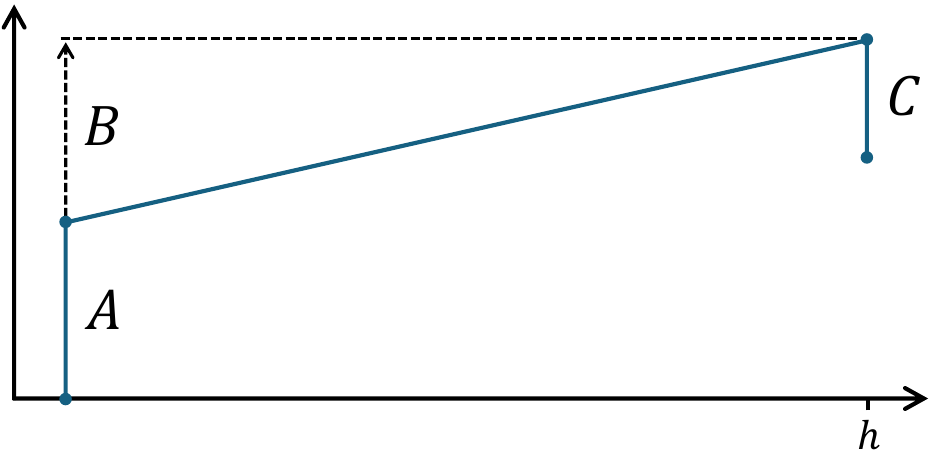}
    \caption{Illustration of the piecewise linear path $\widetilde{W}$ with increments $A$, $B$ and $C$ to match ``exponential'' iterated integrals of Brownian motion. Here, $h = t-s$ denotes the step size.}
    \label{fig:better_quicsort}
\end{figure}

Repeating for the other iterated integrals and substituting $h=t-s$, the problem of finding the increments $A, B$ and $C$ would reduce to solving the linear system $M\mathbf{x}=\mathbf{b}$ where 
\begin{align*}
M\hspace{-0.5mm}=\hspace{-0.75mm}
\begin{pmatrix}
e^{-\gamma h} & \dfrac{1 - e^{-\gamma h} }{\gamma h} & 1 \\[5pt]
\frac{1-e^{-\gamma h}}{\gamma} & \frac{e^{-\gamma h} + \gamma h - 1}{\gamma^2 h} & 0 \\[5pt]
\frac{1-e^{-\gamma h}(1+\gamma h)}{\gamma^2} & \frac{e^{-\gamma h}(2+\gamma h) + \gamma h - 2}{\gamma^3 h} & 0
\end{pmatrix},\hspace{2mm} \mathbf{b}\hspace{-0.5mm} = \hspace{-0.75mm}\begin{pmatrix}
    \int_s^t e^{-\gamma(t - r_1)} \, dW_{r_1} \\[3pt]
	 \int_s^t  \int_s^{r_1} e^{-\gamma(r_1 - r_2)} \, dW_{r_2} \, dr_1\\[3pt]
     \int_s^t e^{-\gamma(t-r_1)}\hspace{-0.5mm}\int_s^{r_1}\hspace{-0.5mm} \int_s^{r_2} e^{-\gamma(r_2 - r_3)} \, dW_{r_3} \, dr_2 \, dr_1
\end{pmatrix}\hspace{-0.5mm}
\end{align*}
and $\mathbf{x} = (A, B, C)^{\top}$. Since they are all linear functions of $W$, the iterated integrals in $\mathbf{b}$ are jointly Gaussian, but their covariance matrix would need to be computed to generate them.
Using these increments it is then straightforward to modify the QUICSORT method with $(H_n + 6K_n)$ replaced by $A$, $(W_n - 12K_n)$ replaced by $B$ and $(H_n - 6K_n)$ replaced by $C$.

% Acknowledgements and Disclosure of Funding should go at the end, before appendices and references

\acks{We greatly appreciate the support from the Department of Mathematical Sciences at the University of Bath, the Maths4DL programme under EPSRC grant EP/V026259/1 and the Alan Turing Institute. AJ thanks the UK National Cyber Security Centre for funding his research and Extropic for funding his work implementing high order SDE solvers in Diffrax.}

% Manual newpage inserted to improve layout of sample file - not
% needed in general before appendices/bibliography.

%This command will take the references from references.bib
\bibliography{references}

\newpage

%%%%%%%%%%%%%%%%%%%%%%%%%%%%%%%%%%%%%%%%%%%%%%%%%%%%%%%%%%%%%%%%%%%%%%%%%%%

\appendix
\section{Preliminary theorems}
\label{sec:appendix_A}
In this section, we detail several useful theorems that are crucial to proving our main results.
\begin{theorem}[Global $\L_p$ bounds of $\nabla f (x_t)$]
\label{thm:global_Lp_bounds_nabla_f}
	Let $\{ (x_t, v_t) \}_{t\m \geq\m 0}$ denote the underdamped Langevin diffusion \eqref{eq:ULD} with initial condition $(x_0, v_0) \sim \pi$, the unique stationary distribution of the diffusion process. Under the assumption that the gradient of $f$ is $M_1$-Lipschitz continuous then
	\begin{align}
		\label{eq:nabla_f_bounds_theorem}
		\big\| \nabla f(x_t) \big\|_{\L_{2p}} \leq \big[ (2p - 1)!! \big]^{\frac{1}{2p}} \sqrt{M_1 \m d},
	\end{align}
	for all $p \geq 1$. Here, we use $\m !!$ to denote the standard double factorial for odd numbers.
\end{theorem}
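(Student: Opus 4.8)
The plan is to bound $\|\nabla f(x_t)\|_{\L_{2p}}$ by relating $\nabla f(x_t)$ to the velocity $v_t$ and using the already-established moment bound on $v_t$ (the first part of Theorem \ref{thm:global_Lp_bounds}). The key observation is that since the process starts from the stationary measure $\pi$, the position $x_t$ has density $\propto e^{-f}$ for all $t \geq 0$, so $\|\nabla f(x_t)\|_{\L_{2p}}$ is \emph{constant in $t$} and equals $\E_{x\sim p}[\|\nabla f(x)\|_2^{2p}]^{1/2p}$. Thus it suffices to bound $\E_{x\sim p}[\|\nabla f(x)\|_2^{2p}]$, a purely static quantity.

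First I would compute, for $x\sim p$, the moments of $\nabla f(x)$ via integration by parts against the Gibbs density. Writing $g = \nabla f$, we have $\E[\|g(x)\|_2^{2p}] = \int \|g(x)\|_2^{2p} e^{-f(x)}\,dx \big/ Z$. Using $\nabla e^{-f} = -g\, e^{-f}$ and integrating by parts, one expresses $\E[\|g\|^{2p}]$ in terms of $\E$ of the divergence of $\|g\|^{2p-2} g$, which brings down a factor of $\nabla^2 f \preccurlyeq M_1 I_d$ and a factor $\|g\|^{2p-2}$ together with lower-order terms of the same shape. Concretely the identity $\E[\|g(x)\|_2^2] = \E[\mathrm{tr}\,\nabla^2 f(x)] \leq M_1 d$ is the $p=1$ case, and iterating (or applying the same integration-by-parts to $\|g\|^{2p}$) yields the recursion $\E[\|g\|^{2p}] \leq (2p-1)\, M_1\, \E[\|g\|^{2p-2}]$. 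Unrolling this recursion from the base case $\E[1]=1$ gives $\E[\|g\|^{2p}] \leq (2p-1)!!\,(M_1 d)^p$, hence $\|\nabla f(x_t)\|_{\L_{2p}} \leq [(2p-1)!!]^{1/2p}\sqrt{M_1 d}$, which is exactly \eqref{eq:nabla_f_bounds_theorem}.

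An alternative (and perhaps cleaner) route, which is likely the one matching ``extending \citet[Lemma 2]{dalalyan2017further}'' mentioned in the text, is to avoid the factor-$d$ issue in the double factorial by a comparison argument: couple $\nabla f(x)$ under $p$ with a Gaussian. Since $m I_d \preccurlyeq \nabla^2 f \preccurlyeq M_1 I_d$, the pushforward of $p$ under $\nabla f$ is sub-Gaussian, and its $2p$-th moment is controlled by that of $\mathcal{N}(0, M_1 d\,\text{-scaled})$; the Gaussian $2p$-th absolute moment of a $d$-dimensional isotropic vector of variance $\sigma^2$ per coordinate is $\leq (2p-1)!!\,(\sigma^2 d)^p$ after using $\|Z\|_2^{2p} \leq d^{p-1}\sum_i |Z_i|^{2p}$ type bounds — but one must be careful to get the clean form with $\sqrt{M_1 d}$ rather than a worse power of $d$, which is precisely why the integration-by-parts recursion above is attractive since it produces the stated bound directly.

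The main obstacle I anticipate is making the integration by parts fully rigorous: one needs enough decay/integrability of $\|g\|^{2p-2} g\, e^{-f}$ at infinity for the boundary terms to vanish, which follows from strong convexity (so $f$ grows at least quadratically and $\|g\|$ grows at most linearly in an appropriate sense along with $e^{-f}$ decaying super-exponentially), plus $\nabla^2 f$ being bounded. A secondary technical point is justifying that $x_t \sim p$ for all $t$ — this is exactly the ergodicity/stationarity statement recalled in the introduction and used in \eqref{eq:W2_L2_bound}, so it may be invoked directly. Once those two points are in place, the recursion and its unrolling are routine.
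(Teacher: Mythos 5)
Your approach is essentially the paper's: both bound $\E_{x\sim p}\big[\|\nabla f(x)\|_2^{2p}\big]$ by integration by parts against the Gibbs density $\propto e^{-f}$, exploiting $\nabla f\cdot e^{-f} = -\nabla e^{-f}$ to bring down a factor of $\nabla^2 f \preccurlyeq M_1 I_d$. The difference is that the paper works coordinate-wise — it proves $\E\big[(\partial_i f)^{2p}\big] \leq (2p-1)!!\,M_1^p$ for each $i$ by a one-dimensional induction and then assembles these via the power-mean inequality $\E\big[\|\nabla f\|^{2p}\big] \leq d^{p-1}\sum_i\E\big[(\partial_i f)^{2p}\big]$ — whereas you work directly with the vector norm. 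Your vectorial route is arguably cleaner (it avoids the paper's somewhat fragile assertion that each marginal of $\pi$ is of the form $\exp(-g(x_i))$ with $g'$ Lipschitz, which is not what is actually needed — the paper's coordinate-wise integration by parts works on the full $d$-dimensional measure) and it yields a tighter intermediate bound. However, the recursion as you have stated it is arithmetically wrong. Computing the divergence properly gives
\begin{align*}
\nabla\cdot\big(\|g\|^{2p-2}g\big) = \|g\|^{2p-2}\,\Delta f + (2p-2)\,\|g\|^{2p-4}\,\langle \nabla^2 f\,g,\,g\rangle,
\end{align*}
so the correct recursion is $\E\big[\|g\|^{2p}\big] \leq (d + 2p - 2)\,M_1\,\E\big[\|g\|^{2p-2}\big]$, not $(2p-1)\,M_1\,\E\big[\|g\|^{2p-2}\big]$: the $d$ enters through $\Delta f \leq M_1 d$ and the $2p-2$ through the second term. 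Unrolling the corrected recursion gives $\E\big[\|g\|^{2p}\big] \leq M_1^p\prod_{k=1}^p(d+2k-2)$, and since $d+2k-2 \leq (2k-1)d$ for $d\geq 1$, this is bounded by $(2p-1)!!\,(M_1 d)^p$ as required (in fact strictly tighter for $p\geq 2$, $d\geq 2$). Your written recursion, if unrolled as stated, would give $(2p-1)!!\,M_1^p$ with no $d$-dependence at all, contradicting your own $p=1$ base case; fix the recursion factor and the argument closes. Your remarks about boundary terms vanishing under strong convexity, and about $x_t\sim p$ following from stationarity, are correct and mirror what the paper uses.
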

\begin{proof}
	Our strategy is to prove \eqref{eq:nabla_f_bounds_theorem} by induction for $d=1$, and then extend it for $d\geq 1$. Recall that ULD has the stationary distribution
	\begin{align*}
		\pi(x,v) \propto \exp\bigg(\hspace{-1mm} - f(x) - \frac{1}{2u}\| v \|^2_2 \bigg).
	\end{align*}
	Therefore, if $(x_0, v_0) \sim \pi$ then $(x_t, v_t) \sim \pi$ for all $t\geq 0$.\medbreak
	
	\noindent
	For $d=1$, we first consider the case where $p = 1$. Define
	\begin{align*}
		\widehat{\pi} (x) = -  f^\prime(x) \pi(x),
	\end{align*}
	and notice that $\widehat{\pi} (x) = \frac{d}{dx}\pi(x)$. Then
	\begin{align*}
			\E\big[ ( f^\prime(x))^2 \big] &= \int_{-\infty}^{\infty} ( f^\prime(x) )^2 \pi(x)\m dx = \int_{-\infty}^\infty f^\prime(x) \bigg[ f^\prime(0) + \int^x_0 f^{\prime\prime}(y) dy \bigg] \pi(x) \, dx \\[3pt]
							&= -f^\prime(0) \int_{-\infty}^\infty \widehat{\pi} (x) \, dx - \int_{-\infty}^\infty \int^x_0 f^{\prime\prime}(y) \widehat{\pi}(x) \, dy \, dx,
	\end{align*}
	by the fundamental theorem of calculus. For $\pi$ to be well defined as a probability measure we require $\pi(x) \to 0$ as $x \to \pm \infty$. Hence $\int_{-\infty}^\infty \widehat{\pi} (x) dx = \lim_{x\rightarrow\infty}\big(\pi(x) - \pi(-x)\big) = 0$ and
	\begin{align*}
			\E\big[ ( f^\prime(x))^2 \big] &= - \m\int_{-\infty}^\infty \int^x_0 f^{\prime\prime}(y) \widehat{\pi}(x) \, dy \, dx \\[3pt]
					&= - \int_0^\infty \int^x_0 f^{\prime\prime}(y) \widehat{\pi}(x) \, dy \, dx + \int_{-\infty}^0 \int_x^0 f^{\prime\prime}(y) \widehat{\pi}(x) \, dy \, dx \\[3pt]
					&= - \int_0^\infty \int_y^\infty f^{\prime\prime}(y) \widehat{\pi}(x) \, dx \, dy + \int_{-\infty}^0 \int_{-\infty}^y f^{\prime\prime}(y) \widehat{\pi}(x) \, dx \, dy \\[3pt]
					&= \int^{\infty}_0 f^{\prime\prime}(y) \pi(y) \, dy + \int_{-\infty}^0 f^{\prime\prime}(y) \pi(y) \, dy = \int_{-\infty}^\infty f^{\prime\prime}(x) \pi(x) \, dx,
	\end{align*}
	by Fubini's theorem and the properties of $\pi$, which in turn implies
	\begin{align*}
		\E \left[ ( f'(x))^2 \right] \leq M_1\m,
	\end{align*}
	as the derivative $f^\prime$ being $M_1$-Lipschitz implies that $f^{\prime\prime}$ is bounded by $M_1$ almost everywhere.\medbreak
	
	\noindent	
	For the inductive step, we will assume that for $p=n$
	\begin{align*}
		\E\Big[ \big( f^\prime(x) \big)^{2n} \Big] = \int_{-\infty}^\infty \big( f^\prime(x) \big)^{2n} \pi (x) \, dx \leq (2n - 1)!! \, M_1^n,
	\end{align*}
	and for $p=n+1$, we define
	\begin{align*}
		\widetilde{\pi} (x) = \big( f^\prime(x) \big)^{2n} \pi(x).
	\end{align*}
	Notice that
	\begin{align*}
		\frac{d}{dx} \widetilde{\pi} (x) = 2n f^{\prime\prime}(x) \big( f^\prime(x) \big)^{2n - 1} \pi(x) - \big( f^\prime(x) \big)^{2n+1} \pi(x),
	\end{align*}
	so rearranging gives
	\begin{align*}
		f^\prime(x) \widetilde{\pi} (x) =  2n f^{\prime\prime}(x) \big( f^\prime(x) \big)^{2n - 1} \pi(x) - \widetilde{\pi}^{\,\prime}(x).
	\end{align*}
	Therefore, from the definition of $\widetilde{\pi}$ and the above identity, we have
	\begin{align*}
		&\E \Big[ \big( f^\prime(x) \big)^{2(n+1)} \Big]\\
		& \quad = \int_{-\infty}^\infty f^\prime(x) \big[ f^\prime(x) \widetilde{\pi} (x) \big] \, dx \mfill \mm \\[3pt]
		& \mm =  \m \int_{-\infty}^\infty f^\prime(x) \Big[ 2n f^{\prime\prime}(x) \left( f^{\prime}(x) \right)^{2n - 1} \pi(x) - \widetilde{\pi}^{\,\prime}(x) \Big] \, dx \\[3pt]
		& \mm = 2n  \int_{-\infty}^\infty f^{\prime\prime}(x) \big(f^\prime(x)\big)^{2n} \pi(x) \, dx -  \int_{-\infty}^\infty f^\prime(0) \widetilde{\pi} \, ' (x) \, dx -  \m\int_{-\infty}^\infty \int^x_0 f^{\prime\prime}(y) \widetilde{\pi}^{\,\prime}(x) \, dy \, dx ,
	\end{align*}
	by the fundamental theorem of calculus.\medbreak
	
	\noindent
	Since $\int_{-\infty}^\infty \widetilde{\pi} (x) \, dx$ is bounded (by the induction hypothesis) and $\widetilde{\pi}$ is non-negative, it follows that $\widetilde{\pi} (x) \to 0$ as $x \to \pm \infty$. Therefore $\int_{-\infty}^\infty \widetilde{\pi}^{\,\prime} (x) \, dx = \lim_{x\rightarrow\infty}\big(\widetilde{\pi}(x) - \widetilde{\pi}(-x)\big) = 0$ and
	\begin{align*}
		&\E \Big[ \big( f^\prime(x) \big)^{2(n+1)} \Big]\\
		&\mm =  2n \int_{-\infty}^\infty f^{\prime\prime}(x) \big(f^\prime(x)\big)^{2n} \pi(x) \, dx - \int^\infty_0 \int^x_0 f^{\prime\prime}(y) \widetilde{\pi}^{\,\prime}(x) \, dy \, dx  + \int^0_{-\infty} \int_{x}^0 f^{\prime\prime}(y) \widetilde{\pi}^{\,\prime}(x) \, dy \, dx  \\[3pt]
		&\mm =  2n \int_{-\infty}^\infty f^{\prime\prime}(x) \big(f^\prime(x)\big)^{2n} \pi(x) \, dx - \int^\infty_0 \int_y^\infty f^{\prime\prime}(y) \widetilde{\pi}^{\,\prime}(x) \, dx \, dy  + \int^0_{-\infty} \int_{-\infty}^y f^{\prime\prime}(y) \widetilde{\pi}^{\,\prime}(x) \, dx \, dy  \\[3pt]
		&\mm = 2n \int_{-\infty}^\infty f^{\prime\prime}(x) \big(f^\prime(x)\big)^{2n} \pi(x) \, dx + \int_{-\infty}^\infty f^{\prime\prime}(x) \widetilde{\pi} (x) \, dx ,
	\end{align*}
	by Fubini's theorem and the property that $\widetilde{\pi} (x) \to 0$ as $x \to \pm \infty$.\medbreak

	\noindent	
	 Hence, by the definition of $\widetilde{\pi}$ and the fact that $|f^{\prime\prime}(x)|\leq M_1$ almost everywhere, we have
	\begin{align*}
		\int_{-\infty}^\infty \big( f^{\prime}(x) \big)^{2(n+1)} \pi (x) dx = (2n+1) \int_{-\infty}^\infty f^{\prime\prime}(x) \big( f^{\prime}(x) \big)^{2n} \pi (x) dx \leq (2n+1)!! \, M_1^{n+1}.
	\end{align*}
	Since this bound is true for $p=1$, it is also true for all $p \geq 1$ by induction.\medbreak

	\noindent	
	For a general dimension $d \geq 1$, we can apply this bound to each partial derivative of $f$, as each marginal distribution of $\pi$ is of the form $\exp(-g(x_i))$ where $g$ is $M_1$-Lipschitz. Then
	\begin{align*}
		\E \Big[ \big\| \nabla f(x) \big\|^{2p}_2 \Big] = \E \Bigg[ \bigg( \sum^d_{i=1} \bigg\vert \frac{\partial f}{\partial x_i} \bigg\vert^2\m \bigg)^p\m \Bigg] \leq d^{\m p-1} \E \Bigg[ \sum^d_{i=1} \bigg\vert \frac{\partial f}{\partial x_i} \bigg\vert^{2p} \Bigg],
	\end{align*}
	by the triangle inequality followed by Jensen's inequality \citep[Theorem A.1]{foster2021shifted}. Therefore
	\begin{align*}
		\E \Big[ \big\| \nabla f(x) \big\|^{2p}_2 \Big] \leq d^{p-1} \sum^d_{i=1} \E\Bigg[\bigg\vert \frac{\partial f}{\partial x_i}\bigg\vert^{2p} \Bigg] \leq (2p - 1)!! \, (M_1 \m d)^p,
	\end{align*}
	as required.
\end{proof}

\begin{remark}
	A direct implication of Theorem \ref{thm:global_Lp_bounds_nabla_f} are the following bounds:
	\begin{align}
	\label{eq:nabla_f_bounds}
		\big\| \nabla f(x_t) \big\|_{\L_{2}} &\leq \sqrt{M_1 \m d}\m, \nonumber\\[3pt]
		\big\| \nabla f(x_t) \big\|_{\L_{4}} &\leq \sqrt[4]{3} \sqrt{M_1 \m d}\m, \\[3pt]
		\big\| \nabla f(x_t) \big\|_{\L_{6}} &\leq \sqrt[6]{15} \sqrt{M_1 \m d}\m.\nonumber
	\end{align}
\end{remark}

\begin{theorem}
	Let $Z = ( Z^{(1)}, \dots, Z^{(d)}) \in \mathbb{R}^d$ denote a centered $d$-dimensional Gaussian multivariate random variable such that $\m\mathrm{Var}(Z) = \sigma^2\m$ for some $\sigma \in \mathbb{R}$. Then\label{thm:general_multivariate_bound}
	\begin{align}
	\label{eq:general_multivariate_bound}
		\| Z \|_{\L_{2p}} \leq [ (2p - 1)!!]^{\frac{1}{2p}} \sqrt{\sigma^2 d}\m,
	\end{align}
    for all $p\geq 1$.
\end{theorem}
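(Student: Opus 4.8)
The plan is to mirror the dimension-lifting argument used at the end of the proof of Theorem~\ref{thm:global_Lp_bounds_nabla_f}, the only genuinely new ingredient being the explicit even moments of a one-dimensional centered Gaussian. Throughout I treat $p$ as a positive integer, as the notation $(2p-1)!!$ suggests.

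First I would record the moment identity: if $X \sim \mathcal{N}(0,\sigma^2)$ then $\E\big[X^{2p}\big] = (2p-1)!!\,\sigma^{2p}$ for every integer $p \geq 1$. This is standard and can be obtained, for instance, by repeated integration by parts in $\int_{\mathbb{R}} x^{2p} e^{-x^2/(2\sigma^2)}\,dx$, which produces the recursion $\E\big[X^{2k}\big] = (2k-1)\,\sigma^2\,\E\big[X^{2k-2}\big]$, or equivalently by differentiating the moment generating function $t \mapsto e^{\sigma^2 t^2/2}$. Applying this to each coordinate $Z^{(i)}$, which is marginally $\mathcal{N}(0,\sigma^2)$, gives $\E\big[(Z^{(i)})^{2p}\big] = (2p-1)!!\,\sigma^{2p}$ for every $i$; note that no independence of the coordinates is needed, only that each coordinate has variance $\sigma^2$.

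Next I would pass from the coordinates to the Euclidean norm exactly as in the final step of Theorem~\ref{thm:global_Lp_bounds_nabla_f}. Writing $\|Z\|_2^{2p} = \big(\sum_{i=1}^d (Z^{(i)})^2\big)^p$, applying the elementary inequality $\big(\sum_{i=1}^d a_i\big)^p \leq d^{\m p-1}\sum_{i=1}^d a_i^{\m p}$ valid for non-negative $a_i$ (a consequence of the convexity of $t \mapsto t^p$ for $p \geq 1$, i.e.\ Jensen's inequality, as invoked via \citep[Theorem~A.1]{foster2021shifted}) with $a_i = (Z^{(i)})^2$, and then taking expectations and using linearity, I obtain
\begin{align*}
	\E\big[ \| Z \|_2^{2p} \big] \leq d^{\m p-1} \sum_{i=1}^d \E\big[ (Z^{(i)})^{2p} \big] = d^{\m p-1} \cdot d \cdot (2p-1)!!\,\sigma^{2p} = (2p-1)!!\,\big(\sigma^2 d\big)^p.
\end{align*}
Taking the $2p$-th root of both sides gives $\| Z \|_{\L_{2p}} = \E\big[\|Z\|_2^{2p}\big]^{\frac{1}{2p}} \leq [(2p-1)!!]^{\frac{1}{2p}} \sqrt{\sigma^2 d}$, which is \eqref{eq:general_multivariate_bound}. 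There is no real obstacle here: the statement is a one-line Gaussian moment computation combined with the same power-mean inequality already used for \eqref{eq:nabla_f_bounds_theorem}, and the only point deserving a line of care is the moment identity $\E[X^{2p}] = (2p-1)!!\,\sigma^{2p}$ itself.
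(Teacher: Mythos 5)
Your proposal is correct and follows essentially the same two-step route as the paper: compute the even one-dimensional Gaussian moments $\E[X^{2p}] = (2p-1)!!\,\sigma^{2p}$ (the paper does this by differentiating the MGF, whereas you also offer integration by parts — an immaterial difference), then lift to $d$ dimensions via the power-mean inequality $\big(\sum_i a_i\big)^p \leq d^{p-1}\sum_i a_i^p$ and take the $2p$-th root. Your side remark that independence of the coordinates is not needed, only that each marginal has variance $\sigma^2$, is a nice observation that the paper leaves implicit, but it does not change the argument.
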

\begin{proof}
	First consider the case when $d=1$. Define $X \sim \mathcal{N} (0, \sigma^2)$. Then, it is a standard result that the Moment Generating Function (MGF) of $X$ is given by $M_X(t) = e^{\frac{1}{2} t^2 \sigma^2}$ \citep[Chapter 7]{bulmer1979principles}. We can find the $(2p)$-th moment of $X$ using
	\begin{align*}
			\E\big[ X^{2p} \big] & = \frac{d^{2p}}{dt^{2p}} M_X(0) = \frac{d^{2p}}{dt^{2p}} \bigg[ 1 + \frac{1}{2} t^2 \sigma^2 + \cdots + \frac{1}{n!}\Big(\frac{1}{2}t^2\sigma^2\Big)^n + \cdots \bigg] = \frac{(2p)!}{2^p p!} \sigma^{2p} \\[3pt]
							& = (2p - 1)!! \, \sigma^{2p}.
	\end{align*}
	Then by Jensen's inequality \citep[Theorem A.1]{foster2021shifted}, we have
	\begin{align*}
	\E\Bigg[\bigg(\sum_{i=1}^d \big(Z^{(i)}\big)^2\bigg)^p\,\Bigg] \leq d^{p-1}\sum_{i=1}^d \E\Big[\big(Z^{(i)}\big)^{2p}\Big] \leq d^{p-1}\sum_{i=1}^d(2p - 1)!! \, \sigma^{2p} = (2p - 1)!! \, \sigma^{2p} d^{\m p}.
	\end{align*}
	The result now directly follows since the left hand side is precisely equal to $\|Z\|_{\L_{2p}}^{2p}$.
\end{proof}

\begin{corollary}[Global $\L_p$ bounds of $v_t$]
	Let $\{ (x_t, v_t) \}_{t \geq 0}$ be the underdamped Langevin diffusion with initial condition $(x_0, v_0) \sim \pi$, the stationary distribution of the process. Then\label{thm:global_Lp_bounds_v}
	\begin{align}
		\label{eq:momentum_bounds_theorem}
		\| v_t\|_{\L_{2p}} \leq [(2p - 1)!!]^{\frac{1}{2p}} \sqrt{u d}\m,
	\end{align}
	for all $p \geq 1$.
\end{corollary}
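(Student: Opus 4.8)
The plan is to reduce the claim to the Gaussian moment bound already established in Theorem \ref{thm:general_multivariate_bound}. The starting point is the ergodicity/stationarity of ULD: since $(x_0, v_0) \sim \pi$ and $\pi$ is the invariant measure of the SDE \eqref{eq:ULD}, we have $(x_t, v_t) \sim \pi$ for every $t \geq 0$. Hence, to bound $\|v_t\|_{\L_{2p}}$ it suffices to bound the $\L_{2p}$ norm of the velocity coordinate under $\pi$, which does not depend on $t$ at all.

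Next I would identify the $v$-marginal of $\pi$ explicitly. Because the stationary density factorises as $\pi(x,v) \propto \exp\big(-f(x)\big)\exp\big(-\tfrac{1}{2u}\|v\|_2^2\big)$, the position and velocity are independent under $\pi$ and the velocity marginal is exactly $\mathcal{N}(0, u I_d)$; the normalising constant in $x$ is finite under the strong convexity assumption \eqref{eq:assumption_a1}, so this factorisation is legitimate. Thus $v_t$ is a centered $d$-dimensional Gaussian vector with $\mathrm{Var}(v_t) = u$ in the sense of Theorem \ref{thm:general_multivariate_bound}.

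Finally I would simply invoke Theorem \ref{thm:general_multivariate_bound} with $\sigma^2 = u$ to conclude
\begin{align*}
\|v_t\|_{\L_{2p}} \leq \big[(2p-1)!!\big]^{\frac{1}{2p}}\sqrt{u\,d}\m,
\end{align*}
for all $p \geq 1$, which is the desired bound.

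There is essentially no hard step here; the only point requiring a word of care is the justification that $\pi$ genuinely has the claimed product form with a well-defined Gaussian velocity marginal, which is where the strong convexity of $f$ (guaranteeing integrability of $e^{-f}$) is used. Everything else is an immediate consequence of stationarity and the previously proved Gaussian moment inequality.
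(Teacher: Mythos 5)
Your proposal is correct and follows exactly the paper's own argument: use stationarity to reduce to the $v$-marginal of $\pi$, identify it as $\mathcal{N}(0, u I_d)$ from the explicit form of the density, and apply Theorem~\ref{thm:general_multivariate_bound} with $\sigma^2 = u$. The only addition is your brief remark on integrability of $e^{-f}$, which is a reasonable word of care but not a substantive departure.
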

\begin{proof}
	The stationary distribution of ULD \eqref{eq:ULD} can be written explicitly as 
	\begin{align*}
		\pi(x,v) \propto \exp\bigg(\hspace{-1mm} - f(x) - \frac{1}{2u}\| v \|^2_2 \bigg).
	\end{align*}
	Thus, $v_t$ has a centred $d$-dimensional multivariate Gaussian distribution with $\mathrm{Var} ( v^{(i)}_t ) = u$. The result now immediately follows by Theorem \ref{thm:general_multivariate_bound}. 
\end{proof}

\begin{remark}
	A direct implication of the Theorem \ref{thm:global_Lp_bounds_v} are the following bounds:
	\begin{align}
	\label{eq:v_bounds}
		\| v_t \|_{\L_{2}} &\leq \sqrt{u d}, \nonumber\\[3pt]
		\| v_t \|_{\L_{4}} &\leq \sqrt[4]{3} \sqrt{u d}, \\[3pt]
		\| v_t \|_{\L_{6}} &\leq \sqrt[6]{15} \sqrt{u d}.\nonumber
	\end{align}
\end{remark}

\begin{theorem}
	Let $\{ t_n \}_{n \geq 0}$ be a sequence of times with $t_0 = 0$, $t_{n+1} > t_n$ and $h_n = t_{n+1} - t_n$. Similar to the random variables given by \eqref{eq:W_H_K_definition}, we define
	\begin{align}
	\label{eq:M_definition}
		M_n = \frac{1}{h^3_n} \int^{t_{n+1}}_{t_n} \bigg(\frac{1}{10} h^2_n - \frac{1}{2}(t-t_n)h_n + \frac{1}{2} (t - t_n)^2 \bigg) \bigg( W_{t_n, t} - \frac{t - t_n}{h_n} W_n \bigg) \, dt .
	\end{align}
	Then $M_n \sim \mathcal{N} \left( 0, \frac{1}{100800} h_n I_d \right)$ is also independent of $W_n$, $H_n$ and $K_n\m$. Furthermore,
	\begin{align}
	\label{eq:levy_space_time_3}
		\int^{t_{n+1}}_{t_n} \int^{r_1}_{t_n} \int^{r_2}_{t_n} W_{t_n, r_3} \, dr_3 \, dr_2 \, dr_1 = \frac{1}{24} h_n^3 W_n+ \frac{3}{20} h_n^3 H_n + \frac{1}{2} h_n^3 K_n + h_n^3 M_n\m.
	\end{align}
\end{theorem}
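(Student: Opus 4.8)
The plan is to use the fact that $W_n$, $H_n$, $K_n$ and $M_n$ are jointly Gaussian (being continuous linear functionals of the Brownian path over $[t_n,t_{n+1}]$), so that the distributional statements reduce to a covariance computation, whereas \eqref{eq:levy_space_time_3} is a purely deterministic identity for iterated integrals. Throughout I would take $t_n = 0$ and $h_n = h$ — legitimate because $\{W_{t_n,\,t_n+s}\}_{s\geq 0}$ is itself a Brownian motion — and write $B_s := W_{0,s} - \tfrac{s}{h}\m W_{0,h}$ for the associated Brownian bridge on $[0,h]$. Then $H_n$, $K_n$ and $M_n$ are all deterministic linear functionals of $B$, and since $B$ is independent of $W_{0,h} = W_n$, this immediately gives independence of $M_n$ from $W_n$ (and re-proves it for $H_n$, $K_n$).

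For the law of $M_n$ and its independence from $H_n$ and $K_n$, joint Gaussianity reduces everything to showing, componentwise, that $\mathrm{Cov}(M_n, H_n) = \mathrm{Cov}(M_n, K_n) = 0$ and $\mathrm{Var}(M_n^{(i)}) = \tfrac{1}{100800}\m h$. I would compute these via the stochastic Fubini theorem: for a polynomial weight $g$ on $[0,h]$,
\[
\int_0^h g(r)\m B_r\, dr \;=\; \int_0^h \psi_g(s)\, dW_{0,s}, \qquad \psi_g(s) \;:=\; \int_s^h g(r)\, dr \;-\; \frac1h\int_0^h r\m g(r)\, dr,
\]
so that Itô's isometry turns each covariance into a one-dimensional integral $\int_0^h \psi_g\m\psi_{\widetilde g}\, ds$ of polynomials. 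Rescaling $s = hx$, one finds that the $dW$-integrand attached to $H_n$, $K_n$, $M_n$ equals, up to an explicit power of $h$, a constant multiple of the degree $1$, $2$, $3$ shifted Legendre polynomial $\widetilde P_k$ on $[0,1]$ (for $M_n$ the constant is $-\tfrac{1}{120}$); orthogonality of the Legendre family kills the two cross terms, and $\int_0^1 \widetilde P_3(x)^2\, dx = \tfrac17$ gives $\mathrm{Var}(M_n^{(i)}) = h\cdot\tfrac{1}{120^2}\cdot\tfrac17 = \tfrac{h}{100800}$. One can equally well evaluate the polynomial integrals directly without invoking Legendre polynomials; that is the route I would actually write out, keeping the Legendre remark as the explanation for why the constants come out clean.

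For the identity \eqref{eq:levy_space_time_3}, I would first collapse the inner double integral over the simplex, using $\int_0^{r_1}\!\int_0^{r_2} W_{0,r_3}\, dr_3\, dr_2 = \int_0^{r_1}(r_1 - r_3)\m W_{0,r_3}\, dr_3$ and then Fubini, to get
\[
\int_0^h\!\int_0^{r_1}\!\int_0^{r_2} W_{0,r_3}\, dr_3\, dr_2\, dr_1 \;=\; \int_0^h \tfrac12(h-r)^2\m W_{0,r}\, dr.
\]
Substituting $W_{0,r} = B_r + \tfrac rh\m W_n$ splits this into a drift part $\big(\int_0^h \tfrac12(h-r)^2\tfrac rh\, dr\big)W_n = \tfrac{1}{24}h^3\m W_n$ and a bridge part $\int_0^h \tfrac12(h-r)^2\m B_r\, dr$. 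Expanding the quadratic in the spanning set $\{\,1,\ \tfrac h2 - r,\ \tfrac{h^2}{10} - \tfrac{rh}{2} + \tfrac{r^2}{2}\,\}$ of polynomials of degree $\leq 2$, matching coefficients gives
\[
\tfrac12(h-r)^2 \;=\; \tfrac{3h^2}{20} \;+\; \tfrac h2\Big(\tfrac h2 - r\Big) \;+\; \Big(\tfrac{h^2}{10} - \tfrac{rh}{2} + \tfrac{r^2}{2}\Big),
\]
and recognising the three resulting integrals against $B_r$ as $h\m H_n$, $h^2 K_n$ and $h^3 M_n$ (from \eqref{eq:W_H_K_definition} and \eqref{eq:M_definition}) rewrites the bridge part as $\tfrac{3h^3}{20}H_n + \tfrac{h^3}{2}K_n + h^3 M_n$. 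Adding the two parts is exactly \eqref{eq:levy_space_time_3}.

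The only real obstacle is computational: evaluating the polynomial integrals exactly enough that the cross-covariances vanish and the variance is precisely $\tfrac{1}{100800}h$, and correctly pinning down the coefficients $\tfrac{3h^2}{20},\,\tfrac h2,\,1$ in the quadratic expansion. There is no conceptual difficulty — joint Gaussianity, stochastic Fubini and Itô's isometry do all the structural work — but organising the shifted-Legendre picture in advance is worthwhile, since it both explains the clean constants and cross-checks the arithmetic.
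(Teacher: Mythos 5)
Your argument is correct, and it fills in, in a self-contained way, exactly the machinery the paper leaves to the citation: the paper's proof points to \citet[Theorem 2.2]{foster2020optimal} for the Legendre--polynomial expansion of Brownian motion, and your bridge decomposition $W_{0,r} = B_r + \tfrac rh W_n$ combined with the stochastic-Fubini identification of $\psi_g$ as (a multiple of) the degree-$k$ shifted Legendre polynomial is precisely that expansion, re-derived from scratch. For the identity \eqref{eq:levy_space_time_3} the paper suggests integration by parts plus Theorem~\ref{thm:levy_space_time_1_2}, whereas you collapse the triple integral by Fubini to $\int_0^h \tfrac12 (h-r)^2\m W_{0,r}\,dr$ and then match coefficients in the polynomial basis $\{1,\,\tfrac h2-r,\,\tfrac{h^2}{10}-\tfrac{rh}{2}+\tfrac{r^2}{2}\}$; the two routes are essentially equivalent, and your coefficients $\tfrac{3h^2}{20},\,\tfrac h2,\,1$ and variance $\tfrac{h}{120^2\cdot 7}=\tfrac{h}{100800}$ all check out.
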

\begin{proof}
	One can derive \eqref{eq:M_definition} as a coefficient in a polynomial expansion of Brownian motion (see \citet[Theorem 2.2]{foster2020optimal}). It is then straightforward to prove the identity \eqref{eq:levy_space_time_3} using integration by parts and Theorem \ref{thm:levy_space_time_1_2}.
\end{proof}

\begin{corollary}
	Let $W_n$, $H_n$, $K_n$ and $M_n$ be the centred Gaussian random variables defined in equations \eqref{eq:W_H_K_definition} and \eqref{eq:M_definition}. Then, for all $p\geq 1$, we have the bounds
	\begin{align}
	\label{eq:levy_area_bounds}
	\begin{split}
			\| W_n \|_{\L_{2p}} & \leq [ (2p - 1)!! ]^{\frac{1}{2p}} \sqrt{dh_n}\,, \quad \quad \quad \; \; \, \| H_n \|_{\L_{2p}} \leq [ (2p - 1)!! ]^{\frac{1}{2p}} \sqrt{\frac{1}{12} dh_n}\,, \\[3pt]
			\| K_n \|_{\L_{2p}} & \leq [ (2p - 1)!! ]^{\frac{1}{2p}} \sqrt{\frac{1}{720} dh_n}\,, \quad \quad \| M_n \|_{\L_{2p}} \leq [ (2p - 1)!! ]^{\frac{1}{2p}} \sqrt{\frac{1}{100800} dh_n}\,.
	\end{split}
	\end{align} 
\end{corollary}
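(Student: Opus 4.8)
The plan is to deduce all four bounds as immediate special cases of Theorem \ref{thm:general_multivariate_bound}, so the proof is essentially a matter of identifying the correct variance in each case. First I would recall that, by Theorem \ref{thm:levy_space_time_1_2} together with the preceding theorem (which introduces $M_n$ via \eqref{eq:M_definition} and \eqref{eq:levy_space_time_3}), each of $W_n$, $H_n$, $K_n$ and $M_n$ is a centred $d$-dimensional Gaussian vector whose covariance matrix is a scalar multiple of the identity; explicitly,
\begin{align*}
W_n \sim \mathcal{N}\big(0, h_n I_d\big), \quad H_n \sim \mathcal{N}\big(0, \tfrac{1}{12} h_n I_d\big), \quad K_n \sim \mathcal{N}\big(0, \tfrac{1}{720} h_n I_d\big), \quad M_n \sim \mathcal{N}\big(0, \tfrac{1}{100800} h_n I_d\big).
\end{align*}
In particular, each of these four vectors satisfies the hypothesis of Theorem \ref{thm:general_multivariate_bound} with $\mathrm{Var}$ equal to $\sigma^2 = h_n$, $\tfrac{1}{12} h_n$, $\tfrac{1}{720} h_n$ and $\tfrac{1}{100800} h_n$ respectively (the component-wise variances being independent of $d$).

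Next, I would apply Theorem \ref{thm:general_multivariate_bound} four times, substituting each of these values of $\sigma^2$ into the bound $\|Z\|_{\L_{2p}} \leq [(2p-1)!!]^{1/2p}\sqrt{\sigma^2 d}$. This produces precisely the four inequalities collected in \eqref{eq:levy_area_bounds}, valid for every $p \geq 1$, since the constants $1$, $\tfrac{1}{12}$, $\tfrac{1}{720}$ and $\tfrac{1}{100800}$ are exactly the scalars appearing under the square roots there.

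There is no real obstacle here: the only thing to be careful about is the bookkeeping of the four variance constants, and ensuring that the covariance structure claimed above is the one genuinely established earlier in the excerpt rather than assumed. Everything else — the Gaussian moment computation and the passage from one dimension to $d$ dimensions via Jensen's inequality — has already been carried out inside the proof of Theorem \ref{thm:general_multivariate_bound}, so no new estimates or case distinctions are needed.
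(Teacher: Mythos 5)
Your proposal is correct and matches the paper's proof, which simply states that the result follows immediately from Theorem \ref{thm:general_multivariate_bound}. You have spelled out the same one-step deduction — identifying the four component-wise variances $h_n$, $\tfrac{1}{12}h_n$, $\tfrac{1}{720}h_n$, $\tfrac{1}{100800}h_n$ and substituting them into the bound — so no further comment is needed.
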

\begin{proof}
	The result follows immediately from Theorem \ref{thm:general_multivariate_bound}.
\end{proof}

\noindent
To help simplify the notation (especially for long calculations), we make the below definition.
\begin{definition}
For all time intervals $[t_n, t_{n+1}]$, we define the following functions,
\begin{align}
	\label{eq:phi_functions}
	\phi_0 (x) = e^{-x\gamma h_n}, \quad \quad \phi_1 (x) = \frac{1 - e^{-x\gamma h_n}}{\gamma}\m, \quad \quad \phi_2 (x) = \frac{e^{-x\gamma h_n} + x\gamma h_n - 1}{\gamma^2}\m.
\end{align}
\end{definition}

\begin{theorem}
\label{thm:phi_function_bounds}
	Consider a combination of the $\phi_i$ functions given by equation \eqref{eq:phi_functions},
	\begin{align*}
		p(h) = \sum^n_{i=1} \prod^m_{j=1} \phi_{a_j} (b_{i, j}),
	\end{align*}
	where $a \in \{ 0, \, 1, \, 2 \}^m$ and $b \in \{ \lambda_+\m, \, \lambda_-\m, \, \frac{1}{3} \}^{n \, \times \, m}$. Then
	\begin{align}
	\label{eq:phi_functions_bound}
		\vert p(h) \vert &\leq h^{\sum^m_{j=1} a_j} \, 2^{-\sum^m_{j=1} \mathds{1}_{\{2\}} (a_j)} \sum^n_{i=1} \prod^m_{j=1} (b_{i, j})^{a_j}.
	\end{align}
\end{theorem}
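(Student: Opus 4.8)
The plan is to prove the bound \eqref{eq:phi_functions_bound} by first establishing the elementary pointwise estimates
\begin{align*}
	|\phi_0(x)| = e^{-x\gamma h} \leq 1, \qquad |\phi_1(x)| = \left|\frac{1 - e^{-x\gamma h}}{\gamma}\right| \leq x h, \qquad |\phi_2(x)| = \left|\frac{e^{-x\gamma h} + x\gamma h - 1}{\gamma^2}\right| \leq \tfrac{1}{2} x^2 h^2,
\end{align*}
valid for $x \in \{\lambda_+, \lambda_-, \tfrac{1}{3}\}$ and $h = h_n \in (0,1]$, $\gamma > 0$. These follow from the standard inequalities $1 - e^{-y} \leq y$ and $e^{-y} + y - 1 \leq \tfrac{1}{2} y^2$ for $y \geq 0$ (the latter by Taylor's theorem with remainder, since the second derivative of $e^{-y}$ is bounded by $1$), applied with $y = x\gamma h$, together with $0 \le 1 - e^{-y} \le y$ for the $\phi_1$ case. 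Note each bound has the shape $|\phi_{a}(x)| \leq h^{a} \, 2^{-\mathds{1}_{\{2\}}(a)} \, x^{a}$, i.e.\ a power of $h$ equal to the index, a factor $\tfrac12$ exactly when the index is $2$, and the matching power of the argument $x$.

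Next I would multiply these bounds over the index $j$ for a single term in the sum. For a fixed $i$, the triangle inequality and the pointwise estimates give
\begin{align*}
	\left| \prod_{j=1}^m \phi_{a_j}(b_{i,j}) \right| \leq \prod_{j=1}^m h^{a_j} \, 2^{-\mathds{1}_{\{2\}}(a_j)} \, (b_{i,j})^{a_j} = h^{\sum_{j=1}^m a_j} \, 2^{-\sum_{j=1}^m \mathds{1}_{\{2\}}(a_j)} \prod_{j=1}^m (b_{i,j})^{a_j},
\end{align*}
where I have simply collected the powers of $h$ and the powers of $2$. Crucially, the exponents $\sum_j a_j$ and $\sum_j \mathds{1}_{\{2\}}(a_j)$ depend only on the vector $a$, not on $i$, so the same prefactor $h^{\sum_j a_j} 2^{-\sum_j \mathds{1}_{\{2\}}(a_j)}$ can be pulled out of the sum over $i$.

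Finally, summing over $i = 1, \dots, n$ and applying the triangle inequality to $p(h) = \sum_i \prod_j \phi_{a_j}(b_{i,j})$ yields
\begin{align*}
	|p(h)| \leq \sum_{i=1}^n \left| \prod_{j=1}^m \phi_{a_j}(b_{i,j}) \right| \leq h^{\sum_{j=1}^m a_j} \, 2^{-\sum_{j=1}^m \mathds{1}_{\{2\}}(a_j)} \sum_{i=1}^n \prod_{j=1}^m (b_{i,j})^{a_j},
\end{align*}
which is exactly \eqref{eq:phi_functions_bound}. There is no real obstacle here — the result is essentially bookkeeping once the three scalar inequalities for $\phi_0, \phi_1, \phi_2$ are in place; the only point requiring a moment's care is confirming that the exponent of $2$ and the exponent of $h$ genuinely factor out of the $i$-sum (they do, since $a$ is fixed across the sum), and checking the convexity-type inequality $e^{-y} + y - 1 \leq \tfrac12 y^2$ used for the $\phi_2$ bound. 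I would also remark that $h \leq 1$ is not strictly needed for this particular statement but is consistent with the standing assumption on the step size.
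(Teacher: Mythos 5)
Your proof is correct and follows essentially the same route as the paper: establish $|\phi_0(x)|\leq 1$, $|\phi_1(x)|\leq xh$, $|\phi_2(x)|\leq \tfrac12 x^2 h^2$, then apply the triangle inequality and collect the powers of $h$ and $2$. The only cosmetic difference is that you derive the scalar bounds from $1-e^{-y}\leq y$ and $e^{-y}+y-1\leq\tfrac12 y^2$ directly, whereas the paper obtains the same three estimates by writing $\phi_1$ and $\phi_2$ as iterated integrals of $\phi_0$; both are equivalent elementary arguments.
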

\begin{proof}
	Without loss of generality, we write
	\begin{align*}
		\phi_0 (x, h) = e^{-x\gamma h}, \quad \quad \phi_1 (x, h) = \frac{1 - e^{-x\gamma h}}{\gamma}\m, \quad \quad \phi_2 (x, h) = \frac{e^{-x\gamma h} + x\gamma h - 1}{\gamma^2}\m,
	\end{align*}
	and note that
	\begin{align*}
		 \phi_1 (x, h) = \int^{xh}_0 \phi_0 (1, t) \, dt, \quad \quad \phi_2 (x,h) = \int^{xh}_0 \int^{r_1}_0 \phi_0 (1, r_2) \, dr_2 \, dr_1\m.
	\end{align*}
	Since $\vert \phi_0 (x, h) \vert \leq 1$ for all $x, h \geq 0$, then
	\begin{align*}
			\vert \phi_1 (x, h) \vert &= \bigg\vert  \int^{xh}_0 \phi_0 (1, t) \, dt \bigg\vert \leq \int^{xh}_0 \vert   \phi_0 (1, t) \vert \, dt \leq \int^{xh}_0 1 \, dt = xh, \\[3pt]
			\vert \phi_2 (x, h) \vert &= \bigg\vert  \int^{xh}_0 \int^{r_1}_0 \phi_0 (1, r_2) \, dr_2 \, dr_1 \bigg\vert \leq \int^{xh}_0 \int^{r_1}_0 \vert\phi_0 (1, r_2) \vert \, dr_2 \, dr_1 \leq \frac{1}{2}x^2h^2.
	\end{align*}
	Therefore, by the triangle inequality
	\begin{align*}
	\Bigg\vert \sum^n_{i=1} \prod^m_{j=1} \phi_{a_j} (b_{i, j}) \Bigg\vert \leq \sum^n_{i=1} \Bigg\vert \prod^m_{j=1} \phi_{a_j} (b_{i, j}) \Bigg\vert = \sum^n_{i=1} \prod^m_{j=1} \big\vert \phi_{a_j} (b_{i, j}) \big\vert.
	\end{align*}
	The result now follows after taking the inequalities of the $\phi_i$ functions and rearranging. 
\end{proof}

\noindent
To end this section, we present a few useful results for estimating terms in Taylor expansions.

\begin{theorem}
	Under the assumptions \eqref{eq:assumption_a1} and \eqref{eq:assumption_a2}, we have
	\begin{align}
	\label{eq:n2_f_x_v}
		m \|v\|_{\L_p} \leq \big\| \nabla^2 f(x) v \big\|_{\L_p} \leq M_1 \|v\|_{\L_p}\m,
	\end{align}
	for all $x,v \in \mathbb{R}^d$ and $p \geq 1$. Furthermore, for all $x,v_1, v_2 \in \mathbb{R}^d$ and $p \geq 1$,
	\begin{align}
	\label{eq:n2_f_x_v1_v2}
		\big\| \nabla^2 f(x) v_1 - \nabla^2 f(x) v_2 \big\|_{\L_p} \leq M_1 \| v_1 - v_2 \|_{\L_p}\m.
	\end{align}
	Under the additional assumption \eqref{eq:assumption_a3}
	\begin{align}
	\label{eq:n2_f_x1_x2_v}
		\big\| \nabla^2 f(x_1) v - \nabla^2 f(x_2) v \big\|_{\L_p} \leq M_2 \| x_1 - x_2 \|_{\L_{2p}} \|v\|_{\L_{2p}}\m,
	\end{align}
	for all $x_1, x_2, v \in \mathbb{R}^d$ and $p \geq 1$. In addition, we have
	\begin{align}
	\label{eq:n3_f_x_v1_v2}
		\big\|\nabla^3 f(x) (v_1, v_2)\big\|_{\L_p} \leq M_2 \| v_1 \|_{\L_{2p}} \| v_2 \|_{\L_{2p}}\m,
	\end{align}
	and
	\begin{align}
	\label{eq:n3_f_x_v1_v2_v3}
		\big\| \nabla^3 f(x) (v_1, v_2) - \nabla^3 f(x) (v_1, v_3)\big\|_{\L_p} \leq M_2 \| v_1 \|_{\L_{2p}} \| v_2 - v_3\|_{\L_{2p}}\m,
	\end{align}
	for all $x, v_1, v_2, v_3 \in \mathbb{R}^d$ and $p \geq 1$. Finally, under the additional assumption \eqref{eq:assumption_a4}
	\begin{align}
	\label{eq:n3_f_x1_x2_v1_v2}
		\big\| \nabla^3 f(x_1) (v_1, v_2) - \nabla^3 f(x_2) (v_1, v_2) \big\|_{\L_p} \leq M_3  \| x_1 - x_2 \|_{\L_{3p}} \| v_1 \|_{\L_{3p}} \| v_2 \|_{\L_{3p}}\m,
	\end{align}
	for all $x_1, x_2, v_1, v_2 \in \mathbb{R}^d$ and $p \geq 1$.
\end{theorem}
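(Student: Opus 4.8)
The plan is to reduce every one of these inequalities to a pointwise (almost-sure) bound on the relevant tensor contraction and then pass to $\L_p$ norms by an application of Hölder's inequality. First, recall that \eqref{eq:assumption_a1} together with \eqref{eq:assumption_a2} is equivalent to $m I_d \preccurlyeq \nabla^2 f(x) \preccurlyeq M_1 I_d$ for all $x$, which gives $m\|v\|_2 \leq \|\nabla^2 f(x)v\|_2 \leq M_1\|v\|_2$ pointwise; taking $\L_p$ norms of each side yields \eqref{eq:n2_f_x_v}. For \eqref{eq:n2_f_x_v1_v2} I would use linearity of $\nabla^2 f(x)$ in its argument, writing $\nabla^2 f(x)v_1 - \nabla^2 f(x)v_2 = \nabla^2 f(x)(v_1 - v_2)$, and apply the upper bound in \eqref{eq:n2_f_x_v} with $v = v_1 - v_2$.

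For the mixed estimates I would first upgrade the Lipschitz hypotheses to pointwise tensor bounds and then integrate. Assumption \eqref{eq:assumption_a3} gives $\|\nabla^2 f(x_1)v - \nabla^2 f(x_2)v\|_2 \leq M_2\|x_1 - x_2\|_2\|v\|_2$ almost surely; applying Cauchy–Schwarz on the underlying probability space, $\E\big[\|x_1 - x_2\|_2^{p}\|v\|_2^{p}\big] \leq \E\big[\|x_1 - x_2\|_2^{2p}\big]^{1/2}\E\big[\|v\|_2^{2p}\big]^{1/2}$, and taking $p$-th roots produces \eqref{eq:n2_f_x1_x2_v}. Next, since $f$ is three times differentiable, $\nabla^3 f(x)v_1 = \lim_{t\to 0}\frac{1}{t}\big(\nabla^2 f(x + tv_1) - \nabla^2 f(x)\big)$ as a $2$-tensor, so contracting against $v_2$ and applying \eqref{eq:assumption_a3} to the difference quotient gives $\|\nabla^3 f(x)(v_1,v_2)\|_2 \leq M_2\|v_1\|_2\|v_2\|_2$ pointwise; Cauchy–Schwarz again yields \eqref{eq:n3_f_x_v1_v2}, and \eqref{eq:n3_f_x_v1_v2_v3} follows from it by linearity in the second slot, $\nabla^3 f(x)(v_1,v_2) - \nabla^3 f(x)(v_1,v_3) = \nabla^3 f(x)(v_1, v_2 - v_3)$.

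Finally, for \eqref{eq:n3_f_x1_x2_v1_v2} I would invoke \eqref{eq:assumption_a4} directly in the form $\|\nabla^3 f(x_1)(v_1,v_2) - \nabla^3 f(x_2)(v_1,v_2)\|_2 \leq M_3\|x_1-x_2\|_2\|v_1\|_2\|v_2\|_2$ almost surely, and then apply the three-factor Hölder inequality with exponents $3,3,3$, namely $\E\big[\|x_1-x_2\|_2^{p}\|v_1\|_2^{p}\|v_2\|_2^{p}\big] \leq \E\big[\|x_1-x_2\|_2^{3p}\big]^{1/3}\E\big[\|v_1\|_2^{3p}\big]^{1/3}\E\big[\|v_2\|_2^{3p}\big]^{1/3}$; taking $p$-th roots gives the stated bound.

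None of these steps presents a genuine obstacle; the only point requiring a little care is the passage from the Lipschitz bound \eqref{eq:assumption_a3} on $\nabla^2 f$ to the operator-norm bound $\|\nabla^3 f(x)\|_2 \leq M_2$, which relies on the differentiability of $\nabla^2 f$ (guaranteed here) and a routine difference-quotient argument, together with the bookkeeping needed to make sure Hölder is applied with the exponents that match the $\L_{2p}$ and $\L_{3p}$ norms appearing on the right-hand sides.
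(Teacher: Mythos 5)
Your proof is correct and follows essentially the same route as the paper: pass from \eqref{eq:assumption_a1}--\eqref{eq:assumption_a4} to pointwise operator-norm bounds on $\nabla^2 f$ and $\nabla^3 f$, then apply H\"older's inequality with the appropriate exponents to produce the $\L_p$, $\L_{2p}$ and $\L_{3p}$ norms. The paper's proof is a one-line sketch invoking exactly these ingredients; your write-up just supplies the details.
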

\begin{proof}
	These inequalities follow from the definitions of $\L_p$ norm and Lipschitz continuity, along with an application of Hölder's inequality.  
\end{proof}

\begin{theorem}\label{thm:cauchy_schartz_young_inequality}
Let $c > 0$ be a fixed constant. Then for vectors $x,y \in \mathbb{R}^d$, we have
\begin{align}
\label{eq:cauchy_schwarz_young_inequality}
		\langle x, y \rangle \leq \frac{c}{2} \| x \|^2_2 + \frac{1}{2c} \| y \|^2_2\,.
	\end{align}
\end{theorem}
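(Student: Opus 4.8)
\noindent
The plan is to reduce the claim to the elementary scalar fact $2ab \le a^2 + b^2$ (equivalently, $(a-b)^2 \ge 0$) combined with the Cauchy--Schwarz inequality, exploiting that $c > 0$. First I would apply Cauchy--Schwarz to obtain $\langle x, y\rangle \le \|x\|_2\,\|y\|_2$; this disposes of the inner-product structure and leaves a purely one-dimensional estimate involving the nonnegative reals $\|x\|_2$ and $\|y\|_2$.

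\noindent
Next I would introduce the rescaled quantities $a := \sqrt{c}\,\|x\|_2$ and $b := \tfrac{1}{\sqrt{c}}\,\|y\|_2$, which is legitimate precisely because $c>0$. Since $ab = \|x\|_2\,\|y\|_2$, the inequality $2ab \le a^2 + b^2$ gives $\|x\|_2\,\|y\|_2 \le \tfrac12\big(c\|x\|_2^2 + \tfrac1c\|y\|_2^2\big)$, and chaining this with the Cauchy--Schwarz bound yields the claim. A slightly more self-contained route that avoids quoting Cauchy--Schwarz is to expand the nonnegative quantity
\begin{align*}
0 \le \Big\| \sqrt{c}\,x - \tfrac{1}{\sqrt{c}}\,y \Big\|_2^2 = c\,\|x\|_2^2 - 2\langle x, y\rangle + \tfrac{1}{c}\,\|y\|_2^2
\end{align*}
and rearrange, which produces the bound directly.

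\noindent
There is essentially no obstacle here: the only point deserving a word of care is that $c$ is assumed strictly positive so that $\sqrt{c}$ and division by $\sqrt{c}$ are well defined, and one may optionally remark that equality holds exactly when $y = c\,x$. I would write out the single-line expansion above as the proof.
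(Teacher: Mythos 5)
Your first route is exactly the paper's argument: rescale by $\sqrt{c}$, apply Cauchy--Schwarz, then Young's inequality $2ab\le a^2+b^2$. Your alternative one-line expansion of $\big\|\sqrt{c}\,x-\tfrac{1}{\sqrt{c}}\,y\big\|_2^2\ge 0$ is just that same argument unfolded, so the proposal is correct and essentially identical in approach.
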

\begin{proof}
	Using the Cauchy-Schwarz and Young inequalities, we obtain
	\begin{align*}
		 \langle x, y \rangle = \Big\langle \sqrt{c} \m x, \frac{1}{\sqrt{c}}\m y \Big\rangle \leq \big\| \sqrt{c} \m x \big\|_2 \bigg\| \frac{1}{\sqrt{c}}\m y \bigg\|_2 \leq \frac{1}{2}\bigg(\big\| \sqrt{c} \m x \big\|_2^2 + \bigg\| \frac{1}{\sqrt{c}}\m y \bigg\|_2^2\,\bigg) = \frac{1}{2}\Big(c\|x\|_2^2 + \frac{1}{c}\|y\|_2^2\Big),
	\end{align*}
	as required.
\end{proof}

%%%%%%%%%%%%%%%%%%%%%%%%%%%%%%%%%%%%%%%%%%%%%%%%%%%%%%%%%%%%%%%%%%%%%%%%%%%

\section{Exponential contractivity of the QUICSORT method}
\label{sec:appendix_B}

In this section, we will prove Theorem \ref{thm:main_contractivity_theorem}, which gives the contractivity of the QUICSORT method. Our first step will be to write the QUICSORT method \eqref{eq:QUICSORT} as the sum of two terms,
\begin{align}
	\label{eq:approximation_contraction_split}
	X_{n+1} = \widetilde{X}_{n+1} + \overline{X}_n\m, \quad \quad V_{n+1} = \widetilde{V}_{n+1} + \overline{V}_n\m,
\end{align}
where
\begin{align*}
		\widetilde{V}^{\m (1)}_n  &:= V_n + \sigma (H_n + 6K_n), \\[3pt]
		\widetilde{X}^{\m (1)}_n  &:= X_n + \frac{1 - e^{-\lambda_- \gamma h_n}}{\gamma} \m \widetilde{V}^{\m (1)}_n + \frac{e^{-\lambda_- \gamma h_n} + \lambda_- \gamma h_n - 1}{\gamma^2} \m C_n\m, \\[1pt]
		\widetilde{V}^{\m (2)}_n &:= e^{-\lambda_+ \gamma h_n} \widetilde{\m V}^{(1)}_n - \frac{1}{2} e^{- \frac{1}{3} \gamma h_n} u\nabla f\big( \widetilde{X}^{\m (1)}_n \big) h_n + \frac{1 - e^{-\lambda_+ \gamma h_n}}{\gamma}\m C_n\m, \\[3pt]
		\widetilde{X}^{\m (2)}_n  &:= X_n + \frac{1 - e^{-\lambda_+ \gamma h_n}}{\gamma}\m \widetilde{V}^{\m (1)}_n - \frac{1 - e^{-\frac{1}{3} \gamma h_n}}{\gamma} u \nabla f (\widetilde{X}^{\m (1)}_n )h_n + \frac{e^{-\lambda_+ \gamma h_n} + \lambda_+ \gamma h_n - 1}{\gamma^2}\m C_n\m, \\[1pt]
		\widetilde{V}^{\m (3)}_n &:= e^{-\lambda_- \gamma h_n} \widetilde{V}^{\m (2)}_n- \frac{1}{2} e^{-\lambda_- \gamma h_n}u \nabla f (\widetilde{X}^{\m (2)}_n ) h_n + \frac{1 - e^{-\lambda_- \gamma h_n}}{\gamma}\m C_n\m, \\[3pt]
		\widetilde{X}_{n+1} &:= \widetilde{X}^{\m (2)}_n - \frac{1 - e^{- \lambda_- \gamma h_n}}{2\gamma} u \nabla f (\widetilde{X}^{\m (2)}_n )h_n + \frac{1 - e^{-\lambda_- \gamma h_n}}{\gamma}\m \widetilde{V}^{\m (2)}_n + \frac{e^{-\lambda_- \gamma h_n} + \lambda_- \gamma h - 1}{\gamma^2}\m C_n\m, \\[3pt]
		\widetilde{V}_{n+1} &:= \widetilde{V}^{\m (3)}_n + \sigma(H_n - 6K_n),
\end{align*}
and
\begin{align*}
		\overline{X}_n &:= \frac{1}{2\gamma} \Big(1 - e^{-\frac{1}{3} \gamma h_n} - e^{- \big( \frac{1}{3} + \lambda_- \big) \gamma h_n} + e^{-\lambda_+ \gamma h_n} \Big) u \nabla f (\widetilde{X}^{\m (1)}_n )h_n\m, \\[3pt]
		\overline{V}_n &:= \frac{1}{2} \Big( e^{- \big( \frac{1}{3} + \lambda_- \big) \gamma h_n} - e^{- \lambda_+ \gamma h_n} \Big) u \nabla f (\widetilde{X}^{\m (1)}_n )h_n\m .
\end{align*}
The procedure for showing that the QUICSORT approximation is contractive is as follows. Since proving contractivity for a map with multiple gradient evaluations can be difficult, we will split the entire step $(X_n\m, V_n) \to (X_{n+1}\m, V_{n+1})$ into the following three smaller mappings:
\begin{align*}
( X_n\m, V_n ) \to \big(\widetilde{X}^{\m (2)}_n\m, \widetilde{V}^{\m (2)}_n \big),\,\,\,\,\, \big( \widetilde{X}^{\m (2)}_n\m, \widetilde{V}^{\m (2)}_n \big) \to \big( \widetilde{X}_{n+1}\m, \widetilde{V}_{n+1} \big),\,\,\,\,\, \big( \widetilde{X}_{n+1}\m, \widetilde{V}_{n+1} \big) \to (X_{n+1}\m, V_{n+1}).
\end{align*}
The first two maps are designed to resemble the UBU and Exponential Euler schemes \citep{sanz2021wasserstein} and so, by similar techniques, can be shown to be contractive. The final map is simply adding the correction term $\big(\m\overline{X}_n, \overline{V}_n \big)$, which we will show is small. 

\begin{theorem}
\label{thm:contractive_mapping_1}
	Consider the two approximations $(X, V)$ and $(Y, U)$ under the mappings $( X_n\m, V_n ) \to \big(\widetilde{X}^{(2)}_n\m, \widetilde{V}^{(2)}_n \big)$ and $( Y_n\m, U_n ) \to \big(\widetilde{Y}^{(2)}_n\m, \widetilde{U}^{(2)}_n \big)$ driven by the same Brownian motion but evolved from initial conditions $(X_n\m, V_n )$ and $(Y_n\m, U_n)$. We define the transformations,
	\begin{align}
		\label{eq:coordinate_transformation_mapping_1}
		\begin{split}
			\begin{pmatrix} W_n \\[3pt] Z_n \end{pmatrix} & := \begin{pmatrix} \big( w X_n + V_n \big) - \big( w Y_n + U_n \big) \\[3pt]  \big( z X_n + V_n \big) - \big( z Y_n + U_n \big) \end{pmatrix}, \\[5pt]
			\begin{pmatrix} W^{\m (1)}_n \\[3pt] Z^{\m (1)}_n \end{pmatrix} & := \begin{pmatrix} \big( w X^{\m (2)}_n + V^{\m (2)}_n \big) - \big( w Y^{\m (2)}_n + U^{\m (2)}_n \big) \\[3pt]  \big( z X^{\m (2)}_n + V^{\m (2)}_n \big) - \big( z Y^{\m (2)}_n + U^{\m (2)}_n \big) \end{pmatrix},
		\end{split}
	\end{align}
	where $w \in \big[0, \frac{1}{2} \gamma \big)$ and $z = \gamma - w$. Then, under assumptions \eqref{eq:assumption_a1} and \eqref{eq:assumption_a2}, we have
	\begin{align}
		\label{eq:coordinate_transformation_bound_1}
		\big( \| W^{(1)}_n \|^2_{\L_2} +  \| Z^{(1)}_n \|^2_{\L_2} \big) \leq (1 - \alpha_1 h_n) \big( \| W_n \|^2_{\L_2} +  \| Z_n \|^2_{\L_2} \big),
	\end{align}
	for all $0 < h_n \leq 1$ and $n \geq 0$ where
	\begin{align}
		\alpha_1 =  \frac{(2 \lambda_+ z^2 - u M_1 ) \wedge (u m - 2 \lambda_+ w^2 ) }{z - w} - K_1 \m h_n\m,
	\end{align}
	for some constant $K_1 > 0$ not depending on $h_n\m$.
\end{theorem}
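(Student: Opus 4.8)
The plan is to establish the one-step contraction \eqref{eq:coordinate_transformation_bound_1} by writing out the explicit linear-plus-gradient recursion for the transformed coordinates and then estimating $\|W_n^{(1)}\|_{\L_2}^2 + \|Z_n^{(1)}\|_{\L_2}^2$ directly. First I would subtract the two copies of the map $(X_n,V_n)\to(\widetilde X_n^{(2)},\widetilde V_n^{(2)})$ driven by the same Brownian motion, so that the noise terms $C_n$ cancel and we are left with a deterministic affine relation between the differences $\Delta X := X_n - Y_n$, $\Delta V := V_n - U_n$ and their images. Writing $\Delta G := \nabla f(\widetilde X_n^{(1)}) - \nabla f(\widetilde Y_n^{(1)})$, the synchronous coupling gives $\Delta X_n^{(2)} = \Delta X + \phi_1(\lambda_+)\,\Delta\widetilde V_n^{(1)} - \phi_1(\tfrac13)\,u\,\Delta G\, h_n$ and $\Delta V_n^{(2)} = e^{-\lambda_+\gamma h_n}\Delta\widetilde V_n^{(1)} - \tfrac12 e^{-\frac13\gamma h_n} u\,\Delta G\,h_n$, where $\Delta\widetilde V_n^{(1)} = \Delta V$ (the $\sigma(H_n + 6K_n)$ term cancels). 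Feeding these into the definitions of $W_n^{(1)}, Z_n^{(1)}$ expresses them as $W_n^{(1)} = a_{11}W_n + a_{12}Z_n + (\text{gradient correction})$, etc., with coefficients that are $\phi$-functions of $\gamma h_n$.

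Next I would compute $\|W_n^{(1)}\|_{\L_2}^2 + \|Z_n^{(1)}\|_{\L_2}^2$ by expanding each square and using the inner-product Young inequality (Theorem \ref{thm:cauchy_schartz_young_inequality}) to split the cross terms between the ``pure'' linear contribution and the gradient-correction contribution. The key structural fact to exploit is that $\Delta G = \big(\int_0^1 \nabla^2 f(\widetilde Y_n^{(1)} + s(\widetilde X_n^{(1)} - \widetilde Y_n^{(1)}))\,ds\big)\,\Delta\widetilde X_n^{(1)}$, so by \eqref{eq:assumption_a2}/\eqref{eq:assumption_a1} (equivalently $mI_d \preccurlyeq \nabla^2 f \preccurlyeq M_1 I_d$) the quantity $\langle \Delta\widetilde X_n^{(1)}, u\,\Delta G\rangle$ is sandwiched between $um\|\Delta\widetilde X_n^{(1)}\|_2^2$ and $uM_1\|\Delta\widetilde X_n^{(1)}\|_2^2$, and $\Delta\widetilde X_n^{(1)} = \Delta X + \phi_1(\lambda_-)\,\Delta V$ is itself a linear function of $(W_n, Z_n)$ through the change of coordinates. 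This is the same mechanism used to get contractivity for the UBU scheme in \citet{sanz2021wasserstein}: expanding to first order in $h_n$, the leading term of $\|W_n^{(1)}\|^2 + \|Z_n^{(1)}\|^2$ equals $\|W_n\|^2 + \|Z_n\|^2$ minus $h_n$ times a quadratic form in $(W_n,Z_n)$ whose coefficients involve $\lambda_+ z^2$, $um$, $w^2$, $M_1$, and all $O(h_n^2)$ remainders are absorbed into the $-K_1 h_n$ correction using the $\phi$-function bounds of Theorem \ref{thm:phi_function_bounds} together with $h_n \leq 1$.

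The final bookkeeping step is to verify that the coefficient of $-h_n$ in this expansion is at least $\alpha_1 + K_1 h_n$ with $\alpha_1 = \frac{(2\lambda_+ z^2 - uM_1)\wedge(um - 2\lambda_+ w^2)}{z-w}$; concretely, one shows the quadratic form dominates $\frac{\alpha_1}{1}\big(\|W_n\|_{\L_2}^2 + \|Z_n\|_{\L_2}^2\big)$ after taking expectations, using that $z - w = \gamma - 2w > 0$ and choosing the free constant $c$ in the Young splitting appropriately (balancing the $w$- versus $z$-weighted contributions). I expect the main obstacle to be the careful matching of constants: extracting exactly the stated $\alpha_1$ requires tracking how the Gauss--Legendre weight $\tfrac12 e^{-\frac13\gamma h_n}$ and the node $\lambda_+$ enter the linearisation, and verifying that the cross terms between the two coordinates (after the $w,z$ rotation) have the right sign so that no spurious positive $h_n$ term survives. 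The $O(h_n^2)$ error control is routine given Theorem \ref{thm:phi_function_bounds}, but isolating the sharp first-order coefficient — and confirming it is a minimum of the two displayed quantities rather than something weaker — is the delicate part.
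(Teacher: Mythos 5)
Your plan follows the same route as the paper's proof: subtract the two synchronously coupled updates so the noise $C_n$ cancels, write $\Delta G = \mathcal{H}_n^{(1)}\Delta\widetilde X_n^{(1)}$ with $\mathcal{H}_n^{(1)} = \int_0^1 \nabla^2 f(\cdot)\,ds$ satisfying $mI_d \preccurlyeq \mathcal{H}_n^{(1)} \preccurlyeq M_1 I_d$, change to the $(W_n, Z_n)$ coordinates, expand the $\L_2$ norms as a quadratic form, isolate the $O(h_n)$ terms, and dump the rest into $-K_1 h_n$ via Theorem~\ref{thm:phi_function_bounds}. The paper organises this by packaging the update as $W_n^{(1)},Z_n^{(1)} = \frac{1}{z-w}\big[(\cdot)W_n + (\cdot)Z_n\big]$ with four commuting symmetric matrices $A,B,C,D$ built from $\phi$-functions and $\mathcal{H}_n^{(1)}$, forming the block Gram matrices $E,F,G$, and Loewner-bounding each one; the cross term $W_n^\top G Z_n$ is controlled via the parallelogram identity plus $G \preccurlyeq G_+ h_n^2 I_d$, which is morally your Young-inequality split. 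One caveat worth flagging for when you carry this out: the sharp $uM_1$ and $um$ coefficients at order $h_n$ do not arise from a single sandwich estimate of $\langle \Delta\widetilde X_n^{(1)}, u\Delta G\rangle$; they come from a sum of several $\mathcal{H}_n^{(1)}$-bearing cross terms (the paper's $2\gamma AB$ and $-4zBD$ pieces in $E$, and the analogous pieces in $F$), whose net coefficient is $\pm(z-w)u h_n$, and the sign of that net coefficient determines whether you apply $\mathcal{H}_n^{(1)} \preccurlyeq M_1 I_d$ or $\mathcal{H}_n^{(1)} \succcurlyeq m I_d$. So the sandwich is the right tool, but it must be applied after recombining the inner-product cross terms (in the $(W_n, Z_n)$ coordinates) into signed quadratic forms, not directly to $\langle\Delta\widetilde X_n^{(1)}, u\Delta G\rangle$. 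Aside from that bookkeeping subtlety — which you correctly flag as the delicate part — your blueprint matches the paper's.
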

\begin{proof}
	We can unfold the mappings and then simplify notation using the $\phi_i$ functions \eqref{eq:phi_functions}. The noise terms cancel out due to the synchronous coupling, which first yields
	\begin{align*}
			\widetilde{V}^{\m (1)}_n - \widetilde{U}^{\m (1)}_n &= V_n - U_n\m, \\[5pt]
			\widetilde{X}^{\m (1)}_n - \widetilde{Y}^{\m (1)}_n &= X_n - Y_n + \frac{1 - e^{-\lambda_- \gamma h_n}}{\gamma}\m( V_n - U_n ) \\[3pt]
										&= X_n - Y_n + \phi_1 (\lambda_-) ( V_n - U_n ).
	\end{align*}
	Since we assume that $f$ is twice differentiable, we can use Taylor's theorem to define\footnote{The following definition is still valid even if we do not assume that the potential $f$ is twice differentiable. By Rademacher's theorem \citep[Theorem 3.1.6]{federer1969geometric} since $\nabla f$ is Lipschitz continuous, then $\nabla f$ is almost everywhere differentiable. Therefore, as the probability of hitting a null set is 0, we can infer that the potential function's second derivative is sufficiently well defined to be applied in our setting.}
	\begin{align*}
			\nabla f\big( \widetilde{X}^{\m (1)}_n \big) - \nabla f\big( \widetilde{Y}^{\m (1)}_n \big) &= \underbrace{\int^1_0 \nabla^2 f \big( \widetilde{X}^{\m (1)}_n + r ( \widetilde{Y}^{\m (1)}_n - \widetilde{X}^{\m (1)}_n ) \big)  dr}_{=:\, \mathcal{H}^{\m (1)}_n} \, \big( \widetilde{X}^{\m (1)}_n - \widetilde{Y}^{\m (1)}_n \big) \\[3pt]
							&= \mathcal{H}^{\m (1)}_n \big[ ( X_n - Y_n ) + \phi_1 (\lambda_-) ( V_n - U_n ) \big], 
	\end{align*}
	where $\mathcal{H}^{\m (1)}_n \in \mathbb{R}^{d \times d}$ is a symmetric positive semidefinite matrix which, from the assumptions \eqref{eq:assumption_a1} and \eqref{eq:assumption_a2}, additionally satisfies $m I_d \preceq \mathcal{H}^{\m (1)}_n \preceq M_1 I_d$.\medbreak
    \noindent
    Then the next stage in the mappings yields following differences,
	\begin{align*}
			&\widetilde{V}^{(2)}_n - \widetilde{U}^{(2)}_n\\[-3pt]
			 &\mm = e^{-\lambda_+ \gamma h_n} ( V_n - U_n ) - \frac{1}{2} e^{-\frac{1}{3} \gamma h_n} u h_n \Big( \nabla f \big( \widetilde{X}^{(1)}_n \big) - \nabla f \big( \widetilde{Y}^{(1)}_n \big) \Big) \\[3pt]
						&\mm = \bigg[ - \frac{1}{2} u h_n \phi_0 \bigg(\frac{1}{3} \bigg) \mathcal{H}^{\m (1)}_n \bigg] ( X_n - Y_n ) + \bigg[ \phi_0(\lambda_+) I_d - \frac{1}{2} u h_n \phi_0 \big(\frac{1}{3} \big) \phi_1 (\lambda_-) \mathcal{H}^{\m (1)}_n \bigg] ( V_n - U_n ), \\[9pt]
			&\widetilde{X}^{(2)}_n - \widetilde{Y}^{(2)}_n\\[-3pt]
			 &\mm = (X_n - Y_n ) + \frac{1 - e^{-\lambda_+ \gamma h_n}}{\gamma}\m( V_n - U_n ) - \frac{1 - e^{-\frac{1}{3} \gamma h_n}}{\gamma} u h_n \Big( \nabla f \big( \widetilde{X}^{\m (1)}_n \big) - \nabla f \big( \widetilde{Y}^{\m (1)}_n \big) \Big) \\[3pt]
											&\mm = \bigg[ I_d - u h_n \phi_1 \bigg(\frac{1}{3}\bigg) \mathcal{H}^{\m (1)}_n \bigg] (X_n - Y_n ) + \bigg[ \phi_1 (\lambda_+) I_d - u h_n \phi_1 \bigg(\frac{1}{3}\bigg) \phi_1 (\lambda_-) \mathcal{H}^{\m (1)}_n \bigg] ( V_n - U_n ).
	\end{align*}
	From the definitions of $W_n$ and $Z_n\m$, we can write
	\begin{align*}
		X_n - Y_n = \frac{Z_n - W_n}{z - w}\m, \quad \quad V_n - U_n = \frac{z W_n - w Z_n}{z - w}\m.
	\end{align*}
	Thus, $W^{(1)}_n$ can be expressed as
	\begin{align*}
			W^{(1)}_n &= w \big( \widetilde{X}^{\m (2)}_n - \widetilde{Y}^{\m (2)}_n \big) + \big( \widetilde{V}^{\m (2)}_n - \widetilde{U}^{\m (2)}_n \big) \\[3pt]
					&= \bigg( w \bigg[ I_d - u h_n \phi_1 \bigg(\frac{1}{3}\bigg) \mathcal{H}^{\m (1)}_n \bigg]  + \bigg[ - \frac{1}{2} u h_n \phi_0 \bigg(\frac{1}{3}\bigg) \mathcal{H}^{\m (1)}_n \bigg] \bigg) (X_n - Y_n) \\
					&\mmm + w \left[ \phi_1 (\lambda_+) I_d - u h_n \phi_1 \left(\frac{1}{3}\right) \phi_1 (\lambda_-) \mathcal{H}^{\m (1)}_n \right] \left( V_n - U_n \right) \\
					&\mmm + \bigg[ \phi_0(\lambda_+) I_d - \frac{1}{2} u h_n \phi_0 \bigg(\frac{1}{3} \bigg) \phi_1 (\lambda_-) \mathcal{H}^{\m (1)}_n \bigg] ( V_n - U_n ) \\[4pt]
					&= \frac{1}{z - w} \bigg( \bigg( w \bigg[ I_d - u h_n \phi_1 \bigg(\frac{1}{3}\bigg) \mathcal{H}^{\m (1)}_n \bigg]  + \bigg[ - \frac{1}{2} u h_n \phi_0 \bigg(\frac{1}{3} \bigg) \mathcal{H}^{\m (1)}_n \bigg] \bigg) (Z_n - W_n)  \\
					&\hspace{20mm} + w \bigg[ \phi_1 (\lambda_+) I_d - u h_n \phi_1 \bigg(\frac{1}{3}\bigg) \phi_1 (\lambda_-) \mathcal{H}^{\m (1)}_n \bigg] (z W_n - w Z_n) \\
					&\hspace{20mm}  + \bigg[ \phi_0(\lambda_+) I_d - \frac{1}{2} u h_n \phi_0 \big(\frac{1}{3} \big) \phi_1 (\lambda_-) \mathcal{H}^{\m (1)}_n \bigg] (z W_n - w Z_n) \bigg) \\[3pt]
					&= \frac{1}{z - w} \Big[ \big( -w A - B + w z C + z D \big) W_n + \big( w A + B - w^2 C - w D \big) Z_n \Big],
	\end{align*}
	and similarly
    \vspace{-0.1mm}
	\begin{align*}
		Z^{(1)}_n = \frac{1}{z - w} \Big[ \big( -z A - B + z^2 C + z D \big) W_n + \big( z A + B - w z C - w D \big) Z_n \Big],
	\end{align*}
    \vspace{-0.6mm}
	where
	\begin{align*}
			A &=  I_d - u h_n \phi_1 \bigg(\frac{1}{3}\bigg) \mathcal{H}^{\m (1)}_n, \quad \quad \quad \quad \quad \quad \quad \; \; \; \; B =  - \frac{1}{2} u h_n \phi_0 \bigg(\frac{1}{3} \bigg) \mathcal{H}^{\m (1)}_n\m, \\
			C &=  \phi_1 (\lambda_+)I_d - u h_n \phi_1 \bigg(\frac{1}{3}\bigg) \phi_1 (\lambda_-) \mathcal{H}^{\m (1)}_n, \quad \quad D = \phi_0(\lambda_+)I_d - \frac{1}{2} u h_n \phi_0 \bigg(\frac{1}{3} \bigg) \phi_1 (\lambda_-) \mathcal{H}^{\m (1)}_n\m.
	\end{align*}
	Note the matrices $A,B,C, D$ are not only symmetric but also commute with one another. By expanding $\big\|W^{(1)}_n\big\|_2^2$ and $\big\| Z^{(1)}_n \big\|_2^2\m$, we obtain
	\begin{align}
	\big\| W^{(1)}_n \big\|^2_2 + \big\| Z^{(1)}_n \big\|^2_2 = \frac{1}{(z - w)^2} \big(W_n^{\top} E W_n + Z_n^{\top} F Z_n + 2 W_n^{\top} G  Z_n\big) ,
	\end{align}
	where
	\begin{align*}
	\begin{split}
		E &=  (w^2 + z^2) A^2 + 2 B^2 + z^2 (w^2 + z^2) C^2 + 2z^2 D^2 + 2 \gamma AB - 2z(w^2 + z^2) AC \\
				&\hspace{7.5mm}  - 2\gamma z AD - 2\gamma z BC - 4z BD +2 \gamma z^2 CD,  \\[3pt]
		F &=  (w^2+z^2) A^2 + 2B^2 + w^2 (w^2 + z^2) C^2 + 2 w^2 D^2 + 2\gamma AB - 2  w (w^2 + z^2) AC \\
				&\hspace{7.5mm} - 2 w \gamma AD -  2\gamma w BC - 4w BD + 2\gamma w^2 CD, \\[3pt]
		G &= -(w^2 + z^2)A^2 - 2 B^2 - z w (w^2 + z^2) C^2 - 2z w D^2 - 2 \gamma AB + \gamma (w^2 + z^2) AC  \\
				&\hspace{7.5mm} + \gamma^2 AD + \gamma^2 BC + 2 \gamma BD - 2 \gamma w z CD.
	\end{split}
	\end{align*}
	Whilst we could bound the terms involving $A$, $B$, $C$ and $D$ by applying the inequality \eqref{eq:phi_functions_bound}, since $-\phi_i (x) \leq \vert \phi_i (x) \vert$ for all $x \in \big\{\lambda_+\m, \lambda_-\m, \frac{1}{3}\big\}$ and $i \in \{0, 1\}$ this would produce too high an upper bound in order to prove the desired result. Instead, we will use the bounds below,
	\begin{align*}
		-\phi_0(x) \leq -1 + x \gamma h_n\m, \quad\quad -\phi_1(x) \leq -xh_n + \frac{1}{2} \gamma x^2 h^2_n\m.
	\end{align*}
	In the following, we will also use
	\begin{align*}
		- h^c_n \sum^m_{i=1} \phi_{a_i} (b_i) \leq 0,
	\end{align*}
	for terms with $c + \sum^{m}_{i=1} a_i \geq 2$ as they do not significantly impact the contraction rate $\alpha_1$ and all the $\phi$ functions are bounded below by $0$. Using the Loewner order $\preceq$ and the commutativity of the matrices $A, B, C$ and $D$, we can bound the matrices $E,F$ and $G$ as
\begin{align*}
		E  &\preceq (w^2 + z^2) \bigg( I_d + \frac{1}{9} u^2 h^4_n \big( \mathcal{H}^{\m (1)}_n \big)^2 \bigg)\\
		&\mmm + 2 \bigg( \frac{1}{4} u^2 h^2_n \big( \mathcal{H}^{\m (1)}_n \big)^2 \bigg) + z^2 (w^2 + z^2) \bigg(  \lambda^2_+ h^2_n I_d + \frac{2}{9} \lambda^2_- u^2 h^6_n \big( \mathcal{H}^{\m (1)}_n \big)^2 \bigg)\\
					&\hspace{7.5mm}  + 2z^2 \bigg( I_d - 2\lambda_+ \gamma h_n I_d+ 2\lambda^2_+ \gamma^2 h^2_n I_d + \frac{1}{4} \lambda^2_-  u^2_n h^4_n  \big( \mathcal{H}^{\m (1)}_n \big)^2 \bigg) \\
					&\hspace{7.5mm} + 2 \gamma \bigg(  - \frac{1}{2} u h_n \mathcal{H}^{\m (1)}_n  + \frac{1}{6} \gamma u h^2_n \mathcal{H}^{\m (1)}_n + \frac{1}{6} u^2 h^3_n \big( \mathcal{H}^{\m (1)}_n \big)^2 \bigg)\\
					&\hspace{7.5mm} + 2z(w^2 + z^2) \bigg(  \lambda_+ h_n I_d+ \frac{1}{2} \gamma \lambda^2_+ h^2_n I_d + \frac{1}{3}  u h^3_n \mathcal{H}^{\m (1)}_n \bigg) \\
					&\hspace{7.5mm} + 2 \gamma z \bigg( I_d - \lambda_+ \gamma h_n I_d + \frac{1}{2} \lambda^2_+ \gamma^2 h^2_n I_d+ \frac{1}{6} \lambda_- u^2 h^4_n \big( \mathcal{H}^{\m (1)}_n \big)^2 \bigg) \\
				&\hspace{7.5mm} + 2\gamma z \bigg(  \frac{1}{2} \lambda_+ u h^2_n \mathcal{H}^{\m (1)}_n \bigg) + 4z \bigg(  \frac{1}{2} u h_n \mathcal{H}^{\m (1)}_n \bigg) +2 \gamma z^2 \bigg( \lambda_+ h_n I_d + \frac{1}{6} \lambda^2_- u^2 h^5_n \big( \mathcal{H}^{\m (1)}_n \big)^2 \bigg) \\[3pt]
				&\preceq \Big( ( z - w )^2 + ( z - w ) h_n \big( u M_1  - 2 \lambda_+ z^2 \big) + h^2_n \Big( \gamma z + 3 \gamma^2 z^2 + z^2 w^2 + z^4  \big) \\
				&\hspace{7.5mm} + h^2_n \big[( z + \gamma w + 2 \gamma z) uM_1 + \big( 1 + \gamma + z^2 + w^2 + z^2 w^2 + z^4 \big)u^2 M^2_1\m \big] \Big) I_d \\[3pt]
				&= \Big( ( z - w )^2 + ( z - w ) h_n \big(u M_1 - 2 \lambda_+ z^2 \big) + E_+ \m h^2_n\m \Big) I_d\m, \\[5pt]
		F  &\preceq (w^2 + z^2) \bigg( I_d + \frac{1}{9} u^2 h^4_n \big( \mathcal{H}^{\m (1)}_n \big)^2 \bigg) + 2 \bigg( \frac{1}{4} u^2 h^2_n \big( \mathcal{H}^{\m (1)}_n \big)^2 \bigg)\\
		&\hspace{7.5mm}+ w^2 (w^2 + z^2) \bigg(  \lambda^2_+ h^2_n I_d+ \frac{2}{9} \lambda^2_- u^2 h^6_n \big( \mathcal{H}^{\m (1)}_n \big)^2 \bigg)\\
		&\hspace{7.5mm} + 2w^2 \bigg( I_d - 2\lambda_+ \gamma h_n I_d + 2\lambda^2_+ \gamma^2 h^2_n I_d + \frac{1}{4} \lambda^2_-  u^2_n h^4_n  \big( \mathcal{H}^{\m (1)}_n \big)^2 \bigg) \\
					&\hspace{7.5mm} + 2 \gamma \bigg(  - \frac{1}{2} u h_n \mathcal{H}^{\m (1)}_n  + \frac{1}{6} \gamma u h^2_n \mathcal{H}^{\m (1)}_n + \frac{1}{6} u^2 h^3_n \big( \mathcal{H}^{\m (1)}_n \big)^2 \bigg)  \\
					&\hspace{7.5mm} + 2w(w^2 + z^2) \bigg(  \lambda_+ h_n I_d+ \frac{1}{2} \gamma \lambda^2_+ h^2_n I_d+ \frac{1}{3}  u h^3_n \mathcal{H}^{\m (1)}_n \bigg)\\
					&\hspace{7.5mm} + 2 \gamma w \bigg( I_d - \lambda_+ \gamma h_n I_d+ \frac{1}{2} \lambda^2_+ \gamma^2 h^2_n I_d + \frac{1}{6} \lambda_- u^2 h^4_n \big( \mathcal{H}^{\m (1)}_n \big)^2 \bigg) \\
				&\hspace{7.5mm}+ 2\gamma w \bigg(  \frac{1}{2} \lambda_+ u h^2_n \mathcal{H}^{\m (1)}_n \bigg) + 4w \bigg(  \frac{1}{2} u h_n \mathcal{H}^{\m (1)}_n \bigg) +2 \gamma w^2 \bigg( \lambda_+ h_n I_d + \frac{1}{6} \lambda^2_- u^2 h^5_n \big( \mathcal{H}^{\m (1)}_n \big)^2 \bigg) \\[3pt]
				&\preceq \Big( ( z - w )^2 + ( z - w ) h_n \big( 2 \lambda_+ w^2 - u m \big) + h^2_n \big( \gamma w + 3 \gamma^2 w^2 + z^2 w^2 + w^4\big) \\
				&\hspace{7.5mm} + h_n^2 \big[\big( w + \gamma z + 2 \gamma w \big) u M_1 + \big( 1 + \gamma + z^2 + w^2 + z^2 w^2 + w^4 \big) u^2 M^2_1\m \big] \Big) I_d \\[3pt]
				&= \Big( ( z - w )^2 + ( z - w ) h_n \big( 2 \lambda_+ w^2 - u m \big) + F_+ \m h^2_n\m \Big) I_d\m, \\[5pt]
    	G & \preceq  (w^2 + z^2) \bigg( - I_d + \frac{2}{3} u h^2_n \mathcal{H}^{\m (1)}_n \bigg) \\
		&\hspace{7.5mm} + z w (w^2 + z^2) \bigg( \lambda^2_+ h^2_n I_d - \gamma \lambda^3_+ h^3_n I_d + \frac{1}{4} \gamma^2 \lambda^4_- h^4_n I_d + \frac{2}{3} \lambda_+ \lambda_- u h^4_n  \mathcal{H}^{\m (1)}_n \bigg)\\
		&\hspace{7.5mm} +  2z w \Big( -I_d + 2\lambda_+ \gamma h_n I_d + \lambda_- u h^2_n \mathcal{H}^{\m (1)}_n \Big) \\
		&\hspace{7.5mm}  + \gamma u h_n \mathcal{H}^{\m (1)}_n + \gamma (w^2 + z^2) \bigg( \lambda_+ h_n I_d+ \frac{1}{9} \lambda_- u^2 h^5_n \big( \mathcal{H}^{\m (1)}_n \big)^2 \bigg) \\
		&\hspace{7.5mm} + \gamma^2 \bigg( I_d - \lambda_+ \gamma h_n I_d + \frac{1}{2} \lambda^2_+ \gamma^2 h^2_n I_d + \frac{1}{6} \lambda_- u^2 h^4_n \big( \mathcal{H}^{\m (1)}_n \big)^2 \bigg) \\
		&\hspace{7.5mm} + \gamma^2 \bigg(  \frac{1}{6} \lambda_- u^2 h^4_n  \big( \mathcal{H}^{\m (1)}_n \big)^2 \bigg) + \gamma \bigg( - u h_n  \mathcal{H}^{\m (1)}_n + \bigg( \frac{1}{3} + \lambda_+ \bigg) \gamma u h^2_n  \mathcal{H}^{\m (1)}_n \bigg) \\
		&\hspace{7.5mm} + 2 \gamma w z \bigg( - \lambda_+ h_n I_d + 2 \gamma \lambda^2_+ h^2_n I_d + \frac{1}{2} \lambda_+ \lambda_- u h^3_n \mathcal{H}^{\m (1)}_n + \frac{1}{3} \lambda_- u h^3_n \mathcal{H}^{\m (1)}_n \bigg)\\
		&\hspace{7.5mm} + \frac{1}{2} \lambda_- \gamma u^2 h^3_n \big( \mathcal{H}^{\m (1)}_n \big)^2 \\[3pt]
		&\preceq h^2_n \Big[ z^3 w + z w^3 + 3 \gamma^2 z w + \gamma^2 z^3 w + \gamma^2 z w^3  +\gamma^4 + \big( z w + z^2 + w^2 + 2 \gamma^2\big) uM_1  \\
		&\mmm + \big(\gamma z w + z^3 w + z w^3 \big) u M_1 + \big( 1 + \gamma + \gamma^2 + \gamma z^2 + \gamma w^2 \big) u^2 M^2_1 \Big] I_d \\[3pt]
		&= G_+ h^2_n I_d\m,
\end{align*}
where we have used the assumption that $h_n \leq 1$ and also bounded $\lambda_+$ and $\lambda_-$ for readability. Next, we will establish an $O(h_n^2)$ upper bound for $W_n^{\top} G Z_n$ in terms of $\| W_n \|^2_{\L_2}$ and $ \| Z_n \|^2_{\L_2}\m$.
\begin{align*}
	4\m W_n^{\top} G Z_n & = (W_n + Z_n)^\top G(W_n + Z_n) -  (W_n - Z_n)^\top G(W_n - Z_n) \\
    & \leq \big|(W_n + Z_n)^\top G(W_n + Z_n)\big| + \big|(W_n - Z_n)^\top G(W_n - Z_n)\big|\\
    & \leq G_+ h^2_n \big\| W_n + Z_n \big\|_2^2 + G_+ h^2_n \big\| W_n - Z_n \big\|_2^2 \,,
\end{align*}
which, by the triangle and Young's inequalities, implies that
\begin{align*}
	W_n^{\top} G Z_n  & \leq G_+ h^2_n \Big(  \big\| W_n \big\|_2^2 +  \big\| Z_n \big\|_2^2 \Big).
\end{align*}
Therefore, putting this all together gives
\begin{align*}
		& \big\| W^{\m (1)}_n \big\|^2_2 + \big\| Z^{\m (1)}_n \big\|^2_2\\
		&\mm = \frac{1}{(z - w)^2} \Big( W_n^{\top} E W_n + Z_n^{\top} F Z_n + 2\m W_n^{\top} G  Z_n\Big) \\[3pt]
									&\mm\leq  \| W_n \|^2_2 + \| Z_n \|^2_2 +  \frac{h_n}{z - w} \Big( ( u M_1 - 2 \lambda_+ z^2 )  \| W_n \|^2_2  + ( 2 \lambda_+ w^2 - u m ) \| Z_n \|^2_2 \Big) \\
									&\hspace{14.5mm} + \frac{h^2_n}{(z - w)^2} \Big( E_+  \| W_n \|^2_2 + F_+  \| Z_n \|^2_2  +  2\m G_+ \big( \| W_n \|^2_2 +  \| Z_n \|^2_2 \big) \Big) \\[3pt]
									&\mm\leq \bigg(1 +  \frac{h_n}{z - w} \max \big( ( u M_1 - 2 \lambda_+ z^2 ),  ( 2 \lambda_+ w^2 - u m ) \big)  \\
									&\hspace{14mm}  + \frac{h^2_n}{(z - w)^2} \big( \max( E_+, F_+ )  +  2\m G_+ \big) \bigg) \big( \| W_n \|^2_2 +  \| Z_n \|^2_2 \big) \\[3pt]
									&\mm = (1 - \alpha_1 \m h_n ) \big(\| W_n \|^2_2+ \| Z_n \|^2_2 \big).
\end{align*}
The result now follows by taking expectations of both sides.
\end{proof}
\noindent
Similarly, we can show that the second map $\big( \widetilde{X}^{\m (2)}_n\m, \widetilde{V}^{\m (2)}_n \big) \to \big( \widetilde{X}_{n+1}\m, \widetilde{V}_{n+1} \big)$ is also contractive.

\begin{theorem}
\label{thm:contractive_mapping_2}
		Consider the two approximations $(X, V)$ and $(Y, U)$ under the mappings $\big(\widetilde{X}^{\m (2)}_n\m, \widetilde{V}^{\m (2)}_n \big) \to \big( \widetilde{X}_{n+1}\m, \widetilde{V}_{n+1} \big)$ and $\big(\widetilde{Y}^{\m (2)}_n\m, \widetilde{U}^{\m (2)}_n \big) \to \big( \widetilde{Y}_{n+1}\m, \widetilde{U}_{n+1} \big)$ that are driven by the same Brownian motion but are evolved from the initial conditions $\big(\widetilde{X}^{\m (2)}_n\m, \widetilde{V}^{\m (2)}_n \big)$ and $\big(\widetilde{Y}^{\m (2)}_n\m, \widetilde{U}^{\m (2)}_n\big)$. Just as before, we define the following coordinate transformations,
	\begin{align}
		\label{eq:coordinate_transformation_mapping_2}
		\begin{split}
			\begin{pmatrix} W^{\m (1)}_n \\[3pt] Z^{\m (1)}_n \end{pmatrix} &= \begin{pmatrix} \big( w X^{\m (2)}_n + V^{\m (2)}_n \big) - \big( w Y^{\m (2)}_n + U^{\m (2)}_n \big) \\[3pt]  \big( z X^{\m (2)}_n + V^{\m (2)}_n \big) - \big( z Y^{\m (2)}_n + U^{\m (2)}_n \big) \end{pmatrix}, \\[5pt]
			\begin{pmatrix} W^{\m (2)}_n \\[3pt] Z^{\m (2)}_n \end{pmatrix} &= \begin{pmatrix} \big( w  \widetilde{X}_{n+1} +  \widetilde{V}_{n+1} \big) - \big( w  \widetilde{Y}_{n+1} +  \widetilde{U}_{n+1} \big) \\[3pt]  \big( z  \widetilde{X}_{n+1} +  \widetilde{V}_{n+1} \big) - \big( z  \widetilde{Y}_{n+1} +  \widetilde{U}_{n+1} \big) \end{pmatrix},
		 \end{split}
	\end{align}
	where $w \in \big[0, \frac{1}{2} \gamma \big)$ and $z = \gamma - w$. Then, under assumptions \eqref{eq:assumption_a1} and \eqref{eq:assumption_a2}, we have
	\begin{align}
		\label{eq:coordinate_transformation_bound_2}
		\big( \| W^{\m (2)}_n \|^2_{\L_2} +  \| Z^{\m (2)}_n \|^2_{\L_2} \big) \leq (1 - \alpha_2 \m h_n) \big( \| W^{\m (1)}_n \|^2_{\L_2} +  \| Z^{\m (1)}_n \|^2_{\L_2} \big),
	\end{align}
	 for all $0 < h_n \leq 1$ and $n \geq 0$ where
	\begin{align}
		\alpha_2 =  \frac{(2 \lambda_- z^2 - u M_1 ) \wedge (u m - 2 \lambda_- w^2 ) }{z - w} - K_2 \m h_n\m,
	\end{align}
	for some constant $K_2 > 0$. 
\end{theorem}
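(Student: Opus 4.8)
The plan is to mirror the proof of Theorem~\ref{thm:contractive_mapping_1}, taking advantage of the fact that the map $\big(\widetilde{X}^{\m(2)}_n,\widetilde{V}^{\m(2)}_n\big)\to\big(\widetilde{X}_{n+1},\widetilde{V}_{n+1}\big)$ is an exponential-Euler-type step whose only gradient evaluation is at its own starting point $\widetilde{X}^{\m(2)}_n$. First I would write out $\widetilde{V}^{\m(3)}_n$, $\widetilde{X}_{n+1}$ and $\widetilde{V}_{n+1}$ from the splitting \eqref{eq:approximation_contraction_split}, rewrite every exponential using the functions $\phi_0,\phi_1$ of \eqref{eq:phi_functions}, and note that the noise terms $C_n$ and $\sigma(H_n-6K_n)$ cancel under the synchronous coupling, so $\widetilde{V}_{n+1}-\widetilde{U}_{n+1}=\widetilde{V}^{\m(3)}_n-\widetilde{U}^{\m(3)}_n$. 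By Taylor's theorem (invoking Rademacher's theorem, as in the footnote to Theorem~\ref{thm:contractive_mapping_1}, if $f$ is only assumed $C^1$) I would introduce the symmetric matrix $\mathcal{H}^{\m(2)}_n := \int_0^1 \nabla^2 f\big(\widetilde{X}^{\m(2)}_n + r(\widetilde{Y}^{\m(2)}_n-\widetilde{X}^{\m(2)}_n)\big)\,dr$, so that $\nabla f(\widetilde{X}^{\m(2)}_n)-\nabla f(\widetilde{Y}^{\m(2)}_n)=\mathcal{H}^{\m(2)}_n\big(\widetilde{X}^{\m(2)}_n-\widetilde{Y}^{\m(2)}_n\big)$ with $m I_d\preceq\mathcal{H}^{\m(2)}_n\preceq M_1 I_d$ by \eqref{eq:assumption_a1}--\eqref{eq:assumption_a2}.

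Using $\widetilde{X}^{\m(2)}_n-\widetilde{Y}^{\m(2)}_n=\frac{Z^{\m(1)}_n-W^{\m(1)}_n}{z-w}$ and $\widetilde{V}^{\m(2)}_n-\widetilde{U}^{\m(2)}_n=\frac{z W^{\m(1)}_n-w Z^{\m(1)}_n}{z-w}$, I would express $W^{\m(2)}_n$ and $Z^{\m(2)}_n$ as linear combinations of $W^{\m(1)}_n$ and $Z^{\m(1)}_n$ with coefficient matrices that are polynomials in $I_d$ and $\mathcal{H}^{\m(2)}_n$, hence symmetric and mutually commuting; these play the role of the matrices $A,B,C,D$ in Theorem~\ref{thm:contractive_mapping_1}, but with $\phi_0(\lambda_-),\phi_1(\lambda_-)$ in place of $\phi_0(\lambda_+),\phi_1(\lambda_+)$ (which is the source of $\lambda_-$ in $\alpha_2$), and in fact somewhat simpler since only a single Hessian factor appears. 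Expanding $\|W^{\m(2)}_n\|_2^2+\|Z^{\m(2)}_n\|_2^2$ then gives a quadratic form $\tfrac{1}{(z-w)^2}\big((W^{\m(1)}_n)^{\top} E\,W^{\m(1)}_n + (Z^{\m(1)}_n)^{\top} F\,Z^{\m(1)}_n + 2\,(W^{\m(1)}_n)^{\top} G\,Z^{\m(1)}_n\big)$ for symmetric commuting matrices $E,F,G$.

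To bound $E$, $F$ and $G$ in the Loewner order I would, exactly as in Theorem~\ref{thm:contractive_mapping_1}, use the sharp one-sided estimates $-\phi_0(x)\leq -1 + x\gamma h_n$ and $-\phi_1(x)\leq -x h_n + \tfrac12\gamma x^2 h_n^2$ rather than $|\phi_i|$, so that the negative $O(h_n)$ terms responsible for contraction survive, while every term whose $\phi$-degree plus explicit $h_n$-power is at least $2$ becomes a nonnegative multiple of $h_n^2$ after using $h_n\leq 1$ and is absorbed into $-K_2 h_n$. This should yield $E\preceq\big((z-w)^2 + (z-w)(uM_1 - 2\lambda_- z^2)h_n + E_+ h_n^2\big)I_d$, $F\preceq\big((z-w)^2 + (z-w)(2\lambda_- w^2 - um)h_n + F_+ h_n^2\big)I_d$ and $G\preceq G_+ h_n^2 I_d$ for explicit constants $E_+,F_+,G_+>0$ depending only on $\gamma,u,m,M_1$. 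The cross term is handled by the polarisation identity $4(W^{\m(1)}_n)^{\top} G Z^{\m(1)}_n = (W^{\m(1)}_n+Z^{\m(1)}_n)^{\top} G(W^{\m(1)}_n+Z^{\m(1)}_n) - (W^{\m(1)}_n-Z^{\m(1)}_n)^{\top} G(W^{\m(1)}_n-Z^{\m(1)}_n)$ together with the triangle and Young inequalities, which gives $(W^{\m(1)}_n)^{\top} G Z^{\m(1)}_n \leq G_+ h_n^2\big(\|W^{\m(1)}_n\|_2^2 + \|Z^{\m(1)}_n\|_2^2\big)$. Collecting, taking the maximum of the two linear coefficients, and then taking expectations yields \eqref{eq:coordinate_transformation_bound_2} with $\alpha_2 = \dfrac{(2\lambda_- z^2 - uM_1)\wedge(um - 2\lambda_- w^2)}{z-w} - K_2 h_n$ and $K_2 = \dfrac{\max(E_+,F_+)+2G_+}{(z-w)^2}$.

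The main obstacle is the same bookkeeping as in Theorem~\ref{thm:contractive_mapping_1}: one must expand $E$, $F$ and $G$ in the monomials $I_d$, $\mathcal{H}^{\m(2)}_n$, $(\mathcal{H}^{\m(2)}_n)^2$, substitute the one-sided $\phi$-bounds with the correct signs in every summand, and verify that the $O(h_n)$ part collapses \emph{exactly} to $(z-w)(uM_1 - 2\lambda_- z^2)h_n$ in $E$ and $(z-w)(2\lambda_- w^2 - um)h_n$ in $F$, with everything else manifestly nonnegative and $O(h_n^2)$. Pinning down the coefficient $2$ in front of $\lambda_-$ (it arises from the $\phi_0(\lambda_-)^2$ and $\phi_1(\lambda_-)$ contributions to the diagonal blocks of $E$ and $F$) is the only delicate point; once the template from Theorem~\ref{thm:contractive_mapping_1} is in place, the rest is a routine, if lengthy, repetition of that computation with $\lambda_+$ replaced by $\lambda_-$ and with the simpler coefficient matrices of the exponential-Euler step.
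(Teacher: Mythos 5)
Your proposal is correct and follows essentially the same approach as the paper's (very terse) proof, which simply points back to the proof of Theorem~\ref{thm:contractive_mapping_1} and records the new constants $E_+,F_+,G_+$. You have correctly identified the key features that make the second map easier: there is a single Hessian integral $\mathcal{H}^{\m(2)}_n$ evaluated at the map's own starting point, $\lambda_+$ is replaced throughout by $\lambda_-$, and the coefficient matrices $A,B,C,D$ are simpler polynomials in $I_d$ and $\mathcal{H}^{\m(2)}_n$; the one-sided $\phi$-bounds, the polarisation trick for the cross term, the $(z-w)$-algebra that collapses the $O(h_n)$ coefficients to $(z-w)(uM_1 - 2\lambda_- z^2)$ and $(z-w)(2\lambda_- w^2 - um)$, and the absorption of the residual into $K_2 h_n$ are all exactly the paper's template.
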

\begin{proof}
	The proof strategy is identical to that of Theorem \ref{thm:contractive_mapping_1}. The main difference is simply the values of the constants $E_+$, $F_+$, and $G_+$. Specifically, these constants are given by
	\begin{align*}
		E_+ &= \gamma^2 z^2 + \gamma z^3 + \gamma z w^2 + z^2 w^2 + z^4 + \big( \gamma + \gamma z + z w^2 + z^3 \big) uM_1 + ( 1 + \gamma ) u^2 M^2_1\m, \\[5pt]
		F_+ &= \gamma^2 w^2 + \gamma w^3 + \gamma z^2 w + z^2 w^2 + w^4 + \big( \gamma + \gamma w + z^2 w + w^3 \big) u M_1 + ( 1 + \gamma ) u^2 M^2_1\m,  \\[5pt]
		G_+ &=  \gamma^2 z w + \gamma^4 + ( \gamma^2 + z^2 + w^2 ) u M_1 \m.
	\end{align*}
\end{proof}
Next, we will show that the third mapping $\big( \widetilde{X}_{n+1}\m, \widetilde{V}_{n+1} \big) \to (X_{n+1}\m, V_{n+1})$ is always small.

\begin{theorem}
\label{thm:correction_mapping}
	Consider the two approximations $(X, V)$ and $(Y, U)$ under the mappings $\big( \widetilde{X}_{n+1}\m, \widetilde{V}_{n+1} \big) \to \big( X_{n+1}\m, V_{n+1} \big)$ and $\big( \widetilde{Y}_{n+1}\m, \widetilde{U}_{n+1} \big) \to \left( Y_{n+1}\m, U_{n+1} \right)$ that are driven by the same Brownian motion but evolved from initial conditions $\big( \widetilde{X}_{n+1}\m, \widetilde{V}_{n+1} \big)$ and $\big( \widetilde{Y}_{n+1}\m, \widetilde{U}_{n+1} \big)$.  Just as before, we define the following coordinate transformations,
	\begin{align}
		\label{eq:coordinate_transformation_mapping_3}
		\begin{split}
		 	\begin{pmatrix} W_{n} \\[3pt] Z_{n} \end{pmatrix} &= \begin{pmatrix} \big( w X_{n} + V_{n} \big) - \big( w Y_{n} + U_{n} \big) \\[3pt]  \big( z X_{n} + V_{n} \big) - \big( z Y_{n} + U_{n} \big) \end{pmatrix}, \\[5pt]
			\begin{pmatrix} W^{\m (3)}_n \\[3pt] Z^{\m (3)}_n \end{pmatrix} &= \begin{pmatrix} \big( w  \overline{X}_{n} +  \overline{V}_{n} \big) - \big( w  \overline{Y}_{n} +  \overline{U}_{n} \big) \\[3pt]  \big( z  \overline{X}_{n} +  \overline{V}_{n} \big) - \big( z  \overline{Y}_{n} +  \overline{U}_{n} \big) \end{pmatrix},
		 \end{split}
	\end{align}
	where $w \in \left[0, \frac{1}{2} \gamma \right)$ and $z = \gamma - w$. Then, under assumptions \eqref{eq:assumption_a1} and \eqref{eq:assumption_a2}, we have
	\begin{align}
		\label{eq:coordinate_transformation_bound_3}
		\big( \| W^{\m (3)}_n \|^2_{\L_2} +  \| Z^{\m (3)}_n \|^2_{\L_2} \big) \leq K_3 \, h^4_n \big( \| W_n \|^2_{\L_2} +  \| Z_n \|^2_{\L_2} \big),
	\end{align}
	 for all $0 < h_n \leq 1$ and $n \geq 0$ where $K_3 > 0$ is a constant not depending on $h_n\m$.
\end{theorem}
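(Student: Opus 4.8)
The plan is to exploit the fact that the correction term $(\overline{X}_n,\overline{V}_n)$ is \emph{explicitly} a small multiple of a single gradient. Both $\overline{X}_n$ and $\overline{V}_n$ equal a scalar times $u h_n \nabla f(\widetilde{X}^{(1)}_n)$, so I would first write $\overline{X}_n = c_1(h_n)\, u h_n \nabla f(\widetilde{X}^{(1)}_n)$ and $\overline{V}_n = c_2(h_n)\, u h_n \nabla f(\widetilde{X}^{(1)}_n)$ with
\[ c_1(h) = \frac{1}{2\gamma}\Big(1 - e^{-\frac{1}{3}\gamma h} - e^{-(\frac{1}{3}+\lambda_-)\gamma h} + e^{-\lambda_+ \gamma h}\Big), \quad c_2(h) = \frac{1}{2}\Big(e^{-(\frac{1}{3}+\lambda_-)\gamma h} - e^{-\lambda_+ \gamma h}\Big). \]
The key observation is that $c_1(0) = c_2(0) = 0$ (at $h=0$ the exponential combinations telescope to zero), which is precisely what the decomposition $X_{n+1} = \widetilde{X}_{n+1} + \overline{X}_n$, $V_{n+1} = \widetilde{V}_{n+1} + \overline{V}_n$ was constructed to guarantee. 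Since $c_1,c_2$ are smooth on $[0,1]$ and vanish at the origin, this yields $|c_i(h_n)| \le L_i\, h_n$ for $h_n \in (0,1]$, with $L_i := \sup_{[0,1]}|c_i'|$ finite and depending only on $\gamma$; the same bounds also follow directly from $|1 - e^{-a}| \le a$ and $|e^{-a} - e^{-b}| \le |a-b|$ for $a,b\ge 0$. Hence $\overline{X}_n$ and $\overline{V}_n$ are each of order $h_n^2$: one factor of $h_n$ is explicit and the other comes from the prefactor.

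Next I would subtract the two synchronously coupled trajectories. Because the driving noise cancels,
\[ \overline{X}_n - \overline{Y}_n = c_1(h_n)\, u h_n\big(\nabla f(\widetilde{X}^{(1)}_n) - \nabla f(\widetilde{Y}^{(1)}_n)\big), \quad \overline{V}_n - \overline{U}_n = c_2(h_n)\, u h_n\big(\nabla f(\widetilde{X}^{(1)}_n) - \nabla f(\widetilde{Y}^{(1)}_n)\big). \]
To bound the gradient difference I would recycle two identities already established inside the proof of Theorem~\ref{thm:contractive_mapping_1}: first, $\widetilde{X}^{(1)}_n - \widetilde{Y}^{(1)}_n = (X_n - Y_n) + \phi_1(\lambda_-)(V_n - U_n)$, where $\phi_1$ is the function in \eqref{eq:phi_functions} and $0 \le \phi_1(\lambda_-) \le \lambda_-$ for $h_n \le 1$; and second, the inverse coordinate change $X_n - Y_n = (Z_n - W_n)/(z-w)$ and $V_n - U_n = (z W_n - w Z_n)/(z-w)$. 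Combining these with the $M_1$-Lipschitz property \eqref{eq:assumption_a2} and the triangle inequality for $\|\cdot\|_{\L_2}$ gives
\[ \big\|\nabla f(\widetilde{X}^{(1)}_n) - \nabla f(\widetilde{Y}^{(1)}_n)\big\|_{\L_2} \le M_1\big\|\widetilde{X}^{(1)}_n - \widetilde{Y}^{(1)}_n\big\|_{\L_2} \le C_0\big(\|W_n\|_{\L_2} + \|Z_n\|_{\L_2}\big), \]
for a constant $C_0$ depending only on $\gamma$, $w$ and $M_1$.

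Finally I would assemble the pieces. From $W^{(3)}_n = w(\overline{X}_n - \overline{Y}_n) + (\overline{V}_n - \overline{U}_n)$ and $Z^{(3)}_n = z(\overline{X}_n - \overline{Y}_n) + (\overline{V}_n - \overline{U}_n)$ we get
\[ \big\|W^{(3)}_n\big\|_{\L_2} \le \big(w\,|c_1(h_n)| + |c_2(h_n)|\big)\, u h_n \big\|\nabla f(\widetilde{X}^{(1)}_n) - \nabla f(\widetilde{Y}^{(1)}_n)\big\|_{\L_2} \le C_W\, h_n^2\big(\|W_n\|_{\L_2} + \|Z_n\|_{\L_2}\big), \]
and similarly $\|Z^{(3)}_n\|_{\L_2} \le C_Z\, h_n^2(\|W_n\|_{\L_2} + \|Z_n\|_{\L_2})$ with $z$ replacing $w$. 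Squaring each, adding, and applying $(a+b)^2 \le 2(a^2+b^2)$ then yields \eqref{eq:coordinate_transformation_bound_3} with $K_3 := 2(C_W^2 + C_Z^2)$. I do not expect a real obstacle here, since the argument is a short chain of triangle inequalities plus identities borrowed from Theorem~\ref{thm:contractive_mapping_1}; the one load-bearing point, worth verifying carefully, is the vanishing $c_1(0) = c_2(0) = 0$, because it supplies the \emph{second} power of $h_n$ — without it the correction would be only $O(h_n^2)$ in $\L_2$ rather than the $O(h_n^4)$ needed for the later global estimate to keep its third-order accuracy. A minor bookkeeping point is to track how $C_0, C_W, C_Z$ depend on $w$ (hence on $\gamma$) so that $K_3$ is manifestly independent of $h_n$.
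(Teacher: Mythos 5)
Your proof is correct, and it takes a somewhat more elementary route than the paper's. The paper dispatches Theorem~\ref{thm:correction_mapping} by recycling the machinery of Theorem~\ref{thm:contractive_mapping_1}: linearise $\nabla f$ via the integral-form Hessian $\mathcal{H}^{(1)}_n$, express $\big\|W^{(3)}_n\big\|^2 + \big\|Z^{(3)}_n\big\|^2$ as a quadratic form $\frac{1}{(z-w)^2}\big(W_n^\top E W_n + Z_n^\top F Z_n + 2W_n^\top G Z_n\big)$, and then bound the coefficient matrices $E, F, G$ in the Loewner order by $E_+\m h_n^4 I_d$, $F_+\m h_n^4 I_d$ and $G_+\m h_n^2 I_d$ (with $G_+$ itself carrying an extra $h_n^2$). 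You sidestep all of that by noticing that $\overline{X}_n$ and $\overline{V}_n$ are scalar multiples of the \emph{same} vector $u\m h_n\m \nabla f\big(\widetilde{X}^{(1)}_n\big)$, so that after taking differences both $W^{(3)}_n$ and $Z^{(3)}_n$ are scalar multiples of a single vector $u\m h_n \big(\nabla f(\widetilde{X}^{(1)}_n) - \nabla f(\widetilde{Y}^{(1)}_n)\big)$; the $M_1$-Lipschitz bound and the coordinate change then give $\big\|W^{(3)}_n\big\|_{\L_2}, \big\|Z^{(3)}_n\big\|_{\L_2} = O\big(h_n^2\big)\big(\|W_n\|_{\L_2}+\|Z_n\|_{\L_2}\big)$ directly, with no matrix bookkeeping. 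You are also right to flag $c_1(0)=c_2(0)=0$ as the load-bearing cancellation — a direct check confirms both vanish, and either the mean-value argument you give or the elementary estimates $|1-e^{-a}|\le a$, $|e^{-a}-e^{-b}|\le|a-b|$ supplies the needed $|c_i(h_n)|\lesssim h_n$. What the paper's heavier route buys is uniformity of presentation across Theorems~\ref{thm:contractive_mapping_1}--\ref{thm:correction_mapping}; what yours buys is brevity and the observation that the correction map has rank one, which makes its smallness transparent.
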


\begin{proof}
Again, the proof strategy is nearly identical to that of Theorem \ref{thm:contractive_mapping_1}. Here, we use
\begin{align}
	E \leq E_+ \m h^4_n\m, \quad \quad \text{and} \quad \quad F \leq F_+ \m h^4_n\m, \nonumber
\end{align}
where
\begin{align*}
	E_+ &\leq \left( 1 + \gamma + z^2 + w^2 + \gamma z^2 + z^2 w^2 + z^4 \right) u^2 M^2_1, \\[5pt]
	F_+ &\leq \left( 1 + \gamma + z^2 + w^2 + \gamma w^2 + z^2 w^2 + w^4 \right) u^2 M^2_1, 
\end{align*}
and
\begin{align*}
	G_+ &\leq  ( \gamma + \gamma^2 + \gamma z^2 + \gamma w^2 ) u^2 M^2_1  h^2_n \m.
\end{align*}
\end{proof}
Finally, we are now in a position to prove our main contractivity result (Theorem \ref{thm:main_contractivity_theorem}). 
\begin{theorem}[Contractivity of the QUICSORT method \eqref{def:QUICSORT}]
\label{thm:main_contraction}
	Let $(X, V)$ and $(Y, U)$ denote two numerical approximations defined by the mappings $( X_n\m, V_n ) \to \big( X_{n+1}\m, V_{n+1})$ and $( Y_n\m, Y_n ) \to ( Y_{n+1}\m, U_{n+1})$ which are driven by the same Brownian motion but evolved from initial conditions $( X_n\m, V_n )$ and $( Y_n\m, U_n)$. We define the coordinate transformations,
	\begin{align}
		\begin{split}
			\begin{pmatrix} W_{n} \\[3pt] Z_{n} \end{pmatrix} &= \begin{pmatrix} \big( w X_{n} + V_{n} \big) - \big( w Y_{n} + U_{n} \big) \\[3pt]  \big( z X_{n} + V_{n} \big) - \big( z Y_{n} + U_{n} \big) \end{pmatrix}, \\[5pt]
			\begin{pmatrix} W_{n+1} \\[3pt] Z_{n+1} \end{pmatrix} &= \begin{pmatrix} \big( w X_{n+1} + V_{n+1} \big) - \big( w Y_{n+1} + U_{n+1} \big) \\[3pt]  \big( z X_{n+1} + V_{n+1} \big) - \big( z Y_{n+1} + U_{n+1} \big) \end{pmatrix},
		 \end{split}
	\end{align}
	where $w \in \big[0, \frac{1}{2} \gamma \big)$ and $z = \gamma - w$. Then, under the assumptions \eqref{eq:assumption_a1} and \eqref{eq:assumption_a2}, we have
	\begin{equation}
		\big( \| W_{n+1} \|^2_{\L_2} +  \| Z_{n+1} \|^2_{\L_2} \big) \leq (1 - 2\alpha h_n) \big( \| W_n \|^2_{\L_2} +  \| Z_n \|^2_{\L_2}\big),
	\end{equation}
	for all $0 < h_n \leq 1$ and $n \geq 0$ where
	\begin{equation}
		\alpha = \frac{1}{z - w} \min \Big[ z^2 - uM_1, \, um - w^2, \, (\lambda_- z^2 - \lambda_+ w^2) - \frac{1}{2} u( M_1 - m) \Big] - K h_n\m,
	\end{equation}
	for some constant $K > 0$ not depending on $h_n\m$. 
\end{theorem}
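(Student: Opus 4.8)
The plan is to realise one full QUICSORT step as a composition of the three maps whose contraction behaviour was just established, and then chain the estimates. Recall from the splitting \eqref{eq:approximation_contraction_split} that a single step factors as
\[
(X_n, V_n)\;\longrightarrow\;\big(\widetilde X^{\m(2)}_n,\widetilde V^{\m(2)}_n\big)\;\longrightarrow\;\big(\widetilde X_{n+1},\widetilde V_{n+1}\big)\;\longrightarrow\;(X_{n+1},V_{n+1}),
\]
the last arrow merely adding the correction $(\overline X_n,\overline V_n)$. Fix $w\in[0,\tfrac12\gamma)$ and $z=\gamma-w$ \emph{throughout}, and abbreviate $A_n:=\|W_n\|_{\L_2}^2+\|Z_n\|_{\L_2}^2$, with $A^{\m(1)}_n,A^{\m(2)}_n,A^{\m(3)}_n$ defined analogously for the coordinate pairs appearing in Theorems~\ref{thm:contractive_mapping_1}, \ref{thm:contractive_mapping_2} and \ref{thm:correction_mapping}. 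Since the output coordinates of Theorem~\ref{thm:contractive_mapping_1} are the input coordinates of Theorem~\ref{thm:contractive_mapping_2}, applying the two results along the first two arrows gives $A^{\m(2)}_n\le(1-\alpha_1 h_n)(1-\alpha_2 h_n)\,A_n$; because the coordinate transform is linear in $(X,V)$ and $(X_{n+1},V_{n+1})=(\widetilde X_{n+1},\widetilde V_{n+1})+(\overline X_n,\overline V_n)$, we have $W_{n+1}=W^{\m(2)}_n+W^{\m(3)}_n$, $Z_{n+1}=Z^{\m(2)}_n+Z^{\m(3)}_n$, with $A^{\m(3)}_n\le K_3 h_n^4\,A_n$ by Theorem~\ref{thm:correction_mapping}.

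Combining these: Minkowski's inequality in $\L_2$ on each component, followed by subadditivity of $(p,q)\mapsto\sqrt{p^2+q^2}$ on the nonnegative quadrant, yields $\sqrt{A_{n+1}}\le\sqrt{A^{\m(2)}_n}+\sqrt{A^{\m(3)}_n}$. (Each factor $1-\alpha_i h_n$ is nonnegative, being an upper bound for a nonnegative quantity, so the square roots are legitimate and the chaining is valid.) Hence
\[
\sqrt{A_{n+1}}\;\le\;\Big(\sqrt{(1-\alpha_1 h_n)(1-\alpha_2 h_n)}+\sqrt{K_3}\,h_n^2\Big)\sqrt{A_n}.
\]
Using $\sqrt{1+x}\le1+\tfrac12 x$, the boundedness of $\alpha_1,\alpha_2,K_3$ by constants depending only on $\gamma,u,m,M_1$ (and $w$), and $h_n\le1$, the bracket is at most $1-\tfrac12(\alpha_1+\alpha_2)h_n+C_1 h_n^2$; squaring and collecting the $h_n^2$ terms gives $A_{n+1}\le\big(1-(\alpha_1+\alpha_2)h_n+C_2 h_n^2\big)\,A_n=(1-2\alpha h_n)\,A_n$ with $2\alpha:=(\alpha_1+\alpha_2)-C_2 h_n$, for constants $C_1,C_2$ depending only on $\gamma,u,m,M_1$.

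It remains to check that $2\alpha$ dominates the claimed rate; all the $O(h_n)$ remainders in $\alpha_1,\alpha_2$ and the $C_2 h_n$ term are absorbed into the constant $K$, so the task reduces to the leading constants. Write $a_\pm:=2\lambda_\pm z^2-uM_1$ and $b_\pm:=um-2\lambda_\pm w^2$, so the leading part of $\tfrac12(\alpha_1+\alpha_2)$ is $\tfrac{1}{2(z-w)}\big[(a_+\wedge b_+)+(a_-\wedge b_-)\big]$. Because $\lambda_++\lambda_-=1$ we have $a_++a_-=2(z^2-uM_1)$, $b_++b_-=2(um-w^2)$, $a_-+b_+=2\lambda_- z^2-2\lambda_+ w^2-u(M_1-m)$, and $a_++b_--(a_-+b_+)=2(\lambda_+-\lambda_-)(z^2+w^2)\ge0$. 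Running through the four sign configurations of the two minima, the half-sum $\tfrac12[(a_+\wedge b_+)+(a_-\wedge b_-)]$ equals $z^2-uM_1$, $um-w^2$, $\tfrac12(a_++b_-)$, or $\tfrac12(a_-+b_+)$ respectively; in each case it is at least
\[
\min\!\Big[\,z^2-uM_1,\;\;um-w^2,\;\;(\lambda_- z^2-\lambda_+ w^2)-\tfrac12 u(M_1-m)\,\Big],
\]
since $\tfrac12(a_-+b_+)$ equals the third entry and $\tfrac12(a_++b_-)\ge\tfrac12(a_-+b_+)$. Dividing by $z-w>0$ and subtracting the collected $O(h_n)$ remainder recovers exactly the $\alpha$ in the statement. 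I expect this final case analysis, together with the bookkeeping to confirm that every $O(h_n^2)$ term is harmless, to be the only genuine obstacle; everything else is a routine composition of inequalities already in hand.
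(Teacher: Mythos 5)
Your proposal is correct and follows essentially the same strategy as the paper: decompose one QUICSORT step into the three maps from \eqref{eq:approximation_contraction_split}, chain Theorems \ref{thm:contractive_mapping_1}, \ref{thm:contractive_mapping_2} and \ref{thm:correction_mapping}, and recover the stated $\alpha$ via the $\min+\min$ identity and the observation that $a_{+}+b_{-}\ge a_{-}+b_{+}$. The only difference is a minor one in how the contributions are combined---you apply Minkowski and the $\mathbb{R}^{2}$ triangle inequality to $\sqrt{\|W_n\|_{\L_2}^2+\|Z_n\|_{\L_2}^2}$ directly and then square, whereas the paper works with the squared-norm expansion and bounds the cross term via Theorem \ref{thm:cauchy_schartz_young_inequality} with $c=h_n^2$---and both routes yield the same $1-(\alpha_1+\alpha_2)h_n+O(h_n^2)$ factor.
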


\begin{proof}
	From the decomposition \eqref{eq:approximation_contraction_split} and coordinate transformations \eqref{eq:coordinate_transformation_mapping_2} and \eqref{eq:coordinate_transformation_mapping_3}, we have
	\begin{align*}
		\begin{pmatrix} W_{n+1} \\[3pt] Z_{n+1} \end{pmatrix} &= \begin{pmatrix} \big( w  \widetilde{X}_{n+1} +  \widetilde{V}_{n+1} \big) - \big( w  \widetilde{Y}_{n+1} +  \widetilde{U}_{n+1} \big) \\[3pt]  \big( z  \widetilde{X}_{n+1} +  \widetilde{V}_{n+1} \big) - \big( z  \widetilde{Y}_{n+1} +  \widetilde{U}_{n+1} \big) \end{pmatrix} + \begin{pmatrix} \big( w  \overline{X}_{n} +  \overline{V}_{n} \big) - \big( w  \overline{Y}_{n} +  \overline{U}_{n} \big) \\[3pt]  \big( z  \overline{X}_{n} +  \overline{V}_{n} \big) - \big( z  \overline{Y}_{n} +  \overline{U}_{n} \big) \end{pmatrix} \\[3pt]
					&= \begin{pmatrix} W^{\m (2)}_n \\[3pt] Z^{\m (2)}_n \end{pmatrix} + \begin{pmatrix} W^{\m (3)}_n \\[3pt] Z^{\m (3)}_n \end{pmatrix}. \nonumber
	\end{align*}
	By applying the bounds \eqref{eq:coordinate_transformation_bound_1}, \eqref{eq:coordinate_transformation_bound_2} and \eqref{eq:coordinate_transformation_bound_3} given by the previous theorems along with the inequality \eqref{eq:cauchy_schwarz_young_inequality} in Theorem \ref{thm:cauchy_schartz_young_inequality} with $c=h^2_n\m$, we obtain
	\begin{align*}
		&\big( \| W_{n+1} \|^2_{\L_2} +  \| Z_{n+1} \|^2_{\L_2} \big) \\[3pt]
		&\mm \leq \big( \| W^{\m (2)}_{n} \|^2_{\L_2} +  \| Z^{\m (2)}_{n} \|^2_{\L_2} \big) + \big( \| W^{\m (3)}_{n} \|^2_{\L_2} +  \| Z^{\m (3)}_{n} \|^2_{\L_2} \big) + 2\m\E\Big[\big\langle W_n^{\m (2)}, W_n^{\m (3)}\big\rangle + \big\langle Z_n^{\m (2)}, Z_n^{\m (3)}\big\rangle\Big]\\[3pt]
		&\mm \leq (1 + h^2_n) \big( \| W^{\m (2)}_{n} \|^2_{\L_2} +  \| Z^{\m (2)}_{n} \|^2_{\L_2} \big) + \bigg(1 + \frac{1}{h^2_n} \bigg) \Big( \| W^{\m (3)}_{n} \|^2_{\L_2} +  \| Z^{\m (3)}_{n} \|^2_{\L_2} \Big) \\[3pt]
		&\mm \leq  (1 + h^2_n) (1 - \alpha_2 h_n) \big( \| W^{\m (1)}_n \|^2_{\L_2} +  \| Z^{\m (1)}_n \|^2_{\L_2} \big) + (h^2_n + h^4_n) K_3 \big( \| W_n \|^2_{\L_2} +  \| Z_n \|^2_{\L_2} \big) \\[3pt]
		&\mm \leq \big( (1 + h^2_n) (1 - \alpha_1 h_n)(1 - \alpha_2 h_n) + (h^2_n + h^4_n) K_3  \big) \big( \| W_n \|^2_{\L_2} +  \| Z_n \|^2_{\L_2} \big) \\[3pt]
				&\mm \leq (1 - 2\alpha h_n) \big( \| W_n \|^2_{\L_2} +  \| Z_n \|^2_{\L_2} \big),
	\end{align*}
	where
	\begin{align*}
			\alpha &= \frac{\big(\lambda_+ z^2 - \frac{1}{2}u M_1 \big) \wedge \big(\frac{1}{2}u m - \lambda_+ w^2 \big) }{z - w} + \frac{\big(\lambda_- z^2 - \frac{1}{2}u M_1 \big) \wedge \big(\frac{1}{2}u m - \lambda_- w^2 \big) }{z - w} - K h_n \\[3pt]
				&= \frac{1}{z - w} \min \Big[  z^2 - uM_1\m, \, \lambda_+ z^2 - \lambda_- w^2 - \frac{1}{2}uM_1 + \frac{1}{2}um, \\
				&\hspace{32mm}  \lambda_- z^2 - \lambda_+ w^2 - \frac{1}{2}uM_1 + \frac{1}{2}um, \, um - w^2 \Big] - K h_n \\[3pt]
				&= \frac{1}{z - w} \min \Big[ z^2 - uM_1, \, um - w^2, \, (\lambda_- z^2 - \lambda_+ w^2) - \frac{1}{2} u( M_1 - m) \Big] - K h_n\m,
	\end{align*}
	and
	%\begin{equation}
	%	K = \frac{1}{2} \left( 1 - (\alpha_1 + \alpha_2) + K_1 + K_2 \right) + \alpha_1 \cdot \alpha_2 + K_3. \nonumber
	%\end{equation}
	\begin{align*}
		K = 1 + \alpha_1 + \alpha_2 + \alpha_1 \cdot \alpha_2 + K_1 + K_2 + K_3\m.
	\end{align*}
\end{proof}

\begin{remark}
	Provided that $\gamma^2 > \frac{3 + \sqrt{3}}{2} u M_1\m$, the contraction rate $\alpha$ becomes positive if $w = 0$ and $h_n$ is sufficiently small. \label{thm:positive_contractivity_constant}
\end{remark}

\begin{remark}
	We can recover the result from Theorem \ref{thm:main_contractivity_theorem} using the inequality $1 - x \leq e^{-x}$.	Although it is not strictly necessary for the proof of the global error, it does establish how the QUICSORT method undergoes exponential contractivity at each time interval $[t_n, t_{n+1}]$.  
\end{remark}

%%%%%%%%%%%%%%%%%%%%%%%%%%%%%%%%%%%%%%%%%%%%%%%%%%%%%%%%%%%%%%%%%%%%%%%%%%%

\section{Local error bounds}
\label{sec:appendix_C}

In this section we prove the local error bounds in Theorem \ref{thm:local_error_bounds}. To do so, we compare the Taylor expansions of underdamped Langevin dynamics \eqref{eq:ULD} and the QUICSORT scheme \eqref{eq:QUICSORT}.

\begin{theorem}[Stochastic Taylor expansion of Langevin dynamics]
	Let $\{ x_t, v_t \}_{t\m\geq\m 0}$ be the underdamped Langevin diffusion \eqref{eq:ULD}. Then, for all $n\geq 0$, over any interval $[t_n, t_{n+1}]$,\label{thm:ULD_taylor_expansion}	\vspace{-4mm}
	\begin{subequations}
	\begin{align}
			x_{t_{n+1}} = x_{t_n} & + h_n v_{t_n} + \sigma \int^{t_{n+1}}_{t_n} W_{t_n, r_1} \, dr_1 + R^{\m (1)}_x, \mfill \label{eq:ULD_position_expansion_1} \\[3pt]
					= x_{t_n} & + h_n v_{t_n} + \sigma \int^{t_{n+1}}_{t_n} W_{t_n, r_1} \, dr_1 - \frac{1}{2} h^2_n \big( -\gamma v_{t_n} - u \nabla f(x_{t_n}) \big) \nonumber\\
					&  - \gamma \sigma \int^{t_{n+1}}_{t_n} \int^{r_1}_{t_n} W_{t_n, r_2} \, dr_2 \, dr_1 + R^{\m (2)}_x, \label{eq:ULD_position_expansion_2}  \\[3pt]
					= x_{t_n} & + h_n v_{t_n} + \sigma \int^{t_{n+1}}_{t_n} W_{t_n, r_1} \, dr_1 + \frac{1}{2} h_n^2 \big( -\gamma v_{t_n} - u \nabla f(x_{t_n}) \big) \nonumber \\
					& - \gamma \sigma \int^{t_{n+1}}_{t_n} \int^{r_1}_{t_n} W_{t_n, r_2} \, dr_2 \, dr_1 + \frac{1}{6} h^3_n \Big( \gamma^2 v_{t_n} + \gamma u \nabla f(x_{t_n}) - u \nabla^2 f(x_{t_n}) v_{t_n} \Big) \nonumber \\
					& + \sigma \big(\gamma^2 - u \nabla^2 f(x_{t_n}) \big) \int^{t_{n+1}}_{t_n} \int^{r_1}_{t_n} \int^{r_2}_{t_n} W_{t_n, r_3} \, dr_3 \, dr_2 \, dr_1 + R^{\m (3)}_x. \label{eq:ULD_position_expansion_3} 
	\end{align}
	\end{subequations}
	\vspace{-3mm}
	\begin{subequations}
	\begin{align}
			v_{t_{n+1}} = v_{t_n} & + \sigma W_n + h_n \big( -\gamma v_{t_n} - u \nabla f(x_{t_n}) \big) - \gamma \sigma \int^{t_{n+1}}_{t_n} W_{t_n, r_1} \, dr_1 + R^{\m (1)}_v, \label{eq:ULD_momentum_expansion_1} \\[3pt]
					= v_{t_n} & + \sigma W_n + h_n \big( -\gamma v_{t_n} - u \nabla f(x_{t_n}) \big) - \gamma \sigma \int^{t_{n+1}}_{t_n} W_{t_n, r_1} \, dr_1 \nonumber\\
					& + \frac{1}{2} h^2_n \Big( \gamma^2 v_{t_n} + \gamma u \nabla f(x_{t_n}) - u \nabla^2 f(x_{t_n}) v_{t_n} \Big)\nonumber \\
					& + \sigma \big(\gamma^2 - u \nabla^2 f(x_{t_n}) \big) \int^{t_{n+1}}_{t_n} \int^{r_1}_{t_n} W_{t_n, r_2} \, dr_2 \, dr_1 + R^{\m (2)}_v, \label{eq:ULD_momentum_expansion_2} \\[3pt]
					= v_{t_n} & + \sigma W_n + h_n \big( -\gamma v_{t_n} - u \nabla f(x_{t_n}) \big) - \gamma \sigma \int^{t_{n+1}}_{t_n} W_{t_n, r_1} \, dr_1, \nonumber\\
					& + \frac{1}{2} h^2_n \Big( \gamma^2 v_{t_n} + \gamma u \nabla f(x_{t_n}) - u \nabla^2 f(x_{t_n}) v_{t_n} \Big)\nonumber \\
					& + \sigma \big(\gamma^2 - u \nabla^2 f(x_{t_n}) \big) \int^{t_{n+1}}_{t_n} \int^{r_1}_{t_n} W_{t_n, r_2} \, dr_2 \, dr_1 + \frac{1}{6} h^3_n \big( -\gamma^3 v_{t_n} - \gamma^2 u \nabla f(x_{t_n})\big)\nonumber \\[3pt]
					& + \frac{1}{6} h_n^3\Big(2 \gamma u \nabla^2 f(x_{t_n}) v_{t_n} - u^2 \nabla f(x_{t_n}) \nabla f(x_{t_n}) - u \nabla^3 f(x_{t_n}) (v_{t_n}, v_{t_n}) \Big) \nonumber\\[3pt]
					& - \gamma \sigma  \big(\gamma^2 - 2 u \nabla^2 f(x_{t_n}) \big) \int^{t_{n+1}}_{t_n} \int^{r_1}_{t_n} \int^{r_2}_{t_n} W_{t_n, r_3} \, dr_3 \, dr_2 \, dr_1 \nonumber\\
					& - \sigma u \int^{t_{n+1}}_{t_n} \int^{r_1}_{t_n} \int^{r_2}_{t_n} \nabla^3 f(x_{t_n}) \big( v_{r_2}, W_{t_n, r_3} \big) \, dr_3 \, dr_2 \, dr_1 \nonumber \\
					& - \sigma u \int^{t_{n+1}}_{t_n} \int^{r_1}_{t_n} \nabla^3 f(x_{t_n}) \big(v_{t_n}, (r_2 - t_n) W_{t_n, r_2} \big) \, dr_2 \, dr_1 + R^{\m (3)}_v, \label{eq:ULD_momentum_expansion_3}
	\end{align}
	\end{subequations}
	where the remainder terms $R^{\m (1)}_x\m$, $R^{\m (2)}_x$ and $R^{\m (3)}_x$ for the position component are given by
	\begin{subequations}
		\begin{align}
			R^{\m (1)}_x &:= -\gamma \int^{t_{n+1}}_{t_n} \int^{r_1}_{t_n} v_{r_2} \, dr_2 \, dr_1 - u \int^{t_{n+1}}_{t_n} \int^{r_1}_{t_n} \nabla f(x_{r_2}) \, dr_2 \, dr_1\m, \mmmmm \m \label{eq:ULD_position_remainder_expansion_1} \\[6pt]
			R^{\m (2)}_x &:= \gamma^2 \int^{t_{n+1}}_{t_n} \int^{r_1}_{t_n} \int^{r_2}_{t_n} v_{r_3} \, dr_3 \, dr_2 \, dr_1 + \gamma u \int^{t_{n+1}}_{t_n} \int^{r_1}_{t_n} \int^{t_{n+1}}_{t_n} \nabla f(x_{r_3}) \, dr_3 \, dr_2 \, dr_1 \nonumber \\
					&\mmm - u \int^{t_{n+1}}_{t_n} \int^{r_1}_{t_n} \int^{r_2}_{t_n} \nabla^2 f(x_{r_3}) v_{r_3} \, dr_3 \, dr_2 \, dr_1\m, \label{eq:ULD_position_remainder_expansion_2} \\[6pt]
			R^{\m (3)}_x &:= - \gamma \big(\gamma^2 - u \nabla^2 f(x_{t_n}) \big) \int^{t_{n+1}}_{t_n} \int^{r_1}_{t_n} \int^{r_2}_{t_n} \int^{r_3}_{t_n} v_{r_4} \, dr_4 \, dr_3 \, dr_2 \, dr_1 \nonumber \\
				&\mmm - u \big(\gamma^2 - u \nabla^2 f(x_{t_n}) \big) \int^{t_{n+1}}_{t_n} \int^{r_1}_{t_n} \int^{r_2}_{t_n} \int^{r_3}_{t_n} \nabla f( x_{r_4}) \, dr_4 \, dr_3 \, dr_2 \, dr_1 \nonumber \\
				&\mmm + \gamma u \int^{t_{n+1}}_{t_n} \int^{r_1}_{t_n} \int^{r_2}_{t_n} \int^{r_3}_{t_n} \nabla^2 f(x_{r_4})\m v_{r_4}  \, dr_4 \, dr_3 \, dr_2 \, dr_1 \nonumber \\
				&\mmm - u \int^{t_{n+1}}_{t_n} \int^{r_1}_{t_n} \int^{r_2}_{t_n} \int^{r_3}_{t_n} \nabla^3 f(x_{r_4}) ( v_{r_4}\m, v_{r_3} )  \, dr_4 \, dr_3 \, dr_2 \, dr_1\m, \label{eq:ULD_position_remainder_expansion_3}
		\end{align}
	\end{subequations}
	and the remainder terms $R^{\m (1)}_v\m$, $R^{\m (2)}_v$ and $R^{\m (3)}_v$ for the momentum component are given by
	\begin{subequations}
		\begin{align}
			R^{\m (1)}_v &:= \gamma^2 \int^{t_{n+1}}_{t_n} \int^{r_1}_{t_n} v_{r_2} \, dr_2 \, dr_1 + \gamma u \int^{t_{n+1}}_{t_n} \int^{r_1}_{t_n} \nabla f(x_{r_2}) \, dr_2 \, dr_1 \nonumber \\
					 &\mmm - u \int^{t_{n+1}}_{t_n} \big(\nabla f(x_{r_1}) - \nabla f(x_{t_n})\big) \, dr_1, \label{eq:ULD_momentum_remainder_expansion_1} \\[6pt] 
			R^{\m (2)}_v &:= -\m\gamma \big(\gamma^2 - u \nabla^2 f(x_{t_n}) \big) \int^{t_{n+1}}_{t_n} \int^{r_1}_{t_n} \int^{r_2}_{t_n} v_{r_3} \, dr_3 \, dr_2 \, dr_1 \nonumber \\
					&\mmm - u \big(\gamma^2 - u \nabla^2 f(x_{t_n})\big)  \int^{t_{n+1}}_{t_n} \int^{r_1}_{t_n} \int^{r_2}_{t_n}\nabla f(x_{r_3}) \, dr_3 \, dr_2 \, dr_1 \nonumber \\
					 &\mmm + \gamma u  \int^{t_{n+1}}_{t_n} \int^{r_1}_{t_n} \int^{r_2}_{t_n} \nabla^2 f(x_{r_3}) v_{r_3} \, dr_3 \, dr_2 \, dr_1 \nonumber \\
					 &\mmm - u  \int^{t_{n+1}}_{t_n} \int^{r_1}_{t_n} \big(\nabla^2 f(x_{r_2}) v_{r_2} - \nabla^2 f(x_{t_n}) v_{r_2}\big) \, dr_2 \, dr_1\m, \label{eq:ULD_momentum_remainder_expansion_2} \\[6pt]
			R^{\m (3)}_v &:= \gamma^2 \big(\gamma^2 - 2 u \nabla^2 f(x_{t_n}) \big) \int^{t_{n+1}}_{t_n} \int^{r_1}_{t_n} \int^{r_2}_{t_n} \int^{r_3}_{t_n} v_{r_4} \, dr_4 \, dr_3 \, dr_2 \, dr_1 \nonumber \\
				&\mmm + \gamma u \big(\gamma^2 - 2 u \nabla^2 f(x_{t_n}) \big) \int^{t_{n+1}}_{t_n} \int^{r_1}_{t_n} \int^{r_2}_{t_n} \int^{r_3}_{t_n} \nabla f( x_{r_4}) \, dr_4 \, dr_3 \, dr_2 \, dr_1 \nonumber \\
				&\mmm - u \big(\gamma^2 - u \nabla^2 f(x_{t_n}) \big)  \int^{t_{n+1}}_{t_n} \int^{r_1}_{t_n} \int^{r_2}_{t_n} \int^{r_3}_{t_n} \nabla^2 f(x_{r_4}) v_{r_4}  \, dr_4 \, dr_3 \, dr_2 \, dr_1 \nonumber \\
				&\mmm + \gamma u \int^{t_{n+1}}_{t_n} \int^{r_1}_{t_n} \int^{r_2}_{t_n} \int^{r_3}_{t_n} \nabla^3 f(x_{r_4}) ( v_{r_4}\m, v_{r_3} )  \, dr_4 \, dr_3 \, dr_2 \, dr_1 \nonumber \\
				&\mmm + \gamma u \int^{t_{n+1}}_{t_n} \int^{r_1}_{t_n} \int^{r_2}_{t_n} \int^{r_3}_{t_n} \nabla^3 f(x_{t_n}) ( v_{r_4}\m, v_{r_2} )  \, dr_4 \, dr_3 \, dr_2 \, dr_1 \nonumber \\
				&\mmm + u^2 \int^{t_{n+1}}_{t_n} \int^{r_1}_{t_n} \int^{r_2}_{t_n} \int^{r_3}_{t_n} \nabla^3 f(x_{t_n}) \big( \nabla f(x_{r_4})\m, v_{r_2} \big)  \, dr_4 \, dr_3 \, dr_2 \, dr_1 \nonumber \\
				&\mmm + \gamma u \int^{t_{n+1}}_{t_n} \int^{r_1}_{t_n} \int^{r_2}_{t_n} \nabla^3 f(x_{t_n}) \big( v_{r_3}, (r_2 - t_n) v_{t_n} \big)  \, dr_3 \, dr_2 \, dr_1 \nonumber \\ 
				&\mmm + u^2 \int^{t_{n+1}}_{t_n} \int^{r_1}_{t_n} \int^{r_2}_{t_n} \nabla^3 f(x_{t_n}) \big( \nabla f(x_{r_3}) , (r_2 - t_n) v_{t_n} \big)  \, dr_3 \, dr_2 \, dr_1 \nonumber \\ 
				&\mmm - u  \int^{t_{n+1}}_{t_n} \int^{r_1}_{t_n} \int^{r_2}_{t_n} \big(\nabla^3 f(x_{r_3})(v_{r_3}\m, v_{r_2}) - \nabla^3 f(x_{t_n})(v_{r_3}\m, v_{r_2}) \big)\, dr_3 \, dr_2 \, dr_1.  \label{eq:ULD_momentum_remainder_expansion_3}
		\end{align}
	\end{subequations}
\end{theorem}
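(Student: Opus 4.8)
The plan is to obtain all six expansions by iterating the integral form of \eqref{eq:ULD} over $[t_n, t_{n+1}]$ and peeling off the leading terms order by order. No stochastic calculus is needed: the only random quantities surviving in the statement are the Brownian increments $W_n = W_{t_{n+1}} - W_{t_n}$ and the iterated Lebesgue integrals of the Brownian path $\int W$, $\int\int W$ and $\int\int\int W$, and everything else is obtained by the fundamental theorem of calculus applied pathwise in $t$. The two facts to iterate are the integral form of the SDE,
\begin{align*}
x_t = x_{t_n} + \int_{t_n}^t v_r\, dr, \qquad v_t = v_{t_n} + \sigma W_{t_n, t} + \int_{t_n}^t \big(-\gamma v_r - u\nabla f(x_r)\big)\, dr,
\end{align*}
and the fact that $t\mapsto x_t$ is $C^1$ with $\dot x_t = v_t$, so that $\frac{d}{ds}\nabla f(x_s) = \nabla^2 f(x_s) v_s$ and hence $\nabla f(x_s) = \nabla f(x_{t_n}) + \int_{t_n}^s \nabla^2 f(x_q) v_q\, dq$, and likewise $\nabla^2 f(x_s) = \nabla^2 f(x_{t_n}) + \int_{t_n}^s \nabla^3 f(x_q)(v_q, \cdot)\, dq$. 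To generate a coefficient frozen at $t_n$ we split $\nabla^k f(x_q) = \nabla^k f(x_{t_n}) + \big(\nabla^k f(x_q) - \nabla^k f(x_{t_n})\big)$ and absorb the difference into the remainder; to generate the explicit polynomial-in-time and iterated-Brownian prefactors we substitute $v_s = v_{t_n} + \sigma W_{t_n, s} + \int_{t_n}^s(-\gamma v_r - u\nabla f(x_r))\, dr$.

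For the position component everything reduces to expanding $v$, since $x_{t_{n+1}} = x_{t_n} + \int_{t_n}^{t_{n+1}} v_{r_1}\, dr_1$. Substituting the integral form of $v_{r_1}$ and integrating gives \eqref{eq:ULD_position_expansion_1} with $R^{(1)}_x$ exactly as in \eqref{eq:ULD_position_remainder_expansion_1}. Re-expanding the integrand of $R^{(1)}_x$ — now also using $\nabla f(x_{r_2}) = \nabla f(x_{t_n}) + \int_{t_n}^{r_2}\nabla^2 f(x_{r_3}) v_{r_3}\, dr_3$ — peels off the order-$h_n^2$ drift term and the $-\gamma\sigma\int\int W$ term and leaves $R^{(2)}_x$; one further round, freezing $\nabla^2 f(x_{r_3})$ at $t_n$ to produce the $\sigma\big(\gamma^2 - u\nabla^2 f(x_{t_n})\big)\int\int\int W$ term, gives \eqref{eq:ULD_position_expansion_3} with $R^{(3)}_x$. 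Several applications of Fubini's theorem are needed along the way to put each iterated integral into the order of integration used in the statement.

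The momentum component is handled identically, starting from $v_{t_{n+1}} = v_{t_n} + \sigma W_n - \gamma\int_{t_n}^{t_{n+1}} v_{r_1}\, dr_1 - u\int_{t_n}^{t_{n+1}}\nabla f(x_{r_1})\, dr_1$ and iterating the same substitutions: one round gives \eqref{eq:ULD_momentum_expansion_1}, a second gives \eqref{eq:ULD_momentum_expansion_2}, and a third gives \eqref{eq:ULD_momentum_expansion_3}. Only the third round is delicate. The term $-u\int\nabla f(x_{r_1})$ is first expanded once temporally into $-u\int\int\nabla^2 f(x_{r_2}) v_{r_2}$; then $\nabla^2 f(x_{r_2}) v_{r_2}$ is split as $\nabla^2 f(x_{t_n}) v_{r_2} + \big(\nabla^2 f(x_{r_2}) - \nabla^2 f(x_{t_n})\big) v_{r_2}$, the first summand being expanded in $v_{r_2}$ while the difference is expanded once more by the fundamental theorem of calculus into $\int_{t_n}^{r_2}\nabla^3 f(x_{r_3})(v_{r_3}, v_{r_2})\, dr_3$; freezing $\nabla^3 f$ at $t_n$ there (the leftover difference is the final line of \eqref{eq:ULD_momentum_remainder_expansion_3}) and using the symmetry of the $3$-tensor $\nabla^3 f(x_{t_n})$ produces the $\nabla^3 f(x_{t_n})$ contributions to \eqref{eq:ULD_momentum_expansion_3}, namely $-\frac16 u h_n^3\nabla^3 f(x_{t_n})(v_{t_n}, v_{t_n})$, the triple integral $-\sigma u\int\int\int\nabla^3 f(x_{t_n})(v_{r_2}, W_{t_n, r_3})$ and the double integral $-\sigma u\int\int\nabla^3 f(x_{t_n})(v_{t_n}, (r_2 - t_n) W_{t_n, r_2})$; every term left over — those carrying $\nabla^k f$ along the trajectory, or one extra order of integration — assembles into $R^{(3)}_v$.

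The real obstacle is bookkeeping rather than anything conceptual: every displayed equation is an exact identity, so the work is to verify that at each stage the extracted terms coincide with those in the statement and that the rest collects correctly into the stated remainder — long but mechanical, and most involved for \eqref{eq:ULD_momentum_expansion_3}--\eqref{eq:ULD_momentum_remainder_expansion_3}. The first- and second-order expansions need only that $\nabla f$ is Lipschitz, so that $\nabla^2 f$ exists almost everywhere (which suffices, as the diffusion a.s.\ avoids any fixed null set), whereas the third-order expansions use the full hypothesis that $f$ is three times differentiable with $\nabla^3 f$ Lipschitz, i.e.\ \eqref{eq:assumption_a4}.
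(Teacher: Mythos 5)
Your proposal is correct and is essentially the paper's proof, fleshed out. The paper's own argument is a two-line sketch: write the SDE in integral form, expand $\nabla f(x_{r})$ and $\nabla^2 f(x_{r})v$ by the fundamental theorem of calculus along the $C^1$ path $t\mapsto x_t$, then repeatedly substitute and rearrange. Your plan is exactly this, and it usefully records the two auxiliary facts the paper leaves implicit — the frequent applications of Fubini's theorem to re-order the iterated Lebesgue integrals, and the symmetry of the $3$-tensor $\nabla^3 f(x_{t_n})$ needed to reconcile the order of arguments between the terms retained in \eqref{eq:ULD_momentum_expansion_3} and the natural output of differentiating $s\mapsto\nabla^2 f(x_s)v$.
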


\begin{proof}
	For any $r \in [t_n, t_{n+1}]$, we can rewrite the SDE \eqref{eq:ULD} into integral form:
	\begin{align*}
			x_r &= x_{t_n} + \int^r_{t_n} v_{r_1} \, dr_1\m, \\[3pt]
			v_r &= v_{t_n} - \gamma \int^r_{t_n} v_{r_1} dr_1 - u \int^r_{t_n} \nabla f(x_{r_1}) \, dr_1 + \sigma W_{t_n, r}\m. \nonumber
	\end{align*}
	Assuming $\nabla f$ is sufficiently differentiable, by the fundamental theorem of calculus, we have
	\begin{align*}
		\nabla f(x_{r_i}) &= \nabla f(x_{t_n}) + \int^{r_i}_{t_n} \nabla^2 f(x_{r_{i+1}}) \, dx_{r_{i+1}} \\
			   	&=  \nabla f(x_{t_n}) + \int^{r_i}_{t_n} \nabla^2 f(x_{r_{i+1}}) v_{r_{i+1}} \, dr_{i+1}\m, \\[3pt]
		\nabla^2 f(x_{r_i}) v_r &= \nabla^2 f(x_{t_n}) v_r + \int^{r_i}_{t_n} \nabla^3 f(x_{r_{i+1}}) ( v_r,  dx_{t_{i+1}} ) \\
			   	&=  \nabla^2 f(x_{t_n}) v_r + \int^{r_i}_{t_n} \nabla^3 f(x_{r_{i+1}}) ( v_r, v_{r_{i+1}}) \, dr_{i+1}\m, \\[3pt]  \nonumber
	\end{align*}
	for $t_n \leq r_4\leq r_3 \leq r_2 \leq r_1 \leq t_{n+1}$. The theorem now follows by repeatedly substituting the above identities into the integral form of the SDE and rearranging the resulting terms. 
\end{proof}

\begin{theorem}[ULD Remainder Bounds]
	Let $\{(x_t, v_t)\}_{t\geq 0}$ be the underdamped Langevin diffusion \eqref{eq:ULD} where we assume that $(x_0, v_0) \sim \pi$, the stationary distribution of the process. Let $0 < h_n \leq 1$ be fixed and consider the remainder terms given by \eqref{eq:ULD_position_remainder_expansion_1} and \eqref{eq:ULD_momentum_remainder_expansion_1}, defined over the interval $[t_n, t_{n+1}]$. Under assumptions \eqref{eq:assumption_a1} and \eqref{eq:assumption_a2}, we can bound their $\L_2$ norms and absolute means as\label{thm:ULD_remainder_bounds}
	\begin{align}
		\big\| R^{\m (1)}_x \big\|_{\L_2} & \leq C^{\m (1)}_{x} \m \sqrt{d} \m h_n^2\m, \mmm\,\,\,\,\,\, \big\| R^{\m (1)}_v \big\|_{\L_2} \leq C^{\m (1)}_{v}  \m \sqrt{d} \m h_n^2\m,	\label{eq:ULD_remainder_strong_bounds_1} \\[5pt]
		\big\| \E \big[ R^{\m (1)}_x \big] \big\|_2 & \leq C^{\m (1)}_{x} \m \sqrt{d} \m h_n^2\m, \mmm \big\| \E \big[ R^{\m (1)}_v \big] \big\|_2 \leq C^{\m (1)}_{v} \m \sqrt{d} \m h_n^2\m,\label{eq:ULD_remainder_weak_bounds_1}
	\end{align}
	where $C^{\m (1)}_{x}, C^{\m (1)}_{v} > 0$ are constants depending on $\gamma$, $u$, $m$ and $M_1\m$.\medbreak

	\noindent	
	Under the additional assumption \eqref{eq:assumption_a3}, we can bound the terms \eqref{eq:ULD_position_remainder_expansion_2} and \eqref{eq:ULD_momentum_remainder_expansion_2} as
	\begin{align}
		\big\| R^{\m (2)}_x \big\|_{\L_2} & \leq C^{\m (2)}_{x} \, \sqrt{d} \m h_n^3\m, \mmm\,\,\,\,\,\, \big\| R^{\m (2)}_v \big\|_{\L_2} \leq C^{\m (2)}_{v}  \, d h_n^3\m, 	\label{eq:ULD_remainder_strong_bounds_2}\\[5pt]
		\big\| \E\big[ R^{\m (2)}_x \big] \big\|_2 & \leq C^{\m (2)}_{x} \, \sqrt{d} \m h_n^3\m, \mmm \big\| \E \big[ R^{\m (2)}_v \big] \big\|_2 \leq C^{\m (2)}_{v} \, d h_n^3\m,	\label{eq:ULD_remainder_weak_bounds_2}
	\end{align}
	where $C^{\m (2)}_{x}, C^{\m (2)}_{v} > 0$ are constants which depend on $\gamma$, $u$, $m$, $M_1$ and $M_2\m$. Finally, under the additional assumption \eqref{eq:assumption_a4}, the remainder terms \eqref{eq:ULD_position_remainder_expansion_3} and \eqref{eq:ULD_momentum_remainder_expansion_3} can be bounded by
	\begin{align}
		\big\| R^{\m (3)}_x \big\|_{\L_2} & \leq C^{\m (3)}_{x} \, dh_n^4\m, \mmm\,\,\,\,\,\, \big\| R^{\m (3)}_v \big\|_{\L_2} \leq C^{\m (3)}_{v}  \, d^{1.5}\m  h_n^4\m,	\label{eq:ULD_remainder_strong_bounds_3} \\[5pt]
		\big\| \E \big[ R^{\m (3)}_x \big] \big\|_2 & \leq C^{\m (3)}_{x} \, dh_n^4\m, \mmm \big\| \E \big[ R^{\m (3)}_v \big] \big\|_2 \leq C^{\m (3)}_{v} \, d^{1.5} \m h_n^4\m,	\label{eq:ULD_remainder_weak_bounds_3}
	\end{align}
	where $C^{\m (3)}_{x}, C^{\m (3)}_{v} > 0$ are constants depending on $\gamma$, $u$, $m$, $M_1$, $M_2$ and $M_3\m$.
\end{theorem}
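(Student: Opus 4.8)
The plan is to estimate every iterated integral appearing in the remainders \eqref{eq:ULD_position_remainder_expansion_1}--\eqref{eq:ULD_momentum_remainder_expansion_3} by ``pulling the $\L_2$ norm through the integral'' with Minkowski's integral inequality, and then to control each integrand pointwise via the global $\L_p$ bounds for ULD together with the tensor-norm inequalities of Section \ref{sec:appendix_A}. The key preliminary observation is that since $(x_0,v_0)\sim\pi$ we have $(x_r,v_r)\sim\pi$ for every $r\ge 0$, so Corollary \ref{thm:global_Lp_bounds_v} and Theorem \ref{thm:global_Lp_bounds_nabla_f} give $\|v_r\|_{\L_{2p}}\lesssim\sqrt{d}$ and $\|\nabla f(x_r)\|_{\L_{2p}}\lesssim\sqrt{d}$ uniformly in $r$, with constants depending only on $p$, $u$ and $M_1$. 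A further consequence, again by Minkowski's inequality, is the displacement bound $\|x_r-x_{t_n}\|_{\L_{2p}}=\big\|\int_{t_n}^r v_s\,ds\big\|_{\L_{2p}}\le (r-t_n)\sup_s\|v_s\|_{\L_{2p}}\lesssim\sqrt{d}\,h_n$ for $r\in[t_n,t_{n+1}]$. I would record these facts as a short lemma at the start.

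With these in hand, the proof is a term-by-term computation. For an integrand not involving second- or higher-order derivatives of $f$ --- such as $v_r$, $\nabla f(x_r)$, or $\nabla^2 f(x_r)v_r$ --- the inequality \eqref{eq:n2_f_x_v} gives an $\L_2$ bound of order $\sqrt{d}$, and a $k$-fold iterated time integral over subintervals of length at most $h_n$ contributes a factor of order $h_n^k$, so such a term is of order $\sqrt{d}\,h_n^k$. For velocity-quadratic integrands of the form $\nabla^3 f(x_\ast)(v_a,v_b)$ one uses \eqref{eq:n3_f_x_v1_v2} with Hölder exponent $2p$ to get an $\L_2$ bound of order $d$, hence a contribution of order $d\,h_n^k$. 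For the ``difference'' integrands $\nabla^2 f(x_r)v-\nabla^2 f(x_{t_n})v$ and $\nabla^3 f(x_r)(v_a,v_b)-\nabla^3 f(x_{t_n})(v_a,v_b)$ one applies \eqref{eq:n2_f_x1_x2_v} and \eqref{eq:n3_f_x1_x2_v1_v2} together with the displacement bound, which inserts an extra factor $\|x_r-x_{t_n}\|_{\L_p}\lesssim\sqrt{d}\,h_n$; these integrands are therefore of order $d\,h_n$ and $d^{1.5}h_n$ respectively. Collecting the dominant contributions in each $R^{(j)}_x$ and $R^{(j)}_v$, and using $h_n\le 1$ to absorb lower-order powers, yields the strong estimates \eqref{eq:ULD_remainder_strong_bounds_1}--\eqref{eq:ULD_remainder_strong_bounds_3}; in particular the $d\,h_n^3$ in $\|R^{(2)}_v\|_{\L_2}$ comes from the $\nabla^2 f$-difference term in \eqref{eq:ULD_momentum_remainder_expansion_2}, and the $d^{1.5}h_n^4$ in $\|R^{(3)}_v\|_{\L_2}$ from the $\nabla^3 f$-difference term in \eqref{eq:ULD_momentum_remainder_expansion_3}.

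For the weak bounds \eqref{eq:ULD_remainder_weak_bounds_1}--\eqref{eq:ULD_remainder_weak_bounds_3} it suffices to observe that $\|\E[R]\|_2\le\E\big[\|R\|_2\big]\le\|R\|_{\L_2}$ by Jensen's inequality, so each weak estimate follows immediately from the corresponding strong estimate, which already has the claimed order in both $d$ and $h_n$. (The genuinely sharper weak local error appearing in Theorem \ref{thm:local_error_bounds} comes instead from the matched Brownian-integral terms having zero mean; that refinement belongs to the proof of that theorem and is not needed here.)

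The main obstacle is bookkeeping rather than any single hard estimate: $R^{(3)}_v$ alone is a sum of nine triple or quadruple iterated integrals, each of a product of up to three factors distributed according to $\pi$, and for every summand one must apply Hölder's inequality with the correct exponents --- this is precisely why the global $\L_p$ bounds are needed for all $p$, not merely $p=1$ --- and then verify that no summand exceeds the stated order $d^{1.5}h_n^4$. Some additional care is needed for the Fubini-rearranged terms in \eqref{eq:ULD_position_remainder_expansion_2} and \eqref{eq:ULD_momentum_remainder_expansion_2}, where an inner integral ranges over the whole interval $[t_n,t_{n+1}]$ rather than up to the outer variable; but since each integration variable still varies over an interval of length at most $h_n$, the power counting in $h_n$ is unaffected.
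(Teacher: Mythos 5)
Your proposal is correct and follows essentially the same route as the paper's proof: pull the $\L_2$ norm inside the iterated time integrals via Minkowski/Fubini, control each integrand uniformly using the stationary $\L_p$ bounds on $v_t$ and $\nabla f(x_t)$ together with the tensor-norm/Hölder inequalities \eqref{eq:n2_f_x_v}--\eqref{eq:n3_f_x1_x2_v1_v2}, use the displacement estimate $\|x_r-x_{t_n}\|_{\L_{2p}}\le\int_{t_n}^r\|v_s\|_{\L_{2p}}\,ds$ for the Lipschitz-difference terms, and deduce the weak bounds from Jensen's inequality $\|\E[R]\|_2\le\|R\|_{\L_2}$. Your identification of the dominant $d$- and $d^{1.5}$-contributions (the $\nabla^2 f$-difference in $R^{(2)}_v$ and the $\nabla^3 f$-difference in $R^{(3)}_v$) matches the paper's calculation exactly.
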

\begin{proof}
	Suppose that $\gamma^2 \geq 2uM_1\m$. Then for all $x,v \in \mathbb{R}^d$,
	\begin{align*}
		\big\| (\gamma^2 - 2u\nabla^2f (x))v \big\|_{\L_2} \leq \big\| (\gamma^2 - u\nabla^2f (x))v \big\|_{\L_2} \leq ( \gamma^2 - u m ) \| v \|_{\L_2}.
	\end{align*}
	We can derive the local error bounds \eqref{eq:ULD_remainder_strong_bounds_1}, \eqref{eq:ULD_remainder_strong_bounds_2} and \eqref{eq:ULD_remainder_strong_bounds_3} for the position variable using this inequality, Fubini's theorem and the bounds  \eqref{eq:nabla_f_bounds}, \eqref{eq:v_bounds}, \eqref{eq:n2_f_x_v} and \eqref{eq:n3_f_x_v1_v2} given in Appendix \ref{sec:appendix_A}.
	\begin{align*}
			\big\| R^{\m (1)}_x \big\|_{\L_2} &\leq \gamma \int^{t_{n+1}}_{t_n} \int^{r_1}_{t_n} \| v_{r_2} \|_{\L_2} \, dr_2 \, dr_1 + u \int^{t_{n+1}}_{t_n} \int^{r_1}_{t_n} \big\| \nabla f(x_{r_2}) \big\|_{\L_2} \, dr_2 \, dr_1 \\[5pt]
									&\leq \frac{1}{2} h^2_n \big( \gamma \sqrt{u d} + u \sqrt{M_1 \m d}\, \big) \\[5pt]
									&= C^{\m (1)}_x \sqrt{d} \m h_n^2\m, \\[7pt]
			\| R^{\m (2)}_x \|_{\L_2} &\leq \gamma^2 \int^{t_{n+1}}_{t_n} \int^{r_1}_{t_n} \int^{r_2}_{t_n} \| v_{r_3} \|_{\L_2} \, dr_3 \, dr_2 \, dr_1 \\[3pt]
					&\hspace{12.5mm} + \gamma u \int^{t_{n+1}}_{t_n} \int^{r_1}_{t_n} \int^{t_{n+1}}_{t_n} \big\| \nabla f(x_{r_3}) \big\|_{\L_2} \, dr_3 \, dr_2 \, dr_1 \\[3pt]
					&\hspace{12.5mm} + u \int^{t_{n+1}}_{t_n} \int^{r_1}_{t_n} \int^{r_2}_{t_n} \| \nabla^2 f(x_{r_3}) v_{r_3} \|_{\L_2} \, dr_3 \, dr_2 \, dr_1 \\[5pt]
					&\leq \frac{1}{6} h^3_n \Big( \gamma^2 \sqrt{ u d} + \gamma u \sqrt{M_1 \m d} + uM_1 \sqrt{ud}\,\Big) \\[5pt]
					&= C^{\m (2)}_x \sqrt{d} \m h_n^3\m, \\[7pt]
			\| R^{\m (3)}_x \|_{\L_2} &\leq  \gamma ( \gamma^2 - um ) \int^{t_{n+1}}_{t_n} \int^{r_1}_{t_n} \int^{r_2}_{t_n} \int^{r_3}_{t_n} \| v_{r_4} \|_{\L_2} \, dr_4 \, dr_3 \, dr_2 \, dr_1 \\[3pt]
	     	 			&\hspace{12.5mm} + u ( \gamma^2 - um ) \int^{t_{n+1}}_{t_n} \int^{r_1}_{t_n} \int^{r_2}_{t_n} \int^{r_3}_{t_n} \big\| \nabla f( x_{r_4}) \big\|_{\L_2} \, dr_4 \, dr_3 \, dr_2 \, dr_1 \\[3pt]
	      				&\hspace{12.5mm} + \gamma u \int^{t_{n+1}}_{t_n} \int^{r_1}_{t_n} \int^{r_2}_{t_n} \int^{r_3}_{t_n} \big\| \nabla^2 f(x_{r_4})v_{r_4} \big\|_{\L_2} \, dr_4 \, dr_3 \, dr_2 \, dr_1  \\[3pt]
	      				&\hspace{12.5mm} + u \int^{t_{n+1}}_{t_n} \int^{r_1}_{t_n} \int^{r_2}_{t_n} \int^{r_3}_{t_n} \big\| \nabla^3 f(x_{r_4}) (v_{r_4}, v_{r_3}) \big\|_{\L_2} \, dr_4 \, dr_3 \, dr_2 \, dr_1 \\[5pt]
	      				&\leq \frac{1}{24} h_n^4 \Big( \gamma ( \gamma^2 - um ) \sqrt{ud} + u ( \gamma^2 - um ) \sqrt{M_1 \m d} + \gamma u M_1 \sqrt{ud} + \sqrt{3} \m M_2 \m u d\m \Big)\\[5pt]
	      				&\leq \frac{1}{24} \Big( \gamma ( \gamma^2 - um ) \sqrt{u} + u ( \gamma^2 - um ) \sqrt{M_1} + \gamma u M_1 \sqrt{u} + \sqrt{3} \m M_2 \m u\m \Big) d h_n^4 \\[5pt]
	      				&= C^{\m (3)}_{x} d h_n^4\m.
	\end{align*}
	For the momentum variable, we can additionally use \eqref{eq:n3_f_x1_x2_v1_v2} and the bound
	\begin{align*}
		\| x_t - x_{t_n} \|_{\L_{2p}} \leq \int^t_{t_n} \| v_{r_1} \|_{\L_{2p}} \, dr_1\m,
	\end{align*}
	to show that
	\begin{align*}
			\big\| R^{\m (1)}_v \big\|_{\L_2} &\leq \gamma^2 \int^{t_{n+1}}_{t_n} \int^{r_1}_{t_n} \| v_{r_2} \|_{\L_2} \, dr_2 \, dr_1 + \gamma u \int^{t_{n+1}}_{t_n} \int^{r_1}_{t_n} \big\| \nabla f(x_{r_2}) \big\|_{\L_2} \, dr_2 \, dr_1 \\[3pt]
					 &\mmm + u \int^{t_{n+1}}_{t_n} \big\| \nabla f(x_{r_1}) - \nabla f(x_{t_n}) \big\|_{\L_2} \, dr_1 \\[3pt]
					 &\leq \frac{1}{2} h^2_n \Big( \gamma^2 \sqrt{ud} + \gamma u \sqrt{M_1 \m d} + uM_1 \sqrt{ud}\, \Big) \\[5pt]
					 &= C^{\m (1)}_v \sqrt{d} \m h_n^2\m, \\[7pt]
			\big\| R^{\m (2)}_v \big\|_{\L_2} &\leq \gamma (\gamma^2 - u m ) \int^{t_{n+1}}_{t_n} \int^{r_1}_{t_n} \int^{r_2}_{t_n} \| v_{r_3} \|_{\L_2} \, dr_3 \, dr_2 \, dr_1 \\[3pt]
					&\mmm + u (\gamma^2 - u m )  \int^{t_{n+1}}_{t_n} \int^{r_1}_{t_n} \int^{r_2}_{t_n} \big\| \nabla f(x_{r_3}) \big\|_{\L_2} \, dr_3 \, dr_2 \, dr_1 \\[3pt]
					 &\mmm + \gamma u  \int^{t_{n+1}}_{t_n} \int^{r_1}_{t_n} \int^{r_2}_{t_n} \big\| \nabla^2 f(x_{r_3}) v_{r_3} \big\|_{\L_2} \, dr_3 \, dr_2 \, dr_1 \\[3pt]
					 &\mmm + u  \int^{t_{n+1}}_{t_n} \int^{r_1}_{t_n} \big\| \nabla^2 f(x_{r_2}) v_{r_2} - \nabla^2 f(x_{t_n}) v_{r_2} \big\|_{\L_2} \, dr_2 \, dr_1 \\[5pt]
					 &\leq \frac{1}{6} h_n^3 \Big( \gamma ( \gamma^2 - um ) \sqrt{ud} + u ( \gamma^2 - um ) \sqrt{M_1 \m d} + \gamma u M_1 \sqrt{ud} + \sqrt{3} \m M_2 \m u d \Big)\\[5pt]
	      				&\leq \frac{1}{6} \Big( \gamma \left( \gamma^2 - um \right) \sqrt{u} + u ( \gamma^2 - um ) \sqrt{M_1} + \gamma u M_1 \sqrt{u} + \sqrt{3} \m M_2 \m u \Big) d h_n^3 \\[5pt]
	      				&= C^{\m (2)}_{v} d h_n^3\m,  \\[5pt]
	\big\| R^{\m (3)}_v \big\|_{\L_2} &\leq \gamma^2 ( \gamma^2 - u m ) \int^{t_{n+1}}_{t_n} \int^{r_1}_{t_n} \int^{r_2}_{t_n} \int^{r_4}_{t_n} \| v_{r_4} \|_{\L_2} \, dr_4 \, dr_3 \, dr_2 \, dr_1  \\[3pt]
	      &\mmm + \gamma u  ( \gamma^2 - u m ) \int^{t_{n+1}}_{t_n} \int^{r_1}_{t_n} \int^{r_2}_{t_n} \int^{r_4}_{t_n} \big\| \nabla f(x_{r_4}) \big\|_{\L_2}\, dr_4 \, dr_3 \, dr_2 \, dr_1  \\[3pt]
	      &\mmm  + u ( \gamma^2 - u m ) \int^{t_{n+1}}_{t_n} \int^{r_1}_{t_n} \int^{r_2}_{t_n} \int^{r_3}_{t_n} \big\| \nabla^2 f(x_{r_4}) v_{r_4} \big\|_{\L_2}\, dr_4 \, dr_3 \, dr_2 \, dr_1   \\[3pt]
	      &\mmm + \gamma u \int^{t_{n+1}}_{t_{n}} \int^{r_1}_{t_n} \int^{r_2}_{t_n} \int^{r_3}_{t_n} \big\| \nabla^3 f(x_{r_4}) ( v_{r_4}, v_{r_3}) \big\|_{\L_2} \, dr_4 \, dr_3 \, dr_2 \, dr_1  \\[3pt]
	      &\mmm + \gamma u \int^{t_{n+1}}_{t_n} \int^{r_1}_{t_n} \int^{r_2}_{t_n} \int^{r_3}_{t_n} \big\| \nabla^3 f(x_{t_n}) (v_{r_4}, v_{r_2}) \big\|_{\L_2} \, dr_4 \, dr_3 \, dr_2 \, dr_1  \\[3pt]
	     &\mmm + u^2 \int^{t_{n+1}}_{t_n} \int^{r_1}_{t_n} \int^{r_2}_{t_n} \int^{r_3}_{t_n} \big\| \nabla^3 f(x_{t_n}) \big( \nabla f(x_{r_4}), v_{r_2} \big) \big\|_{\L_2} \, dr_4 \, dr_3 \, dr_2 \, dr_1  \\[3pt]
	      &\mmm + \gamma u \int^{t_{n+1}}_{t_n} \int^{r_1}_{t_n} \int^{r_2}_{t_n} \big\| \nabla^3 f(x_{t_n}) ( v_{r_3}, (r_2 - t_n) v_{t_n} ) \big\|_{\L_2} \, dr_3 \, dr_2 \, dr_1  \\[3pt]
	     &\mmm + u^2 \int^{t_{n+1}}_{t_n} \int^{r_1}_{t_n} \int^{r_2}_{t_n} \big\| \nabla^3 f(x_{t_n}) \big( \nabla f(x_{r_3}) , (r_2 - t_n) v_{t_n} \big) \big\|_{\L_2} \, dr_3 \, dr_2 \, dr_1  \\[3pt]
	      &\mmm - u  \int^{t_{n+1}}_{t_n} \int^{r_1}_{t_n} \int^{r_2}_{t_n} \big\|  \nabla^3 f(x_{r_3}) (v_{r_3}\m, v_{r_2}) - \nabla^3 f(x_{t_n})  (v_{r_3}\m, v_{r_2} ) \big\|_{\L_2} \, dr_3 \, dr_2 \, dr_1  \\[5pt]
	       &\leq \frac{1}{24}  h^4_n \Big( \gamma^2 ( \gamma^2 - u m ) \sqrt{ud} + \gamma u ( \gamma^2 - u m ) \sqrt{M_1 \m d} + u ( \gamma^2 - u m ) M_1 \sqrt{u d}   \\[3pt]
	       &\hspace{20mm} + 3 \sqrt{3} \m \gamma u^2 M_2 \m d + 2\sqrt{3} \m u^{2.5} \sqrt{M_1} M_2 d + \sqrt{15}\m  u^{1.5} M_3 \m d^{1.5}  \Big) \\[5pt]
	       &\leq \frac{1}{24}  \Big( \gamma^2 ( \gamma^2 - u m ) \sqrt{u} + \gamma u^{1.5} ( \gamma^2 - u m ) \sqrt{M_1} + u ( \gamma^2 - u m ) M_1 \sqrt{u}   \\[3pt]
	       &\hspace{15mm} + 3 \sqrt{3} \m \gamma u^2 M_2 + 2\sqrt{3} \m u^{2.5} \sqrt{M_1} M_2 + \sqrt{15}\m  u^{1.5} M_3  \Big) d^{1.5} \m h^4_n \\[5pt]
	       &= C_v^{\m (3)} d^{1.5} h_n^4\m.
	\end{align*}
In order to prove the bounds \eqref{eq:ULD_remainder_weak_bounds_1}, \eqref{eq:ULD_remainder_weak_bounds_2} and \eqref{eq:ULD_remainder_weak_bounds_3} for the mean errors, we simply note that
	\begin{align*}
		\big\| \E [ R ] \big\|_2 & \leq \E\big[\|R\|_2\big] \leq \E\big[\|R\|_2^2\big]^\frac{1}{2} = \big\| R \big\|_{\L_2}\m,
	\end{align*}
	by Jensen's inequality. 
\end{proof}

\begin{theorem}[Taylor expansion of the QUICSORT method]
	Let $\{ X_n\m, V_n \}_{n \geq 0}$ denote the QUICSORT method \eqref{eq:QUICSORT}. Then for all $n\geq 0$, we have the following Taylor expansions:\label{thm:QUICSORT_Taylor_expansion}
	\begin{subequations}
		\begin{align}
			X_{n+1} = X_n & + h_n V_n + \sigma \int^{t_{n+1}}_{t_n} W_{t_n, r_1} \, dr_1 + R^{\m (1)}_X, \label{eq:QUICSORT_position_expansion_1} \\[6pt]
				=  X_n & + h_n V_n + \sigma \int^{t_{n+1}}_{t_n} W_{t_n, r_1} \, dr_1 + \frac{1}{2} h^2_n \big[ -\gamma V_n - u \nabla f(X_n) \big] \nonumber \\
				& -\gamma \sigma \int^{t_{n+1}}_{t_n} \int^{r_1}_{t_n} W_{t_n, r_2} \, dr_2 \, dr_1 + R^{\m (2)}_X, \label{eq:QUICSORT_position_expansion_2} \\[6pt]
				= X_n & + h_n V_n + \sigma \int^{t_{n+1}}_{t_n} W_{t_n, r_1} \, dr_1 + \frac{1}{2} h^2_n \big[ -\gamma V_n - u \nabla f(X_n) \big] \nonumber \\
					& -\gamma \sigma \int^{t_{n+1}}_{t_n} \int^{r_1}_{t_n} W_{t_n, r_2} \, dr_2 \, dr_1 + \frac{1}{6} h^3_n \big[ \gamma^2 V_n + \gamma u \nabla f(X_n) - u \nabla^2 f(X_n) V_n \big] \nonumber \\
					& + \sigma \big( \gamma^2 - u \nabla^2 f(X_n) \big) \bigg[ \frac{1}{24} h_n^3 W_n + \frac{1}{6} h_n^3 H_n + \frac{1}{2} h_n^3 K_n \bigg] + R^{\m (3)}_X, \label{eq:QUICSORT_position_expansion_3}
		\end{align}
	\end{subequations}\vspace{-3mm}
	\begin{subequations}
		\begin{align}
			V_{n+1} = V_n & + \sigma W_n + h_n \big[ -\gamma V_n - u\nabla f(X_n) \big] - \gamma \sigma \int^{t_{n+1}}_{t_n} W_{t_n, r_1} \, dr_1 + R^{\m (1)}_V, \label{eq:QUICSORT_momentum_expansion_1} \\[6pt]
				 =  V_n & + \sigma W_n + h_n \big[ -\gamma V_n - u\nabla f(X_n) \big] - \gamma \sigma \int^{t_{n+1}}_{t_n} W_{t_n, r_1} \, dr_1 \nonumber \\
					& + \frac{1}{2} h^2_n \big[ \gamma^2 V_n + \gamma u \nabla f(X_n) - u \nabla^2 f(X_n) V_n \big] \nonumber \\
					& + \sigma \big( \gamma^2 - u\nabla^2 f(X_n) \big) \int^{t_{n+1}}_{t_n} \int^{r_1}_{t_n} W_{t_n, r_2} \, dr_2 \, dr_1 + R^{\m (2)}_V, \label{eq:QUICSORT_momentum_expansion_2} \\[6pt]
				= V_n & + \sigma W_n + h_n \big[ -\gamma V_n - u\nabla f(X_n) \big] - \gamma \sigma \int^{t_{n+1}}_{t_n} W_{t_n, r_1} \, dr_1 \nonumber \\
					& + \frac{1}{2} h^2_n \big[ \gamma^2 V_n + \gamma u \nabla f(X_n) - u \nabla^2 f(X_n) V_n \big] \nonumber \\
					& + \sigma \big( \gamma^2 - u\nabla^2 f(X_n) \big) \int^{t_{n+1}}_{t_n} \int^{r_1}_{t_n} W_{t_n, r_2} \, dr_2 \, dr_1 - \frac{1}{6} h^3_n \big[\gamma^3 V_n + \gamma^2 u \nabla f(X_n)\big] \nonumber \\
					& + \frac{1}{6} h^3_n \big[2 \gamma u \nabla^2 f(X_n) V_n - u^2 \nabla f(X_n) \nabla f(X_n) - u \nabla^3 f(X_n) (V_n, V_n) \big] \nonumber \\
					& - \gamma \sigma \big( \gamma^2 - 2u \nabla^2 f(X_n) \big) \bigg[ \frac{1}{24} h^3_n W_n + \frac{1}{6} h^3_n H_n + \frac{1}{2} h^3_n K_n \bigg] \nonumber \\
					& -\sigma u \nabla^3 f(X_n) \bigg( V_n\m, \bigg[ \frac{1}{8} h^3_n W_n + \frac{1}{3} h^3_n H_n + \frac{1}{2} h^3_n K_n \bigg] \bigg) + R^{\m (3)}_V, \label{eq:QUICSORT_momentum_expansion_3}
		\end{align}
	\end{subequations}
	where
	\begin{subequations}
		\begin{align}
			R^{\m (1)}_X &= ( \phi_1 (1) - h_n ) \big( V_n + \sigma ( H_n + 6 K_n ) \big) + \bigg[ \frac{1}{h_n} \phi_2(1) - \frac{1}{2} h_n \bigg] \sigma (W_n - 12K_n) \nonumber \mmm \\
						&\mmm - \frac{1}{2} u h_n \big( \phi_1(\lambda_+) + \phi_1(\lambda_-) \big) \nabla f(X_n) \nonumber \\
                        &\mmm - \frac{1}{2} u h_n \Big[ \phi_1(\lambda_+) I^{\m (-)}_1 + \phi_1(\lambda_-) I^{\m (+)}_1 + \phi_1 (\lambda_-) \widetilde{I}_1 \Big], \label{eq:QUICSORT_position_remainder_expansion_1} \\[6pt]
			R^{\m (2)}_X &= \bigg[ \phi_1 (1) - \bigg( h_n - \frac{1}{2} \gamma h^2_n \bigg) \bigg] \big( V_n + \sigma ( H_n + 6 K_n ) \big) \nonumber \\[2pt]
						&\mmm + \bigg[ \frac{1}{h_n} \phi_2(1) - \bigg( \frac{1}{2} h_n - \frac{1}{6} \gamma h^2_n \bigg) \bigg] \sigma (W_n - 12K_n ) \nonumber \\[2pt]
						&\mmm + \bigg[ - \frac{1}{2} h_n \big( \phi_1 (\lambda_+) + \phi_1 (\lambda_-) \big) - \bigg( - \frac{1}{2} h_n^2 \bigg) \bigg] u \nabla f(X_n) \nonumber  \\[2pt]
						&\mmm - \frac{1}{2} u h_n \Big[ \phi_1(\lambda_+) I^{\m (-)}_1 + \phi_1(\lambda_-) I^{\m (+)}_1 + \phi_1 (\lambda_-) \widetilde{I}_1 \Big], \label{eq:QUICSORT_position_remainder_expansion_2} \\[6pt]
			R^{\m (3)}_X &= \bigg[ \phi_1 (1) - \bigg( h_n - \frac{1}{2} \gamma h_n^2 + \frac{1}{6} \gamma^2 h_n^3 \bigg) \bigg] \big( V_n + \sigma ( H_n + 6 K_n ) \big) \nonumber  \\
				&\mmm + \bigg[ \frac{1}{h_n} \phi_2 (1) - \bigg( \frac{1}{2} h_n - \frac{1}{6} \gamma h_n^2 + \frac{1}{24} \gamma^2 h_n^3 \bigg) \bigg] \sigma (W_n - 12K_n ) \nonumber \\[2pt]
				&\mmm + \bigg[ - \frac{1}{2} h_n \bigg( \phi_1 (\lambda_+) + \phi_1 (\lambda_-) \bigg) - \bigg( - \frac{1}{2} h_n^2 + \frac{1}{6} \gamma h_n^3 \bigg) \bigg] u \nabla f(X_n) \nonumber \\[2pt]
				&\mmm + \bigg[ - h_n \phi_1 (\lambda_+) \phi_1 (\lambda_-) - \bigg( - \frac{1}{6} h_n^3 \bigg) \bigg] u \nabla^2 f(X_n) \bigg[ V_n + \sigma ( H_n + 6K_n ) \bigg] \nonumber \\[2pt]
				&\mmm + \bigg[ - \frac{1}{2} \big( \phi_1 (\lambda_+) \phi_2 (\lambda_-) + \phi_1 (\lambda_-) \phi_2 (\lambda_+) \big) - \bigg( - \frac{1}{24} h_n^3 \bigg) \bigg] \nonumber \\
				&\hspace{20mm} \times \sigma u \nabla^2 f(X_n) ( W_n - 12 K_n ) \nonumber \\
				&\mmm  -\frac{1}{2} u h_n \Big[ \phi_1 (\lambda_+) I^{\m (-)}_2 + \phi_1 (\lambda_-) I^{\m (+)}_2 + \phi_1 (\lambda_-) \widetilde{I}_1 \Big], \label{eq:QUICSORT_position_remainder_expansion_3}\\[20pt]
			R^{\m (1)}_V &= \big( \phi_0 (1) - (1 - \gamma h_n) \big) \big( V_n + \sigma ( H_n + 6 K_n ) \big) \nonumber \\
						&\mmm + \bigg[ \frac{1}{h_n} \phi_1 (1) - \bigg(1 - \frac{1}{2} \gamma h_n \bigg) \bigg] \sigma (W_n - 12K_n ) \nonumber \\
						&\mmm + \bigg[ -\frac{1}{2} h_n \big( \phi_0 (\lambda_+) + \phi_0 (\lambda_-) \big) - (- h_n ) \bigg] u \nabla f(X_n) \nonumber \\
						&\mmm - \frac{1}{2} u h_n \Big[ \phi_0 (\lambda_+) I^{\m (-)}_1 + \phi_0 (\lambda_-) I^{\m (+)}_1 + \phi_0 (\lambda_-) \widetilde{I}_1\Big], \label{eq:QUICSORT_momentum_remainder_expansion_1} \\[6pt]
			R^{\m (2)}_V &= \bigg[ \phi_0 (1) - (1 - \gamma h_n + \frac{1}{2} \gamma^2 h^2_n) \bigg] \big( V_n + \sigma ( H_n + 6 K_n ) \big) \nonumber \\
						&\mmm + \bigg[ \frac{1}{h_n} \phi_1 (1) - \bigg(1 - \frac{1}{2} \gamma h_n + \frac{1}{6} \gamma^2 h^2_n \bigg) \bigg] \sigma (W_n - 12K_n ) \nonumber \\[2pt]
						&\mmm + \bigg[ -\frac{1}{2} h_n \big( \phi_0 (\lambda_+) + \phi_0 (\lambda_-) \big) - \bigg(- h_n + \frac{1}{2} \gamma h^2 \bigg) \bigg] u \nabla f(X_n) \nonumber \\[2pt]
						&\mmm + \bigg[ -\frac{1}{2} h_n \big( \phi_0 (\lambda_+) \phi_1 (\lambda_-) + \phi_0 (\lambda_-) \phi_1 (\lambda_+) \big) -  \bigg(-\frac{1}{2} h_n^2 \bigg) \bigg] \nonumber \\[2pt]
						&\hspace{20mm} \times u \nabla^2 f(X_n)  \big(V_n + \sigma ( H_n + 6 K_n ) \big) \nonumber \\[2pt]
						&\mmm + \bigg[ -\frac{1}{2} \Big( \phi_0 (\lambda_+) \phi_2 (\lambda_-) + \phi_0 (\lambda_-) \phi_2 (\lambda_+) \Big) - \bigg(-\frac{1}{6} h_n^2 \bigg) \bigg] \nonumber \\[2pt]
						&\hspace{20mm} \times \sigma u \nabla^2 f(X_n) ( W_n - 12K_n ) \nonumber \\
						&\mmm - \frac{1}{2} u h_n \Big[ \phi_0 (\lambda_+) I^{\m (-)}_2 + \phi_0 (\lambda_-) I^{\m (+)}_2 + \phi_0 (\lambda_-) \widetilde{I}_1 \Big], \label{eq:QUICSORT_momentum_remainder_expansion_2} \\[6pt]
			R^{\m (3)}_V &= \bigg[ \phi_0(1) - \bigg(1 - \gamma h_n + \frac{1}{2} \gamma^2 h_n^2 - \frac{1}{6}\gamma^3 h_n^3\bigg)\bigg] \big(V_n + \sigma (H_n + 6K_n)\big) \nonumber  \\
				&\mmm + \bigg[\frac{1}{h_n}\phi_1(1) - \bigg( 1 - \frac{1}{2}\gamma h_n + \frac{1}{6}\gamma^2 h_n^2 - \frac{1}{24} \gamma^3 h_n^3 \bigg) \bigg] \sigma(W_n - 12K_n) \nonumber \\[2pt]
				&\mmm + \bigg[ -\frac{1}{2} h_n \big( \phi_0 (\lambda_+) + \phi_0 (\lambda_-) \big) - \bigg(-h + \frac{1}{2} \gamma h^2 - \frac{1}{6} \gamma^2 h_n^3 \bigg) \bigg] u \nabla f(X_n) \nonumber \\[2pt]
				&\mmm + \bigg[ -\frac{1}{2} h_n \left( \phi_0 (\lambda_+) \phi_1 (\lambda_-) + \phi_0 (\lambda_-) \phi_1 (\lambda_+) \right) - \left(-\frac{1}{2} h_n^2 + \frac{1}{3} \gamma h_n^3\right) \bigg] \nonumber \\[2pt]
				&\hspace{20mm} \times u \nabla^2 f(X_n) \big( V_n + \sigma ( H_n + 6K_n ) \big) \nonumber \\[2pt]
				&\mmm + \bigg[ -\frac{1}{2} \big( \phi_0 (\lambda_+) \phi_2 (\lambda_-) + \phi_0 (\lambda_-) \phi_2 (\lambda_+) \big) - \bigg(-\frac{1}{6} h_n^2 + \frac{1}{12} \gamma h_n^3\bigg) \bigg] \nonumber \\
				&\hspace{20mm} \times \sigma u \nabla^2 f(X_n) ( W_n - 12 K_n ) \nonumber \\[2pt]
				&\mmm + \bigg[ \frac{1}{2} h_n^2 \phi_0 (\lambda_-) \phi_1 \bigg(\frac{1}{3}\bigg) - \frac{1}{6} h_n^3 \bigg] u^2 \nabla^2 f(X_n) \nabla f(X_n) \nonumber \\[2pt]
				&\mmm + \bigg[ -\frac{1}{4} h_n \big( \phi_0 (\lambda_+) \phi^2_1 (\lambda_-) + \phi_0 (\lambda_-) \phi^2_1 (\lambda_+) \big) - \bigg( - \frac{1}{6} h_n^3 \bigg) \bigg] \nonumber \\
				&\hspace{20mm} \times u \nabla^3 f(X_n) \big( V^{\otimes 2}_n + 2\sigma V_n\otimes ( H_n + 6K_n ) \big) \nonumber \\[2pt]
				&\mmm + \bigg[ -\frac{1}{4} h_n \big( \phi_0 (\lambda_+) \phi^2_1 (\lambda_-) + \phi_0 (\lambda_-) \phi^2_1 (\lambda_+) \big) \bigg] \sigma^2 u \nabla^3 f(X_n) \big( H_n + 6K_n \big)^{\otimes 2} \nonumber \\[2pt]
				&\mmm + \bigg[ -\frac{1}{2} \big( \phi_0 (\lambda_+) \phi_1 (\lambda_-) \phi_2 (\lambda_-) + \phi_0 (\lambda_-) \phi_1 (\lambda_+) \phi_2 (\lambda_+) \big) - \bigg( -\frac{1}{8} h_n^3 \bigg) \bigg] \nonumber \\
				&\hspace{20mm} \times \sigma u \nabla^3 f(X_n) \big(V_n\m,  W_n - 12K_n \big) \allowbreak \nonumber \\[2pt]
				&\mmm + \bigg[ -\frac{1}{2} \bigg( \phi_0 (\lambda_+) \phi_1 (\lambda_-) \phi_2 (\lambda_-) + \phi_0 (\lambda_-) \phi_1 (\lambda_+) \phi_2 (\lambda_+) \bigg) \bigg] \nonumber \\
				&\hspace{20mm} \times \sigma^2 u \nabla^3 f(X_n) \big( H_n + 6K_n\m, W_n - 12K_n \big) \nonumber \\[2pt]
				&\mmm + \bigg[ -\frac{1}{4} \cdot \frac{1}{h_n} \big( \phi_0 (\lambda_+) \phi^2_2 (\lambda_-) + \phi_0 (\lambda_-) \phi^2_2 (\lambda_+) \big) \bigg] \sigma^2 u \nabla^3 f(X_n) (W_n - 12K_n)^{\otimes 2} \nonumber \\[2pt]
				&\mmm -\frac{1}{2} u h_n \Big[ \phi_0 (\lambda_+) I^{\m (-)}_3 + \phi_0 (\lambda_-) I^{\m (+)}_3 + \phi_0 (\lambda_-) \widetilde{I}_2 + \phi_0 (\lambda_-) \widetilde{I}_3 \Bigg] \nonumber \\[2pt]
				&\mmm + \frac{1}{2} u^2 h_n^2 \phi_0 (\lambda_-) \phi_1 \bigg(\frac{1}{3} \bigg)  \nabla^2 f(X_n) \widetilde{I}^{(1)}, \label{eq:QUICSORT_momentum_remainder_expansion_3}
		\end{align}
	\end{subequations}
	for $\lambda_\pm = \frac{3 \pm \sqrt{3}}{6}$, along with the functions $\phi_0\m$, $\phi_1$ and $\phi_2$ defined in \eqref{eq:phi_functions}, and
	\begin{align}
	\label{eq:QUICSORT_expansion_integrals}
		\begin{split}
            I^{\m (\pm)}_1 &:= \int^1_0 \nabla^2 f \big( X_n + t L^{\m (\pm)}_n \big) L^{\m (\pm)}_n \, dt\\[3pt]
	                   & = \Big[\m\nabla f\big( X_n +  L^{\m (\pm)}_n \big) - \nabla f(X_n) \Big],\\[5pt]
			I^{\m (\pm)}_2 & := \int^1_0 (1-t) \nabla^3 f \big( X_n + t  L^{\m (\pm)}_n \big) \big(  L^{\m (\pm)}_n \big)^{\otimes 2} dt\\[3pt]
						&= \int^1_0 \Big[\m\nabla^2 f \big( X_n +  L^{\m (\pm)}_n \big) - \nabla^2 f(X_n) \Big]  L^{\m (\pm)}_n dt, \\[5pt]
			I^{\m (\pm)}_3 & := \int^1_0 (1-t) \Big[\m\nabla^3 f \big(X_n + t L^{\m (\pm)}_n \big) - \nabla^3 f(X_n) \Big] \big(  L^{\m (\pm)}_n \big)^{\otimes 2} dt, \\[5pt]
			\widetilde{I}_1 \, & := \int^1_0 \nabla^2 f \big( X_n +  L^{\m (+)}_n + t P_n \big) P_n\m dt \\[3pt]
						& = \Big[\m\nabla f \big( X_n +  L^{\m (+)}_n + P_n \big) - \nabla f \big( X_n +  L^{\m (+)}_n \big) \Big], \\[5pt]
			\widetilde{I}_2 \, & := \int^1_0 (1-t) \nabla^3 f \big( X_n +  L^{\m (+)}_n + t P_n \big) ( P_n )^{\otimes 2} dt \\[3pt]
						& = \int^1_0 \Big[\m\nabla^2 f \big( X_n +  L^{\m (+)}_n + t P_n \big) - \nabla^2 f \big( X_n +  L^{\m (+)}_n \big) \Big] P_n dt, \\[5pt]
			\widetilde{I}_3 \, & := \int^1_0 \nabla^3 f \big( X_n + t L^{\m (+)}_n \big) \big( L^{\m (+)}_n\m, P_n \big) dt \\[3pt]
								& = \Big[\m\nabla^2 f \big( X_n +  L^{\m (+)}_n \big) - \nabla^2 f( X_n ) \Big] P_n\m,
		\end{split}
	\end{align}
	where
	\begin{align*}	
	L^{\m (\pm)}_n & := \phi_1 (\lambda_\pm) \big( V_n + \sigma ( H_n + 6K_n ) \big) + \phi_2 (\lambda_\pm) C_n\m,\\[3pt]	
	P_n & := -u h_n \phi_1\bigg(\frac{1}{3}\bigg) \nabla f\big(X^{(1)}_n\big) = -u h_n\phi_1\bigg(\frac{1}{3}\bigg) \Big(\nabla f(X_n) + I^{\m (-)}_1 \Big). 
	\end{align*}
\end{theorem}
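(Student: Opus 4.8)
The plan is to establish all six identities by direct substitution and term collection; they are purely algebraic, using nothing beyond the existence of the relevant derivatives of $f$, so no convexity or Lipschitz hypothesis enters. First I would rewrite the QUICSORT step \eqref{eq:QUICSORT} in the $\phi$-notation \eqref{eq:phi_functions}, noting that $V_n+\sigma(H_n+6K_n)=V_n^{(1)}$ and $C_n=\sigma(W_n-12K_n)$ recur throughout, and that the internal stages can be written as $X_n^{(1)}=X_n+L_n^{(-)}$ and $X_n^{(2)}=X_n+L_n^{(+)}+P_n$ with $L_n^{(\pm)}$ and $P_n$ exactly the vectors in \eqref{eq:QUICSORT_expansion_integrals}. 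After this, every scalar coefficient in the formulas for $X_{n+1}$ and $V_{n+1}$ is a finite product of $\phi_0,\phi_1,\phi_2$ evaluated at points of $\{\lambda_+,\lambda_-,\tfrac13,1\}$, and the only nonlinearity is the two gradient evaluations $\nabla f(X_n^{(1)})$ and $\nabla f(X_n^{(2)})$.

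The second step is to Taylor expand those two gradients about $X_n$ using the integral form of Taylor's theorem, stopping at the order dictated by which of the three expansions is being proved. For $\nabla f(X_n^{(1)})$ this yields $\nabla f(X_n)+I_1^{(-)}$, then $\nabla f(X_n)+\nabla^2 f(X_n)L_n^{(-)}+I_2^{(-)}$, then $\nabla f(X_n)+\nabla^2 f(X_n)L_n^{(-)}+\tfrac12\nabla^3 f(X_n)(L_n^{(-)})^{\otimes2}+I_3^{(-)}$; for $\nabla f(X_n^{(2)})$ one first peels off $\widetilde I_1=\nabla f(X_n+L_n^{(+)}+P_n)-\nabla f(X_n+L_n^{(+)})$, expands $\nabla f(X_n+L_n^{(+)})$ in the same way, and for the order-$3$ momentum bound additionally decomposes $\widetilde I_1=\nabla^2 f(X_n)P_n+\widetilde I_2+\widetilde I_3$ and substitutes $P_n=-uh_n\phi_1(\tfrac13)(\nabla f(X_n)+I_1^{(-)})$, which is what produces the $\nabla^2 f(X_n)\nabla f(X_n)$ term and the $\nabla^2 f(X_n)\widetilde I^{(1)}$ remainder. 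Substituting back, one splits each scalar coefficient as $[\text{its $h_n$-Taylor polynomial truncated to the working order}]+[\text{remainder}]$, using $\phi_0(x)=1-x\gamma h_n+\tfrac12x^2\gamma^2h_n^2-\cdots$, $\phi_1(x)=xh_n-\tfrac12x^2\gamma h_n^2+\cdots$, $\tfrac1{h_n}\phi_2(x)=\tfrac12x^2h_n-\tfrac16x^3\gamma h_n^2+\cdots$ and the elementary identities $\lambda_++\lambda_-=1$, $\lambda_+\lambda_-=\tfrac16$, $\lambda_+^2+\lambda_-^2=\tfrac23$, $\lambda_+\lambda_-(\lambda_++\lambda_-)=\tfrac16$. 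The truncated-polynomial parts collapse exactly to the displayed explicit terms (the Hamiltonian-flow coefficients $h_n,\tfrac12h_n^2,\tfrac16h_n^3,\dots$ and the $\nabla^3 f$ cross coefficients such as $\tfrac18h_n^3W_n+\tfrac13h_n^3H_n+\tfrac12h_n^3K_n$), while all leftover differences together with the $\nabla f$-Taylor integrals $I_j^{(\pm)},\widetilde I_j$ assemble into the stated $R_X^{(k)}$ and $R_V^{(k)}$; the nesting ($R_X^{(1)}$ subtracts one Taylor term of $\phi_1(1)$, $R_X^{(2)}$ two, $R_X^{(3)}$ three, and similarly for the others) makes this organisation transparent.

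A small auxiliary point is the noise bookkeeping, which is precisely Theorem \ref{thm:levy_space_time_1_2}: one checks that the combinations of $W_n,H_n,K_n$ produced by $C_n$ and the $\pm\sigma(H_n\mp6K_n)$ increments reproduce $\sigma\int_{t_n}^{t_{n+1}}W_{t_n,r_1}\,dr_1$ and $-\gamma\sigma\int_{t_n}^{t_{n+1}}\!\int_{t_n}^{r_1}W_{t_n,r_2}\,dr_2\,dr_1$. Concretely $\int W=\tfrac12h_nW_n+h_nH_n$ forces the stray $\pm6h_nK_n$ to cancel, $\int\!\int W=\tfrac16h_n^2W_n+\tfrac12h_n^2H_n+h_n^2K_n$ handles the second-order noise, and the third-order surrogate $\tfrac1{24}h_n^3W_n+\tfrac16h_n^3H_n+\tfrac12h_n^3K_n$ multiplying $\sigma(\gamma^2-u\nabla^2 f(X_n))$ is what the $\phi$-coefficients plus the noise components of $L_n^{(\pm)}$ deliver once combined. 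Equivalently, one may route through the shifted ODE: the QUICSORT expansion equals the stochastic Taylor expansion of ULD with the iterated integrals of $W$ replaced by those of the piecewise linear path $\mathcal{W}$, plus the Gauss--Legendre quadrature and $X^{(1)}$-approximation errors, which likewise feed into $R_X^{(k)},R_V^{(k)}$.

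The main obstacle is combinatorial rather than conceptual: the order-$3$ momentum expansion \eqref{eq:QUICSORT_momentum_expansion_3} has a large number of distinct $\phi$-products (the $\phi_0(\lambda_\pm)\phi_1(\lambda_\mp)$-, $\phi_0(\lambda_\pm)\phi_1(\lambda_\mp)\phi_2(\lambda_\mp)$- and $\phi_0(\lambda_\pm)\phi_1^2(\lambda_\mp)$-type terms, the $\nabla^3 f$ pieces acting on $V_n$, $H_n+6K_n$ and $W_n-12K_n$, and the $\nabla^2 f\,\nabla f$ term), so the genuine work is the careful, error-free sorting of which contributions land in the explicit part and which in $R_V^{(3)}$, and remembering that several $\phi$-products have order-matching leading terms which must be subtracted off exactly (hence the bracketed differences in the definitions of the remainders). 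I would carry this out order by order, reusing the lower-order computation at each stage, and cross-check the third-order identities against the exact ULD Taylor expansion in Theorem \ref{thm:ULD_taylor_expansion} term by term to catch sign errors.
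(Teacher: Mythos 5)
Your proposal is correct and follows the same route as the paper: rewrite the scheme using $\phi_0,\phi_1,\phi_2$ with $X_n^{(1)}=X_n+L_n^{(-)}$, $X_n^{(2)}=X_n+L_n^{(+)}+P_n$, Taylor expand $\nabla f$ at $X_n$ with integral remainders (giving $I_j^{(\pm)},\widetilde I_j$), and then Taylor expand the $\phi$-coefficients to the working order, collecting the unmatched pieces into the stated remainders. The paper's proof is essentially the same three-line recipe, so no further comparison is needed.
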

\begin{proof}
	We can rewrite the QUICSORT method \eqref{eq:QUICSORT} as
	\begin{align*}
			X_{n+1} &= X_n + \phi_1(1) \big( V_n + \sigma (H_n + 6 K_n) \big) + \phi_2(1) C_n \\
					&\mmm - \frac{1}{2} u h_n \Big( \phi_1 (\lambda_+) \nabla f\big(X^{(1)}_n\big) + \phi_1 (\lambda_-) \nabla f\big(X^{(2)}_n\big) \Big),  \\[3pt]
			V_{n+1} &= \phi_0 (1) \big(V_n + \sigma (H_n + 6 K_n) \big) - \sigma (H_n - 6 K_n) + \phi_1 (1) C_n \\
					&\mmm - \frac{1}{2} u h_n \Big( \phi_0 (\lambda_+) \nabla f\big(X^{(1)}_n\big) + \phi_0 (\lambda_-) \nabla f\big(X^{(2)}_n\big) \Big),
	\end{align*}
	where
	\begin{align*}
		X^{(1)}_n & = X_n + L^{\m (-)}_n,\hspace{10mm}
		X^{(2)}_n = X_n + L^{\m (+)}_n + P_n\m.
	\end{align*}
	Using Taylor expansions with an integral remainder, we can derive the identities in \eqref{eq:QUICSORT_expansion_integrals},
	\begin{align*}
	\begin{split}
		\nabla f \big(X^{(1)}_n\big) = \nabla f(X_n) & + I^{\m (-)}_1 \\[1pt]
						= \nabla f(X_n) & + \nabla^2 f(X_n) L^{\m (-)}_n + I^{\m (-)}_2 \\[1pt]
						= \nabla f(X_n) & + \nabla^2 f(X_n) L^{\m (-)}_n + \frac{1}{2} \nabla^3 f(X_n) \big( L^{\m (-)}_n,  L^{\m (-)}_n \big) + I^{\m (-)}_3, \\[3pt]
		\nabla f\big(X^{(2)}_n\big) = \nabla f(X_n) & + I^{\m (+)}_1 + \widetilde{I}_1 \\
						= \nabla f(X_n) & + \nabla^2 f(X_n) L^{\m (+)}_n + I^{\m (+)}_2 + \widetilde{I}_1 \\[1pt]
						= \nabla f(X_n) & + \nabla^2 f(X_n) L^{\m (+)}_n + \nabla^2 f(X_n) P_n + \frac{1}{2} \nabla^3 f(X_n)\big( L^{\m (+)}_n, L^{\m (+)}_n \big) \\[1pt]
						& + I^{\m (+)}_3 +  \widetilde{I}_2 +  \widetilde{I}_3\m.
	\end{split}
	\end{align*}
	The result now follows by substituting the expansions for $\nabla f\big(X^{(1)}_n\big)$ and $\nabla f\big(X^{(2)}_n\big)$ into the QUICSORT method, and Taylor expanding the $\phi$ functions \eqref{eq:phi_functions} up to the desired order. 
\end{proof}

\begin{theorem}[Remainder bounds for the QUICSORT method]
	Let $\{ (X_n, V_n) \}_{n \geq 0}$ denote the QUICSORT method \eqref{eq:QUICSORT} and let $\{(x_t, v_t)\}_{t \geq 0}$ denote the underdamped Langevin diffusion \eqref{eq:ULD} where we assume that $(x_0, v_0) \sim \pi$, the stationary distribution of the process. Suppose that at time $t_n\m$, we have $(X_n\m, V_n) = (x_{t_n}\m, v_{t_n})$. Let $0 < h_n \leq 1$ be fixed, and consider the remainder terms \eqref{eq:QUICSORT_position_remainder_expansion_1} and \eqref{eq:QUICSORT_momentum_remainder_expansion_1} defined over the time interval $[t_n\m, t_{n+1}]$. Under assumptions \eqref{eq:assumption_a1} and \eqref{eq:assumption_a2} we can bound their $\L_2$ norms and absolute means as\label{thm:QUICSORT_remainder_bounds}
	\begin{align}
		\big\| R^{\m (1)}_X \big\|_{\L_2} & \leq C^{\m (1)}_{X}  \sqrt{d} \m h_n^2\m, \mmm\,\,\,\,\,\, \big\| R^{\m (1)}_V \big\|_{\L_2} \leq C^{\m (1)}_{V}   \sqrt{d} \m h_n^2\m,	\label{eq:QUICSORT_remainder_strong_bounds_1} \\[5pt]
		\big\| \E \big[ R^{\m (1)}_X \big] \big\|_2 &\leq C^{\m (1)}_{X}  \sqrt{d} \m h_n^2\m,\mmm \big\| \E \big[ R^{\m (1)}_V \big] \big\|_2 \leq C^{\m (1)}_{V} \sqrt{d} \m h_n^2\m,	\label{eq:QUICSORT_remainder_weak_bounds_1}
	\end{align}
	where $C^{\m (1)}_{X}, C^{\m (1)}_{V} > 0$ are constants depending on $\gamma$, $u$, $m$ and $M_1\m$.\medbreak
	
	\noindent
	Under the additional assumption \eqref{eq:assumption_a3}, we can bound \eqref{eq:QUICSORT_position_remainder_expansion_2} and \eqref{eq:QUICSORT_momentum_remainder_expansion_2} as
	\begin{align}
		\big\| R^{\m (2)}_X \big\|_{\L_2} & \leq C^{\m (2)}_{X} \sqrt{d} \m h_n^3\m,  \mmm\,\,\,\,\,\, \big\| R^{\m (2)}_V \big\|_{\L_2} \leq C^{\m (2)}_{V} d h_n^3\m, 	\label{eq:QUICSORT_remainder_strong_bounds_2}\\[5pt]
		\big\| \E \big[ R^{\m (2)}_X \big] \big\|_2 & \leq C^{\m (2)}_{X}  \sqrt{d} \m h_n^3\m, \mmm\big\| \E \big[ R^{\m (2)}_V \big] \big\|_2 \leq C^{\m (2)}_{V} d h_n^3\m,	\label{eq:QUICSORT_remainder_weak_bounds_2}
	\end{align}
	where $C^{\m (2)}_{X}, C^{\m (2)}_{V} > 0$ are constants depending on $\gamma$, $u$, $m$, $M_1$ and $M_2\m$. Finally, under the additional assumption \eqref{eq:assumption_a4}, then the remainder terms  \eqref{eq:QUICSORT_position_remainder_expansion_3} and \eqref{eq:QUICSORT_momentum_remainder_expansion_3} can be bounded by
	\begin{align}
		\big\| R^{\m (3)}_X \big\|_{\L_2} \leq C^{\m (3)}_{X} dh_n^4\m, \mmm\,\,\,\,\,\,  \big\| R^{\m (3)}_V \big\|_{\L_2} \leq C^{\m (3)}_{V}  d^{1.5}\m  h_n^4\m, 	\label{eq:QUICSORT_remainder_strong_bounds_3} \\[5pt]
		\big\| \E \big[ R^{\m (3)}_X \big] \big\|_2 \leq C^{\m (3)}_{X} dh_n^4\m, \mmm \big\| \E \big[ R^{\m (3)}_V \big] \big\|_2 \leq C^{\m (3)}_{V} d^{1.5} \m h_n^4\m,	\label{eq:QUICSORT_remainder_weak_bounds_3}
	\end{align}
	where $C^{\m (3)}_{X}, C^{\m (3)}_{V} > 0$ are constants depending on $\gamma$, $u$, $m$, $M_1$, $M_2$ and $M_3\m$.
\end{theorem}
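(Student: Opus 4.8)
The plan is to bound every summand of each remainder $R^{\m(i)}_X, R^{\m(i)}_V$ from Theorem~\ref{thm:QUICSORT_Taylor_expansion} by pairing (i) an estimate on its scalar prefactor --- always an exact combination of the $\phi_j$-functions of \eqref{eq:phi_functions} minus its truncated Taylor polynomial in $h_n$ --- with (ii) an $\L_p$ bound on the accompanying multilinear expression in $V_n$, $\nabla f(X_n)$, $W_n, H_n, K_n$ and the auxiliary vectors $L^{\m(\pm)}_n$, $P_n$. Because we sit at the stationary regime $(X_n,V_n)=(x_{t_n},v_{t_n})$ with $(x_0,v_0)\sim\pi$, the moments available for (ii) are exactly those collected in Appendix~\ref{sec:appendix_A}: $\|V_n\|_{\L_{2p}}\leq[(2p-1)!!]^{\frac{1}{2p}}\sqrt{ud}$ and $\|\nabla f(X_n)\|_{\L_{2p}}\leq[(2p-1)!!]^{\frac{1}{2p}}\sqrt{M_1 d}$ from Corollary~\ref{thm:global_Lp_bounds_v} and Theorem~\ref{thm:global_Lp_bounds_nabla_f}, together with the Gaussian bounds \eqref{eq:levy_area_bounds} for $W_n, H_n, K_n$.

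\textbf{Step 1: shift and integral-remainder estimates.} First I would use the $\phi$-function bound \eqref{eq:phi_functions_bound} (in particular $|\phi_1(\lambda_\pm)|\leq\lambda_\pm h_n$ and $|\phi_2(\lambda_\pm)|\leq\frac{1}{2}\lambda_\pm^2 h_n^2$) together with the above moments to show $\|L^{\m(\pm)}_n\|_{\L_p}\leq c\,h_n\sqrt d$, and then, since $P_n=-uh_n\phi_1(\frac{1}{3})\bigl(\nabla f(X_n)+I^{\m(-)}_1\bigr)$, also $\|P_n\|_{\L_p}\leq c\,h_n^2\sqrt d$, uniformly in $p\geq1$ and $0<h_n\leq1$. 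Substituting these into the Taylor-with-integral-remainder identities \eqref{eq:QUICSORT_expansion_integrals} and invoking the H\"older-type inequalities \eqref{eq:n2_f_x_v}--\eqref{eq:n3_f_x1_x2_v1_v2} yields $\|I^{\m(\pm)}_1\|_{\L_p}=O(h_n\sqrt d)$, $\|\widetilde I_1\|_{\L_p}=O(h_n^2\sqrt d)$, $\|I^{\m(\pm)}_2\|_{\L_p}=O(h_n^2 d)$, $\|\widetilde I_2\|_{\L_p}=O(h_n^4 d)$, $\|\widetilde I_3\|_{\L_p}=O(h_n^3 d)$, and --- using \eqref{eq:n3_f_x1_x2_v1_v2} --- $\|I^{\m(\pm)}_3\|_{\L_p}\leq M_3\|L^{\m(\pm)}_n\|^3_{\L_{3p}}=O(h_n^3 d^{1.5})$. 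This last, cubic, remainder is the only source of the $d^{1.5}$ scaling in the momentum bound.

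\textbf{Step 2: coefficient bookkeeping and assembly.} Each bracketed prefactor in \eqref{eq:QUICSORT_position_remainder_expansion_1}--\eqref{eq:QUICSORT_momentum_remainder_expansion_3} is an entire function of $h_n$ minus its Taylor polynomial of the relevant degree, so Taylor's theorem gives the expected $O(h_n^2)$, $O(h_n^3)$ or $O(h_n^4)$ size; the only delicate cases are the $\phi$-products at level $R^{\m(3)}$, where one must use the quadrature identities $\lambda_++\lambda_-=1$, $\lambda_+\lambda_-=\frac{1}{6}$, $\lambda_+^2+\lambda_-^2=\frac{2}{3}$ (equivalently, exactness of two-point Gauss--Legendre on cubics) to see that several brackets which a priori look $O(h_n^3)$ in fact collapse to $O(h_n^4)$. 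Multiplying each prefactor bound by its factor bound and summing with the triangle inequality then gives $R^{\m(1)}_X, R^{\m(1)}_V=O(\sqrt d\,h_n^2)$; $R^{\m(2)}_X=O(\sqrt d\,h_n^3)$ and $R^{\m(2)}_V=O(d\,h_n^3)$ (the extra $\sqrt d$ coming from the single $\nabla^2 f(\cdot)v$ and $\nabla^2 f$-remainder terms present only in the momentum expansion); $R^{\m(3)}_X=O(d\,h_n^4)$ (worst term $\phi_1(\lambda_\pm)I^{\m(\pm)}_2$); and $R^{\m(3)}_V=O(d^{1.5}\,h_n^4)$ (worst terms $\phi_0(\lambda_\pm)I^{\m(\pm)}_3$ and $\phi_0(\lambda_\pm)\widetilde I_3$). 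One checks that no summand produces a higher power of $d$, since every multilinear derivative appearing is at worst a bilinear form in two factors of size $\sqrt d$ except the genuinely cubic $I^{\m(\pm)}_3$. The mean bounds \eqref{eq:QUICSORT_remainder_weak_bounds_1}--\eqref{eq:QUICSORT_remainder_weak_bounds_3} then follow from the $\L_2$ bounds by Jensen's inequality, $\|\E[R]\|_2\leq\E[\|R\|_2]\leq\|R\|_{\L_2}$, exactly as in the proof of Theorem~\ref{thm:ULD_remainder_bounds}.

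\textbf{Main obstacle.} The hard part is not conceptual but combinatorial: $R^{\m(3)}_V$ contains on the order of fifteen summands, and an $O(h_n^3)$ slip in any one bracket would break the claimed $h_n^4$ rate, so the Gauss--Legendre cancellations must be verified term by term, while simultaneously keeping every dimensional power pinned down so as to obtain $d^{1.5}$ rather than a spurious $d^2$.
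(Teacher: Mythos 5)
Your proposal is correct and follows essentially the same strategy as the paper's proof: bound $L^{\m(\pm)}_n$ and $P_n$, then the integral remainders $I^{\m(\pm)}_k$, $\widetilde I_k$ via the H\"older-type estimates and the global $\L_p$ moments from Appendix~\ref{sec:appendix_A}, bound each $\phi$-function bracket via Taylor's theorem (the paper uses the Lagrange form), assemble by the triangle inequality, and deduce the mean bounds by Jensen.  One small clarification: the Gauss--Legendre cancellations you highlight as the ``main obstacle'' are already encoded in the form of the remainder brackets given in Theorem~\ref{thm:QUICSORT_Taylor_expansion} --- each bracket is a smooth $\phi$-combination minus its own degree-$(k-1)$ Taylor polynomial, so once that expansion is taken as given, the $O(h_n^k)$ size at this stage follows directly from the Lagrange remainder without any further term-by-term cancellation argument; the quadrature identities do the work, but in the derivation of the expansion rather than in the bounding.
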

\begin{proof}
	We can bound the integrals given in \eqref{eq:QUICSORT_expansion_integrals} using the Cauchy-Schwarz and Minkowski inequalities along with assumptions \eqref{eq:assumption_a1}, \eqref{eq:assumption_a2}, \eqref{eq:assumption_a3} and \eqref{eq:assumption_a4}.
	\begin{align*}
			\big\| I^{\m (\pm)}_1 \big\|_{\L_2} &\leq M_1 \m \big\| L^{\m (\pm)}_n \big\|_{\L_2} \\[3pt]
										&\leq M_1 \m \vert \phi_1 (\lambda_\pm) \vert \| V_n \|_{\L_2} + \sigma M_1 \m \vert \phi_1 (\lambda_\pm) \vert \| H_n + 6K_n \|_{\L_2} \\
										&\hspace{7.5mm} + \sigma M_1 \m \Big\vert \frac{1}{h_n} \phi_2 (\lambda_{\pm}) \Big\vert \| W_n - 12 K_n \|_{\L_2}\m, \\[3pt]
			\big\| I^{\m (\pm)}_2 \big\|_{\L_2} &\leq M_2 \m \big\| L^{\m (\pm)}_n \big\|^2_{\L_4} \\[3pt]
									&\leq M_2 \m \vert \phi_1 (\lambda_\pm) \vert^2 \| V_n \|^2_{\L_4} + \sigma^2 M_2 \m \vert \phi_1 (\lambda_\pm) \vert^2 \| H_n + 6K_n \|^2_{\L_4} \\
									&\hspace{7.5mm} + \sigma^2  M_2 \m\Big\vert \frac{1}{h_n} \phi_2 (\lambda_{\pm}) \Big\vert^2 \| W_n - 12 K_n \|^2_{\L_4} \\
									&\hspace{7.5mm} + 2 \sigma M_2 \m \big\vert \phi_1 (\lambda_\pm) \big\vert^2 \| V_n \|_{\L_4} \| H_n + 6K_n \|_{\L_4} \\
									&\hspace{7.5mm} + 2 \sigma M_2 \m \Big\vert \frac{1}{h_n} \phi_1 (\lambda_\pm) \phi_2 (\lambda_{\pm}) \Big\vert \| V_n \|_{\L_4} \| W_n - 12 K_n \|_{\L_4} \\
									&\hspace{7.5mm} + 2 \sigma^2 M_2 \m \Big\vert \frac{1}{h_n} \phi_1 (\lambda_\pm) \phi_2 (\lambda_{\pm}) \Big\vert \| H_n + 6K_n \|_{\L_4} \| W_n - 12 K_n \|_{\L_4}, \\[3pt]
			\big\| I^{\m (\pm)}_3 \big\|_{\L_2} &\leq M_3 \m \big\| L^{\m (\pm)}_n \big\|^3_{\L_6} \\[3pt]
									&\leq M_3 \m \big\vert \phi_1 (\lambda_\pm) \big\vert^3 \| V_n \|^3_{\L_6} + \sigma^3 M_3 \m \vert \phi_1 (\lambda_\pm) \vert^3 \| H_n + 6K_n \|^3_{\L_6} \\
									&\hspace{7.5mm} + \sigma^3 M_3 \m \Big\vert \frac{1}{h_n} \phi_2 (\lambda_{\pm}) \Big\vert^3 \| W_n - 12 K_n \|^3_{\L_6} \\
									&\hspace{7.5mm} + 3 \sigma M_3 \m \vert \phi_1 (\lambda_\pm) \vert^3 \| V_n \|^2_{\L_6} \| H_n + 6K_n \|_{\L_6} \\[3pt]
									&\hspace{7.5mm} + 3 \sigma^2 M_3 \m \vert  \phi_1 (\lambda_\pm) \vert^3 \| V_n \|_{\L_6} \| H_n + 6K_n \|^2_{\L_6} \\
									&\hspace{7.5mm} + 3 \sigma M_3 \m \Big\vert \frac{1}{h_n} \phi^2_1 (\lambda_\pm) \phi_2 (\lambda_{\pm}) \Big\vert  \| V_n \|^2_{\L_6} \| W_n - 12 K_n \|_{\L_6} \\
									&\hspace{7.5mm} + 3 \sigma^3 M_3 \m \Big\vert \frac{1}{h_n} \phi^2_1 (\lambda_\pm) \phi_2 (\lambda_{\pm}) \Big\vert \| H_n + 6K_n \|^2_{\L_6} \| W_n - 12 K_n \|_{\L_6} \\
									&\hspace{7.5mm} + 3 \sigma^2 M_3 \m \Big\vert \frac{1}{h^2_n}  \phi_1 (\lambda_\pm) \phi^2_2 (\lambda_{\pm}) \Big\vert  \| V_n \|_{\L_6} \| W_n - 12 K_n \|^2_{\L_6} \\
									&\hspace{7.5mm} + 3 \sigma^3 M_3 \m \Big\vert \frac{1}{h^2_n}  \phi_1 (\lambda_\pm) \phi^2_2 (\lambda_{\pm}) \Big\vert  \| H_n + 6K_n \|_{\L_6} \| W_n - 12 K_n \|^2_{\L_6} \\
									&\hspace{7.5mm} + 6 \sigma^2 M_3 \m \Big\vert \frac{1}{h_n}  \phi^2_1 (\lambda_\pm) \phi_2 (\lambda_{\pm}) \Big\vert \| V_n \|_{\L_6}  \| H_n + 6K_n \|_{\L_6} \| W_n - 12 K_n \|_{\L_6}, \\[3pt]
			\big\| \widetilde{I}_1 \big\|_{\L_2} &\leq M_1 \m \| P_n \|_{\L_2} \\[3pt]
									&\leq u M_1\m h_n \Big\vert \phi_1 \Big(\frac{1}{3}\Big) \Big\vert \| \nabla f(X_n) \|_{\L_2} + u M^2_1\m h_n \Big\vert \phi_1 \Big(\frac{1}{3}\Big)  \phi_1 (\lambda_-) \Big\vert \| V_n \|_{\L_2} \\[1pt]
									&\hspace{7.5mm} + \sigma u M^2_1\m h_n \Big\vert \phi_1 \Big(\frac{1}{3}\Big)  \phi_1 (\lambda_-) \Big\vert \| H_n + 6K_n \|_{\L_2} \\[1pt]
									&\hspace{7.5mm} + \sigma u M^2_1\m \Big\vert \phi_1 \Big(\frac{1}{3}\Big) \phi_2 (\lambda_{-}) \Big\vert \| W_n - 12 K_n \|_{\L_2}\m, \nonumber \\[4pt]
			\| \widetilde{I}_2 \|_{\L_2} &\leq M_2 \m \| P_n \|^2_{\L_4} \\[3pt]
									&\leq u^2 M_2\m h^2_n \Big\vert \phi_1 \Big(\frac{1}{3}\Big) \Big\vert^2 \big\| \nabla f(X_n) \big\|^2_{\L_4} \\[1pt]
									&\hspace{7.5mm} + 2 u^2 M_1 \m M_2\m h^2_n \Big\vert \phi^2_1 \Big(\frac{1}{3}\Big) \phi_1 (\lambda_-) \Big\vert \| \nabla f(X_n) \|_{\L_4} \| V_n \|_{\L_4} \\[1pt]
									&\hspace{7.5mm} + 2 \sigma u^2 M_1 \m M_2\m h^2_n \Big\vert \phi^2_1 \Big(\frac{1}{3}\Big) \phi_1 (\lambda_-) \Big\vert \| \nabla f(X_n) \|_{\L_4} \| H_n + 6K_n \|_{\L_4} \\[1pt]
									&\hspace{7.5mm} + 2 \sigma u^2 M_1 \m M_2\m h_n \Big\vert \phi^2_1 \Big(\frac{1}{3}\Big) \phi_2 (\lambda_-) \Big\vert \| \nabla f(X_n) \|_{\L_4} \| W_n - 12K_n \|_{\L_4} \\[1pt]
									&\hspace{7.5mm} + u^2 M^2_1 \m M_2 \m h^2_n \Big\vert \phi^2_1 \Big(\frac{1}{3}\Big) \phi^2_1 (\lambda_-) \Big\vert \| V_n \|^2_{\L_4} \\[1pt]
									&\hspace{7.5mm} + \sigma^2 u^2 M^2_1 \m M_2 \m h^2_n \Big\vert \phi^2_1 \Big(\frac{1}{3}\Big) \phi^2_1 (\lambda_-) \Big\vert \| H_n + 6K_n \|^2_{\L_4} \\[1pt]
									&\hspace{7.5mm} + \sigma^2 u^2 M^2_1 \m M_2 \m  \Big\vert \phi^2_1 \Big(\frac{1}{3}\Big) \phi^2_2 (\lambda_{-}) \Big\vert \| W_n - 12 K_n \|^2_{\L_4} \\[1pt]
									&\hspace{7.5mm} + 2 \sigma u^2 M^2_1 \m M_2 \m h^2_n  \Big\vert \phi^2_1 \Big(\frac{1}{3}\Big) \phi^2_1 (\lambda_-) \Big\vert \| V_n \|_{\L_4} \| H_n + 6K_n \|_{\L_4} \\[1pt]
									&\hspace{7.5mm} + 2 \sigma u^2 M^2_1 \m M_2 \m h_n \Big\vert \phi^2_1 \Big(\frac{1}{3}\Big)  \phi_1 (\lambda_-) \phi_2 (\lambda_{-}) \Big\vert  \| V_n \|_{\L_4} \| W_n - 12 K_n \|_{\L_4} \\[1pt]
									&\hspace{7.5mm} + 2 \sigma^2 u^2 M^2_1 \m M_2 \m h_n \Big\vert \phi^2_1 \Big(\frac{1}{3}\Big) \phi_1 (\lambda_-) \phi_2 (\lambda_{-}) \Big\vert \| H_n + 6K_n \|_{\L_4} \| W_n - 12 K_n \|_{\L_4}\m, \\[4pt]
			\big\| \widetilde{I}_3 \big\|_{\L_2} &\leq M_2 \m \big\| L^{\m (+)}_n \big\|_{\L_4} \| P_n \|_{\L_4} \\[3pt]
									&\leq u M_2 \m h_n  \Big\vert \phi_1 \Big(\frac{1}{3}\Big)  \phi_1 (\lambda_+) \Big\vert \big\| \nabla f(X_n) \big\|_{\L_4} \| V_n \|_{\L_4} \\[1pt]
									&\hspace{7.5mm} +  \sigma u M_2 \m h_n  \Big\vert \phi_1 (\frac{1}{3}) \phi_1 (\lambda_+)\Big\vert \big\| \nabla f(X_n) \big\|_{\L_4}  \| H_n + 6K_n \|_{\L_4} \\[1pt]
									&\hspace{7.5mm} + \sigma u M_2 \m  \Big\vert \phi_1 \Big(\frac{1}{3}\Big) \phi_2 (\lambda_{+}) \Big\vert \big\| \nabla f(X_n) \big\|_{\L_4} \| W_n - 12 K_n \|_{\L_4} \\[1pt]
									&\hspace{7.5mm} + u M_1 \m M_2 \m h_n \vert \phi_1 \Big(\frac{1}{3}\Big) \phi_1 (\lambda_+) \phi_1 (\lambda_-) \vert \| V_n \|^2_{\L_4} \\[1pt]
									&\hspace{7.5mm} + \sigma^2 u M_1 \m M_2 \m h_n \vert \phi_1 \Big(\frac{1}{3}\Big) \phi_1 (\lambda_+) \phi_1 (\lambda_-) \vert \| H_n + 6K_n \|^2_{\L_4} \\[1pt]
									&\hspace{7.5mm} + \sigma^2 u M_1 \m M_2 \m \Big\vert \frac{1}{h_n} \phi_1 \Big(\frac{1}{3}\Big) \phi_2 (\lambda_+) \phi_2 (\lambda_-) \Big\vert \| W_n - 12 K_n \|^2_{\L_4} \\[1pt]
									&\hspace{7.5mm} + 2 \sigma u M_1 \m M_2 \m h_n \Big\vert \phi_1 \Big(\frac{1}{3}\Big) \phi_1 (\lambda_+) \phi_1 (\lambda_-) \Big\vert \| V_n \|_{\L_4} \| H_n + 6K_n \|_{\L_4} \\[2pt]
									&\hspace{7.5mm} + \sigma u M_1 \m M_2 \m \bigg( \Big\vert \phi_1 \Big(\frac{1}{3}\Big) \phi_1 (\lambda_+) \phi_2 (\lambda_-) \Big\vert + \Big\vert \phi_1 \Big(\frac{1}{3}\Big) \phi_1 (\lambda_-) \phi_2 (\lambda_+) \Big\vert \bigg) \\[3pt]
									&\hspace{7.5mm} \mm \times \Big[\| V_n \|_{\L_4} \| W_n - 12K_n \|_{\L_4} +  \sigma \| H_n + 6K_n \|_{\L_4} \| W_n - 12K_n \|_{\L_4} \Big]\m. 
	\end{align*}
We can derive the remainder bounds \eqref{eq:QUICSORT_remainder_strong_bounds_1}, \eqref{eq:QUICSORT_remainder_strong_bounds_2} and \eqref{eq:QUICSORT_remainder_strong_bounds_3} for the position variable using the above inequalities along with equations \eqref{eq:nabla_f_bounds}, \eqref{eq:v_bounds}  and \eqref{eq:levy_area_bounds}. In addition, we will use Theorem \ref{thm:phi_function_bounds} and Taylor expand the $\phi_i$ functions using the Lagrange form of the remainder.

\begin{align*}
		\big\| R^{\m (1)}_X \big\|_{\L_2} &\leq \frac{1}{2} \gamma h^2_n \| V_n \|_{\L_2} + \frac{1}{2} \gamma \sigma h^2_n \| H_n + 6K_n  \|_{\L_2} + \frac{1}{6} \gamma \sigma h^2_n \| W_n - 12 K_n  \|_{\L_2}  \\[1pt]
								&\mmm + \frac{1}{2} u h^2_n \big\| \nabla f(X_n) \big\|_{\L_2} + \frac{1}{2} u h_n \Big( \vert \phi_1 (\lambda_+) \big\vert \| I^{\m (-)}_1 \big\|_{\L_2} + \vert \phi_1 (\lambda_-) \vert \big\| I^{\m (+)}_1 \big\|_{\L_2}\Big)  \\[2pt]
								&\mmm + \frac{1}{2} u h_n \vert \phi_1 (\lambda_-) \vert \big\| \widetilde{I}_1 \big\|_{\L_2}\\[3pt]
								&\leq \bigg[ \frac{1}{2} \gamma \sqrt{u} + \frac{1}{2} u \sqrt{ M_1} + \frac{2 + \sqrt{3} + \sqrt{5}}{12} \gamma \sigma \sqrt{h_n} + \frac{1}{6} u^{1.5} \m M_1 \m h_n \\
								&\mmm + \frac{15 + 10\sqrt{3} + 9\sqrt{5}}{360} \sigma u M_1 \m h^{1.5}_n + \frac{3 - \sqrt{3}}{36} u^2 M^{1.5}_1 \m h^2_n \\[3pt]
								&\mmm + \frac{2 - \sqrt{3}}{36} u^{2.5} M^2_1 \m h^3_n + \frac{15 - 5\sqrt{3} + 21 \sqrt{5} - 11\sqrt{15}}{2160} \sigma u^2 M^2_1 \m h^{3.5}_n \bigg] \sqrt{d} \m h^2_n \\[3pt]
								&\leq \Big[ \gamma \sqrt{u} + \gamma \sigma + u\sqrt{M_1} + u^{1.5} M_1 + \sigma u M_1 + u^2 M^{1.5}_1 + u^{2.5} M^2_1 + \sigma u^2 M^2_1 \Big] \sqrt{d} h^2_n\m, \\[7pt]
		\big\| R^{\m (2)}_X \big\|_{\L_2} &\leq \frac{1}{6} \gamma^2 h^3_n \| V_n \|_{\L_2} + \frac{1}{6} \gamma^2 \sigma h^3_n \| H_n + 6K_n  \|_{\L_2} + \frac{1}{24} \gamma^2 \sigma h^3_n \| W_n - 12 K_n  \|_{\L_2}  \\[1pt]
								&\mmm + \frac{1}{6} \gamma u h^3_n \big\| \nabla f(X_n) \big\|_{\L_2} + \frac{1}{2} u h_n \Big( \vert \phi_1 (\lambda_+) \vert \big\| I^{\m (-)}_1 \big\|_{\L_2} + \vert \phi_1 (\lambda_-) \vert \big\| I^{\m (+)}_1 \big\|_{\L_2} \Big)  \\[2pt]
								&\mmm + \frac{1}{2} u h_n \vert \phi_1 (\lambda_-) \vert \big\| \widetilde{I}_1 \big\|_{\L_2} \\[3pt]
								&\leq \bigg[ \frac{1}{6} \gamma^2 \sqrt{u} + \frac{1}{6} \gamma u \sqrt{M_1} + \frac{1}{6} u^{1.5} M_1 +  \frac{15 + 10\sqrt{3} + 9\sqrt{5}}{360} \gamma^2 \sigma \sqrt{h_n}  \\
								&\mmm + \frac{15 + 5 \sqrt{3} + 6 \sqrt{5}}{180} \sigma u M_1 \sqrt{h_n} + \frac{3 - \sqrt{3}}{36} u^{2} M^{1.5}_1 h_n \\[2pt]
								&\mmm + \frac{2 - \sqrt{3}}{36} u^{2.5} M^2_1 h^2_n + \frac{15 - 5\sqrt{3} + 21\sqrt{5} - 11\sqrt{15}}{2160} \sigma u^2 M^2_1 h^{2.5}_n \bigg] \sqrt{d} \m h^3_n \\[3pt]
								&\leq \Big[ \gamma^2 \sqrt{u} + \gamma^2 \sigma + \gamma u \sqrt{M_1} + u^{1.5} M_1 + \sigma u M_1 + u^{2} M^{1.5}_1 + u^{2.5} M^2_1 + \sigma u^2 M^2_1 \Big] \sqrt{d} \m h^3_n\m, \\[7pt]
		\big\| R^{\m (3)}_X \big\|_{\L_2} &\leq  \frac{1}{24} \gamma^3 h^4_n \| V_n \|_{\L_2} + \frac{1}{24} \gamma^3 \sigma h^4_n \| H_n + 6K_n  \|_{\L_2} + \frac{1}{120} \gamma^3 \sigma h^4_n \| W_n - 12 K_n  \|_{\L_2} \\[2pt]
								&\mmm + \frac{1}{24} \gamma^2 u h^4_n \big\| \nabla f(X_n) \big\|_{\L_2} + \frac{1}{12} \gamma u h^4_n \big\| \nabla^2 f(X_n) V_n  \big\|_{\L_2} \\[2pt]
								&\mmm +  \frac{1}{12} \gamma \sigma u h^4_n \big\| \nabla^2 f(X_n) ( H_n + 6K_n) \big\|_{\L_2} \\[2pt]
								&\mmm + \frac{7}{432} \gamma \sigma u h^4_n \big\| \nabla^2 f(X_n)( W_n - 12K_n )\big\|_{\L_2} \\[2pt]
								&\mmm  + \frac{1}{2} u h_n \Big( \vert \phi_1 (\lambda_+) \vert \big\| I^{\m (-)}_2 \big\|_{\L_2} + \vert \phi_1 (\lambda_-) \vert \big\| I^{\m (+)}_2 \big\|_{\L_2} \Big) + \frac{1}{2} u h_n \vert \phi_1 (\lambda_-) \big\| \widetilde{I}_1 \big\|_{\L_2} \\[3pt]
								&\leq \bigg[ \frac{1}{24} \gamma^3 \sqrt{u} + \frac{1}{24} \gamma^2 u \sqrt{M_1} + \frac{1}{12} \gamma u^{1.5} M_1 + \frac{\sqrt{3}}{24} u^2 M_2 \\[1pt]
								&\mmm + \frac{30 + 25\sqrt{3} + 21 \sqrt{5}}{3600} \gamma^3 \sigma \sqrt{h_n} + \frac{7 + 6\sqrt{3} + 5 \sqrt{5}}{432} \gamma u M_1 \sqrt{h_n} \\
								&\mmm + \frac{3 + 2\sqrt{3} + \sqrt{15}}{72} \sigma u^{1.5} M_2 \sqrt{h_n} + \frac{20 + 21\sqrt{3} + 10 \sqrt{5} + 7\sqrt{15}}{1440} \sigma^2 u M_2 \\[2pt]
								&\mmm + \frac{3 - \sqrt{3}}{36} u^{2} M^{1.5}_1 + \frac{2 - \sqrt{3}}{36} u^{2.5} M^2_1 h_n \\[2pt]
								&\mmm + \frac{15 - 5\sqrt{3} + 21\sqrt{5} - 11\sqrt{15}}{2160} \sigma u^2 M^2_1 h^{1.5}_n  \bigg] d h^4_n \\[3pt]
								&\leq \Big[ \gamma^3 \sqrt{u} + \gamma^3 \sigma + \gamma^2 u \sqrt{M_1} + \gamma u^{1.5} M_1 + \gamma u M_1 + u^2 M_2 + \sigma u^{1.5} M_2 \\
								&\mmm + \sigma^2 u M_2 + u^{2} M^{1.5}_1 + u^{2.5} M^2_1 + \sigma u^2 M^2_1 \Big] d h^4_n\m.
\end{align*}
Similarly for the momentum variable,
\begin{align*}
		\big\| R^{\m (1)}_V \big\|_{\L_2} &\leq \frac{1}{2} \gamma^2 h^2_n \| V_n \|_{\L_2} + \frac{1}{2} \gamma^2 \sigma h^2_n \| H_n + 6K_n  \|_{\L_2} + \frac{1}{6} \gamma^2 \sigma h^2_n \| W_n - 12 K_n  \|_{\L_2} \\
								&\mmm + \frac{1}{2} \gamma u h^2 \big\| \nabla f(X_n) \big\|_{\L_2} + \frac{1}{2} u h_n \Big( \vert \phi_0 (\lambda_+) \vert \big\| I^{\m (-)}_1 \big\|_{\L_2} + \vert \phi_0 (\lambda_-) \vert \big\| I^{\m (+)}_1 \big\|_{\L_2} \Big) \\
								&\mmm + \frac{1}{2} u h_n \vert \phi_0 (\lambda_-) \vert \big\| \widetilde{I}_1 \big\|_{\L_2} \\
								&\leq \bigg[ \frac{1}{2} \gamma^2 \sqrt{u} + \frac{1}{2} \gamma u \sqrt{M_1} + \frac{1}{2} u^{1.5} M_1 + \frac{2 + \sqrt{3} + \sqrt{5}}{12} \gamma^2 \sigma \sqrt{h_n} \\
								&\mmm + \frac{2 + \sqrt{3} + \sqrt{5}}{12} \sigma u M_1 \m \sqrt{h_n} + \frac{1}{6} u^{2} M^{1.5}_1 \m h_n + \frac{3 - \sqrt{3}}{36} u^{2.5} M^2_1 \m h^2_n \\[2pt]
								&\mmm + \frac{5 + 5\sqrt{5} - 2 \sqrt{15}}{360} \sigma u^2 M^2_1 \m h^{2.5}_n \bigg] \sqrt{d} \m h^2_n \\
								&\leq \Big[ \gamma^2 \sqrt{u} + \gamma^2 \sigma + \gamma u \sqrt{M_1} + u^{1.5} M_1 + \sigma u M_1 + u^{2} M^{1.5}_1 + u^{2.5} M^2_1 + \sigma u^2 M^2_1 \Big]  \sqrt{d} \m h^2_n\m, \\[6pt]
		\big\| R^{\m (2)}_V \big\|_{\L_2} &\leq \frac{1}{6} \gamma^3 h^3_n \| V_n \|_{\L_2} + \frac{1}{6} \gamma^3 \sigma h^3_n \| H_n + 6K_n \|_{\L_2} + \frac{1}{24} \gamma^3 \sigma h^3_n \| W_n - 12 K_n \|_{\L_2} \\[1pt]
								&\mmm + \frac{1}{6} \gamma^2 u h^3_n \big\| \nabla f(X_n) \big\|_{\L_2} + \frac{1}{3} \gamma u h^3_n \big\| \nabla^2 f(X_n) V_n \big\|_{\L_2} \\[2pt]
								&\mmm + \frac{1}{3} \gamma \sigma u h^3_n \big\| \nabla^2 f(X_n) ( H_n + 6K_n) \big\|_{\L_2}+ \frac{1}{12} \gamma \sigma u h^3_n \big\| \nabla^2 f(X_n) ( W_n - 12 K_n ) \big\|_{\L_2} \\[2pt]
								&\mmm + \frac{1}{2} u h_n \Big( \vert \phi_0 (\lambda_+) \vert \big\| I^{\m (-)}_2 \big\|_{\L_2} + \vert \phi_0 (\lambda_-) \vert \big\| I^{\m (+)}_2 \big\|_{\L_2} \Big) + \frac{1}{2} u h_n \vert \phi_0 (\lambda_-) \vert \big\| \widetilde{I}_1 \big\|_{\L_2} \\[2pt]
								&\leq \bigg[ \frac{1}{6} \gamma^3 \sqrt{u} + \frac{1}{6} \gamma^2 u \sqrt{M_1} + \frac{1}{3} \gamma u^{1.5} M_1 + \frac{\sqrt{3}}{3} u^2 M_2 + \frac{1}{6} u^{2} M^{1.5}_1  \\[1pt]
								&\mmm + \frac{15 + 10\sqrt{3} + 9\sqrt{5}}{360} \gamma^3 \sigma \sqrt{h_n} + \frac{15 + 10\sqrt{3} + 9\sqrt{5}}{180} \gamma \sigma u M_1 \sqrt{h_n} \\[2pt]
								&\mmm + \frac{20 + 15\sqrt{3} + 7\sqrt{15}}{60} \sigma u^{1.5} M_2 \m \sqrt{h_n} + \frac{3 - \sqrt{3}}{36} u^{2.5} M^2_1 h_n \\[2pt]
								&\mmm + \frac{45 + 97\sqrt{3} + 21\sqrt{5} + 33\sqrt{15}}{360} \sigma^2 u M_2 h_n \\[2pt]
								&\mmm + \frac{5 + 5\sqrt{5} - 2\sqrt{15}}{360} \sigma u^2 M^2_1 \m h^{1.5}_n \bigg] d h^3_n\\[2pt]
								&\leq \Big[ \gamma^3 \sqrt{u} + \gamma^3 \sigma + \gamma^2 u \sqrt{M_1} + \gamma u^{1.5} M_1 + \gamma \sigma u M_1 + u^2 M_2 + \sigma u^{1.5} M_2 \\
								&\mmm + \sigma^2 u M_2 + u^{2} M^{1.5}_1 + u^{2.5} M^2_1 + \sigma u^2 M^2_1\m\Big] d h^3_n\m,\\[5pt]
	\big\| R^{\m (3)}_V \big\|_{\L_2} &\leq \frac{1}{24} \gamma^4 h^4_n \| V_n \|_{\L_2} + \frac{1}{24} \gamma^4 \sigma h^4_n \| H_n + 6K_n \|_{\L_2} + \frac{1}{120} \gamma^4 \sigma h^4_n \| W_n - 12 K_n \|_{\L_2} \\[1pt]
								&\mmm + \frac{1}{24} \gamma^3 u h^4_n \big\| \nabla f(X_n) \big\|_{\L_2} + \frac{1}{8} \gamma^2 u h^4_n \big\| \nabla^2 f(X_n) V_n \big\|_{\L_2} \\[2pt]
								&\mmm + \frac{1}{8} \gamma^2 \sigma u h^4_n \big\| \nabla^2 f(X_n) (H_n + 6K_n ) \big\|_{\L_2} \\[2pt]
								&\mmm + \frac{7}{288} \gamma^2 \sigma u h^4_n \big\| \nabla^2 f(X_n) ( W_n - 12 K_n ) \big\|_{\L_2} \\[2pt]
								&\mmm + \frac{4 - \sqrt{3}}{36} \gamma u^2 h^5_n \big\| \nabla^2 f(X_n) \nabla f(X_n) \big\|_{\L_2} + \frac{1}{6} \gamma u h^4_n \big\| \nabla^3 f(X_n) V^{\otimes 2}_n  \big\|_{\L_2} \\[2pt]
								&\mmm + \frac{1}{3} \gamma \sigma u h^4_n \big\| \nabla^3 f(X_n) (V_n,  H_n + 6K_n)  \big\|_{\L_2} \\[2pt]
								&\mmm + \frac{47}{432} \gamma \sigma u h^4_n \big\| \nabla^3 f(X_n) (V_n, W_n - 12K_n)  \big\|_{\L_2} \\[2pt]
								&\mmm + \frac{1}{6} \sigma^2 u h^3_n \big\| \nabla^3 f(X_n) (H_n + 6K_n)^{\otimes 2} \big\|_{\L_2} \\[2pt]
								&\mmm + \frac{7}{288} \sigma^2 u h^3_n \big\| \nabla^3 f(X_n) (W_n - 12K_n)^{\otimes 2} \big\|_{\L_2} \\[2pt]
								&\mmm + \frac{1}{8} \sigma^2 u h^3_n \big\| \nabla^3 f(X_n) (H_n + 6K_n, W_n - 12K_n) \big\|_{\L_2} \\[2pt]
								&\mmm + \frac{1}{2} u^2 h^2_n \Big\vert \phi_0 (\lambda_-) \phi_1 \Big(\frac{1}{3}\Big) \Big\vert \big\| \nabla^2 f(X_n) I^{\m (-)}_1 \big\|_{\L_2} \\[2pt]
								&\mmm + \frac{1}{2} u h_n \Big[ \vert \phi_0 (\lambda_+) \vert \big\| I^{\m (-)}_3 \big\|_{\L_2} + \vert \phi_0 (\lambda_-) \vert \big\| I^{\m (+)}_3 \big\|_{\L_2} \Big] \\[2pt]
								&\mmm + \frac{1}{2} u h_n \vert \phi_0 (\lambda_-) \vert \big\| \widetilde{I}_2 \big\|_{\L_2} + \frac{1}{2} u h_n \vert \phi_0 (\lambda_-) \vert \big\| \widetilde{I}_3 \big\|_{\L_2} \\[3pt]
								&\leq \bigg[ \frac{1}{24} \gamma^4 \sqrt{u} + \frac{1}{24} \gamma^3 u \sqrt{M_1} + \frac{1}{8} \gamma^2 u^{1.5} M_1 + \frac{\sqrt{3}}{6} \gamma u^2 M_2 \\[1pt]
								&\mmm + \frac{45 + 46\sqrt{3} + 21\sqrt{5} + 16\sqrt{15}}{720} \sigma^2 u M_2 + \frac{3 - \sqrt{3}}{36} u^{2.5} M_1 + \frac{1}{36} u^3 M_1 \m M_2 \\[1pt]
								&\mmm + \frac{\sqrt{3} + 1}{12} u^{2.5} \sqrt{M_1} \m M_2 + \frac{\sqrt{15}}{24} u^{2.5} M_3 + \frac{30 + 25\sqrt{3} + 21\sqrt{5}}{3600} \gamma^4 \sigma \sqrt{h_n} \\[2pt]
								&\mmm + \frac{7 + 6\sqrt{3} + 5 \sqrt{5}}{288} \gamma^2 \sigma u M_1 \m \sqrt{h_n} + \frac{360 + 235\sqrt{3} + 119\sqrt{5}}{2160} \gamma \sigma u^{1.5} M_2 \m \sqrt{h_n} \\[3pt]
								&\mmm + \frac{5 + 5\sqrt{3} - 2\sqrt{15}}{360} \sigma u^2 M_1 \m \sqrt{h_n} + \frac{16\sqrt{3} + 9 \sqrt{5} + 7\sqrt{15}}{144} \sigma u^2 M_3 \m \sqrt{h_n}  \\[2pt]
								&\mmm + \frac{30 + 15\sqrt{3} + 6\sqrt{5} + 5\sqrt{15}}{360} \sigma u^{2} \sqrt{M_1} \m M_2 \m \sqrt{h_n} + \frac{4 - \sqrt{3}}{36} \gamma u^{2} M^{1.5}_1 \m h_n \\[1pt]
								&\mmm + \frac{\sqrt{3} - 1}{36} u^{3.5} M^{1.5}_1 \m M_2 \m h_n + \frac{\sqrt{3}}{18} u^{3} M_1 \m M_2 \m h_n \\[2pt]
								&\mmm + \frac{160 + 125\sqrt{3} + 70\sqrt{5} + 71 \sqrt{15}}{720} \sigma^2 u^{1.5} M_3 \m h_n \\[3pt]
								&\mmm + \frac{5\sqrt{3} - 6\sqrt{5} + 5 \sqrt{15}}{1080} \sigma u^{3} M^{1.5}_1 \m M_2 \m h^{1.5}_n + \frac{10 + 5\sqrt{3} + 3\sqrt{15}}{360} \sigma u^{2.5} M_1 \m M_2 \m h^{1.5}_n \\[2pt]
								&\mmm + \frac{2\sqrt{3} - 3}{216} u^2 M^2_1 \m M_2 \m h^2_n + \frac{30 + 14\sqrt{3} + 9\sqrt{5} + 4\sqrt{15}}{2160} \sigma^2 u^2 M_1 \m M_2 \m h^2_n \\[3pt]
								&\mmm + \frac{-5 + 5\sqrt{3} - 11\sqrt{5} + 7\sqrt{15}}{2160} \sigma u^{1.5} M^2_1 \m M_2 \m h^{2.5}_n \\[3pt]
								&\mmm + \frac{1125 + 835\sqrt{3} + 549\sqrt{5} + 371\sqrt{15}}{25920} \sigma^3 u M_3 \m h^{2.5}_n \\[3pt]
								&\mmm + \frac{-30 + 21\sqrt{3} - 6\sqrt{5} + 5\sqrt{15}}{12960} \sigma^2 u^3 M^2_1 \m M_2 h^3_n \bigg] d^{1.5} h^4_n \\[2pt]
								&\leq \Big[ \m\gamma^4 \sqrt{u} + \gamma^4 \sigma + \gamma^3 u \sqrt{M_1} + \gamma^2 u^{1.5} M_1 + \gamma^2 \sigma u M_1 + u^{2.5} M_1 + \sigma u^2 M_1  \\
								&\mmm + \gamma u^2 M_2 + \gamma \sigma u^{1.5} M_2 + \sigma^2 u M_2 + u^{2.5} M_3 + \sigma u^2 M_3 + \sigma^2 u^{1.5} M_3 + \sigma^3 u M_3 \\[1pt]
								&\mmm + \gamma u^{2} M^{1.5}_1 + u^{2.5} \sqrt{M_1} \m M_2 + \sigma u^{2} \sqrt{M_1} \m M_2 + u^3 M_1 \m M_2 + \sigma u^{2.5} M_1 \m M_2 \\[1pt]
								&\mmm + \sigma^2 u^2 M_1 \m M_2 + u^{3.5} M^{1.5}_1 \m M_2 + \sigma u^{3} M^{1.5}_1 \m M_2 + u^2 M^2_1 M_2 + \sigma u^{1.5} M^2_1 \m M_2 \\
								&\mmm + \sigma^2 u^3 M^2_1 \m M_2\m \Big] d^{1.5} h^4_n\m.
\end{align*}
Similar to the ULD remainder bounds (Theorem \ref{thm:ULD_remainder_bounds}), we can prove \eqref{eq:QUICSORT_remainder_weak_bounds_1}, \eqref{eq:QUICSORT_remainder_weak_bounds_2} and \eqref{eq:QUICSORT_remainder_weak_bounds_3} using
	\begin{align*}
		\big\| \E [ R ] \big\|_2 & \leq \E\big[\|R\|_2\big] \leq \E\big[\|R\|_2^2\big]^\frac{1}{2} = \big\| R \big\|_{\L_2}\m,
	\end{align*}
	by Jensen's inequality. 
\end{proof}

We are now in a position to obtain local error bounds for QUICSORT (Theorem \ref{thm:local_error_bounds}). 

\begin{theorem}[Local error bounds for the QUICSORT method]
Let $\{(X_n, V_n) \}_{n\m\geq\m 0}$ denote the QUICSORT method \eqref{eq:QUICSORT} and $\{(x_t, v_t ) \}_{t\m\geq\m 0}$ be the underdamped Langevin diffusion \eqref{eq:ULD} with the same underlying Brownian motion.  Suppose the assumptions \eqref{eq:assumption_a1} and \eqref{eq:assumption_a2} hold and assume further that $(x_0, v_0) \sim \pi$, the unique stationary measure of the process. Suppose that at time $t_n\m$, we have $(X_n\m, V_n) = (x_{t_n}\m, v_{t_n})$ and let $0 < h_n \leq 1$ be fixed. Then, for any $n\geq 0$, the local strong error for the QUICSORT method at time $t_{n+1}$ has the bound\label{thm:local_error_bounds_app}
\begin{align}
	\label{eq:position_momentum_strong_error_bounds_1}
	\big\| X_{n+1} - x_{t_{n+1}} \big\|_{\L_2} \leq C^{\m (1)}_{s, \, x} \sqrt{d} \m h^{2}_n\m,\mmm \big\| V_{n+1} - v_{t_{n+1}} \big\|_{\L_2} \leq C^{\m (1)}_{s, \, v} \sqrt{d} \m h^{2}_n\m,
\end{align}
and the local weak error is bounded by
\begin{align}
	\label{eq:position_momentum_weak_error_bounds_1}
	\big\| \E \big[ X_{n+1} - x_{t_{n+1}} \big] \big\|_2 \leq C^{\m (1)}_{w, \, x} \sqrt{d} \m h^{2}_n, \mmm \big\| \E \big[ V_{n+1} - v_{t_{n+1}} \big] \big\|_2 \leq C^{\m (1)}_{w, \, v}\m \sqrt{d} \m h^{2}_n\m,
\end{align}
where $C^{\m (1)}_{s, \, x}\m, C^{\m (1)}_{s, \, v}\m, C^{\m (1)}_{w, \, x}\m, C^{\m (1)}_{w, \, v} > 0$ are constants depending on $\gamma$, $u$, $m$ and $M_1\m$.\medbreak

\noindent
Under the additional assumption \eqref{eq:assumption_a3}, the local strong error at time $t_{n+1}$ has the bound
\begin{align}
	\label{eq:position_momentum_strong_error_bounds_2}
	\big\| X_{n+1} - x_{t_{n+1}} \big\|_{\L_2} \leq C^{\m (2)}_{s, \, x} \sqrt{d} \m h^{3}_n\m, \mmm \big\| V_{n+1} - v_{t_{n+1}} \big\|_{\L_2} \leq C^{\m (2)}_{s, \, v}\m d h^{3}_n\m,
\end{align}
and the local weak error is bounded by
\begin{align}
	\label{eq:position_momentum_weak_error_bounds_2}
	\big\| \E \big[ X_{n+1} - x_{t_{n+1}} \big] \big\|_2 \leq C^{\m (2)}_{w, \, x} \sqrt{d} \m h^{3}_n\m, \mmm \big\| \E \big[ V_{n+1} - v_{t_{n+1}} \big] \big\|_2 \leq C^{\m (2)}_{w, \, v}\m d h^{3}_n\m,
\end{align}
where $C^{\m (2)}_{s, \, x}\m, C^{\m (2)}_{s, \, v}\m, C^{\m (2)}_{w, \, x}\m, C^{\m (2)}_{w, \, v} > 0$ are constants depending on $\gamma$, $u$, $m$, $M_1$ and $M_2\m$. Finally, under the additional assumption \eqref{eq:assumption_a4}, the local strong error at time $t_{n+1}$ has the bound
\begin{align}
	\label{eq:position_momentum_strong_error_bounds_3}
	\big\| X_{n+1} - x_{t_{n+1}} \big\|_{\L_2} \leq C^{\m (3)}_{s, \, x} \m d h^{3.5}_n\m, \mmm \big\| V_{n+1} - v_{t_{n+1}} \big\|_{\L_2} \leq C^{\m (3)}_{s, \, v} \m d^{1.5} \m h^{3.5}_n\m,
\end{align}
and the local weak error is bounded by
\begin{align}
	\label{eq:position_momentum_weak_error_bounds_3}
	\big\| \E \big[ X_{n+1} - x_{t_{n+1}} \big] \big\|_2 \leq C^{\m (3)}_{w, \, x} \m d h^{4}_n\m, \mmm \big\| \E \big[ V_{n+1} - v_{t_{n+1}} \big] \big\|_2 \leq C^{\m (3)}_{w, \, v} \m d^{1.5} \m h^{4}_n\m,
\end{align}
where $C^{\m (3)}_{s, \, x}\m, C^{\m (3)}_{s, \, v}\m, C^{\m (3)}_{w, \, x}\m, C^{\m (3)}_{w, \, v} > 0$ are constants depending on $\gamma$, $u$, $m$, $M_1$, $M_2$ and $M_3\m$.
\end{theorem}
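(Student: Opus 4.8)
The plan is to derive each pair of bounds by subtracting the matching-order stochastic Taylor expansion of underdamped Langevin dynamics (Theorem~\ref{thm:ULD_taylor_expansion}) from that of the QUICSORT scheme (Theorem~\ref{thm:QUICSORT_Taylor_expansion}) and then invoking the coupling hypothesis $(X_n, V_n) = (x_{t_n}, v_{t_n})$, under which the explicit polynomial-in-$h_n$ terms and every lower-order iterated-Brownian-integral term common to both expansions cancel. For the first-order bounds this leaves exactly $X_{n+1} - x_{t_{n+1}} = R^{(1)}_X - R^{(1)}_x$ and $V_{n+1} - v_{t_{n+1}} = R^{(1)}_V - R^{(1)}_v$, so \eqref{eq:position_momentum_strong_error_bounds_1} follows from the triangle inequality and the remainder bounds of Theorems~\ref{thm:ULD_remainder_bounds} and~\ref{thm:QUICSORT_remainder_bounds}, while \eqref{eq:position_momentum_weak_error_bounds_1} follows using $\|\E[R]\|_2 \le \|R\|_{\L_2}$; the second-order bounds \eqref{eq:position_momentum_strong_error_bounds_2}--\eqref{eq:position_momentum_weak_error_bounds_2} are obtained identically from the order-two expansions, where the $\int W$ and $\int\int W$ terms again appear in the same form on both sides.

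For the third-order bounds I would subtract \eqref{eq:ULD_position_expansion_3}/\eqref{eq:ULD_momentum_expansion_3} from \eqref{eq:QUICSORT_position_expansion_3}/\eqref{eq:QUICSORT_momentum_expansion_3}. After cancellation the surviving discrepancy in the position component is the single term $\sigma(\gamma^2 - u\nabla^2 f(x_{t_n}))$ acting on $\big[\tfrac{1}{24}h_n^3 W_n + \tfrac16 h_n^3 H_n + \tfrac12 h_n^3 K_n\big]$ versus the exact $\int\int\int W$, and in the momentum component there are in addition the $\nabla^3 f(x_{t_n})$-weighted iterated integrals of $W$ and $v$ versus $\nabla^3 f(X_n)\big(V_n, [\tfrac18 h_n^3 W_n + \tfrac13 h_n^3 H_n + \tfrac12 h_n^3 K_n]\big)$. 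For the strong bounds I would estimate each such object and its surrogate separately in $\L_2$, by Minkowski and Fubini together with the Gaussian moment bound of Theorem~\ref{thm:general_multivariate_bound}, the bounds \eqref{eq:v_bounds}, \eqref{eq:nabla_f_bounds}, \eqref{eq:levy_area_bounds} and the tensor estimate \eqref{eq:n3_f_x_v1_v2}; the plain triple-integral contributions come out $O(\sqrt{d}\,h_n^{3.5})$ and the $\nabla^3 f$-weighted ones $O(d\,h_n^{3.5})$. Adding the remainder bounds, which are $O(d\,h_n^4)$ for the position and $O(d^{1.5}h_n^4)$ for the momentum, and absorbing $\sqrt{d}\,h_n^{3.5}$, $d\,h_n^{3.5}$ and $d\,h_n^4$ into $d\,h_n^{3.5}$ (respectively $d^{1.5}h_n^{3.5}$) via $d\ge 1$ and $h_n\le 1$ yields \eqref{eq:position_momentum_strong_error_bounds_3}.

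For the third-order weak bounds I would condition on $\mathcal{F}_{t_n}$. Every discrepancy term in the position component, and all but one in the momentum component, is a centred Brownian functional on $[t_n, t_{n+1}]$ — hence mean zero and independent of $\mathcal{F}_{t_n}$ — multiplied by an $\mathcal{F}_{t_n}$-measurable coefficient, so it vanishes in expectation; the position weak error thus reduces to $\E[R^{(3)}_X - R^{(3)}_x]$, giving the position bound in \eqref{eq:position_momentum_weak_error_bounds_3}. The step I expect to be the main obstacle is the momentum weak bound, because the single exceptional term $-\sigma u\int\int\int \nabla^3 f(x_{t_n})\big(v_{r_2}, W_{t_n, r_3}\big)$ does \emph{not} have zero conditional mean: its integrand couples $v_{r_2}$ to the \emph{later} increment $W_{t_n, r_3}$. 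To handle it I would split $v_{r_2} = v_{t_n} + \delta_{r_2}$, with $\delta_{r_2}$ the drift-plus-martingale increment read off from \eqref{eq:ULD}, so that the $v_{t_n}$ part contributes zero conditional mean, and then use $\E\big[W_{t_n, r_2}\otimes W_{t_n, r_3}\,\big|\,\mathcal{F}_{t_n}\big] = (r_3 - t_n)\,I_d$ for $t_n \le r_3 \le r_2$ to identify the leading conditional mean of the $\delta_{r_2}$ part as $\sigma(r_3 - t_n)\sum_j \nabla^3 f(x_{t_n})(e_j, e_j, \cdot\,)$ plus strictly higher-order terms. Since $\big\|\sum_j \nabla^3 f(x_{t_n})(e_j, e_j)\big\|_{\L_2} \le d\,M_2$ by \eqref{eq:n3_f_x_v1_v2} applied to the standard basis vectors, the triple integration leaves an $O(d\,h_n^4)$ contribution, which together with $\|\E[R^{(3)}_V]\|_2 + \|\E[R^{(3)}_v]\|_2 = O(d^{1.5}h_n^4)$ from Theorems~\ref{thm:ULD_remainder_bounds}--\ref{thm:QUICSORT_remainder_bounds} gives the momentum bound in \eqref{eq:position_momentum_weak_error_bounds_3}.

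The remaining effort is routine: checking term by term that the polynomial and lower-order stochastic parts of the two Taylor expansions genuinely cancel under the coupling, and collecting the dimension-independent constants in terms of $\gamma$, $u$, $m$, $M_1$, $M_2$ and $M_3$.
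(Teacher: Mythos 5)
Your outline matches the paper's strategy at a high level: both subtract the matching-order stochastic Taylor expansions of ULD (Theorem~\ref{thm:ULD_taylor_expansion}) from those of QUICSORT (Theorem~\ref{thm:QUICSORT_Taylor_expansion}), use the coupling $(X_n,V_n)=(x_{t_n},v_{t_n})$ to kill the shared polynomial and low-order iterated-integral terms, write the surviving discrepancy in the triple integral via $W_n, H_n, K_n, M_n$ and \eqref{eq:levy_space_time_3}, then close with Theorems~\ref{thm:ULD_remainder_bounds} and~\ref{thm:QUICSORT_remainder_bounds}. The first- and second-order cases and all strong bounds proceed identically in both.

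Where you genuinely diverge — and, I think, improve on the written proof — is the third-order weak momentum bound. The paper's Appendix~\ref{sec:appendix_C} dispatches it in one line: ``the other stochastic integrals are centred,'' followed by $\big\|\E[V_{n+1}-v_{t_{n+1}}]\big\|_2 \le \|R^{(3)}_V\|_{\L_2}+\|R^{(3)}_v\|_{\L_2}$. But, exactly as you observe, the term $\sigma u \int\hspace{-1mm}\int\hspace{-1mm}\int \nabla^3 f(x_{t_n})\big(v_{r_2},W_{t_n,r_3}\big)\,dr_3\,dr_2\,dr_1$ is \emph{not} conditionally mean-zero given $\mathcal F_{t_n}$ — the martingale part of $v_{r_2}$ is correlated with the earlier increment $W_{t_n,r_3}$, so $\E\big[v_{r_2}\otimes W_{t_n,r_3}\mid\mathcal F_{t_n}\big]\approx\sigma(r_3-t_n)I_d$ is nonzero — nor does it have zero unconditional mean, since $\E_\pi\big[\sum_j\nabla^3 f(\cdot)(e_j,e_j)\big]=\E_\pi[\nabla\Delta f]$ need not vanish. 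Your fix of decomposing $v_{r_2}=v_{t_n}+\delta_{r_2}$, letting the $v_{t_n}$ part vanish in conditional mean, and identifying the leading nonzero contribution as $\sigma(r_3-t_n)\sum_j\nabla^3 f(x_{t_n})(e_j,e_j,\cdot)$ is the right move; the triple time integration supplies $h_n^4/24$, and $\big\|\sum_j\nabla^3 f(x_{t_n})(e_j,e_j)\big\|_{\L_2}\le d\,M_2$ by \eqref{eq:n3_f_x_v1_v2}, giving an $O(d\,h_n^4)$ contribution absorbed into the $O(d^{1.5}h_n^4)$ bound. So your proposal establishes the stated \eqref{eq:position_momentum_weak_error_bounds_3} rigorously where the paper's argument as written has a small gap; the conclusion is unaffected because the extra mean contribution is dominated by the remainder bound. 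One tiny caveat: $\delta_{r_2}$'s martingale part is $\sigma\int_{t_n}^{r_2}e^{-\gamma(r_2-s)}\,dW_s$, so the exact kernel is $\sigma\int_{t_n}^{r_3}e^{-\gamma(r_2-s)}\,ds$ rather than $\sigma(r_3-t_n)$; you already flag this as ``plus strictly higher-order terms,'' which is correct.
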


\begin{proof}
	We first note that the Taylor expansions of ULD and QUICSORT are identical for all terms that are $O(h^3)$ or larger. Then, under the assumptions \eqref{eq:assumption_a1} and \eqref{eq:assumption_a2}, we have
	\begin{align*}
			\big\| \E \big[ X_{n+1} - x_{t_{n+1}} \big] \big\|_2 = \big\| X_{n+1} - x_{t_{n+1}} \big\|_{\L_2} &\leq \big\| R^{\m (1)}_X \big\|_{\L_2} + \big\| R^{\m (1)}_x \big\|_{\L_2} \\[3pt]
														&\leq \Big[ C^{\m (1)}_X + C^{\m (1)}_x \Big] \sqrt{d} \m h^2_n\m, \\[5pt]
			\big\| \E \big[ V_{n+1} - v_{t_{n+1}} \big] \big\|_2 = \big\| V_{n+1} - v_{t_{n+1}} \big\|_{\L_2} &\leq \big\| R^{\m (1)}_V \big\|_{\L_2} + \big\| R^{\m (1)}_v \big\|_{\L_2} \\[3pt]
														&\leq \Big[ C^{\m (1)}_V + C^{\m (1)}_v \Big] \sqrt{d} \m h^2_n\m,
	\end{align*}
	and, if we further assume \eqref{eq:assumption_a3}, then
	\begin{align*}
			\big\| \E \big[ X_{n+1} - x_{t_{n+1}} \big] \big\|_2 = \big\| X_{n+1} - x_{t_{n+1}} \big\|_{\L_2} &\leq \big\| R^{\m (2)}_X \big\|_{\L_2} + \big\| R^{\m (2)}_x \big\|_{\L_2} \\[3pt]
														&\leq \Big[ C^{\m (2)}_X + C^{\m (2)}_x \Big] \sqrt{d} \m h^3_n\m, \\[5pt]
			\big\| \E \big[ V_{n+1} - v_{t_{n+1}} \big] \big\|_2 = \big\| V_{n+1} - v_{t_{n+1}} \big\|_{\L_2} &\leq \big\| R^{\m (2)}_V \big\|_{\L_2} + \big\| R^{\m (2)}_v \big\|_{\L_2} \\[3pt]
														&\leq \Big[ C^{\m (2)}_V + C^{\m (2)}_v \Big] d \m h^3_n\m.
	\end{align*}
	In the case where we additionally assume \eqref{eq:assumption_a4}, we note that the triple time integral of the Brownian motion is centred, and we can write it in terms of $W_n$, $H_n$, $K_n$, $M_n$ using $\eqref{eq:levy_space_time_3}$. Moreover, the other stochastic integrals are centred and we can bound the following terms,
	\begin{align*}
	\begin{split}
			&\bigg\| \sigma u \int^{t_{n+1}}_{t_n} \int^{r_1}_{t_n} \int^{r_2}_{t_n} \nabla^3 f(x_{t_n}) ( v_{r_2}, W_{t_n, r_3} ) \, dr_3 \, dr_2 \, dr_1 \bigg\|_{\L_2} \\
			&\mmm \leq \sigma u M_2  \int^{t_{n+1}}_{t_n} \int^{r_1}_{t_n} \int^{r_2}_{t_n} \| v_{r_2}  \|_{\L_4} \| W_{t_n, r_3}  \|_{\L_4}  \, dr_3 \, dr_2 \, dr_1 \\
			&\mmm \leq \sqrt{3} \m \sigma u^{1.5} M_2 \m d \int^{t_{n+1}}_{t_n} \int^{r_1}_{t_n} \int^{r_2}_{t_n} ( r_3 - t_n )^{\frac{1}{2}}  \, dr_3 \, dr_2 \, dr_1 \\
			&\mmm \leq \frac{8\sqrt{3}}{105} \sigma u^{1.5} M_2 \m d \m h^{3.5}_n,
	\end{split}
	\end{align*}
	and
	\begin{align*}
			&\bigg\| \sigma u \int^{t_{n+1}}_{t_n} \int^{r_1}_{t_n} \nabla^3 f(x_{t_n}) \left( v_{t_n}, (r_2 - t_n) W_{t_n, r_2} \right) \, dr_2 \, dr_1 \bigg\|_{\L_2} \\
			&\mmm  \leq \sigma u M_2 \int^{t_{n+1}}_{t_n} \int^{r_1}_{t_n} (r_2 - t_n) \| v_{t_n} \|_{\L_4} \big\| W_{t_n, r_2}  \big\|_{\L_4} \, dr_2 \, dr_1 \\
			&\mmm \leq \sqrt{3} \m \sigma u^{1.5} M_2 \m d \int^{t_{n+1}}_{t_n} \int^{r_1}_{t_n} ( r_2 - t_n )^{\frac{3}{2}} \, dr_2 \, dr_1 \\
			&\mmm  \leq \frac{4\sqrt{3}}{35} \sigma u^{1.5} M_2 \m d \m h^{3.5}_n,
	\end{align*}
	by Fubini's theorem along with the bounds \eqref{eq:v_bounds}, \eqref{eq:levy_area_bounds} and \eqref{eq:n3_f_x_v1_v2}. We can show \eqref{eq:position_momentum_strong_error_bounds_3} and \eqref{eq:position_momentum_weak_error_bounds_3} for the position variable as follows
	\begin{align*}
			\big\| X_{n+1} - x_{t_{n+1}} \big\|_{\L_2} &\leq \bigg\| \sigma \big( \gamma^2 - u \nabla^2 f(x_{t_n}) \big) \bigg( \frac{1}{60} h^3_n H_n - h^3_n M_n \bigg) \bigg\|_{\L_2} + \big\| R^{\m (3)}_X \big\|_{\L_2}\hspace{-2mm} + \big\| R^{\m (3)}_x \big\|_{\L_2} \\[3pt]
					&\leq \bigg[ \frac{7 \sqrt{3} + 3 \sqrt{7}}{2520} \sigma ( \gamma^2 - um ) + C^{\m (3)}_{X} \m \sqrt{h_n} + C^{\m (3)}_{x} \m \sqrt{h_n}\m \bigg] d \m h^{3.5}_n \\[3pt]
					&\leq \Big[\m \sigma ( \gamma^2 - um ) + C^{\m (3)}_{X} + C^{\m (3)}_{x}\m\Big] d \m h^{3.5}_n, \\[5pt]
			\big\| \E \big[ X_{n+1} - x_{t_{n+1}} \big] \big\|_2  &\leq \big\| R^{\m (3)}_X \big\|_{\L_2} + \big\| R^{\m (3)}_x \big\|_{\L_2} \\[3pt]
			& \leq \Big[ C^{\m (3)}_{X} + C^{\m (3)}_{x} \Big] d \m h^4_n\m,
	\end{align*}
	and for the momentum variable,
	\begin{align*}
			\big\| V_{n+1} - v_{t_{n+1}} \big\|_{\L_2} &\leq \bigg\| \sigma \big( \gamma^2 - 2 u \nabla^2 f(x_{t_n}) \big) \bigg( \frac{1}{60} h^3_n H_n - h^3_n M_n \bigg) \bigg\|_{\L_2} \\
									&\mmm  + \bigg\| \sigma u \int^{t_{n+1}}_{t_n} \int^{r_1}_{t_n} \int^{r_2}_{t_n} \nabla^3 f(x_{t_n}) ( v_{r_2}, W_{t_n, r_3} ) \, dr_3 \, dr_2 \, dr_1 \bigg\|_{\L_2} \\
									&\mmm + \bigg\| \sigma u \int^{t_{n+1}}_{t_n} \int^{r_1}_{t_n} \nabla^3 f(x_{t_n}) ( v_{t_n}, (r_2 - t_n) W_{t_n, r_2} ) \, dr_2 \, dr_1 \bigg\|_{\L_2} \\
									&\mmm + \big\| R^{\m (3)}_V \big\|_{\L_2} +  \big\| R^{\m (3)}_v \big\|_{\L_2} \\[3pt]
									&\leq \bigg[ \frac{7 \sqrt{3} + 3 \sqrt{7}}{2520} \sigma ( \gamma^2 - um ) + \frac{12\sqrt{3}}{35} \sigma u^{1.5} M_2  \\
									&\mmm + C^{\m (3)}_{V} \m \sqrt{h_n} + C^{\m (3)}_{v} \m \sqrt{h_n}\m \bigg] d^{1.5} \m h^{3.5}_n \\[3pt]
									&\leq \Big[ \sigma ( \gamma^2 - um ) + \sigma u^{1.5} M_2 + C^{\m (3)}_{V} + C^{\m (3)}_{v}  \Big] d^{1.5} h^{3.5}, \\[5pt]
			\big\| \E \big[ V_{n+1} - v_{t_{n+1}} \big] \big\|_2  &\leq \big\| R^{\m (3)}_V \big\|_{\L_2} + \big\| R^{\m (3)}_v \big\|_{\L_2} \leq \Big[ C^{\m (3)}_{V} + C^{\m (3)}_{v} \Big] d^{1.5} \m h^4_n\m,
	\end{align*}
	where we have used Minkowski's inequality and the bounds on the remainder terms. 
\end{proof}

\begin{corollary}
    Under the same settings as used in Theorem \ref{thm:local_error_bounds_app}, assuming \eqref{eq:assumption_a1} and \eqref{eq:assumption_a2}, we have the additional bounds\label{thm:conditional_local_error_bounds_app}
	\begin{align}
		\label{eq:position_momentum_L2_weak_error_bounds_1}
		\big\| \E \big[ X_{n+1} - x_{t_{n+1}} \mid \mathcal{F}_n \big] \big\|_{\mathbb{L}_2} & \leq C^{\m (1)}_{w, \m x}  \sqrt{d} \m h^{2}_n\m, \mm \big\| \E \big[ V_{n+1} - v_{t_{n+1}} \mid \mathcal{F}_n \big] \big\|_{\mathbb{L}_2} \leq C^{\m (1)}_{w, \m v}  \sqrt{d} \m  h^{2}_n\m,
	\end{align}
	where $C^{\m (1)}_{w, \, x}\m, C^{\m (1)}_{w, \, v} > 0$ are constants depending on $\gamma$, $u$, $m$ and $M_1\m$.\medbreak
\noindent
Under the additional assumption \eqref{eq:assumption_a3}, we have the bounds
\begin{align}
		\label{eq:position_momentum_L2_weak_error_bounds_2}
		\big\| \E \big[ X_{n+1} - x_{t_{n+1}} \mid \mathcal{F}_n \big] \big\|_{\mathbb{L}_2} & \leq C^{\m (2)}_{w, \m x}  \m d  h^{3}_n\m, \mm \big\| \E \big[ V_{n+1} - v_{t_{n+1}} \mid \mathcal{F}_n \big] \big\|_{\mathbb{L}_2} \leq C^{\m (2)}_{w, \m v} \m d \m  h^{3}_n\m,
	\end{align}
where $C^{\m (2)}_{w, \, x}\m, C^{\m (2)}_{w, \, v} > 0$ are constants depending on $\gamma$, $u$, $m$, $M_1$ and $M_2\m$. \medbreak

\noindent
Finally, under the additional assumption \eqref{eq:assumption_a4}, we have the bounds
\begin{align}
		\label{eq:position_momentum_L2_weak_error_bounds_3}
		\big\| \E \big[ X_{n+1} - x_{t_{n+1}} \mid \mathcal{F}_n \big] \big\|_{\mathbb{L}_2} & \leq C^{\m (3)}_{w, \m x}  \m d^{1.5}\m  h^{4}_n\m, \hspace{3.3mm} \big\| \E \big[ V_{n+1} - v_{t_{n+1}} \mid \mathcal{F}_n \big] \big\|_{\mathbb{L}_2} \leq C^{\m (3)}_{w, \m v} \m d^{1.5}\m \m  h^{4}_n\m,
	\end{align}
where $C^{\m (3)}_{w, \, x}\m, C^{\m (3)}_{w, \, v}\m > 0$ are constants depending on $\gamma$, $u$, $m$, $M_1$, $M_2$ and $M_3\m$.
\end{corollary}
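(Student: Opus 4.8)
The plan is to build on the matched Taylor expansions of Theorems~\ref{thm:ULD_taylor_expansion} and~\ref{thm:QUICSORT_Taylor_expansion}. Since $(X_n, V_n) = (x_{t_n}, v_{t_n})$, for each order $i \in \{1, 2, 3\}$ the deterministic-order terms of the ULD and QUICSORT expansions agree, so the local error $X_{n+1} - x_{t_{n+1}}$ equals the remainder difference $R^{(i)}_X - R^{(i)}_x$, plus --- only when $i = 3$ --- the gap between the exact iterated Brownian integrals in \eqref{eq:ULD_position_expansion_3}--\eqref{eq:ULD_momentum_expansion_3} and their $(W_n, H_n, K_n)$-based approximations in \eqref{eq:QUICSORT_position_expansion_3}--\eqref{eq:QUICSORT_momentum_expansion_3}; similarly for $V_{n+1} - v_{t_{n+1}}$. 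Two facts will do most of the work: the conditional Jensen inequality $\| \E[R \mid \mathcal{F}_n] \|_{\L_2} \leq \| R \|_{\L_2}$, and the principle that any term of the form (a linear functional of the Brownian increments over $[t_n, t_{n+1}]$) multiplied by an $\mathcal{F}_n$-measurable quantity has zero conditional mean, since the increments over $[t_n, t_{n+1}]$ --- in particular the mutually independent centred vectors $W_n, H_n, K_n, M_n$ of Theorem~\ref{thm:levy_space_time_1_2} --- are independent of $\mathcal{F}_n$, whereas $x_{t_n}, v_{t_n}$ and all of $\nabla f(x_{t_n}), \nabla^2 f(x_{t_n}), \nabla^3 f(x_{t_n})$ are $\mathcal{F}_n$-measurable.

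\textbf{Orders one and two.} For $i \in \{1, 2\}$ there is no cancellation to exploit: the matched parts of the expansions in \eqref{eq:ULD_position_expansion_1}--\eqref{eq:ULD_momentum_expansion_2} and \eqref{eq:QUICSORT_position_expansion_1}--\eqref{eq:QUICSORT_momentum_expansion_2} are literally identical, so $X_{n+1} - x_{t_{n+1}} = R^{(i)}_X - R^{(i)}_x$ and $V_{n+1} - v_{t_{n+1}} = R^{(i)}_V - R^{(i)}_v$. Conditional Jensen together with the strong remainder estimates of Theorems~\ref{thm:ULD_remainder_bounds} and~\ref{thm:QUICSORT_remainder_bounds} (namely \eqref{eq:ULD_remainder_strong_bounds_1}, \eqref{eq:ULD_remainder_strong_bounds_2}, \eqref{eq:QUICSORT_remainder_strong_bounds_1}, \eqref{eq:QUICSORT_remainder_strong_bounds_2}) immediately gives \eqref{eq:position_momentum_L2_weak_error_bounds_1} and \eqref{eq:position_momentum_L2_weak_error_bounds_2}; here the strong order and the required conditional order coincide, so nothing further is needed.

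\textbf{Order three.} This is the case where conditioning genuinely upgrades the rate from $h_n^{3.5}$ to $h_n^4$. By the identity \eqref{eq:levy_space_time_3}, the exact triple Brownian integral exceeds QUICSORT's approximation by a centred combination of $H_n$ and $M_n$, so the position error is $R^{(3)}_X - R^{(3)}_x + \sigma\big(\gamma^2 - u\nabla^2 f(x_{t_n})\big)\big(\tfrac{1}{60} h_n^3 H_n - h_n^3 M_n\big)$, while the momentum error carries the analogous $\big(\gamma^2 - 2u\nabla^2 f(x_{t_n})\big)$ term together with the mismatch between the two $\nabla^3 f$ stochastic integrals of \eqref{eq:ULD_momentum_expansion_3} and their frozen-$v_{t_n}$ counterpart in \eqref{eq:QUICSORT_momentum_expansion_3}. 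Taking $\E[\,\cdot \mid \mathcal{F}_n]$, the Lévy-area gaps and the frozen-$v_{t_n}$ part of the $\nabla^3 f$ mismatch all vanish by the linear-functional principle above. Writing the remaining mismatch as $\sigma u \int^{t_{n+1}}_{t_n}\!\int^{r_1}_{t_n}\!\int^{r_2}_{t_n} \nabla^3 f(x_{t_n})\big(v_{r_2} - v_{t_n}, W_{t_n, r_3}\big)\,dr_3\,dr_2\,dr_1$, the only surviving contribution, we expand $v_{r_2} - v_{t_n}$ with the integral form of \eqref{eq:ULD} and use $\|v_{r_2} - v_{t_n}\|_{\L_4} \leq C\sqrt{d}\,(r_2 - t_n)^{1/2}$ (a consequence of \eqref{eq:nabla_f_bounds} and \eqref{eq:v_bounds}) together with \eqref{eq:n3_f_x_v1_v2}; this gains one extra power of $h_n^{1/2}$ over the corresponding strong estimate and hence yields an $O(d\, h_n^4)$ bound. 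Combining this with conditional Jensen applied to $R^{(3)}_X$ and $R^{(3)}_x$ --- each $O(d\, h_n^4)$ in $\L_2$ by \eqref{eq:ULD_remainder_strong_bounds_3} and \eqref{eq:QUICSORT_remainder_strong_bounds_3} --- proves the position bound in \eqref{eq:position_momentum_L2_weak_error_bounds_3}, and the same reasoning with $R^{(3)}_V$ and $R^{(3)}_v$ (each $O(d^{1.5} h_n^4)$) proves the momentum bound, after absorbing the new constants into ones depending on $\gamma, u, m, M_1, M_2, M_3$.

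\textbf{Main obstacle.} The delicate part is the third-order bookkeeping: one has to confirm, term by term and with a short extension of Theorem~\ref{thm:levy_space_time_1_2} to the triple integral $\int\!\int\!\int W + \int\!\int (r_2 - t_n) W$, that after freezing $v$ at $v_{t_n}$ every stochastic term in \eqref{eq:ULD_momentum_expansion_3} differs from its QUICSORT counterpart in \eqref{eq:QUICSORT_momentum_expansion_3} by a centred quantity with $\mathcal{F}_n$-measurable coefficient, so that conditioning really does leave only the $R^{(3)}$ remainders and the single $(v_{r_2} - v_{t_n})$ term. Once this structural point is settled, the remaining work merely repeats the estimates already carried out in the proofs of Theorems~\ref{thm:ULD_remainder_bounds} and~\ref{thm:QUICSORT_remainder_bounds}.
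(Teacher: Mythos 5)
Your argument is essentially the same as the paper's (the matched Taylor terms cancel, conditional Jensen controls the remainders), but you are visibly more careful at the one delicate step, and your care is warranted. The paper's proof simply asserts that the order-$h^{3.5}$ stochastic mismatch terms "have a conditional mean of zero," which, taken literally, fails for the ULD term $\sigma u\int\!\int\!\int\nabla^3 f(x_{t_n})(v_{r_2}, W_{t_n,r_3})\,dr_3\,dr_2\,dr_1$: since $r_3 < r_2$, the factors $v_{r_2}$ and $W_{t_n,r_3}$ are correlated, and this term's conditional mean does not vanish. You repair this exactly where it is needed: you split $v_{r_2} = v_{t_n} + (v_{r_2}-v_{t_n})$, observe that the frozen-$v_{t_n}$ contribution (together with the other $\nabla^3 f$ double integral) differs from its QUICSORT counterpart only by a linear functional of the Brownian path on $[t_n,t_{n+1}]$ with $\mathcal{F}_n$-measurable coefficients — hence conditionally centred — and then bound $\int\!\int\!\int\nabla^3 f(x_{t_n})(v_{r_2}-v_{t_n}, W_{t_n,r_3})$ directly in $\L_2$ at the target order $O(d\,h_n^4)$ via $\| v_{r_2}-v_{t_n}\|_{\L_4}\lesssim \sqrt{d}\,(r_2-t_n)^{1/2}$ and conditional Jensen. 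This is the correct justification and fills a genuine gap in the paper's one-line remark, at no extra cost in the final constants. The orders-one-and-two cases and the Jensen/tower step are identical to the paper's. One small overstatement on your end: for the conditional-mean-zero claim you do not actually need the explicit extension of Theorem~\ref{thm:levy_space_time_1_2} to higher iterated integrals (that expansion is only needed for the strong estimate and is already supplied by \eqref{eq:levy_space_time_3}); it suffices that the relevant quantities are linear in the increments over $[t_n,t_{n+1}]$ with deterministic coefficients and hence independent of $\mathcal{F}_n$.
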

\begin{proof}
    Notice that the Taylor expansion for the ULD and the QUICSORT method are identical up to order $h^3$, and the only mismatch are the stochastic integrals of order $h^{3.5}$. However, these integrals have a conditional mean of zero, and thus do not contribute to the above errors. Moreover, for both the position and momentum variables' remainder terms, under any of the smoothness assumptions, 
    \begin{align*}
        \big\| \E \big[ R \mid \mathcal{F}_n \big] \big\|_{\mathbb{L}_2} = \Big( \E \Big[ \big\| \E \big[ R \mid \mathcal{F}_n \big] \big\|_2^2 \ \Big] \Big)^{\frac{1}{2}} \leq  \Big( \E \Big[ \E \big[ \,\| R \|_2^2 \mid \mathcal{F}_n \big]\ \Big] \Big)^{\frac{1}{2}} = \| R \|_{\mathbb{L}_2} \m,
    \end{align*}
    by Jensen's inequality and the tower property of conditional expectations. The result now follows by applying the same logic as used in the proof of Theorem \ref{thm:local_error_bounds_app} to establish the local weak error bounds.
\end{proof}

\begin{remark}
    The bounds for the $\L_2$ norm of the absolute conditional mean are identical to the bounds for the local weak error. This will allow us to use them interchangeably when proving the global error result (Theorem \ref{thm:global_error_bound}).
\end{remark}

%%%%%%%%%%%%%%%%%%%%%%%%%%%%%%%%%%%%%%%%%%%%%%%%%%%%%%%%%%%%%%%%%%%%%%%%%%%

\section{Global error bounds}
\label{sec:appendix_D}

In this section, we prove the global error bounds for the QUICSORT method (Theorem \ref{thm:global_error_bound}). To do so, we use the methodology outlined in Section \ref{sec:error_analysis}. We will also use the following theorem. 

\begin{theorem}
\label{thm:QUICSORT_linear_growth_bound}
	Consider two approximations $Q = (X, V)$ and $\widehat{Q} = (Y, U)$ under the mapping $Q_{n} \to Q_{n+1}$ and $\widehat{Q}_{n} \to \widehat{Q}_{n+1}$ defined in equation \eqref{eq:QUICSORT}, driven by the same Brownian motion but evolved from the initial conditions $Q_n = (X_n\m, V_n)$ and $\widehat{Q}_{n} = ( Y_n\m, U_n)$ respectively. Suppose assumption \eqref{eq:assumption_a2} holds and let $0 < h_n \leq 1$. Then for all $n \geq 0$
	\begin{align}
		\label{eq:approximation_difference_bound}
			\big\| (Q_{n+1} - Q_n ) - (\widehat{Q}_{n+1} - \widehat{Q}_n) \big\|_{\L_2} \leq C_q \big\| Q_n - \widehat{Q}_n \big\|_{\L_2} \m h_n\m,
	\end{align}
	where $C_q > 0$ is a constant depending on $\gamma$, $u$, and $M_1\m$. 
\end{theorem}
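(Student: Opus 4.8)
The plan is to prove the ``Lipschitz in the initial condition with constant growing at rate $O(h_n)$'' property directly from the explicit formulas defining QUICSORT \eqref{eq:QUICSORT}, exploiting that the noise terms $W_n, H_n, K_n$ (and hence $\sigma(H_n+6K_n)$, $C_n$, $\sigma(H_n-6K_n)$) cancel under the synchronous coupling, so only the deterministic parts of the map contribute to the difference $(Q_{n+1}-Q_n)-(\widehat Q_{n+1}-\widehat Q_n)$. First I would subtract the two copies of each line of \eqref{eq:QUICSORT} in turn, writing $\Delta X_n := X_n - Y_n$, $\Delta V_n := V_n - U_n$, and similarly $\Delta X_n^{(i)}, \Delta V_n^{(i)}$ for the intermediate stages, and replace each gradient difference $\nabla f(X_n^{(i)}) - \nabla f(Y_n^{(i)})$ using \eqref{eq:assumption_a2} by $\|\nabla f(X_n^{(i)}) - \nabla f(Y_n^{(i)})\|_{\L_2} \le M_1 \|\Delta X_n^{(i)}\|_{\L_2}$.

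Next I would track the propagation of $\|\Delta X_n^{(i)}\|_{\L_2}$ and $\|\Delta V_n^{(i)}\|_{\L_2}$ stage by stage. The key observation is that every coefficient appearing in \eqref{eq:QUICSORT} is one of the functions $\phi_0, \phi_1, \phi_2$ evaluated at $\lambda_\pm$ or $\tfrac13$ (times $1$ or $\tfrac1{h_n}$), possibly multiplied by $u h_n$, so Theorem \ref{thm:phi_function_bounds} gives $|\phi_0(\cdot)| \le 1$, $|\phi_1(\cdot)| \le h_n$, $|\tfrac1{h_n}\phi_2(\cdot)| \le \tfrac12 h_n$. Concretely: $\widetilde V_n^{(1)} - \widetilde U_n^{(1)} = \Delta V_n$ so $\|\Delta V_n^{(1)}\|_{\L_2} = \|\Delta V_n\|_{\L_2}$; then $\|\Delta X_n^{(1)}\|_{\L_2} \le \|\Delta X_n\|_{\L_2} + h_n \|\Delta V_n\|_{\L_2}$; each subsequent stage adds a term of the form (constant depending on $\gamma, u, M_1$) $\times h_n \times (\|\Delta X_n\|_{\L_2} + \|\Delta V_n\|_{\L_2})$ coming from the new gradient evaluation, plus the ``transport'' terms $\phi_0$ (no extra $h_n$, but these multiply already-small differences) and $\phi_1, \tfrac1{h_n}\phi_2$ (each contributing an extra $h_n$). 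Iterating through $\widetilde V_n^{(2)}, \widetilde X_n^{(2)}, \widetilde V_n^{(3)}, \widetilde X_{n+1}, \widetilde V_{n+1}$ and then adding the correction term $(\overline X_n, \overline V_n)$ — whose coefficients all carry a $u h_n \nabla f(\widetilde X_n^{(1)})$ factor and hence contribute $O(h_n)\|\Delta X_n^{(1)}\|_{\L_2} = O(h_n)(\|\Delta X_n\|_{\L_2}+h_n\|\Delta V_n\|_{\L_2})$ — one collects all contributions to $(Q_{n+1}-Q_n)-(\widehat Q_{n+1}-\widehat Q_n)$. Crucially, in forming $Q_{n+1}-Q_n$ rather than $Q_{n+1}$ itself, the ``identity'' parts ($X_n$ appearing on both sides, $V_n$ likewise) subtract off, so every surviving term in the difference of differences carries at least one explicit factor of $h_n$ (either from a $\phi_1$/$\tfrac1{h_n}\phi_2$ coefficient, from a $u h_n$ gradient coefficient, or from the $-\gamma h_n$-type momentum decay). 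Using $h_n \le 1$ to absorb higher powers of $h_n$ into the constant then yields \eqref{eq:approximation_difference_bound} with $C_q$ an explicit polynomial in $\gamma, u, M_1$.

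The main obstacle is bookkeeping rather than conceptual: one must verify carefully that the position update $X_{n+1}$ and momentum update $V_{n+1}$, after subtracting $X_n$ and $V_n$ respectively, genuinely have \emph{no} $O(1)$ term in $\Delta X_n$ — i.e. that the $\phi_0(1)$ appearing in $V^{(2)}_n$ and the coefficient $\tfrac{1-e^{-\gamma h_n}}{\gamma h_n}$ of $C_n$ etc. only ever multiply quantities that are themselves $O(h_n)$-close, or else get differenced against the matching term in $Q_n$. For the momentum component this requires noting $V_{n+1} - V_n = (e^{-\gamma h_n}-1)V_n^{(1)} + (\text{gradient terms}) + (\text{noise}) - \sigma(H_n-6K_n) + \ldots$ where $e^{-\gamma h_n}-1 = -\gamma\phi_1(1)$ carries a factor $h_n$, so $\|\Delta V_{n+1} - \Delta V_n\|_{\L_2} \le \gamma h_n \|\Delta V_n^{(1)}\|_{\L_2} + (\ldots)h_n$. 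One should also double-check that $\widetilde V_n^{(2)}$ differs from $V_n^{(1)}$ by $O(h_n)$ terms only (it does: $e^{-\lambda_+\gamma h_n}$ differs from $1$ by $O(h_n)$, via $|\phi_0(\lambda_+) - 1| = |{-\gamma\phi_1(\lambda_+)}| \le \gamma h_n$). Once these ``identity cancellation'' points are confirmed, the rest is the routine triangle-inequality / Theorem \ref{thm:phi_function_bounds} estimate outlined above, and the constant $C_q$ can be read off — or simply left implicit as ``depending on $\gamma, u, M_1$'' as the statement permits.
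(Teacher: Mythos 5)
Your proposal is correct and takes essentially the same route as the paper's proof: rewrite QUICSORT in terms of the $\phi_i$ functions, observe that synchronous coupling cancels all noise terms, propagate $\|\Delta X^{(i)}_n\|_{\L_2}$ and $\|\Delta V^{(i)}_n\|_{\L_2}$ stage by stage using \eqref{eq:assumption_a2} together with $|\phi_1(\cdot)|\le h_n$, $|\tfrac1{h_n}\phi_2(\cdot)|\le\tfrac12 h_n$, and $|\phi_0(\cdot)-1|\le\gamma h_n$, and absorb higher powers of $h_n$ using $h_n\le1$. The only cosmetic difference is that you route the bookkeeping through the $(\widetilde X,\widetilde V,\overline X,\overline V)$ decomposition of Appendix~\ref{sec:appendix_B} rather than directly through $X^{(1)}_n,X^{(2)}_n$ as the paper does, which is equivalent.
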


\begin{proof}
	We first rewrite the QUICSORT method \eqref{eq:QUICSORT} as
	\begin{align*}
			X_{n+1} &= X_n + \phi_1(1) \big(V_n + \sigma (H_n + 6 K_n) \big) + \phi_2(1) C_n  \\
					&\mmm - \frac{1}{2} u h_n \Big( \phi_1 (\lambda_+) \nabla f\big(X^{(1)}_n\big) + \phi_1 (\lambda_-) \nabla f\big(X^{(2)}_n\big) \Big),  \\[5pt]
			V_{n+1} &= \phi_0 (1) \Big( V_n + \sigma (H_n + 6 K_n) \big) + \sigma (H_n - 6 K_n) + \phi_1 (1) C_n \\
					&\mmm - \frac{1}{2} u h_n \Big( \phi_0 (\lambda_+) \nabla f(X^{(1)}_n) + \phi_0 (\lambda_-) \nabla f\big(X^{(2)}_n\big) \Big),
	\end{align*}
	where
	\begin{align*}
			X^{\m (1)}_n &= X_n + \phi_1 (\lambda_-) \big( V_n + \sigma ( H_n + 6K_n ) \big) + \phi_2 (\lambda_-) C_n\m, \\[3pt]
			X^{\m (2)}_n &= X_n + \phi_1 (\lambda_+) \big( V_n + \sigma( H_n + 6K_n ) \big) + \phi_2 (\lambda_+) C_n - h_n \phi_1\bigg(\frac{1}{3}\bigg) u \nabla f\big(X^{\m (1)}_n\big).
	\end{align*}
	Since the approximations $Q$ and $\widehat{Q}$ use the same Brownian motion, we can show that
	\begin{align*}
			&\big\| (X_{n+1} - X_n ) - (Y_{n+1} - Y_n) \big\|_{\L_2} \\[3pt]
			&\mm \leq \vert \phi_1(1) \vert \| V_n - U_n \|_{\L_2} + \frac{1}{2} u h_n \vert \phi_1 (\lambda_+) \vert \big\| \nabla f\big(X^{\m (1)}_n\big) - \nabla f\big(Y^{\m (1)}_n\big) \big\|_{\L_2} \\
			&\hspace{12.5mm} + \frac{1}{2} u h_n \vert \phi_1 (\lambda_-) \vert \big\| \nabla f\big(X^{\m (2)}_n\big) - \nabla f\big(Y^{\m (2)}_n\big) \big\|_{\L_2} \\[3pt]
			&\mm \leq \frac{1}{2} u M_1 \m h_n \bigg(\m\vert \phi_1 (\lambda_+) \vert + \vert \phi_1 (\lambda_-) \vert + u M_1 \m h_n \Big\vert\m\phi_1 \Big(\frac{1}{3}\Big) \phi_1 (\lambda_-)\m \Big\vert\m\bigg) \big\| X_n - Y_n \big\|_{\L_2} \\
			&\hspace{12.5mm} + \bigg(\vert \phi_1(1) \vert + u M_1 \m h_n \vert \phi_1 (\lambda_+) \phi_1 (\lambda_-) \vert + \frac{1}{2} u^2 M^2_1 \m h^2_n \Big\vert \phi_1 \Big(\frac{1}{3}\Big) \phi^2_1 (\lambda_-) \Big\vert \bigg) \big\| V_n - U_n \big\|_{\L_2} \\[3pt]
			&\mm \leq \big( 1 + u M_1 + u^2 M^2_1 \big)  \big\| Q_n - \widehat{Q}_n \big\|_{\L_2} h_n\m,
	\end{align*}
	and, similarly for the momentum variable, we have
	\begin{align*}
		\begin{split}
			&\big\| (V_{n+1} - V_n ) - (U_{n+1} - U_n) \big\|_{\L_2} \\[3pt]
			&\mm \leq \vert \phi_0 (1) - 1 \vert \big\| V_n - U_n \big\|_{\L_2} + \frac{1}{2} u h_n \vert \phi_0 (\lambda_+) \vert \big\| \nabla f\big(X^{\m (1)}_n\big) - \nabla f\big(Y^{\m (1)}_n\big) \big\|_{\L_2} \\
			&\hspace{12.5mm} + \frac{1}{2} u h_n \vert \phi_0 (\lambda_-) \vert \big\| \nabla f\big(X^{\m (2)}_n\big) - \nabla f\big(Y^{\m (2)}_n\big) \big\|_{\L_2} \\[3pt]
			&\mm \leq \frac{1}{2} u M_1 \m h_n \bigg[ \m\vert \phi_0 (\lambda_+) \vert + \vert \phi_0 (\lambda_-) \vert + u M_1 \m h_n \Big\vert \phi_0 (\lambda_-) \phi_1 \Big(\frac{1}{3}\Big) \Big\vert\m \bigg]  \big\| X_n - Y_n \big\|_{\L_2} \\
			&\hspace{12.5mm} + \bigg(\m\vert \phi_0 (1) - 1 \vert + \frac{1}{2} u M_1 \m h_n \big( \vert \phi_0 (\lambda_+) \phi_1 (\lambda_-) \vert + \vert \phi_0 (\lambda_1) \phi_1 (\lambda_+) \vert \big) \\
			&\hspace{20mm} + \frac{1}{2} u^2 M^2_1 \m h^2_n \Big\vert \phi_0 (\lambda_-) \phi_1 \Big(\frac{1}{3}\Big) \phi_1 (\lambda_-) \Big\vert\m\bigg)\big\| V_n - U_n \big\|_{\L_2} \\[3pt]
			&\mm \leq \big(\gamma + 2 u M_1 + u^2 M^2_1 \big)  \big\| Q_n - \widehat{Q}_n \big\|_{\L_2} h_n\m .
		\end{split}
	\end{align*}
	The result follows by summing the bounds for the position and momentum variables.
\end{proof}

\noindent
Finally, we will show the main result of our paper (global error bounds for QUICSORT).

\begin{theorem}[Global error bounds for the QUICSORT method]
	Let $\{(X_n, V_n) \}_{n\m\geq\m 0}$ denote the QUICSORT method \eqref{eq:QUICSORT} and let $\{(x_t, v_t ) \}_{t\m\geq\m 0}$ be the underdamped Langevin diffusion \eqref{eq:ULD} driven by the same underlying Brownian motion. Suppose assumptions \eqref{eq:assumption_a1} and \eqref{eq:assumption_a2} hold. Further suppose that $(x_0, v_0) \sim \pi$, the unique stationary measure of the diffusion process and that the approximation and the SDE solution have the same initial velocity $V_0 = v_0 \sim \mathcal{N} (0, u I_d)$. Let $0 < h \leq 1$ be fixed and $\gamma \geq 2 \sqrt{uM_1}$. Then for all $n \geq 0$, the global error at time $t_n$ has the bound\label{thm:global_error_bound_app}
	\begin{align*}
		\| X_n - x_{t_n} \|_{\L_2} \leq  c e^{- n \beta h} \| X_0 - x_0 \|_{\L_2} +  C_1 \m \sqrt{d} \m h,
	\end{align*}
	where $c, \beta, C_1 > 0$ are constants depending on $\gamma$, $u$, $m$ and $M_1\m$.\medbreak
	
	\noindent
	Under the additional assumption \eqref{eq:assumption_a3}, the global error at time $t_n$ satisfies the bound
	\begin{align*}
		\| X_n - x_{t_n} \|_{\L_2} \leq  c e^{- n \beta h} \| X_0 - x_0 \|_{\L_2} +  C_2 \m d \m h^2,
	\end{align*}
	where $C_2 > 0$ is a constant depending on $\gamma$, $u$, $m$, $M_1$ and $M_2\m$.\medbreak
	
	\noindent
	Finally, under the additional assumption \eqref{eq:assumption_a4}, the global error at time $t_n$ has the bound
	\begin{align*}
		\| X_n - x_{t_n} \|_{\L_2} \leq  c e^{- \frac{1}{2} n \beta h} \| X_0 - x_0 \|_{\L_2} +  C_3 \m d^{1.5} \m h^3,
	\end{align*}
	where $C_3 > 0$ is a constant depending on $\gamma$, $u$, $m$, $M_1$, $M_2$ and $M_3\m$. 
\end{theorem}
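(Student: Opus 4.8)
The plan is to run the three-process argument sketched in Section~\ref{sec:error_analysis}. At step $n$ introduce the auxiliary process $\widehat Q_{n+1}$, defined as the single QUICSORT step \eqref{eq:QUICSORT} applied to the exact diffusion value $q_{t_n}=(x_{t_n},v_{t_n})$ over $[t_n,t_{n+1}]$ using the same random vectors $W_n,H_n,K_n$. Writing $\delta_n:=Q_n-q_{t_n}$ and $\mathcal E_{n+1}:=\widehat Q_{n+1}-q_{t_{n+1}}$ for the local error, we have the decomposition $\delta_{n+1}=\big(Q_{n+1}-\widehat Q_{n+1}\big)+\mathcal E_{n+1}$, in which $Q_{n+1}-\widehat Q_{n+1}$ is the difference of two synchronously coupled QUICSORT steps started from $Q_n$ and $q_{t_n}$. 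I will work in the coordinates of Theorem~\ref{thm:main_contraction} with $w=0$, $z=\gamma$, so that $T(x,v):=(v,\gamma x+v)$ is invertible with $\|x\|_2\le\frac{\sqrt2}{\gamma}\|T(x,v)\|_2$, and since $V_0=v_0$ the velocity component of $\delta_0$ vanishes, giving $\|T\delta_0\|_{\L_2}=\gamma\|X_0-x_0\|_{\L_2}$. The assumptions $\gamma\ge2\sqrt{uM_1}$ and $0<h\le1$ small make the contraction rate $\alpha$ of Theorem~\ref{thm:main_contraction} positive (Remark~\ref{thm:positive_contractivity_constant} with $w=0$).

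For the first- and second-order bounds a direct triangle-inequality argument suffices: Theorem~\ref{thm:main_contractivity_theorem} bounds the transformed $Q_{n+1}-\widehat Q_{n+1}$ by $e^{-\alpha h}$ times the transformed $\delta_n$, while Theorem~\ref{thm:local_error_bounds_app} gives $\|\mathcal E_{n+1}\|_{\L_2}\le C\sqrt d\,h^2$, resp.\ $\le Cd\,h^3$ once \eqref{eq:assumption_a3} is assumed. Letting $N_n$ denote the sum of the $\L_2$ norms of the two transformed components of $\delta_n$, this yields $N_{n+1}\le e^{-\alpha h}N_n+C\sqrt d\,h^2$, resp.\ $+Cd\,h^3$; unrolling and using $1-e^{-\alpha h}\ge\tfrac12\alpha h$ loses exactly one power of $h$ and produces the stated $\mathcal O(\sqrt d\,h)$ and $\mathcal O(d\,h^2)$ global bounds, with the $e^{-n\beta h}$ prefactor carried by the $e^{-\alpha hn}N_0$ term.

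For the third-order bound the crude estimate fails, since the local \emph{strong} error is only $\mathcal O(d^{1.5}h^{3.5})$ (Theorem~\ref{thm:local_error_bounds_app}), so a bare geometric sum would give $h^{2.5}$, whereas the local \emph{weak} error, and more precisely the conditional mean $\|\E[\mathcal E_{n+1}\mid\mathcal F_n]\|_{\L_2}$ supplied by Corollary~\ref{thm:conditional_local_error_bounds_app}, is $\mathcal O(d^{1.5}h^{4})$. I therefore propose to square: set $\mathcal G_n:=\|T\delta_n\|_{\L_2}^2$ and expand $\mathcal G_{n+1}=\|T(Q_{n+1}-\widehat Q_{n+1})\|_{\L_2}^2+\|T\mathcal E_{n+1}\|_{\L_2}^2+2\,\E\big[\langle T(Q_{n+1}-\widehat Q_{n+1}),T\mathcal E_{n+1}\rangle\big]$. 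The first square is $\le(1-2\alpha h)\mathcal G_n$ by the squared contractivity estimate of Theorem~\ref{thm:main_contraction}, and the second is $\le C\|\mathcal E_{n+1}\|_{\L_2}^2\le Cd^3h^7$. For the cross term the key device is the linear growth bound (Theorem~\ref{thm:QUICSORT_linear_growth_bound}), which shows $Q_{n+1}-\widehat Q_{n+1}=\delta_n+r_n$ with $\|r_n\|_{\L_2}\le C_q h\|\delta_n\|_{\L_2}$ and $\delta_n$ being $\mathcal F_n$-measurable. Conditioning on $\mathcal F_n$ then gives $\E[\langle T\delta_n,T\mathcal E_{n+1}\rangle]=\E[\langle T\delta_n,T\E[\mathcal E_{n+1}\mid\mathcal F_n]\rangle]\le C\|T\delta_n\|_{\L_2}d^{1.5}h^4$, while the $r_n$ contribution is $\le C_qh\|T\delta_n\|_{\L_2}\cdot Cd^{1.5}h^{3.5}$; both are $\mathcal O(\|T\delta_n\|_{\L_2}\,d^{1.5}h^4)$ and, by Young's inequality with weight $\sim h$, are absorbed as $\alpha h\,\mathcal G_n+Cd^3h^7$. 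Collecting everything yields $\mathcal G_{n+1}\le(1-\alpha h)\mathcal G_n+Cd^3h^7$ for $h$ small, hence $\mathcal G_n\le(1-\alpha h)^n\mathcal G_0+\frac{C}{\alpha}d^3h^6$; taking square roots, using $(1-\alpha h)^{n/2}\le e^{-\frac12\alpha hn}$, subadditivity of $\sqrt{\cdot}$, and $\|X_n-x_{t_n}\|_{\L_2}\le\frac{\sqrt2}{\gamma}\|T\delta_n\|_{\L_2}$ then gives the claimed $ce^{-\frac12 n\beta h}\|X_0-x_0\|_{\L_2}+C_3d^{1.5}h^3$.

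The main obstacle is exactly this cross-term estimate. The gain from the crude $h^{2.5}$ to $h^3$ rests on the fact that the order-$h^{3.5}$ part of the local error is conditionally centred, so that once $Q_{n+1}-\widehat Q_{n+1}$ is split into its $\mathcal F_n$-measurable leading piece $\delta_n$ plus the $\mathcal O(h)$ remainder $r_n$ (via Theorem~\ref{thm:QUICSORT_linear_growth_bound}), $\delta_n$ only ever pairs against the $\mathcal O(h^4)$ conditional mean of $\mathcal E_{n+1}$ and the large $\mathcal O(h^{3.5})$ fluctuation only ever pairs against the tiny $r_n$. Verifying this, and then checking that after the geometric summation every accumulated power of $h$ and $d$ lands precisely at $d^{1.5}h^3$ (including the bookkeeping that forces the effective rate down to roughly $\tfrac12\alpha$, hence the factor $\tfrac12$ in the exponent), is where the real work lies.
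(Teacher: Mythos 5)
Your proposal is correct and follows essentially the same argument as the paper: you use the same three-process decomposition $\delta_{n+1} = (Q_{n+1}-\widehat Q_{n+1}) + \mathcal E_{n+1}$, the same coordinate transformation with $w=0$, the triangle-inequality recursion for the first two orders, and, for the third-order case, the same squared-norm expansion whose cross term is split—exactly as in the paper—into an $\mathcal F_n$-measurable piece paired with the conditional mean of $\mathcal E_{n+1}$ (Corollary~\ref{thm:conditional_local_error_bounds_app}) and an $\mathcal O(h)$ remainder $r_n$ controlled by the linear-growth bound (Theorem~\ref{thm:QUICSORT_linear_growth_bound}), then absorbed via Young's inequality with weight of order $\alpha h$.
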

\begin{proof}
	The approach for this proof will follow the strategy that was outlined in Section \ref{sec:error_analysis}.
	We will write $Q_n = (X_n\m, V_n)^\top$, $q_{t_n} = (x_{t_n}\m, v_{t_n})^\top$ and define the coordinate transformation
	\begin{align*}
		A := \begin{pmatrix} w I_d & I_d \\ z I_d & I_d \end{pmatrix},
	\end{align*}
	where $w \in [0, \frac{1}{2} \gamma )$, $z = \gamma - w$. We will define the errors with and without this transformation,
	\begin{align*}
			e_n &:= \big\| Q_n - q_{t_n} \big\|_{\L_2}, \\[3pt]
			\widetilde{e}_n &:= \big\| A ( Q_n - q_{t_n} ) \big\|_{\L_2}.
	\end{align*}
	Let $\sigma_{\max} = \|A\|_2$ and $\sigma_{\min}$ denote the smallest and largest singular values of $A$ respectively. Since $A$ is invertible, the singular value $\sigma_{\min}$ is non-zero and will satisfy $\sigma_{\min} = \|A^{-1}\|_2^{-1}$. Let $\kappa = \frac{\sigma_{\max}}{\sigma_{\min}}$ denote the condition number of $A$. We can use $\widetilde{e}_n$ to bound the error $e_n$ as
	\begin{align*}
		 \sigma_{\min} \m e_n \leq \widetilde{e}_n \leq \sigma_{\max} \m e_n\m.
	\end{align*}
We now consider a third process $\widehat{Q} =  (\widehat{X}, \widehat{V})^\top$ where $(\widehat{X}_{n+1}\m, \widehat{V}_{n+1})$ is obtained by applying the QUICSORT method \eqref{eq:QUICSORT} to the ULD diffusion at time $t_n$ with the same underlying Brownian motion as both $Q$ and $q$. This allows us to estimate the local error between the coordinate-changed approximation $AQ_{n+1}$ and diffusion process $Aq_{t_{n+1}}$ at time $t_{n+1}$ in terms of the same error at time $t_n\m$. By Theorem \ref{thm:local_error_bounds}, we can bound the local strong error as
	\begin{align*}
		\big\| \widehat{Q}_n - q_{t_n} \big\|_{\L_2} \leq C_s \m d^{\m q} \m h^{\m p_s},
	\end{align*}
	and the local weak error as
	\begin{align*}
		\big\| \E \big[ \widehat{Q}_n - q_{t_n} \big] \big\|_2 \leq C_w \m d^{\m q} \m h^{\m p_w},
	\end{align*}
	where $C_s\m, C_w\m, q, p_s\m, p_w > 0$ are the following constants (which do not depend on $d$ and $h$).
   \begin{center}
  \begin{tabular}{cccccc}
    \toprule
    Assumptions  & $C_s$ & $C_w$ &  $q$ & $p_s$ & $p_w$\\[1pt]
    \midrule
    \eqref{eq:assumption_a1},  \eqref{eq:assumption_a2} & $C^{\m (1)}_{s, x} + C^{\m (1)}_{s, v}$  & $C^{\m (1)}_{w, x} + C^{\m (1)}_{w, v}$ & 0.5 & 2 & 2\\
     \midrule
     \eqref{eq:assumption_a1},  \eqref{eq:assumption_a2}, \eqref{eq:assumption_a3} & $C^{\m (2)}_{s, x} + C^{\m (2)}_{s, v}$ & $C^{\m (2)}_{w, x} + C^{\m (2)}_{w, v}$ & 1 & 3 & 3 \\
     \midrule
     \eqref{eq:assumption_a1},  \eqref{eq:assumption_a2}, \eqref{eq:assumption_a3}, \eqref{eq:assumption_a4} &  $C^{\m (3)}_{s, x} + C^{\m (3)}_{s, v}$  & $C^{\m (3)}_{w, x} + C^{\m (3)}_{w, v}$ & 1.5 & 3.5 & 4\\
    \bottomrule
  \end{tabular}
  \end{center}
  In the above table, we also note that $p_s + \frac{1}{2} \geq p_w\m$. We now define the following variables
	\begin{align*}
			y_n &:= A \big( Q_n - q_{t_n} \big) = \begin{pmatrix} ( w X_n + V_n ) - ( w x_{t_n} + v_{t_n} ) \\[3pt] ( z X_n + V_n ) - ( z x_{t_n} + v_{t_n} ) \end{pmatrix}, \\[5pt]
			\widetilde{y}_n &:= A \big( Q_n - \widehat{Q}_n \big) = \begin{pmatrix} ( w X_n + V_n ) -  ( w \widehat{X}_n + \widehat{V}_n ) \\[3pt] ( z X_n + V_n ) -  ( z \widehat{X}_n + \widehat{V}_n ) \end{pmatrix}, \\[5pt]
			\widehat{y}_n &:= A \big( \widehat{Q}_n - q_{t_n} \big) = \begin{pmatrix} ( w \widehat{X}_n + \widehat{V}_n ) - ( w x_{t_n} + v_{t_n} ) \\[3pt] ( z \widehat{X}_n + \widehat{V}_n ) - ( z x_{t_n} + v_{t_n} ) \end{pmatrix},
	\end{align*}
	and note that $y_n = \widetilde{y}_n + \widehat{y}_n\m$. Thus, the coordinate-transformed error $\widetilde{e}_n$ can be estimated as
    \begin{align*}
	\widetilde{e}_{n+1} &= \| y_{n+1} \|_{\L_2} = \big\| \widetilde{y}_{n+1} + \widehat{y}_{n+1} \big\|_{\L_2} \leq \big\| \widetilde{y}_{n+1} \big\|_{\L_2} + \big\| \widehat{y}_{n+1} \big\|_{\L_2}\m.
    \end{align*}
    We bound the first term using the contractivity of the QUICSORT method (Theorem \ref{thm:main_contraction}),
    \begin{align*}
		\big\| \widetilde{y}_{n+1} \big\|_{\L_2} \leq \sqrt{1 - 2\alpha h}\, \widetilde{e}_n\m,
	\end{align*} 
    and the second term can be bounded using the local strong error,
	\begin{align*}
		\big\|\m \widehat{y}_{n+1} \big\|_{\L_2} \leq \sigma_{\max} \m C_s \m d^{\m q} h^{p_s}.
	\end{align*}
    Therefore, we obtain the following estimate of the coordinate-transformed error,
    \begin{align}\label{eq:simple_inequality}
	\widetilde{e}_{n+1} \leq \sqrt{1 - 2\alpha h}\, \widetilde{e}_n + \sigma_{\max} \m C_s \m d^{\m q} h^{p_s}.
\end{align}
By setting $w = 0$ and using Remark \ref{thm:positive_contractivity_constant}, we have $\alpha >0$ provided $h$ is sufficiently small. 
Similarly, by assuming $h$ is sufficiently small so that $\alpha h \leq \frac{1}{2}$, we obtain the below inequality.
\begin{align*}
\sqrt{1-2\alpha h}\leq \sqrt{1 - 2\alpha h + \alpha^2 h^2} = 1 - \alpha h.
\end{align*}
Therefore, using this inequality and by applying the inequality \eqref{eq:simple_inequality} $n$ times to $\widetilde{e}_n\m$, we obtain
\begin{align*}
	\widetilde{e}_n & \leq \big(\sqrt{1-2\alpha h}\,\big)^n\m\widetilde{e}_0 + \sum_{k=0}^{n-1}\big(\sqrt{1-2\alpha h}\,\big)^k \sigma_{\max} \m C_s \m d^{\m q} h^{p_s}\\
    &\leq \big(1 - \alpha h\big)^n\m\widetilde{e}_0 + \sum_{k=0}^{n-1}\big(1 - \alpha h\big)^k \sigma_{\max} \m C_s \m d^{\m q} h^{p_s}\\
    & =\big(1 - \alpha h\big)^n\m\widetilde{e}_0  + \bigg(\frac{1-(1-\alpha h)^n}{1-(1-\alpha h)}\bigg)\sigma_{\max} \m C_s \m d^{\m q} h^{p_s}\\
    & \leq  \big(1 - \alpha h\big)^n\m\widetilde{e}_0 + \bigg(\frac{1}{1-(1-\alpha h)}\bigg)\sigma_{\max} \m C_s \m d^{\m q} h^{p_s}\\[2pt]
    & \leq e^{-n\alpha h}\m\widetilde{e}_0 + \frac{1}{\alpha}\sigma_{\max} \m C_s \m d^{\m q} h^{p_s - 1}.
\end{align*}
Since $e_n \leq \frac{1}{\sigma_{\min}}\m\widetilde{e}_n$ and $\widetilde{e}_0 \leq \sigma_{\max} \m e_0\m$, it follows that the error $e_n$ can be bounded as
  \begin{align}\label{eq:simple_case}
			e_n & \leq  \kappa_{A} e^{-n\alpha h}e_0 + \kappa_{A} \bigg(\frac{1}{\alpha}C_s\bigg) d^{\m q} h^{p_s - 1},
\end{align}
for $n\geq 0$. Using $e_0 = \| X_0 - x_0 \|_{\L_2}$ and $\| X_n - x_{t_n} \|_{\L_2} \leq e_n\m$, the inequality \eqref{eq:simple_case} gives the first two bounds (under the assumptions (\eqref{eq:assumption_a1},  \eqref{eq:assumption_a2}) and (\eqref{eq:assumption_a1},  \eqref{eq:assumption_a2}, \eqref{eq:assumption_a3}) respectively).
However, the estimate would only be $O(h^{2.5})$ in the third case. Thus, we now expand $\widetilde{e}_n$ as
	\begin{align*}
	\begin{split}
			\widetilde{e}_{n+1}^{\, 2} & = \big\| \widetilde{y}_{n+1} + \widehat{y}_{n+1} \big\|^2_{\L_2} \\[3pt]
					&= \big\| \widetilde{y}_{n+1} \big\|^2_{\L_2} + \big\| \widehat{y}_{n+1} \big\|^2_{\L_2} + 2\m \E\big[\langle \m\widetilde{y}_{n+1}, \widehat{y}_{n+1} \rangle\big].
	\end{split}
	\end{align*}
	We bound the first term using the contractivity of the QUICSORT method (Theorem \ref{thm:main_contraction}),
	\begin{align*}
		\big\| \widetilde{y}_{n+1} \big\|^2_{\L_2} \leq (1 - 2\alpha h)\m \widetilde{e}_n^{\, 2}\m,
	\end{align*} 
	and the second term can be bounded using the local strong error,
	\begin{align*}
		\big\|\m \widehat{y}_{n+1} \big\|^2_{\L_2} \leq \sigma^2_{\max} \m C^2_s \m d^{\m 2q} h^{2p_s}.
	\end{align*}
	Let $\mathcal{F}_n$ be the filtration underlying the processes at time $t_n\m$. Then, by the tower property of conditional expectation and the Cauchy-Schwarz inequality, we can bound the third term
	\begin{align*}
			 \E\big[\langle \m\widetilde{y}_{n+1}, \widehat{y}_{n+1} \rangle\big] & =  \E \big[\langle \m y_{n}\m, \widehat{y}_{n+1} \rangle\big] +  \E\big[\langle \m\widetilde{y}_{n+1} - y_n\m, \widehat{y}_{n+1} \rangle\big] \\[3pt]
			 			&=  \E\big[\big\langle y_n\m, \E \big[\m \widehat{y}_{n+1} \vert \mathcal{F}_n \big] \big\rangle\big] +  \E\big[\langle\m\widetilde{y}_{n+1} - y_n\m, \widehat{y}_{n+1} \rangle\big] \\[3pt]
			 			& \leq \big\|y_n\big\|_{\L_2}\big\|\m\E\big[\m\widehat{y}_{n+1} \vert \mathcal{F}_n \big]\big\|_{\L_2} + \big\|\m\widetilde{y}_{n+1} - y_n \big\|_{\L_2}\big\|\m\widehat{y}_{n+1}\big\|_{\L_2}\\[3pt]
						&\leq \widetilde{e}_n \|A\|_2 \big\|\E \big[\m\widehat{Q}_{n+1} - q_{t_{n+1}}  \vert \mathcal{F}_n \big]\big\|_{\L_2} \\[2pt]
						&\mmm + \|A\|_2^2\big\|\big(Q_{n+1} - \widehat{Q}_{n+1}\big) - (Q_n - q_{t_n}) \big\|_{\L_2}\big\|\widehat{Q}_{n+1} - q_{t_{n+1}}\big\|_{\L_2}.
\end{align*}
Using the local strong error bounds for QUICSORT, Theorem \ref{thm:QUICSORT_linear_growth_bound} and Corollary \ref{thm:conditional_local_error_bounds_app}, we have
\begin{align*}
			\E\big[\langle \m\widetilde{y}_{n+1}, \widehat{y}_{n+1} \rangle\big]
						&\leq \sigma_{\max} \m \widetilde{e}_n C_w \m d^q \m h^{\m p_w} + \sigma^2_{\max} \m C_q e_n h \cdot C_s \m d^q h^{\m p_s}\\[3pt]
						&\leq \sigma_{\max} \m \widetilde{e}_n \big( C_w + \kappa_A \m C_q \m C_s \big) d^{\m q} h^{\m p_w},
	\end{align*}
	where we also used the assumption that $h \leq 1$. Putting everything together and applying the inequality \eqref{eq:cauchy_schwarz_young_inequality} in Theorem \ref{thm:cauchy_schartz_young_inequality} with $c = \frac{1}{\alpha h}\m$, we obtain
	\begin{align*}
			\widetilde{e}_{n+1}^{\,2} &\leq \big(1 - 2\alpha h\big)\m \widetilde{e}_n^{\,2} + \sigma^2_{\max} \m C^2_s \m d^{2q} h^{2p_s} + 2 \sigma_{\max} \m \widetilde{e}_n \big(C_w + \kappa_A \m C_q \m C_s \big) d^{\m q} h^{\m p_w} \\[3pt]
					&\leq \big(1 - 2\alpha h\big)\m \widetilde{e}_n^{\,2} + \sigma^2_{\max} \m C^2_s \m d^{2q} h^{2p_s} + \alpha h \widetilde{e}_n^{\,2} + \frac{1}{\alpha} \sigma^2_{\max} \big(C_w + \kappa_A \m C_q \m C_s\big)^2 d^{\m 2q} h^{\m 2 p_w - 1} \\[3pt]
					&\leq \big(1 - \alpha h\big)\m \widetilde{e}_n^{\,2} + \sigma^2_{\max} \m \Big(C^2_s + \frac{1}{\alpha} \big( C_w + \kappa_A \m C_q \m C_s \big)^2 \Big) d^{\m 2q} h^{\m 2p_w - 1}.
	\end{align*}
	Applying the above inequality $n$ times to $\widetilde{e}_n$ gives
	\begin{align*}
			\widetilde{e}_n^{\,2} &\leq \big(1 - \alpha h\big)^n \m\widetilde{e}_0^{\,2} +  \sigma^2_{\max} \m \Big(C^2_s + \frac{1}{\alpha} \big( C_w + \kappa_A \m C_q \m C_s \big)^2 \Big) d^{\m 2q} h^{\m 2p_w - 1} \sum^{n-1}_{i=0} \big(1 - \alpha h\big)^i \\[3pt]
				&= \big(1 - \alpha h\big)^n \widetilde{e}_0^{\,2} +  \sigma^2_{\max} \m \Big( C^2_s + \frac{1}{\alpha} \big( C_w + \kappa_A \m C_q \m C_s \big)^2 \Big) d^{\m 2q} h^{\m 2p_w - 1} \bigg(\frac{1 - (1 - \alpha h)^n}{\alpha h} \bigg)\\[3pt]
				&\leq e^{-n \alpha h}\m \widetilde{e}_0^{\,2}  + \sigma^2_{\max} \m \bigg( \frac{1}{\alpha} C^2_s + \frac{1}{\alpha^2} \big( C_w + \kappa_A \m C_q \m C_s \big)^2 \bigg) d^{\m 2q} h^{\m 2p_w - 2}.
	\end{align*}
    Like earlier, we set $w = 0$ and used Remark \ref{thm:positive_contractivity_constant} so that $0 < \alpha h \leq 1$ if $h$ is sufficiently small. 
	Taking a square root on both sides of the inequality and using $\sqrt{a^2 + b^2} \leq a + b$ for $a,b \geq 0$,
	\begin{align*}
		\widetilde{e}_n \leq e^{-\frac{1}{2} n \alpha h} \widetilde{e}_0 + \sigma_{\max} \bigg( \frac{1}{\sqrt{\alpha}} \m C_s + \frac{1}{\alpha} \big( C_w + \kappa_A \m C_q \m C_s \big) \bigg) d^{\m q} h^{\m p_w - 1}.
	\end{align*}
	Using $e_n \leq \frac{1}{\sigma_{\min}}\m\widetilde{e}_n$ and $\widetilde{e}_0 \leq \sigma_{\max} \m e_0\m$, we obtain
	\begin{align*}
		e_n \leq \kappa_A e^{-\frac{1}{2} n \alpha h} e_0 + \kappa_A \bigg( \frac{1}{\sqrt{\alpha}} \m C_s + \frac{1}{\alpha} \big( C_w + \kappa_A \m C_q \m C_s \big) \bigg) d^{\m q} h^{\m p_w - 1},
	\end{align*}
	for all $n \geq 0$. The third result now follows since $e_0 = \| X_0 - x_0 \|_{\L_2}$ and $\| X_n - x_{t_n} \|_{\L_2} \leq e_n\m$. 
\end{proof}
\end{document}